\definecolor{Red}{rgb}{1,0,0}
\definecolor{Blue}{rgb}{0,0,1}
\definecolor{DGreen}{rgb}{0,0.55,0}
\definecolor{Purple}{rgb}{.75,0,.25}
\definecolor{Grey}{rgb}{.5,.5,.5}
\newtcolorbox{lemmaexplanation}{
    colback=blue!5!white,
    colframe=blue!75!black,
    fonttitle=\bfseries,
    title={},
    breakable,
}
\newtheorem{theorem}{Theorem}[section]
\newtheorem{lemma}[theorem]{Lemma}
\theoremstyle{definition}
\newtheorem{fact}[theorem]{Fact}
\newtheorem{definition}[theorem]{Definition}
\newtheorem{example}[theorem]{Example}
\newtheorem{assumption}[theorem]{Assumption}
\newtheorem{initialization}[theorem]{Initialization Scheme}
\newtheorem{remark}[theorem]{Remark}
\newtheoremstyle{informal} % Name of style
  {}   % Space above
  {}   % Space below
  {\itshape}  % Body font
  {}          % Indent amount
  {\bfseries} % Theorem head font
  {.}         % Punctuation after theorem head
  { }         % Space after theorem head
  {}          % Theorem head spec (empty = default)
\theoremstyle{informal}
\newtheorem*{informalTheorem}{Informal Theorem}
\newcommand{\poly}{\mathrm{poly}}
\title{
Solving Neural Min-Max Games: \\ The Role of Architecture, Initialization \& Dynamics }
\author{%
  Deep Patel and Emmanouil-Vasileios Vlatakis-Gkaragkounis\\
  Department of Computer Science\\
  University of Wisconsin-Madison\\
  \texttt{\{dbpatel5,vlatakis\}@wisc.edu} \\
  % examples of more authors
  % \And
  % Coauthor \\
  % Affiliation \\
  % Address \\
  % \texttt{email} \\
  % \AND
  % Coauthor \\
  % Affiliation \\
  % Address \\
  % \texttt{email} \\
  % \And
  % Coauthor \\
  % Affiliation \\
  % Address \\
  % \texttt{email} \\
  % \And
  % Coauthor \\
  % Affiliation \\
  % Address \\
  % \texttt{email} \\
}
\newcommand{\altgda}{\ensuremath{\mathtt{Alt\mhyphen GDA}}}
\newcommand{\loss}{{\mathcal{L}}}
\newcommand{\opmin}{{F}}
\newcommand{\opmax}{{G}}
\newcommand{\data}{{\mathcal{D}}}
\newcommand{\strategyspace}{{\mathcal{S}}}
\begin{document}

\maketitle

\begin{abstract}
Many emerging applications—such as adversarial training, AI alignment, and robust optimization—can be framed as zero-sum games between neural nets, with von Neumann–Nash equilibria (NE)  capturing the desirable system behavior. While such games often involve non-convex non-concave objectives, empirical evidence shows that simple gradient methods frequently converge, suggesting a hidden geometric structure. 
In this paper, we provide a theoretical framework that explains this phenomenon through the lens of \emph{hidden convexity} and \emph{overparameterization}. We identify sufficient conditions—spanning initialization, training dynamics, and network width—that guarantee global convergence to a NE in a broad class of non-convex  min-max games. To our knowledge, this is the first such result for games that involve two-layer neural networks.
Technically, our approach is twofold: (a) we derive a novel path-length bound for the alternating gradient descent–ascent scheme in min-max games; and (b) we show that the reduction from a hidden convex–concave geometry to two-sided Polyak–Łojasiewicz (PL) min-max condition hold with high probability under overparameterization, using tools from random matrix theory.
 %Technically, our approach is twofold: (a) we derive a novel path-length bound for alternating gradient flow in  min-max games; and (b) we show that games with hidden convex–concave geometry reduce to settings satisfying two-sided Polyak–Łojasiewicz (PL) and smoothness conditions,  which hold with high probability under overparameterization, using tools from random matrix theory.
\end{abstract}

% !TEX root = ./neurips_2025.tex
\section{Introduction}
At the Nobel Symposium marking the centennial of Game Theory \cite{daskalakis2022non,nobel2021game}, a key challenge was posed: 
\begin{center} \emph{the development of a systematic theory for non-convex games}\end{center}
spurred by the rapid growth of deep learning in incentive-aware multi-agent systems \cite{sim2020collaborative,zhang2021multi}.

Indeed, many  influential modern AI systems are built upon the fusion of foundational game-theoretic principles—particularly zero-sum games—with the expressive capacity of neural networks. 
Notable examples include generative adversarial networks (GANs) \cite{goodfellow2014generative}, robust reinforcement learning \cite{pinto2017robust}, adversarial attacks \cite{wang2021adversarial}, domain-invariant representation learning \cite{ganin2016domain}, distributionally robust optimization\cite{michel2021modeling,xue2025distributionally}, and multi-agent environments featuring natural language interactions, such as AI safety debates between large language models and verifier-prover systems \cite{irving2018ai,brown2023scalable}. 
In these settings, the game-theoretic framework provides a natural and interpretable objective—typically an equilibrium solution endowed with strong normative appeal, such as the celebrated von Neumann minimax points \cite{gidel2021limited} and Nash–Rosen equilibria \cite{nash2024non,rosen1965existence}.

At the same time, much of the remarkable progress at the intersection of deep learning and game theory 
stems from the capacity of deep models to operate effectively in environments with large, often continuous, state and action spaces.
%can be attributed to the ability to operate in environments characterized by large, often continuous, state and action spaces. 
Iconic examples include Go \cite{silver2017mastering}, autonomous driving \cite{shalev2016safe}, Texas Hold’em poker \cite{brown2019solving}, and real-time strategy games such as StarCraft II through AlphaStar \cite{vinyals2019grandmaster}. Tackling such large-scale decision-making problems has necessitated the combination of expressive architectures with function-approximation-based learning, replacing high-dimensional reward/value functions and strategy/policy spaces with trainable surrogates. 
%These surrogates act as flexible intermediaries, enabling systems to generalize effectively across complex environments without explicitly enumerating the underlying action spaces. While theoretical work has primarily focused on linear approximation methods \cite{Xie2020, Chen2021}, in practice, large nonlinear models—such as kernels and deep neural networks—dramatically expand representational capacity, allowing the modeling of far richer strategic behaviors \cite{?,?}. As a result, agents' policies are encoded through powerful approximators, and equilibrium learning unfolds through the iterative tuning of parameters in pursuit of a desired solution concept (see Figure~\ref{fig:maze-illustration}).
Hence, these surrogates act as flexible intermediaries, enabling generalization across complex environments without exhaustive enumeration of action spaces.While theoretical focus has largely remained on linear approximators \cite{xie2020learning, chen2021almost}, it is the nonlinear models—such as kernels and deep neural networks—which in practice dramatically expand representational power \cite{li2022learning,jin2022power}, allowing richer strategic behaviors. Thus, agents' policies are encoded through powerful approximators, and equilibrium learning unfolds through iterative parameter tuning (see Figure~\ref{fig:maze-illustration}).

\begin{figure}[h]
\vspace{-1em}
\scalebox{0.75}{
% !TEX root = ./neurips_2025.tex

\centering
\begin{tikzpicture}[node distance=3.5cm, auto, scale=0.9, every node/.style={scale=0.9}]

% Αριστερό Νευρωνικό
\node (imgL) {
\begin{tikzpicture}[shorten >=1pt, node distance=1cm, on grid, auto, scale=1.5]
\tikzset{
input neuron/.style={circle, draw=black, thick, minimum size=8pt},
hidden neuron/.style={circle, draw=blue!50, fill=blue!20, thick, minimum size=8pt},
output neuron/.style={circle, draw=red!50, fill=red!20, thick, minimum size=8pt},
annot/.style={text width=4em, align=center}
}

% Input layer
\node[input neuron] (I1) at (0,0.2) {};
\node[input neuron] (I2) at (0,-0.25) {};
\node[input neuron] (I3) at (0,-0.7) {};

% Hidden layer
\node[hidden neuron] (H1) at (1,1) {};
\node[hidden neuron] (H2) at (1,0.6) {};
\node[hidden neuron] (H3) at (1,0.2) {};
\node (vdots) at (1,-0.1) {\scriptsize $\vdots$};
\node[hidden neuron] (H4) at (1,-0.6) {};
\node[hidden neuron] (H5) at (1,-1) {};
\node[hidden neuron] (H6) at (1,-1.4) {};

% Output neuron
\node[output neuron] (O1) at (2,0) {};

% Συνδέσεις
\foreach \i in {1,2,3}
{
    \foreach \j in {1,2,3,4,5,6}
        \draw[->] (I\i) -- (H\j);
    \draw[->] (I\i) -- (vdots);
}
\foreach \i in {1,2,3,4,5,6}
    \draw[->] (H\i) -- (O1);

% Layer labels
\node[annot] at (0,1.5) {\footnotesize Input\\Layer};
\node[annot] at (1,1.5) {\footnotesize Hidden\\Layer(s)};
\node[annot] at (2,1.5) {\footnotesize Output\\Layer};
% Μικρό κολλητό πινακάκι κάτω από το Output
\node[draw, minimum width=0.2cm, minimum height=0.8cm, above=1cm of O1] (prob) {
    \tiny
    \setlength{\tabcolsep}{2pt} % default είναι περίπου 6pt
\begin{tabular}{|cc|}
\hline
    0.2 &↑\\
     0.5 &→ \\ 
     0.1 & ↓\\
     0.2 & ←\\\hline 
\end{tabular}
};
\end{tikzpicture}
};

% Maze στο κέντρο (Μικρότερο)
\node (maze) [right=of imgL, xshift=-3.25cm] {
    \maze{30}[4]
};

% Δεξί Νευρωνικό (Flipped)
\node (imgR) [right=of maze, xshift=-3.5cm] {
\begin{tikzpicture}[shorten >=1pt, node distance=1cm, on grid, auto, scale=1.5, xscale=-1]
\tikzset{
input neuron/.style={circle, draw=black, thick, minimum size=8pt},
hidden neuron/.style={circle, draw=blue!50, fill=blue!20, thick, minimum size=8pt},
output neuron/.style={circle, draw=red!50, fill=red!20, thick, minimum size=8pt},
annot/.style={text width=4em, align=center}
}

\node[input neuron] (I1) at (0,0.2) {};
\node[input neuron] (I2) at (0,-0.25) {};
\node[input neuron] (I3) at (0,-0.7) {};

\node[hidden neuron] (H1) at (1,1) {};
\node[hidden neuron] (H2) at (1,0.6) {};
\node[hidden neuron] (H3) at (1,0.2) {};
\node (vdots) at (1,-0.1) {\scriptsize $\vdots$};
\node[hidden neuron] (H4) at (1,-0.6) {};
\node[hidden neuron] (H5) at (1,-1) {};
\node[hidden neuron] (H6) at (1,-1.4) {};

\node[output neuron] (O1) at (2,0) {};

\foreach \i in {1,2,3}
{
    \foreach \j in {1,2,3,4,5,6}
        \draw[->] (I\i) -- (H\j);
    \draw[->] (I\i) -- (vdots);
}
\foreach \i in {1,2,3,4,5,6}
    \draw[->] (H\i) -- (O1);

\node[annot] at (0.5,1.5) {\footnotesize Input\\Layer};
\node[annot] at (1.5,1.5) {\footnotesize Hidden\\Layer(s)};
\node[annot] at (2.5,1.5) {\footnotesize Output\\Layer};
% Μικρό κολλητό πινακάκι κάτω από το Output
\node[draw, minimum width=0.2cm, minimum height=0.8cm, above=1cm of O1] (prob) {
    \tiny
    \setlength{\tabcolsep}{2pt} % default είναι περίπου 6pt
\begin{tabular}{|cc|}
\hline
    0.3 &↑\\
     0.3 &→ \\ 
     0.1 & ↓\\
     0.3 & ←\\\hline 
\end{tabular}
};
\end{tikzpicture}
};

% Κόκκινο σημείο μέσα στο maze
\node[circle, fill=red, inner sep=2pt] (redpoint) at ($(maze.center) + (-0.7,0.5)$) {};

% Μπλε σημείο μέσα στο maze
\node[circle, fill=blue, inner sep=2pt] (bluepoint) at ($(maze.center) + (0.8,-0.6)$) {};

% Καμπύλα βέλη
\draw[->, thick, red, bend left=20] ([xshift=-15pt]imgL) to (redpoint);
\draw[->, thick, blue, bend right=20] (imgR) to (bluepoint);

\end{tikzpicture}
}
\vspace{-1em}
\centering
\caption{\it \small Illustration of a maze environment where each agent must reason over a vast space of action sequences. Instead of explicitly constructing and searching the full decision tree, a neural network implicitly encodes both the value of paths and the policy for navigation, learning an effective strategy dynamically without ever uncovering the complete structure of the maze.}
\vspace{-1em}
\label{fig:maze-illustration}
\end{figure}
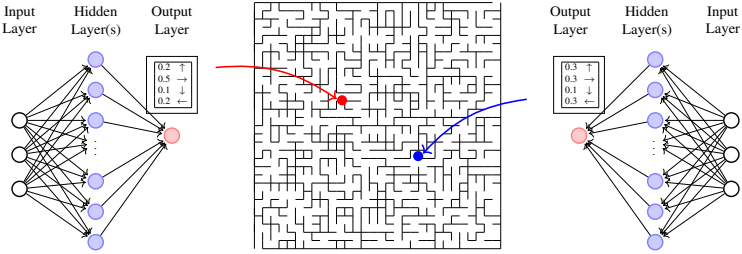

Despite the empirical success, algorithms with provable convergence guarantees remain scarce.
This is unsurprising given that even in finite games, strong computational hardness results \cite{chen20053,chen2009settling,daskalakis2009complexity,papadimitriou2022computational} and dynamic impossibility theorems \cite{milionis2023impossibility,hart2003uncoupled,hart2006stochastic,vlatakis2020no,giannou2021survival,giannou2021rate} pose significant barriers.
Notably, even in two-player zero-sum games—where classical theory guarantees existence and efficient computation of minimax points via LP duality \cite{cai2016zero} or optimistic first-order methods \cite{wei2020linear,anagnostides2024convergence}—these assurances collapse when modern deep-learning architectures, with their inherent non-convexity, are introduced \cite{daskalakis2021complexity,bernasconi2024role,anagnostides2025complexity}.
Specifically:$(i)$ global solution concepts (e.g., von Neumann minimax, Nash equilibria) may fail to exist;
$(ii)$ even when they do, tandem gradient-based methods often suffer from \emph{instability}, \emph{cycling}, or \emph{divergence}, resulting in poor solutions. \cite{
daskalakis2018limit,mertikopoulos2018cycles,daskalakis2017training,vlatakis2023chaos}.

Thus, the best hope for mitigating the practical impact of these worst-case hardness results lies in focusing on structured subclasses of games. 
It remains plausible that broad families of non-concave games—rich enough to capture multi-agent interactions—admit tractable local or even global equilibria.

\textbf{Hidden convexity: a promising direction.}
One compelling approach along this path is the emerging theory of \emph{hidden convex games} \cite{vlatakis2019poincare,mladenovic2021generalized,vlatakis2021solving,sakos2023exploiting,d2024solving}.
In its simplest form, two players interact via a convex–concave zero-sum game $Loss(\textrm{Player}_1,\textrm{Player}_2)$, but control only high-dimensional parameters $\theta, \phi$, through mappings $\textrm{Player}_1 \gets Map^1_{\theta}(\cdot)$ and $\textrm{Player}_2 \gets Map^2_{\phi}(\cdot)$.
These mappings are smooth and known, allowing gradient-based training, but typically not efficiently invertible, reflecting the practical irreversibility of neural architectures.
Consequently, while the latent game preserves convex–concave structure, the optimization landscape $Loss(Map^1_{\theta},Map^2_{\phi})$ over control variables becomes highly non-convex \cite[see][p.~26]{sakos2023exploiting} .
Although not every non-convex game admits such a structure, many practical applications naturally fit within this framework (see Appendix~\ref{sec:appendix-examples}).
\\\textbf{Rank collapse: the fragility of hidden convexity.}
A major criticism of the hidden convexity paradigm relies critically on the assumption that the Jacobian of the agents' mappings maintain uniformly bounded singular values throughout training. In practice, such uniform bounds often fail, as real-world architectures may suffer from rank collapse or near-singular behavior during optimization (see, e.g., \cite{sedghi2018singular,NEURIPS2022_d5cd70b7,dong21a}), undermining theoretical guarantees. 
When such degeneracies arise, convergence rates can deteriorate exponentially, and worst-case bounds may become vacuous.
Even if  Jacobian well-conditioning is achieved by a random initialization, there are no assurances that it will be preserved as training evolves. 

These limitations underscore the need for explicit, open-box conditions—beyond abstract hidden mappings—that explain the empirical success of efficient training in large-scale min-max settings. 
Whilst hidden convexity provides significant insights about these systems, it does not answer a fundamental behavioral question:
\begin{equation}
\parbox{35em}{\centering
\textit{Can appropriate architectural design, initialization protocols, and training dynamics jointly ensure efficient convergence in large-scale neural min-max games?}
}
\tag{$\bigstar$}
\label{central-question}
\end{equation}

\subsection{Setting and Main Contribution}
\label{subsec:setting-main-contri}

Motivated by the above challenges, we provide— to the best of our knowledge—the first quantitative convergence guarantees addressing the central question~\eqref{central-question} under minimal assumptions.
Formally, given input datasets $\data_\opmin$ and $\data_\opmax$, and latent strategy spaces $\strategyspace_{\opmin}$ and $\strategyspace_{\opmax}$, we consider the hidden min-max problem 
%\footnote{This data-centric perspective of hidden convex–concave games has been initially proposed by \cite{mladenovic2021generalized}.}:
\begin{equation}\label{eq:main-problem}
\min_{\theta \in \mathbb{R}^{d_{\theta}^{(\opmin)}}} \max_{\phi \in \mathbb{R}^{{d_{\phi}^{(\opmax)}}}} \loss_\data(\opmin_\theta, \opmax_\phi), \tag{$\prod$}
\end{equation}
where $\opmin_\theta: \mathbb{R}^{d_0^{(\opmin)}} \to \mathbb{R}^{\text{dim}(\strategyspace_\opmin)}$ and $\opmax_\phi: \mathbb{R}^{d_0^{(\opmax)}} \to \mathbb{R}^{\text{dim}(\strategyspace_\opmax)}$ are smooth mappings parameterized by $\theta$ and $\phi$ (e.g., neural network weights).
While our results extend beyond, we focus on well-studied \cite{yuan2023optimal} \emph{separable} latent minmax objectives of the form
\begin{equation}
\label{eq:separable-min-max-objective}
\loss_\data(\opmin, \opmax) = I_1^{\data_\opmin}(\opmin) + I_2^{\data}(\opmin, \opmax) - I_3^{\data_\opmax}(\opmax),
\end{equation}
where $\data = (\data_\opmin, \data_\opmax)$ and $I_1^{\data_\opmin},I_3^{\data_\opmax}$—the \emph{individual components}—are strongly convex and smooth, and $ I_2^{\data}$—the  \emph{coupling component}— is smooth bilinear.
As convergence metric, we adopt the \emph{Nash gap} (also known as the \emph{Nikaido–Isoda duality gap} \cite{nikaido1955note}):
\[
\mathrm{DG}_{\loss_\data}(\theta, \phi) := \max_{\phi'} \loss_\data(\opmin_\theta, \opmax_{\phi'}) - \min_{\theta'} \loss_\data(\opmin_{\theta'}, \opmax_\phi),
\]
and say that $(\hat{\theta}, \hat{\phi})$ is an $\epsilon$-saddle (or $\epsilon$-approximate minimax or Nash equilibrium) if $\mathrm{DG}_{\loss_\data}(\hat{\theta}, \hat{\phi}) \leq \epsilon$.
{\paragraph{Remarks.}
Replacing players' actions with neural nets—i.e., $\opmin_\theta = \mathrm{NN}_\theta(\cdot)$ and $\opmax_\phi = \mathrm{NN}_\phi(\cdot)$—\\ renders the end-to-end landscape highly non-convex, although the latent game $\loss$ remains convex–concave. 
The separable structure naturally unifies several hidden zero-sum regimes:  when $I_2$ vanishes, it recovers \emph{separable strongly-convex–concave} games;  when $I_1$ and $I_3$ vanish, it reduces to \emph{bilinear games} \cite{vlatakis2019poincare}; 
and when both components are present, it captures \emph{regularized games} (e.g., Tikhonov- or entropy-regularized settings), recently used in hidden min-max frameworks, including team and zero-sum Markov games \cite{kalogiannislearning,kalogiannis2025solving}.
We discuss concrete examples in Section~\ref{sec:prelim} and Appendix~\ref{sec:appendix-examples}. In these settings, regularization plays a critical role in stabilizing dynamics and mitigating chaotic behaviors, both empirically (\cite[see][p.~26]{sakos2023exploiting}) and theoretically (cf.~\cite[][pp.~7--8]{vlatakis2021solving}, \cite{kalogiannislearning}).
Before enumerating our techincal contributions, we highlight a key result addressing \eqref{central-question}:
\begin{informalTheorem}[Theorem \ref{th:main-agda-convergence}]
There exists a decentralized, gradient-based method (\cref{eq:altgda}) that computes, with high probability under suitable Gaussian random initialization, an $\epsilon$-approximate Nash equilibrium for any $\epsilon > 0$ in broad class of hidden convex-concave zero-sum games, where each player's strategy is parameterized by a sufficiently wide two-layer neural network.
\begin{itemize}[leftmargin=3em,itemsep=0pt,topsep=0pt]
    \item The number of iterations required scales as
    \[
    O\left( \poly\left(\frac{1}{\text{width}_1}, \frac{1}{\text{width}_2}, \frac{1}{n}, d_{\text{input}}\right) \times \frac{L^3}{\mu^3} \times \log\left( \frac{1}{\epsilon} \right) \right),
    \]
    where $\text{width}_1, \text{width}_2$ are the hidden layer widths, $n$ is the number of training samples, $d_{\text{input}}$ is the input dimension, $L_{\text{}}$ is the smoothness constant, and $\mu_{\text{}}$ is the strong convexity modulus of the latent objective.
    \vspace{-0.5em}
    \item This guarantee holds provided the network $
    \text{width}_{1,2} = \tilde{\Omega}\left( \mu^2 \frac{n^3}{d_{\text{input}}} \right).
    $
\end{itemize}
\end{informalTheorem}
\vspace{-1.25em}
\paragraph{A converse byproduct: input-optimization games.}
We also uncover a new convergence guarantee in a related but distinct setting: \emph{optimizing directly over inputs when the neural network mappings are fixed. }
This perspective is motivated both by adversarial example generation through min-max formulations (see Section~\ref{sec:prelim}, Appendix \ref{sec:appendix-examples} \& \cite{wang2021adversarial}) and by  empirical results of \cite{sakos2023exploiting} for solving  normal form zero-sum games
using input-optimization at random fixed neural network mappings—without theoretical justification of non-singularity of spectrum trajectory.
Formally, the goal is to find input vectors $(x_{\text{Alice}}, x_{\text{Bob}})$ that implement a Nash equilibrium:
\begin{equation}
\min_{x_{\text{Alice}} \in \data_\opmin} \max_{x_{\text{Bob}} \in \data_\opmax} \loss\left(\opmin_\theta(x_{\text{Alice}}), \opmax_\phi(x_{\text{Bob}})\right). \tag{$\prod^{-1}$}
\end{equation} for some convex-concave function $\mathcal{L}$, typically referred as attack's loss \cite{wang2021adversarial}. In this regard, we formally establish that Algorithm \ref{eq:altgda} converges to an $\epsilon$-Nash equilibrium with iteration complexity $\tilde{O}\left( \frac{1}{\epsilon} \log\left( \frac{1}{\epsilon} \right) \right)$ under high-probability guarantees (Theorem~\ref{th:warmup-main-claim}). To the best of our knowledge, this provides the first open-box, provable convergence result for input-optimization attacks based on randomly initialized overparameterized neural networks, matching and theoretically explaining the experimental observations of~\cite{sakos2023exploiting} and \cite{wang2021adversarial}.
\subsection{Challenges and Our Approach: Bridging Overparameterization with Strategic Learning}
%\subsection{From Overparameterization to Strategic Learning: Challenges and Approach}}
\textbf{Back to minimization.} The optimization of min-max objectives—especially convex–concave or structured non-convex games—has been extensively studied (for an appetizer see Appendix~\ref{subsec:lit-review-optim-hidden-structure}-\ref{subsec:lit-review-cvx-ccv-min-max} and references therein). However, the dynamics of gradient-based methods in games where players are parameterized by neural networks remain far less understood.  In minimization of training loss, a powerful lens for analyzing the success of gradient descent (GD) is the theory of \emph{overparameterization} and the \emph{Neural Tangent Kernel} (NTK). In the infinite-width limit, GD converges provided the NTK’s smallest eigenvalue remains bounded away from zero. For finite-width networks, convergence proofs typically hinge on two ingredients: (i) good NTK conditioning at initialization, and (ii) negligible drift of the NTK during training \cite{oymak2019overparameterized,chizat2019lazy,bombari2022memorization,song2021subquadratic}, ensuring that an underlying Polyak–Łojasiewicz (PŁ) condition is maintained.  
%This “stability of NTK” approach has enabled progressive relaxations of overparameterization requirements \cite{song2021subquadratic}.
%In minimization settings, a powerful framework for understanding the success of gradient descent (GD) is the theory of \emph{overparameterization} and the \emph{Neural Tangent Kernel} (NTK). In the infinite-width limit, it suffices for the smallest eigenvalue of the NTK to remain bounded away from zero for GD to drive the training loss to zero. For networks of finite but large width, convergence proofs typically rely on two conditions: (i) the NTK is well-conditioned at initialization, and (ii) the NTK does not drift significantly during training (e.g., \cite{oymak2019overparameterized,chizat2019lazy,bombari2022memorization}), ensuring that the Polyak–Łojasiewicz (PL) condition holds. This “stability of NTK” approach has been successfully exploited for networks with progressively lower degrees of overparameterization \cite{song2021subquadratic}.

\textbf{Extending to Games: The spectrum path.} Even simple hidden zero-sum games, where players are parameterized by two-layer neural networks with smooth activations, can cause vanilla GDA to diverge arbitrarily \cite{vlatakis2019poincare}. Although PŁ-based convergence for minimization has been understood since the classical works of Polyak and Łojasiewicz \cite{polyak1963gradient,lojasiewicz1963propriete}, analogous results for min-max optimization have only recently emerged \cite{yang2020global,yang2022faster,kalogiannis2025solving}. More recently, hidden convexity has been shown to imply a PŁ structure—both in minimization \cite{fatkhullin2023stochastic} and in min-max games \cite{kalogiannis2025solving}. However, this reduction to PŁ-condition reveals a key technical obstacle: hidden convexity alone cannot safeguard convergence if the Jacobians of the players’ mappings suffer from near-singularities—i.e., if the least singular value approaches zero. In this regime, the effective PŁ-modulus degenerates, the gradient dominance property and convergence guarantees break down. Thus, the evolution of singular values under the employed learning dynamics becomes central challenge.
 %under hidden convexity, the effective PL modulus degenerates when the Jacobians of the players’ mappings become near-singular ---the least singular value approaches zero---. In such cases, the gradient dominance property collapses, and convergence guarantees break down. Thus, analyzing the evolution of singular values under learning dynamics becomes essential.

In this work, we adopt the \emph{alternating gradient descent-ascent} (AltGDA) method, which mirrors natural sequential play between agents. From a technical standpoint, alternation proves crucial: simultaneous one-timescale GDA (SimGDA) may diverge both in case of hidden convex–concave games \cite{vlatakis2019poincare} and  two-sided-PŁ games \cite{yang2020global}. Additionally, alternation has been explored as an acceleration and stabilization tool for min-max optimization \cite{lee2024fundamental,zhang2022near}.

\begin{itemize}[leftmargin=1em,topsep=0pt]
\item  \textbf{AltGDA Path Length:}  Hence, our first central technical contributions is a tight control of the \emph{path length} of AltGDA iterates (Lemma~\ref{lem:potential-upper-zero}). We show that AltGDA trajectories remain confined within a bounded region around initialization, preventing severe deterioration of hidden convex–concave structure (e.g., Jacobian conditioning). While path-length bounds are relatively straightforward in minimization—by directly unrolling GD iterations—in min-max problems, the alternating structure introduces significant complications for such ad-hoc analysis. To circumvent this, we employ a carefully designed \emph{potential function}—a weighted interpolation between the two players’ Nash gaps— by \cite{yang2020global}, which may be of  independent interest.
\end{itemize}

Beyond bounding the trajectory, two additional challenges arise relative to standard supervised learning:
\begin{itemize}[leftmargin=1em,topsep=0pt]
\item \textbf{Output Dimension:} In games, neural networks output distributions over actions or more generally higher-dimensional vectors, unlike scalar labels in classification tasks. Estimating the singular value spectrum of such vector-output neural networks is more subtle. To address this, we arrive at Lemma~\ref{lem:main-jacobian-singular-val-smoothness-whp}, by adapting techniques from \cite{song2021subquadratic} which essentially combines Hermite expansions of hidden layer outputs, first-order Taylor series expansion and Lipschitzness of Jacobians, and high-probability concentration bounds for random Gaussian matrices.
\item \textbf{Average-Case Analysis of Input Min-Max Games:}
A similar approach is employed for input-optimization games, where the roles of inputs and weights are reversed. From a worst-case perspective, there exist constructions leading to rank-deficient Jacobians and failure of GDA due to convergence to spurious local optima \cite{vlatakis2019poincare}, our analysis takes an average-case view. Specifically, we show that min-max input attacks, solved via AltGDA, succeed with high probability when the neural network mappings are randomly sampled with Gaussian initializations (\Cref{th:warmup-main-claim}).
% and mild $\epsilon$-regularization, AltGDA converges to an $\epsilon$-Nash point with high probability (\Cref{th:warmup-main-claim}).
\item \textbf{General Loss Structures:} Unlike many prior works, which rely on the non-linear least squares structure of supervised losses to control dynamics \cite{song2021subquadratic,liu2022loss,liu2020linearity}, we allow general separable latent objectives combining strongly convex regularizers and bilinear couplings. This more general setting requires significantly stronger control on the optimization trajectory and leads to a fundamentally different overparameterization scaling, namely $\Omega(n^3)$ compared to $\Omega(n)$ in pure minimization settings (Theorem~\ref{th:main-agda-convergence})
\end{itemize}

\section{Preliminaries}
\label{sec:prelim}
%\textbf{Preliminaries: Smoothness and Lipschitz Properties}
We begin by introducing the standard notions of smoothness and Lipschitz continuity that will be used throughout this work. All norms are taken to be the Euclidean (\(\ell_2\)) norm unless otherwise stated.
\paragraph{Lipschitz Continuity, Smoothness, and Strong Convexity.}
Let \( f: \mathbb{R}^d \to \mathbb{R} \) be a differentiable function. We say that \(f\) is \(L_f\)-Lipschitz continuous and \(L_{\nabla f}\)-smooth if there exist constants \(L_f, L_{\nabla f} > 0\) such that
\[
|f(u) - f(v)| \leq L_f \|u - v\|, \quad \|\nabla f(u) - \nabla f(v)\| \leq L_{\nabla f} \|u - v\|, \quad \forall u,v \in \mathbb{R}^d.
\]
Moreover, \(f\) is \(\mu\)-strongly convex if there exists \(\mu > 0\) such that
\[
f(v) \geq f(u) + \langle \nabla f(u), v - u \rangle + \frac{\mu}{2} \|v - u\|^2, \quad \forall u, v \in \mathbb{R}^d.
\]
Similarly, for a parametrized mapping (e.g., the neural network) \(M_\theta(x): \mathbb{R}^{d_0} \to \mathbb{R}^{d_2}\) with parameters \(\theta \in \mathbb{R}^M\), we say \(M_\theta\) is \(\beta_M\)-smooth (w.r.t. \(\theta\)) at fixed input \(x\) if
\begin{equation}
\label{eq:def-smoothness-neural-net}
\sigma_{\max}\left( \nabla_\theta M_\theta(x) - \nabla_\theta M_{\theta'}(x) \right) \leq \beta_M \|\theta - \theta'\|, \quad \forall \theta, \theta' \in \mathbb{R}^M,
\end{equation}
where \(\sigma_{\max}(\cdot)\) denotes the largest singular value and \(\nabla_\theta M_\theta(x)\) is the Jacobian of \(M_\theta(x)\) with respect to \(\theta\).
\paragraph{Finite-Sample Parametrized Min-Max Setting.}
Then, we unroll the general hidden convex–concave model of \eqref{eq:main-problem} to the finite-sample empirical risk minimization (ERM) setting, assuming access to a (possibly labeled) dataset \(\data = (\data_F, \data_G) = \{(x_i, y_i)\}_{i=1}^n\) of size \(n\). Formally, we consider the following optimization problem:
\begin{equation}\min_{\theta \in \mathbb{R}^{{d_{\theta}^{(\opmin)}}}} \max_{\phi \in \mathbb{R}^{{d_{\phi}^{(\opmax)}}}} \loss_\data(\opmin_\theta, \opmax_\phi):=
I_1^{\data_\opmin}(\opmin_\theta) + I_2^{(\data_\opmin,\data_\opmax)}(\opmin_\theta, \opmax_\phi) - I_3^{\data_\opmax}(\opmax_\phi),
\label{eq:min-max-struct} \tag{$\diamond$}
\end{equation}
where the mappings \(\opmin_\theta: d_0^{(\opmin)} \to \mathbb{R}^{\text{dim}(\strategyspace_\opmin)}\) and \(\opmax_\phi: d_0^{(\opmax)} \to \mathbb{R}^{\text{dim}(\strategyspace_\opmax)}\) are smooth functions parametrized by \(\theta\) and \(\phi\) (e.g., neural networks).
The individual components and the bilinear coupling expand as:
\begin{itemize}[nosep,leftmargin=*]
    \item \( I_1^{\data_\opmin}(\opmin_\theta) = \sum_{i \in [|\data_\opmin|]} \ell_i(y_i, \opmin_\theta(x_i))\),  \( I_3^{\data_\opmax}(\opmax_\phi) = \sum_{j \in [|\data_\opmax|]} \ell_j(y_j, \opmax_\phi(x_j))\),
    \item \( I_2^{(\data_\opmin,\data_\opmax)}(\opmin_\theta, \opmax_\phi) = \sum_{i \in [|\data_\opmin|]} \sum_{j \in [|\data_\opmax|]} \opmin_\theta(x_i)^\top A(x_i,x_j,y_i,y_j) \opmax_\phi(x_j)\),
\end{itemize}
where for each sample pair \(e_{ij}(x_i, x_j, y_i, y_j)\), the coupling matrix \(A(x_i,x_j,y_i,y_j) \in \mathbb{R}^{\text{dim}(\strategyspace_\opmin) \times \text{dim}(\strategyspace_\opmax)}\) encodes interactions between players.

\paragraph{Blanket Assumptions on the Loss and Coupling Terms.}

We impose the following structural assumptions on the loss components and bilinear couplings appearing in the finite-sample min-max objective~\eqref{eq:min-max-struct}.
\begin{assumption}[Smoothness, Hidden Strong Convexity, and Gradient Control]$\\$\vspace{-1em}
\label{ass:smoothness-hidden-cvx-gradient}
%Let the training loss be constructed as in \eqref{eq:min-max-struct} with dataset \(\data\). We assume the following:
\begin{enumerate}[label=(\roman*), nosep, leftmargin=*]
    \item \textbf{Smoothness:} Each sample-wise individual loss \(\ell(y, \text{Map}_w(x))\) is differentiable and \(L\)-smooth with respect to \(\text{Map}_w(x)\).
    \item \textbf{Coupling Structure:} Each bilinear coupling matrix \(A(x_i,x_j,y_i,y_j)\) is known, fixed, and has bounded operator norm.
    \item \textbf{Hidden Strong Convexity:} Each sample-wise individual loss \(\ell(y, h=\text{Map}_w(x))\) is strongly convex with respect to the neural network output $h$.
  \item \textbf{Gradient Growth Condition:} There exist constants \(A_1, A_2, A_3 > 0\) such that for all \(h \in \mathbb{R}^{d_{\text{out}}}\) and \(y \in \mathcal{Y}\), the (latent) gradient of each loss \(\ell(y,h)\) satisfies:
    \[
    \left\| \nabla_h \ell(y,h) \right\| \leq A_1 \|h\| + A_2\, \mathrm{diam}(\mathcal{Y}) + A_3.
    \]
\end{enumerate}
\end{assumption}
\begin{remark}
Item~(i) ensures the applicability of gradient-based methods, while Items~(i)--(iii) imply that the overall loss \(\loss_\data\) is \((L_\loss, L_{\nabla\loss})\)-smooth and \((\mu_\theta, \mu_\phi)\)-hidden-strongly convex–concave, with constants determined by the structure of \(\ell\) and \(A(\cdot)\). For standard strongly convex losses (e.g., MSE, logistic loss, cross-entropy with \(\ell_2\)-regularization), the gradient with respect to the network output is controlled as in Item~(iv) by an affine function of the output norm, with the leading coefficient proportional to the strong convexity modulus, \(A_1 = \Theta(\mu)\). \footnote{Controlling the gradient growth  is critical for non-asymptotic overparameterization bounds, as it ensures that iterates remain within regions where hidden convexity persists. A detailed discussion and examples are deferred to Appendix~\ref{appendix:grad-growth}.}
\end{remark}
\noindent
\textbf{Neural Network and Training Data Model.}\hspace{-1em}
\begin{definition}[Two-layer Neural Network]
We consider two-layer neural networks (often referred to as shallow networks). Specifically, such a network \(h\) is defined by:
\label{def:neural-net-def}
\[
h(x)= \text{Map}_{w=(W_1,W_2)}(x) = W_2^{(h)} \psi\left( W_1^{(h)} x \right),
\]
where \(x \in \mathbb{R}^{d_0^{(h)}}\), \(W_1^{(h)} \in \mathbb{R}^{d_1^{(h)} \times d_0^{(h)}}\), \(W_2^{(h)} \in \mathbb{R}^{d_2^{(h)} \times d_1^{(h)}}\), and \(\psi: \mathbb{R} \to \mathbb{R}\) is an activation function applied coordinate-wise.
\end{definition}
%\begin{minipage}{0.67\textwidth}
%\begin{definition}[Two-layer Neural Network]
%We consider two-layer neural networks (often referred to as shallow networks). Specifically, such a network \(h\) is defined by:
%\label{def:neural-net-def}
%\[
%h(x)= \text{Map}_{w=(W_1,W_2)}(x) = W_2^{(h)} \psi\left( W_1^{(h)} x \right),
%\]
%where \(x \in \mathbb{R}^{d_0^{(h)}}\), \(W_1^{(h)} \in \mathbb{R}^{d_1^{(h)} \times d_0^{(h)}}\), \(W_2^{(h)} \in \mathbb{R}^{d_2^{(h)} \times d_1^{(h)}}\), and \(\psi: \mathbb{R} \to \mathbb{R}\) is an activation function applied coordinate-wise.
%\end{definition}
%\end{minipage}
%\hfill
%\begin{minipage}{0.35\textwidth}
%\centering
%\scalebox{0.8}{\input{./NN.tex}}\end{minipage}
\begin{assumption}[Properties of the Two-layer Neural Network]
\label{ass:neural-net-act}
We assume:
\vspace{-0.5em}
\begin{itemize}[nosep, leftmargin=*]
\item \(h(\cdot)\) is twice-differentiable and \(\beta_h\)-smooth with respect to $(W_1^{(h)},W_2^{(h)})$.
\item \(\psi\) is twice-differentiable with \(\psi(0) = 0\), bounded first and second derivatives (\(\dot{\psi}_{\max}\), \(\ddot{\psi}_{\max}\)), and finite Hermite norm \(\lVert \psi \rVert_{\mathcal{H}} < \infty\)\footnotemark.
\item The training data $(X, Y) \in \mathbb{R}^{d_{0}^{(h)}\times n} \times \mathbb{R}^{d_{2}^{(h)}\times n}$ satisfies \(\|x_i\| = 1 \;\forall i\in[n]\) and \(\|Y\| \leq 1\).
\item $\sigma_{\max}\left((W_2^{(h)})_k\right) = \mathcal{O}\left( \frac{\dot\psi_{\max}}{\ddot\psi_{\max}} \right)$ for all $k \in \mathbb{Z}_{\geq 0}$\footnotemark.
\end{itemize}
\end{assumption}
\footnotetext{This is needed for upper bound on maximum singular value of $h$'s Jacobian.}
\footnotetext{Given random Gaussian initialization and ensuring that iterates never leave a finite-radius ball around initialization, we can safely assume the maximum singular value is bounded from above for all iterates $k$.}
 Although we assume the activation function \(\psi\) to be twice differentiable—thereby excluding non-smooth activations such as ReLU—our results naturally extend to smooth approximations like the Gaussian Error Linear Unit (GeLU) \parencite{hendrycks2016gaussian} and the softplus function \parencite{dugas2000incorporating}, which have been shown empirically to perform comparably or even better than ReLU in several settings \footnote{Moreover, we expect that the smoothness assumption can be relaxed. Since AltGDA includes subgradient variant, our analysis could likely be extended to (almost) smooth activations such as ReLU, by carefully treating the measure-zero set of non-differentiability points. We leave this technical refinement to future work, as the core phenomena should remain qualitatively unchanged.} \parencite{biswas2021smu, elfwing2018sigmoid}. 
The performance of gradient-based training depends critically on the geometry of the training data. A standard proxy for data diversity is the well-conditioning of sample matrix X (with input vectors as rows), under standard random designs such as isotropic or sub-Gaussian inputs \parencite{oymak2020toward, vershynin2018high, vershynin2011spectral, rudelson2009smallest}.
%The performance of gradient-based training algorithms heavily depends on the geometric properties of the training data. A common and tractable proxy for assessing data diversity is through the spectrum of the sample matrix \( X \) formed by stacking input vectors as rows. 
%%In particular, the quantities \(\sigma_{\min}(X)\) and \(\sigma_{\max}(X)\)—the minimal and maximal singular values—serve as natural measures of the dataset's well-conditioning.
%In the literature on random matrix theory and overparameterized models \parencite{oymak2020toward, vershynin2018high, vershynin2011spectral, rudelson2009smallest}, it has become standard to adopt the following assumption for data matrices generated from reasonable random distributions (e.g., isotropic or sub-Gaussian designs):
\begin{assumption}[Spectral Properties of the Data Matrix]
\label{ass:spectral-data}
Let \(X \in \mathbb{R}^{n \times d}\) denote the data matrix whose rows \(x_i\) satisfy \(\|x_i\|_2 = 1\) for all \(i\).
We assume that the number of samples satisfies \(n \geq d\), and that \(X\) is "\emph{generic}" in the sense that $\sigma_{\min}(X^{*r}) = \Omega(1)$ and $\sigma_{\max}(X) = O\left( \sqrt{{n}/{d}} \right)$\footnote{$\sigma_{\max}(X) = O(\sqrt{n/d})$ w.h.p. when, for e.g., X has i.i.d. $\mathcal{N}(0,1)$ entries \cite[Section II.A]{oymak2020toward}. For an arbitrary, fixed $X$, $\sigma_{\max}(X) = \lVert X \rVert_2 \leq \lVert X \rVert_F = \sqrt{n} \;(\because \lVert x_i\rVert_2 = 1\; \forall \;i)$. See Remark \ref{remark:data-singular-effect-overparam} in Appendix \ref{sec:appendix-neural-games}.}, where \(n\) is the number of samples and \(d\) is the ambient input dimension.
%and the minimal singular value \(\sigma_{\min}(X)\) is bounded away from zero with high probability.
\end{assumption}
For a fair comparison with the minimization literature, in the main body of the paper we adopt the data genericity assumption. Interested readers can refer to appendix  for fine-grained width bounds.\footnote{
The assumption \(n \gtrsim d\) ensures that \(X\) is sufficiently tall to avoid rank deficiency and the minimum singular value of the Khatri-Rao product \(\sigma_{\min}(X^{*r})\) serves as natural measures of the dataset's well-conditioning. Intuitively, Assumption~\ref{ass:spectral-data} reflects that the dataset covers the input space sufficiently uniformly, ensuring that no direction is either too collapsed or too amplified. Such a balance is critical for achieving stable optimization dynamics and avoiding pathological trap into lower-dimensional subspaces during training.
%\begin{equation}
%\label{eq:data-well-conditioning}
%\sigma_{\min}(X) \Omega(1) \quad \text{and} \quad \sigma_{\max}(X) = \Theta\left( \sqrt{\frac{n}{d}} \right),
%\end{equation}
%where \(n\) is the number of samples and \(d\) is the ambient input dimension. For fair comparison with the minimization case, in the main part we follow the genericity assumption. The interested reader can find in appendix the fined-grained width bound.
%
%
%
%
%Moreover, although each row has unit norm, random matrix theory (e.g., \cite{oymak2020toward, vershynin2018high, vershynin2011spectral, rudelson2009smallest}) shows that the overall operator norm \(\|X\|\) behaves like \(\sqrt{n/d}\) up to constants.
%This well-conditioned spectral structure is crucial for stability and fast convergence of learning dynamics such as SGD or GD, preventing the iterates from being trapped by ill-conditioning or degenerate directions.
}

\paragraph{Solution concept.} 
Note that while our min-max objective $\loss_\data(\opmin_\theta, \opmax_\phi)$ is not convex–concave in $\opmin_\theta, \opmax_\phi$, it is (strongly) convex–concave in the outputs of $\opmin_\theta$ and $\opmax_\phi$, i.e., hidden strongly-convex–concave. Our analysis leverages precisely this \textit{hidden} structure. Specifically, \cite[Proposition 2]{fatkhullin2023stochastic} states that if $\min_\theta f(\theta)$ where$ f(\theta) = F(H(\theta))$ and $F$ is strongly convex while $H$ is a smooth map (e.g., a neural net), then $f$ satisfies the PŁ-condition. Thus, hidden strong convexity implies PŁ-condition, even for nonconvex objectives. Utilizing this along with \cite[Proposition of 4.1]{dorazio2025solving}, we can define PŁ-moduli for our min-max objective in terms of the smallest singular values of the neural network Jacobians:
\begin{fact}[Reduction to Two-Sided PŁ-condition \parencite{fatkhullin2023stochastic, dorazio2025solving}]
\label{lem:function-composition-pl-condition}
The loss function \(\loss_\data\) satisfies a two-sided Polyak–Łojasiewicz (PL) condition with parameters \(\mu_\theta \sigma_{\min}^2(\nabla_\theta \opmin_\theta)\) and \(\mu_\phi \sigma_{\min}^2(\nabla_\phi \opmax_\phi)\), where \(\sigma_{\min}(\cdot)\) denotes the smallest singular value of the corresponding Jacobian mappings.
\end{fact}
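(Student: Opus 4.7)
The plan is to derive the two-sided PŁ inequality by a chain-rule reduction through the hidden convex–concave structure, essentially combining the composition argument of \cite{fatkhullin2023stochastic} on the minimization side with its symmetric counterpart on the maximization side, as in \cite{dorazio2025solving}.

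For the min-side, I freeze $\phi$ and view $\loss_\data$ as a function $L_\phi$ of the stacked output vector $F := (\opmin_\theta(x_i))_{i\in[|\data_\opmin|]}$ of the $\opmin$-player's network. Under Assumption~\ref{ass:smoothness-hidden-cvx-gradient}, $L_\phi$ is the sum of the $\mu_\theta$-strongly convex piece $I_1^{\data_\opmin}(F)$, a term linear in $F$ (because the coupling $I_2^{(\data_\opmin,\data_\opmax)}(F,\opmax_\phi)$ is bilinear), and a piece constant in $F$, so $L_\phi$ is itself $\mu_\theta$-strongly convex in $F$. Strong convexity then immediately implies the output-space PŁ inequality
\[
\|\nabla_F L_\phi(F)\|^2 \;\geq\; 2\mu_\theta\bigl(L_\phi(F) - \min_{F'} L_\phi(F')\bigr).
\]

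I then push this bound back to $\theta$ via the chain rule $\nabla_\theta \loss_\data = (\nabla_\theta \opmin_\theta)^\top \nabla_F L_\phi(F)$. In the overparameterized regime the Jacobian $\nabla_\theta \opmin_\theta$ is a wide matrix, so for any output-space vector $v$,
\[
\bigl\|(\nabla_\theta \opmin_\theta)^\top v\bigr\|^2 \;=\; v^\top \nabla_\theta \opmin_\theta (\nabla_\theta \opmin_\theta)^\top v \;\geq\; \sigma_{\min}^2(\nabla_\theta \opmin_\theta)\,\|v\|^2,
\]
where $\sigma_{\min}^2(\nabla_\theta \opmin_\theta)$ is the smallest eigenvalue of the Jacobian-Gram matrix $\nabla_\theta \opmin_\theta (\nabla_\theta \opmin_\theta)^\top$. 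Combining the two bounds and using the realizability relaxation $\min_{F'} L_\phi(F') \leq \min_{\theta'} \loss_\data(\opmin_{\theta'}, \opmax_\phi)$ to replace the free output minimum by the realizable one gives
\[
\|\nabla_\theta \loss_\data\|^2 \;\geq\; 2\mu_\theta\,\sigma_{\min}^2(\nabla_\theta \opmin_\theta)\,\bigl(\loss_\data(\opmin_\theta, \opmax_\phi) - \min_{\theta'} \loss_\data(\opmin_{\theta'}, \opmax_\phi)\bigr),
\]
which is exactly the PŁ inequality with modulus $\mu_\theta \sigma_{\min}^2(\nabla_\theta \opmin_\theta)$.

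The max-side argument is entirely symmetric: I apply the same composition trick to $-\loss_\data$ viewed as a function of the stacked output $G$ of the $\opmax$-player's network at fixed $\theta$, using that $I_3^{\data_\opmax}$ is $\mu_\phi$-strongly convex in $G$ and that the coupling is linear in $G$, and convert the resulting free $G'$-minimum into the realizable $\phi'$-maximum of $\loss_\data$. No individual step is hard; the only point requiring any care is the direction of the realizability relaxation, which goes the correct way for a PŁ-style lower bound on $\|\nabla_\theta \loss_\data\|^2$ but would fail if one tried instead to upper-bound it. That is precisely the subtlety resolved in the min–max setting by Proposition~4.1 of \cite{dorazio2025solving}, and the net effect is to transfer the strong convexity moduli $\mu_\theta,\mu_\phi$ of the latent game into effective PŁ-moduli rescaled by the Jacobian spectra, thereby motivating why later sections must control $\sigma_{\min}(\nabla_\theta \opmin_\theta)$ and $\sigma_{\min}(\nabla_\phi \opmax_\phi)$ along the AltGDA trajectory.
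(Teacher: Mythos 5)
Your reconstruction is correct and follows exactly the chain-rule composition argument that the paper invokes from \cite{fatkhullin2023stochastic} (strong convexity in the latent output plus the Jacobian singular-value lower bound yields a parameter-space PŁ) together with its two-sided min–max extension from \cite{dorazio2025solving}; the paper states this as a Fact and cites rather than proving it, but your derivation—including the direction of the realizability relaxation $\min_{F'} L_\phi(F') \leq \min_{\theta'} \loss_\data(\opmin_{\theta'}, \opmax_\phi)$—is precisely what those references do. Nothing to add beyond noting that the PŁ constant $\mu_\theta\sigma_{\min}^2(\nabla_\theta \opmin_\theta)$ is only meaningful where the Jacobian retains full row rank, which is exactly what the rest of the paper works to guarantee along the AltGDA trajectory.
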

This reduction resolves several challenges inherent to general nonconvex–nonconcave min-max problems. 
First, it unifies several optimality notions—namely, \emph{global minimax}, \emph{saddle point}, and \emph{gradient stationarity}—which, in general settings, need not coincide. For formal definitions see Appendix~\ref{sec:appendix-altgda}.
In the case where the objective satisfies a two-sided Polyak–Łojasiewicz (PŁ) condition, these notions become equivalent even at their \(\epsilon\)-approximate versions. 
We formalize this via the following lemma:
\begin{lemma}[Lemma 2.1 in \cite{yang2020global}, Appendix C in \cite{kalogiannis2025solving}]
\label{lem:pl-equivalence}
If the objective function \(f\) satisfies the two-sided PŁ-condition, then all three notions in Definition~\ref{def:sol-concept} are equivalent:
\[
\epsilon-\text{(Saddle Point)} \iff \epsilon-\text{(Global Min-Max)} \iff \epsilon-\text{(Stationary Point)} \quad \forall \epsilon \geq 0
\]
\end{lemma}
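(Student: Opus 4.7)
The plan is a three-leg cycle, stationary $\Rightarrow$ saddle $\Rightarrow$ global-minimax $\Rightarrow$ stationary, using two-sided PŁ in one direction and $L_{\nabla\loss}$-smoothness in the other. Recall the complementary implications: PŁ lets \emph{small gradients} bound \emph{small function-value gaps}, while smoothness lets \emph{small function-value gaps} bound \emph{small gradients}.

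For the first leg, assume $\|\nabla_\theta\loss_\data(\theta^*,\phi^*)\|$ and $\|\nabla_\phi\loss_\data(\theta^*,\phi^*)\|$ are $O(\epsilon)$. Fact~\ref{lem:function-composition-pl-condition} gives $\|\nabla_\theta\loss_\data\|^2 \geq 2\mu_\theta\sigma_{\min}^2(\nabla_\theta\opmin_\theta)\bigl(\loss_\data(\theta^*,\phi^*) - \min_\theta \loss_\data(\theta,\phi^*)\bigr)$, and an analogous inequality in $\phi$ bounds $\max_\phi \loss_\data(\theta^*,\phi) - \loss_\data(\theta^*,\phi^*)$. Adding the two reassembles the Nash/Nikaido–Isoda gap $\mathrm{DG}_{\loss_\data}(\theta^*,\phi^*)$, which is precisely the saddle criterion. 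For the second leg, saddle immediately implies global-minimax via weak duality: the bound $\max_\phi \loss_\data(\theta^*,\phi) - \min_\theta \loss_\data(\theta,\phi^*) \leq \epsilon$ sandwiches $\loss_\data(\theta^*,\phi^*)$ between $\max_\phi \min_\theta \loss_\data$ and $\min_\theta \max_\phi \loss_\data$, and two-sided PŁ together with the hidden convex–concave structure of $\loss_\data$ forces these two quantities to coincide, so $(\theta^*,\phi^*)$ is $\epsilon$-close to the common minimax value.

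For the third leg, I would apply the classical smoothness bound $\|\nabla g(x)\|^2 \leq 2L_{\nabla g}(g(x)-\min g)$ (a consequence of the descent lemma) to the slices $g(\theta)=\loss_\data(\theta,\phi^*)$ and $g(\phi)=-\loss_\data(\theta^*,\phi)$. Both slices inherit $L_{\nabla\loss}$-smoothness from the blanket assumption on $\loss_\data$, and the $\epsilon$-global-minimax hypothesis supplies the corresponding function-value gaps. This yields $\|\nabla_\theta\loss_\data(\theta^*,\phi^*)\|,\,\|\nabla_\phi\loss_\data(\theta^*,\phi^*)\| = O(\sqrt{L_{\nabla\loss}\,\epsilon})$, closing the cycle.

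The main subtlety—and the reason this is not a one-line remark—is the quantitative bookkeeping: the PŁ-moduli inherit the Jacobian singular values $\sigma_{\min}(\nabla_\theta\opmin_\theta)$ and $\sigma_{\min}(\nabla_\phi\opmax_\phi)$, so the three $\epsilon$-versions differ by multiplicative factors of order $L_{\nabla\loss}/(\mu_\theta\mu_\phi \sigma_{\min}^4)$. The statement as phrased only asks for qualitative equivalence at each fixed point for all $\epsilon \geq 0$, so no further obstacle arises here; the tight dependence on the spectrum only becomes critical when the equivalence is later fed back into the overparameterization analysis, where the path-length and Jacobian-conditioning guarantees of the main theorems keep these singular values uniformly bounded away from zero.
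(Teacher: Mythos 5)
The paper does not prove this lemma; it is cited from \cite{yang2020global} and \cite{kalogiannis2025solving}, so I am evaluating your argument on its own. Legs 1 and 2 of your cycle are sound: two-sided PŁ converts small gradients at $(\theta^*,\phi^*)$ into small slice gaps, which recombine into a small Nikaido–Isoda gap (the $\epsilon$-saddle criterion), and $\epsilon$-saddle $\Rightarrow$ $\epsilon$-global-minimax follows simply from $\min_\theta \loss(\theta,\phi^*) \leq \min_\theta\max_\phi\loss(\theta,\phi)$ — your appeal to hidden convexity here is unnecessary but harmless.

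Leg 3 has a genuine gap. To apply $\|\nabla g(\theta^*)\|^2 \leq 2L\big(g(\theta^*)-\min g\big)$ to the slice $g(\theta)=\loss(\theta,\phi^*)$, you need the gap $\loss(\theta^*,\phi^*) - \min_\theta\loss(\theta,\phi^*)$ to be $O(\epsilon)$, and you assert that "the $\epsilon$-global-minimax hypothesis supplies the corresponding function-value gaps." It does not. The $\epsilon$-global-minimax condition controls $\loss(\theta^*,\phi^*) - \min_\theta\max_\phi\loss(\theta,\phi)$, whereas the quantity you need differs from this by $\min_\theta\max_\phi\loss(\theta,\phi) - \min_\theta\loss(\theta,\phi^*) \geq 0$, which is not directly bounded: $\phi^*$ is known to be an $\epsilon$-best response to $\theta^*$, not an $\epsilon$-maximizer of the lower envelope $h(\phi)=\min_\theta\loss(\theta,\phi)$. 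Closing this hole requires the extra machinery that the cited works use — namely that under two-sided PŁ the envelopes $g(\theta)=\max_\phi\loss(\theta,\phi)$ and $h(\phi)=\min_\theta\loss(\theta,\phi)$ are themselves smooth and PŁ, so one can either (i) apply Danskin's theorem to get $\nabla g(\theta^*)=\nabla_\theta\loss(\theta^*,\bar\phi)$ for a best response $\bar\phi$, bound $\|\nabla g(\theta^*)\|$ by smoothness of $g$, and then bound $\|\nabla_\theta\loss(\theta^*,\phi^*)-\nabla_\theta\loss(\theta^*,\bar\phi)\|$ by $L_{\nabla\loss}\|\phi^*-\bar\phi\|$ plus quadratic growth; or (ii) chain quadratic growth of $g$ and $h$ with Lipschitz best-response maps to show $\phi^*$ is $O(\sqrt\epsilon)$-close to a saddle component, from which the slice gap follows. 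Either route needs the envelope regularity, which your one-line step skips.
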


Second, as discussed in the introduction, saddle points  may not exist in general nonconvex–nonconcave problems. Therefore, we explicitly adopt the following benign\footnote{This assumption is mild in our setting for two reasons:
\begin{itemize}[nosep,leftmargin=*]
    \item In generative tasks such as GANs, the existence of a saddle point corresponds to operating in the \emph{realization regime}, where the generator can fully match the data distribution \parencite{goodfellow2014generative}.
    \item Following \textcite{gidel2021limited}, if the parameter spaces for \(\theta\) and \(\phi\) are bounded, saddle point existence can be guaranteed by classical minimax theorems. 
    While we do not explicitly constrain the parameter spaces, our analysis shows that the iterates of the Alternating GDA (AltGDA) method remain confined within a bounded region.
    Thus, we can effectively assume boundedness without loss of generality.
\end{itemize}} assumption:
\begin{assumption}[Existence of Saddle Points]
\label{ass:saddle-exist}
The objective function \(\loss(\theta,\phi)\) admits at least one saddle point. 
Moreover, for any fixed \(\phi\), \(\min_{\theta \in \mathbb{R}^m} \loss(\theta,\phi)\) has a non-empty solution set and a finite minimum value.
Similarly, for any fixed \(\theta\), \(\max_{\phi \in \mathbb{R}^n} \loss(\theta,\phi)\) has a non-empty solution set and a finite maximum value.
\end{assumption}

\vspace{-1.5em}
\paragraph{Examples of hidden neural min-max optimization.}
Due to space limitations, we defer a comprehensive list of examples and references to Appendix~\ref{sec:appendix-examples}. To build intuition, we present below two representative bilinear examples that highlight the key structural differences. We broadly distinguish two principal types of ML-driven min-max problems
\begin{itemize}[nosep,leftmargin=*]
    \item \textbf{Network Optimization:}
     Problems where optimization is performed over neural network parameters given a fixed dataset (training over weights).
    \emph{This setting captures tasks such as generative modeling or robust adversarial reinforcement learning.}
    \[\textrm{Example}:    \min_{\theta} \max_{\phi} \opmin_\theta(x)^\top A \opmax_\phi(x').\]
    % \vspace{-1em}
    \item \textbf{Input Optimization:}   
        Problems where network parameters are fixed (e.g., random initialization), and optimization occurs over the input space (e.g., adversarial perturbations).
        \emph{This corresponds to input-driven optimization problems such as adversarial attack design.}
    \[\textrm{Example}:     \min_{x_{\text{Alice}} \in \data_\opmin} \max_{x_{\text{Bob}} \in \data_\opmax} \left( \opmin_\theta(x_{\text{Alice}})^\top A \opmax_\phi(x_{\text{Bob}}) \right).\]
    \vspace{-2em}
\end{itemize}
\section{Our Results}\label{sec:altgda}
\renewcommand{\altgda}{AltGDA}
\emph{Alternating Gradient Descent-Ascent} (\altgda) proceeds by sequentially updating the parameters of the min-player \(\theta\) and the max-player \(\phi\), leveraging the most recent gradient information at each step. The updates take the form:
\begin{equation}
\theta^{(t)} = \theta^{(t-1)} - \eta_\theta \nabla_\theta \loss_\data(\theta^{(t-1)}, \phi^{(t-1)}), \quad
\phi^{(t)} = \phi^{(t-1)} + \eta_\phi \nabla_\phi \loss_\data(\theta^{(t)}, \phi^{(t-1)})
\tag{\altgda}
\label{eq:altgda}
\end{equation}
where \(\eta_\theta, \eta_\phi > 0\) denote the respective step sizes.

Our analysis builds upon the framework of \textcite{yang2020global}, which guarantees \(\log(1/\epsilon)\) convergence under a two-sided PL condition. In our setting, the PL moduli depend on the smallest singular values \(\sigma_{\min}\) of the Jacobians \(\nabla_\theta \opmin_\theta\) and \(\nabla_\phi \opmax_\phi\), which must remain bounded away from zero throughout the optimization trajectory (\Cref{lem:function-composition-pl-condition}). This dependence is critical, as both the PL constants and the step sizes in \altgda~scale inversely with \(\sigma_{\min}\).

Hence, we first establish that, under suitable random initialization and sufficient overparameterization, the initialization satisfies \(\sigma_{\min}(\nabla_\theta \opmin_\theta), \sigma_{\min}(\nabla_\phi \opmax_\phi) \geq cB\) with high probability (see \Cref{lem:main-jacobian-singular-val-smoothness-whp,lem:warmup-jacobian-singular-smoothness-whp}). Furthermore, by smoothness of the neural mappings, there exists a Euclidean ball
$\mathcal{B}((\theta_0, \phi_0), R) $
within which the singular values of the Jacobians remain well-conditioned, i.e., \(\sigma_{\min} > B > 0\). The radius is given by \(R = \frac{\mu_{\text{Jac}}}{2\beta}\), where \(\mu_{\text{Jac}} := \max\left\{ \mu_{\text{Jac}}^{(\opmin)}, \mu_{\text{Jac}}^{(\opmax)} \right\}\) and \(\beta := \min\left\{ \beta_{\opmin}, \beta_{\opmax} \right\}\). This result parallels Lemma 1 of \textcite{song2021subquadratic}, adapted here to the alternating min-max setting. 

However, the optimization trajectory could, in principle, leave this region.
To prevent this, we analyze the path length of \altgda~using \textcite{yang2020global}'s Lyapunov potential function, rather than directly unrolling the iterates—which would be analytically cumbersome due to alternation:
\begin{definition}[Lyapunov Potential \cite{yang2020global}]
    For a min-max objective function, $\loss(\theta,\phi)$, we define the Lyapunov potential at time $t$ as $P_t = \left( \max_{\phi} \mathcal{L}(\theta_t, \phi) - \mathcal{L}(\theta^\star, \phi^\star) \right) + \lambda \left( \max_{\phi} \mathcal{L}(\theta, \phi_t) - \mathcal{L}(\theta_t, \phi_t) \right)$. (Note that the choice of $\lambda$ will not affect our conclusions about overparameterization in this paper.)
\end{definition}
\begin{restatable}[Theorem 3.2 in \cite{yang2020global}]{lemma}{lemmadistancetosaddle}
% \begin{lemma}[Theorem 3.2 in \cite{yang2020global}]
    \label{lem:agda-dist-to-saddle}
    Suppose the min-max objective function $\loss(\theta,\phi)$ is $ L_{\nabla\loss}$-smooth and satisfies the two-sided PŁ-condition with $(\mu_\theta,\mu_\phi)$. Then if we run AltGDA with $\eta_\theta = \frac{\mu_{\phi}^2}{18L_{\nabla\loss}^{3}}$ and $\eta_\phi = \frac{1}{L_{\nabla\loss}}$, then    
$        \lVert \theta_{t+1} - \theta_t \rVert + \lVert \phi_{t+1} - \phi_t \rVert \leq \sqrt{\alpha} c^{t/2} \sqrt{P_0}$
   where constants \(\alpha\) and \(c \in (0, 1)\) depend only on \(L_{\nabla\loss}, \mu_\theta,\mu_\phi\) and $P_0 $ is the Lyapunov potential at time $t=0$. (Please refer to Remark \ref{remark:path-length-altgda} for the exact expressions for $\alpha$ and $c$.)
% \end{lemma}
\end{restatable}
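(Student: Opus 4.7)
The claim packages two ingredients: (i) geometric decay of the Lyapunov potential, $P_{t+1} \leq c\, P_t$ with $c \in (0,1)$, and (ii) a bound of each one-step displacement by a constant multiple of $\sqrt{P_t}$. Ingredient (i) is precisely the content of Theorem 3.2 of \cite{yang2020global} applied to our $L_{\nabla\loss}$-smooth, two-sided PŁ objective with the prescribed step sizes and a suitably chosen weight $\lambda$; I would invoke it directly. The work then reduces to proving (ii) and composing the two estimates.

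For ingredient (ii), the AltGDA update rule gives
\[
\lVert \theta_{t+1}-\theta_t\rVert = \eta_\theta\, \lVert \nabla_\theta \loss(\theta_t,\phi_t)\rVert, \qquad \lVert \phi_{t+1}-\phi_t\rVert = \eta_\phi\, \lVert \nabla_\phi \loss(\theta_{t+1},\phi_t)\rVert.
\]
For the $\theta$-side, I would use the standard consequence of $L_{\nabla\loss}$-smoothness, namely $\lVert \nabla_\theta \loss(\theta_t,\phi_t)\rVert^2 \leq 2L_{\nabla\loss}\bigl(\loss(\theta_t,\phi_t) - \min_{\theta'} \loss(\theta',\phi_t)\bigr)$, bound the right-hand side above by the primal-envelope gap $\max_{\phi'} \loss(\theta_t,\phi') - \loss(\theta^\star,\phi^\star)$, and recognize this as the first summand of $P_t$. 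For the $\phi$-side, the analogous smoothness bound yields $\lVert \nabla_\phi \loss(\theta_{t+1},\phi_t)\rVert^2 \leq 2L_{\nabla\loss}\bigl(\max_{\phi'} \loss(\theta_{t+1},\phi') - \loss(\theta_{t+1},\phi_t)\bigr)$; since the $\phi$-gradient is evaluated at the already-updated $\theta_{t+1}$, I would bridge it to the dual gap $D_t := \max_{\phi'} \loss(\theta_t,\phi') - \loss(\theta_t,\phi_t)$ sitting inside $P_t$ by one application of joint $(\theta,\phi)$-smoothness of $\loss$, at the cost of an additive term of order $L_{\nabla\loss} \lVert \theta_{t+1}-\theta_t\rVert^2$ that a Young-inequality step absorbs back into the $\theta$-side contribution already bounded.

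Chaining the two displacement bounds with the contraction $P_t \leq c^t P_0$ produces the envelope $\sqrt{\alpha}\, c^{t/2}\sqrt{P_0}$, with $\alpha$ polynomial in $\eta_\theta^2 L_{\nabla\loss}$ and $\eta_\phi^2 L_{\nabla\loss}$, hence in $L_{\nabla\loss}/\mu_\theta$ and $L_{\nabla\loss}/\mu_\phi$ once the prescribed step sizes are substituted. \textbf{The main obstacle} is the cross-timescale mismatch in step (ii): the $\phi$-gradient is evaluated at $\theta_{t+1}$ rather than $\theta_t$, so the gap that pops out of the smoothness bound does not coincide verbatim with the $D_t$ summand of $P_t$. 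Bridging this requires joint-in-$(\theta,\phi)$ smoothness of $\loss$ combined with the already-established $\theta$-displacement bound; the specific prescription $\eta_\theta = \mu_\phi^2/(18L_{\nabla\loss}^3)$ is calibrated precisely so that the resulting cross terms remain dominated by $P_t$ and do not spoil the geometric rate inherited from Yang's potential contraction.
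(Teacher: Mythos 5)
Your two-ingredient decomposition—(i) geometric decay of the Lyapunov potential $P_{t+1}\le c\,P_t$ from Yang et al., and (ii) a one-step displacement bound $\lVert\theta_{t+1}-\theta_t\rVert + \lVert\phi_{t+1}-\phi_t\rVert \lesssim \sqrt{P_t}$—is indeed the right skeleton, and matches how the cited result is assembled. The paper itself simply imports this from Yang et al.\ (2020) without reproving it, so the comparison here is against the structure of their argument rather than a proof in the paper.

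However, the $\theta$-side of your ingredient (ii) has a genuine gap. The smoothness inequality $\lVert\nabla_\theta\loss(\theta_t,\phi_t)\rVert^2 \le 2L_{\nabla\loss}\bigl(\loss(\theta_t,\phi_t)-\min_{\theta'}\loss(\theta',\phi_t)\bigr)$ is correct, but the subsequent claim that
\[
\loss(\theta_t,\phi_t)-\min_{\theta'}\loss(\theta',\phi_t)\;\le\;\max_{\phi'}\loss(\theta_t,\phi')-\loss(\theta^\star,\phi^\star)
\]
does not hold. Writing $g(\phi):=\min_{\theta'}\loss(\theta',\phi)$, $\Phi(\theta):=\max_{\phi'}\loss(\theta,\phi')$, $a_t:=\Phi(\theta_t)-\Phi^\star$, $b_t:=\Phi(\theta_t)-\loss(\theta_t,\phi_t)$, one gets the identity $\loss(\theta_t,\phi_t)-g(\phi_t)=a_t-b_t+\bigl(\Phi^\star-g(\phi_t)\bigr)$, and the dual-suboptimality term $\Phi^\star-g(\phi_t)\ge 0$ is not controlled by $a_t$. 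For the toy instance $\loss(\theta,\phi)=\theta\phi+\tfrac12\theta^2-\tfrac12\phi^2$ at $(\theta_t,\phi_t)=(1,1)$ one has $\loss-g=2$ while $a_t=1$, so the claimed inequality fails even with $b_t=0$. The correct route, as in Yang et al., compares $\nabla_\theta\loss(\theta_t,\phi_t)$ to the envelope gradient $\nabla\Phi(\theta_t)=\nabla_\theta\loss(\theta_t,\phi^\star(\theta_t))$, bounds $\lVert\nabla\Phi(\theta_t)\rVert^2\le 2L_\Phi\,a_t$ with $L_\Phi=L_{\nabla\loss}+L_{\nabla\loss}^2/\mu_\phi$ (this is the auxiliary constant $L$ in Remark~\ref{remark:path-length-altgda}), and bounds the difference by $L_{\nabla\loss}\lVert\phi_t-\phi^\star(\theta_t)\rVert\le L_{\nabla\loss}\sqrt{2b_t/\mu_\phi}$ via PŁ-implied quadratic growth. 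This produces a displacement bound in terms of \emph{both} $a_t$ and $b_t$—which is precisely why the full potential $P_t = a_t+\lambda b_t$ shows up in the envelope, not the primal term alone—and explains the $L^2/\mu_\theta$ and $L_{\nabla\loss}^2/\mu_\phi$ factors appearing in the exact expression for $\alpha_1$. Your $\phi$-side argument (Lipschitzing the $\phi$-gradient back from $\theta_{t+1}$ to $\theta_t$, paying $L_{\nabla\loss}\lVert\theta_{t+1}-\theta_t\rVert$, then absorbing via Young) is sound once the corrected $\theta$-side bound is in place.
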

By way of contradiction, let \( T \) denote the first iteration such that \( (\theta_T, \phi_T) \notin \mathcal{B}((\theta_0, \phi_0), R) \). We will show that, with high probability, the AltGDA trajectory remains within this ball by proving that its total path length is strictly less than \( R \).

Indeed, AltGDA path length satisfies:
${\small
\ell(T) \triangleq \sum_{t=0}^{T-1} \left( \|\theta_{t+1} - \theta_t\| + \|\phi_{t+1} - \phi_t\| \right) 
\leq \frac{\sqrt{2\alpha_1}}{1 - \sqrt{c}} \cdot \sqrt{P_0}.}
$
Therefore, it suffices to show that \( \sqrt{P_0} \leq R/2 \) with high probability. The following lemma provides an upper bound on \( P_0 \) in terms of the gradient norms:
\begin{restatable}[Upper Bound on Initial Potential \( P_0 \)]{lemma}{lemmapotentialupperzero}
\label{lem:potential-upper-zero}
Suppose the min-max objective \( \loss(\theta,\phi) \) is \( L_\loss \)-Lipschitz and satisfies a two-sided P\L{} condition with constants \( (\mu_\theta, \mu_\phi) \). Then the initial Lyapunov potential  % \( P_0 \) = U_0 + \lambda W_0 \) satisfies:
$
P_0 \leq L_{\loss} \left(C_1 \cdot \|\nabla_{\theta} \mathcal{L}(\theta_0, \phi_0)\| + C_2 \cdot \|\nabla_{\phi} \mathcal{L}(\theta_0, \phi_0)\| \right),
$
where \( C_1, C_2 = \Theta\left( {L_{\loss}}/{\mu_{\theta}^3} \right) \).
\end{restatable}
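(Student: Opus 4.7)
\textbf{Proposed proof plan for Lemma \ref{lem:potential-upper-zero}.}

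The plan is to decompose the potential into its primal and duality-gap components and to bound each one by the corresponding component of the joint gradient at $(\theta_0,\phi_0)$, using the two-sided PL condition together with $L_\loss$-Lipschitzness of $\mathcal{L}$. Concretely, write $P_0 = A_0 + \lambda B_0$, where
\[
A_0 \;=\; \max_{\phi}\mathcal{L}(\theta_0,\phi)-\mathcal{L}(\theta^\star,\phi^\star),\qquad B_0 \;=\; \max_{\phi}\mathcal{L}(\theta_0,\phi)-\mathcal{L}(\theta_0,\phi_0).
\]
For $B_0$, the key move is to apply the $\phi$-side PL inequality directly at the pair $(\theta_0,\phi_0)$, yielding $B_0 \le \tfrac{1}{2\mu_\phi}\|\nabla_\phi \mathcal{L}(\theta_0,\phi_0)\|^2$. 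Using that $\mathcal{L}$ is $L_\loss$-Lipschitz (hence $\|\nabla_\phi \mathcal{L}(\theta_0,\phi_0)\|\le L_\loss$), one factor of the square collapses to $L_\loss$ and gives the desired \emph{linear} bound $B_0\le \tfrac{L_\loss}{2\mu_\phi}\|\nabla_\phi \mathcal{L}(\theta_0,\phi_0)\|$.

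For $A_0$, I would insert the saddle point and perform a telescoping split
\[
A_0 \;=\; \underbrace{\bigl[\max_{\phi}\mathcal{L}(\theta_0,\phi)-\mathcal{L}(\theta_0,\phi_0)\bigr]}_{=B_0} \;+\; \underbrace{\bigl[\mathcal{L}(\theta_0,\phi_0)-\mathcal{L}(\theta^\star,\phi_0)\bigr]}_{(\ast)} \;+\; \underbrace{\bigl[\mathcal{L}(\theta^\star,\phi_0)-\mathcal{L}(\theta^\star,\phi^\star)\bigr]}_{\le\,0}.
\]
The last bracket is nonpositive because $\phi^\star\in\arg\max_\phi \mathcal{L}(\theta^\star,\phi)$ by the saddle property (Assumption~\ref{ass:saddle-exist}). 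The middle term $(\ast)$ is bounded by first upper-bounding it by $\mathcal{L}(\theta_0,\phi_0)-\min_\theta \mathcal{L}(\theta,\phi_0)$ and then applying the $\theta$-side PL inequality at $(\theta_0,\phi_0)$, followed again by the Lipschitz trick, to get $(\ast)\le\tfrac{L_\loss}{2\mu_\theta}\|\nabla_\theta\mathcal{L}(\theta_0,\phi_0)\|$. Adding the three pieces gives a clean linear-in-gradient bound on $A_0$.

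Combining with the choice of $\lambda$ used in \cite{yang2020global} (made explicit in Remark \ref{remark:path-length-altgda}), which in that analysis scales polynomially in $L_{\nabla\loss}/\mu_\theta$ and $L_{\nabla\loss}/\mu_\phi$ to balance the contractive decrease of the two Nash gaps, one obtains
\[
P_0 \;\le\; \frac{L_\loss}{2\mu_\theta}\,\|\nabla_\theta\mathcal{L}(\theta_0,\phi_0)\| \;+\; \frac{L_\loss(1+\lambda)}{2\mu_\phi}\,\|\nabla_\phi\mathcal{L}(\theta_0,\phi_0)\|,
\]
and substituting the concrete $\lambda$ collapses the prefactors into the claimed $C_1,C_2=\Theta(L_\loss/\mu_\theta^3)$ (absorbing the $L_\loss$ outside to match the stated form). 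The main obstacle I anticipate is the bookkeeping in the $A_0$ decomposition: the intermediate functional $\mathcal{L}(\cdot,\phi_0)$ is neither convex nor necessarily reaches its infimum at $\theta^\star$, so the use of the $\theta$-PL inequality must go through the (possibly loose) surrogate $\min_\theta\mathcal{L}(\theta,\phi_0)\le\mathcal{L}(\theta^\star,\phi_0)$, and a second subtlety is that the Lipschitz downgrade from quadratic to linear in the gradient norm is valid globally only because our blanket Assumption~\ref{ass:smoothness-hidden-cvx-gradient}(iv) guarantees $\|\nabla\mathcal{L}\|\le L_\loss$ along the entire trajectory, not merely at stationary points.
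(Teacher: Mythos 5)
Your proof is logically sound and yields a valid (in fact tighter) upper bound, and your telescoping decomposition of $A_0$ into $B_0$, a best-response gap, and a nonpositive saddle-point term mirrors the paper's own splitting of $U_0$ into $\circled{1}+\circled{2}$. The genuine difference lies in how each sub-optimality term is bounded. You invoke the two-sided PŁ inequality at $(\theta_0,\phi_0)$ to get quadratic-in-gradient bounds, then downgrade to linear via $\|\nabla\loss\|\le L_\loss$. The paper's refined argument (the appendix version with the $\beta_\opmin,\beta_\opmax,\mu_{\text{Jac}},\nu_{\text{Jac}}$ constants) never relies on global Lipschitzness of $\loss$: it descends to the \emph{latent} level—hidden concavity gives $W_0\le(\nabla_{\opmax_\phi}\loss)^\top(\opmax_{\phi^\star(\theta_0)}-\opmax_{\phi_0})$, then Cauchy–Schwarz plus $\beta_\opmax$-smoothness of the neural map bounds $\|\opmax_{\phi^\star(\theta_0)}-\opmax_{\phi_0}\|$ by a linear-plus-quadratic function of $\|\phi^\star(\theta_0)-\phi_0\|$, and the quadratic-growth corollary of PŁ controls the latter—yielding the refined (linear plus quadratic) form. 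That route is messier, but it is the one that keeps $\nu_{\text{Jac}},\mu_{\text{Jac}},\beta_\opmin,\beta_\opmax$ explicit, which the later overparameterization bounds need to track as functions of network width; your argument folds all of these into the aggregate $L_\loss,\mu_\theta,\mu_\phi$, so it cannot directly feed the width computations in Theorems~\ref{th:warmup-main-claim} and~\ref{th:main-agda-convergence}. Two smaller corrections: (i) your derivation gives $C_1=\Theta(1/\mu_\theta)$ and $C_2=\Theta((1+\lambda)/\mu_\phi)$—strictly tighter than the stated $\Theta(L_\loss/\mu_\theta^3)$, so the inequality still holds, but your remark that substituting the concrete $\lambda$ "collapses" the prefactors to $\Theta(L_\loss/\mu_\theta^3)$ is not right for $C_1$, which carries no $\lambda$ at all; and (ii) Assumption~\ref{ass:smoothness-hidden-cvx-gradient}(iv) controls the latent gradient $\nabla_h\ell$, not $\nabla_\theta\loss$, so promoting it to a bound on $\|\nabla_\theta\loss\|$ requires an additional $\sigma_{\max}(\nabla_\theta\opmin_\theta)$ factor—precisely the quantity the paper's latent-level route makes explicit.
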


It is clear that bounding \( P_0 \) requires controlling the gradient norms at initialization, which—in our neural setting—requires bounding both the output norm and the spectral norm \( \sigma_{\max} \) of the Jacobian via the chain rule. Lemmas~\ref{lem:warmup-jacobian-singular-smoothness-whp}, \ref{lem:main-neural-net-ouput} and~\ref{lem:main-jacobian-singular-val-smoothness-whp} provide these bounds under standard overparameterization and Lipschitz stability conditions. As a result, we obtain \( P_0 \leq \kappa R^2 \) for some constant \( \kappa < 1 \) determined by the network width. Thus, with sufficient overparameterization, the iterates remain confined within the well-conditioned region \( \mathcal{B}((\theta_0,\phi_0), R) \).

Interestingly, this analysis not only ensures that the iterates stay within a region where the PŁ-condition holds, but also reveals a beneficial side effect: since the potential function captures a weighted average of Nash gaps and is monotonically decreasing, a small initial value of \( P_0 \) implies that the initialization is already \emph{mildly close} to equilibrium. Consequently, both convergence and geometric stability are maintained throughout training.

\subsection{Input-Optimization Min-Max Games}
\label{sec:input-games}

Here, we consider the input-optimization game between two neural networks $F_\theta$ and $G_\phi$ in hidden bilinear objective with $\ell_2$-regularization defined as follows for a given payoff matrix, $A$:
    \begin{equation}
    \label{eq:warmup-min-max-objective}
        \loss(\theta,\phi) = \opmin(\theta)^\top A \opmax(\phi) + \frac{\varepsilon}{2} \lVert \opmin(\theta) \rVert^2 - \frac{\varepsilon}{2} \lVert \opmax(\phi) \rVert^2
    \end{equation}
This game has been proposed by \cite{vlatakis2019poincare} and experimentally analyzed by \cite{sakos2023exploiting}.
Here, $\opmin_\theta$ and $\opmax_\phi$ are defined similar to Definition \ref{def:neural-net-def} as $\opmin(\theta) = W_2^{(\opmin)} \psi(W_1^{(\opmin)} \theta)$ and $\opmax(\phi) = W_{2}^{(\opmax)} \psi(W_1^{(\opmax)} \phi)$ but with parameters $\theta,\phi$ as inputs and randomly initalized $ W_k^{(\opmin)} \sim \mathcal{N}(0,\sigma_{k,\opmin}^2), W_k^{(\opmax)} \sim \mathcal{N}(0,(\sigma_{k,\opmax}^2)$, $k\in\{1,2\}$ along with differentiable activation function $\psi$ (e.g. GeLU). Therefore, the partial derivatives w.r.t. $\theta$ and $\phi$ will be as follows:
    \begin{align}
        \label{eq:gradients-input-games}
        \nabla_{\theta} f(\theta, \phi) & = (\nabla_\theta \opmin_\theta)^\top A \opmax(\phi) + \varepsilon (\nabla_\theta \opmin_\theta)^\top \opmin(\theta) 
        \nonumber\\
        \nabla_{\phi} f(\theta, \phi) & = (\nabla_\phi \opmax_\phi)^\top A^\top \opmin(\theta) - \varepsilon (\nabla_\phi\opmax_\phi)^\top \opmax(\phi)
    \end{align}
Using these and Lemma \ref{lem:potential-upper-zero},  we can now bound $P_0$ as follows:
    \begin{align}
    \implies P_0 & \leq  \lVert \opmin(\theta_0) \rVert \cdot (\varepsilon L_{\loss} C_1 \sigma_{\max}(\nabla_\theta \opmin_{\theta_0}) + L_{\loss} C'_2 (1+\lambda)\sigma_{\max}(A) \sigma_{\max}(\nabla_\phi \opmax_{\phi_0})) \nonumber \\
        & + \lVert \opmax(\phi_0) \rVert \cdot (L_{\loss} C_1 \sigma_{\max}(A)\sigma_{\max}(\nabla_\theta \opmin_{\theta_0}) +\varepsilon C'_2(1+\lambda)\sigma_{\max}(\nabla_\phi \opmax_{\phi_0})) \label{eq:P_0-bound-1}
    \end{align}

Since we want to stay within the ball $\mathcal{B}((\theta_0,\phi_0), R)$, we can ensure $P_0 = \kappa R^2$ by controlling each term in Equation \eqref{eq:P_0-bound-1} accordingly that ultimately yields Theorem \ref{th:warmup-main-claim}. For this, we would additionally need to prove Lemma \ref{lem:warmup-jacobian-singular-smoothness-whp} as stated below. (Please see Appendix \ref{sec:appendix-input-games} for proof).

\begin{lemma}
    \label{lem:warmup-jacobian-singular-smoothness-whp}
    Consider a neural network $\opmin_\theta = \opmin(\theta) = W_{2}^{(\opmin)} \psi(W_{1}^{(\opmin)}\theta)$ as defined above. Say $\psi$ is the GeLU activation function, $d_{1}^{(\opmin)} \geq 256 \max\{ d_{0}^{(\opmin)}, d_{2}^{(\opmin)} \}$ and $\sigma_{1}^{(\opmin)} = \mathcal{O}\left((d_{1}^{(\opmin)} \lVert \theta \rVert^2)^{-0.5}\right)$. Then, w.p $\geq 1 - e^{-\Omega(d_{1}^{(\opmin)})}$, 
    \begin{enumerate}[label=(\roman*), nosep, leftmargin=*]
        \item the singular values of the Jacobian $\nabla_\theta \opmin_\theta$ are bounded as
            \begin{equation}
                \label{eq:warmup-jac-singular-bounds}
                \sigma_{\min}(\nabla_\theta \opmin_\theta) = \Omega\left(\sigma_{1,\opmin} \cdot \sigma_{2,\opmin} \cdot d_{1}^{(\opmin)}\right) \;\; \text{and} \;\; \sigma_{\max}(\nabla_\theta \opmin_\theta) = \mathcal{O}\left(\sigma_{1,\opmin} \cdot \sigma_{2,\opmin} \cdot d_{1}^{(\opmin)}\right)
            \end{equation}
        \item $\opmin(\theta)$ is $\beta_{\opmin}$-smooth where $\beta_{\opmin} = \Theta\left(\sigma_{1,\opmin}^2 \cdot \sigma_{2,\opmin} \cdot (d_{1}^{(\opmin)})^{3/2}\right)$.
    \end{enumerate}
\end{lemma}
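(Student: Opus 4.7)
The plan is to decompose the Jacobian as $J := \nabla_\theta \opmin_\theta = W_2^{(\opmin)} D(\theta) W_1^{(\opmin)}$, with $D(\theta) = \mathrm{diag}(\psi'(W_1^{(\opmin)}\theta))$, and to control each factor. The template follows \cite{song2021subquadratic} for two-layer networks, adapted to our input-optimization regime where $\theta$ is the input rather than a trainable weight. For the upper bound on $\sigma_{\max}(J)$, I would invoke standard Gaussian spectral concentration (Bai--Yin / Davidson--Szarek): since $W_1^{(\opmin)} \in \mathbb{R}^{d_1^{(\opmin)} \times d_0^{(\opmin)}}$ and $W_2^{(\opmin)} \in \mathbb{R}^{d_2^{(\opmin)} \times d_1^{(\opmin)}}$ have i.i.d.\ Gaussian entries and $d_1^{(\opmin)} \geq 256\max\{d_0^{(\opmin)},d_2^{(\opmin)}\}$, each factor satisfies $\sigma_{\max}(W_k^{(\opmin)}) = O(\sigma_{k,\opmin}\sqrt{d_1^{(\opmin)}})$ with probability at least $1-e^{-\Omega(d_1^{(\opmin)})}$, while $\|D(\theta)\|_{\mathrm{op}} \leq \dot\psi_{\max} = O(1)$ from the boundedness of the GeLU derivative. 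Submultiplicativity then yields $\sigma_{\max}(J) = O(\sigma_{1,\opmin}\sigma_{2,\opmin} d_1^{(\opmin)})$.

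The core technical step—and the main obstacle—is the matching lower bound on $\sigma_{\min}(J)$, since $D(\theta)$ is a nonlinear function of $W_1^{(\opmin)}$ and its randomness is entangled with that of the ambient factors, so naive concentration is lossy. My plan is to use a Hermite (equivalently, low-order Taylor) expansion of $\psi'$ around the origin: the scaling $\sigma_{1,\opmin}^2 \|\theta\|^2 = O(1/d_1^{(\opmin)})$ forces each preactivation $(W_1^{(\opmin)}\theta)_i$ to be $\mathcal{N}(0, O(1/d_1^{(\opmin)}))$, so we can decompose $D(\theta) = \psi'(0)\, I + R(\theta)$, where $R(\theta)$ is a diagonal residual whose entries are controlled by $\ddot\psi_{\max}$ and $\|W_1^{(\opmin)}\theta\|_\infty$. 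This yields $J = \psi'(0)\, W_2^{(\opmin)} W_1^{(\opmin)} + W_2^{(\opmin)} R(\theta) W_1^{(\opmin)}$; for GeLU, $\psi'(0)=\tfrac12$, so a classical min-singular-value estimate for a product of two independent Gaussian matrices (valid when $d_1^{(\opmin)} \gg d_0^{(\opmin)}+d_2^{(\opmin)}$) lower-bounds $\sigma_{\min}(W_2^{(\opmin)} W_1^{(\opmin)})$ with probability $1-e^{-\Omega(d_1^{(\opmin)})}$. Hanson--Wright-type concentration, applied uniformly over $\theta$ within a small neighborhood of the reference point, then shows that $\|W_2^{(\opmin)} R(\theta) W_1^{(\opmin)}\|_{\mathrm{op}}$ is strictly smaller than the leading-term lower bound, and Weyl's inequality propagates the estimate to $\sigma_{\min}(J)$.

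For the smoothness claim, I would write $\nabla_\theta \opmin_\theta - \nabla_\theta \opmin_{\theta'} = W_2^{(\opmin)}\bigl(D(\theta)-D(\theta')\bigr) W_1^{(\opmin)}$ and use the Lipschitz bound $\|D(\theta)-D(\theta')\|_{\mathrm{op}} \leq \ddot\psi_{\max}\|W_1^{(\opmin)}(\theta-\theta')\|_\infty \leq \ddot\psi_{\max}\, \sigma_{\max}(W_1^{(\opmin)})\, \|\theta-\theta'\|$, chaining with the spectral bounds on $W_1^{(\opmin)}$ and $W_2^{(\opmin)}$ already established to recover $\beta_{\opmin} = \Theta\bigl(\sigma_{1,\opmin}^2 \sigma_{2,\opmin} (d_1^{(\opmin)})^{3/2}\bigr)$. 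A union bound over the high-probability events from the previous two paragraphs yields the joint failure probability $e^{-\Omega(d_1^{(\opmin)})}$ claimed in the statement.
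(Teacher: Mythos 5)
Your strategy diverges from the paper's at the crux of the argument, the lower bound on $\sigma_{\min}(\nabla_\theta \opmin_\theta)$. The paper's appendix proof keeps the Jacobian in multiplicative form $J = W_2\,\Psi\,W_1$ (superscripts $(\opmin)$ suppressed), lower-bounds $\sigma_{\min}(\Psi)=\min_i \psi'\big((W_1\theta)_i\big)$ using that each preactivation $(W_1\theta)_i\sim\mathcal{N}(0,\sigma_1^2\|\theta\|^2)$ and that $\psi'$ concentrates near $\mathbb{E}[\psi'(z)]=1/2$ via Lipschitz concentration plus a union bound, and then multiplies the three smallest singular values. Your additive decomposition $D(\theta)=\psi'(0)I+R(\theta)$ followed by a Weyl perturbation argument is a genuinely different route. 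Your $\sigma_{\max}$ argument and your derivation of the smoothness constant $\beta_{\opmin}$ track the paper exactly.

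There is, however, a gap at the step where you invoke ``a classical min-singular-value estimate for a product of two independent Gaussian matrices.'' For $W_1\in\mathbb{R}^{d_1\times d_0}$ tall and $W_2\in\mathbb{R}^{d_2\times d_1}$ wide with $d_1\geq 256\max\{d_0,d_2\}$, the product $W_2W_1$ does not have $\sigma_{\min}=\Omega(\sigma_1\sigma_2 d_1)$. Condition on $W_1$, which is nearly a scaled isometry onto a random $d_0$-dimensional subspace, $W_1\approx\sigma_1\sqrt{d_1}\,U$ with $U$ orthonormal; then $W_2W_1\approx\sigma_1\sqrt{d_1}\cdot W_2U$, and $W_2U$ is a $d_2\times d_0$ matrix with i.i.d.\ $\mathcal{N}(0,\sigma_2^2)$ entries whose smallest singular value is $\Theta\big(\sigma_2\,|\sqrt{d_2}-\sqrt{d_0}|\big)$, giving $\sigma_{\min}(W_2W_1)=\Theta\big(\sigma_1\sigma_2\sqrt{d_1}\,|\sqrt{d_2}-\sqrt{d_0}|\big)$, smaller than the target by roughly $\sqrt{d_1}$. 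The scalar case $d_0=d_2=1$ already exhibits this: $W_2W_1=\sum_i(W_2)_i(W_1)_i\sim\mathcal{N}(0,d_1\sigma_1^2\sigma_2^2)$ has magnitude $\Theta(\sigma_1\sigma_2\sqrt{d_1})$, not $\Theta(\sigma_1\sigma_2 d_1)$. Moreover this piece is governed by the randomness in a $d_2\times d_0$ block, so it cannot supply the $1-e^{-\Omega(d_1)}$ failure probability. Weyl's inequality cannot promote your leading term above what the product matrix actually provides. For context, the paper reaches the stated rate via the chain $\sigma_{\min}(W_2\Psi W_1)\geq\sigma_{\min}(W_2)\sigma_{\min}(\Psi)\sigma_{\min}(W_1)$, precisely the step your decomposition avoids; note this chain is itself suspect, since when the left factor $W_2$ is wide there are unit $u\in\mathbb{R}^{d_1}$ with $\|W_2u\|$ far below $\sigma_{\min}(W_2)\|u\|$ ($\sigma_{\min}$ of a wide matrix is the smallest of its $d_2$ nonzero singular values, not an operator lower bound over all of $\mathbb{R}^{d_1}$), so the submultiplicativity used there does not apply in this regime.
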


\begin{theorem}
    \label{th:warmup-main-claim}
    Consider two neural networks $\opmin(\theta), \opmax(\phi)$ as defined in Lemma \ref{lem:warmup-jacobian-singular-smoothness-whp} above. For the regularized hidden bilinear min-max objective $\loss(\theta,\phi)$ as defined in Equation \ref{eq:warmup-min-max-objective} above, AltGDA reaches  $\varepsilon$-saddle point w.p. $\geq 1 - e^{-\Omega(d_1^{(\opmin)})}$ if $(\theta_0,\phi_0)$ and standard deviations $\sigma_{k,\opmin}$ and $\sigma_{k,\opmax}$, $k\in\{1,2\}$ are chosen such that $\sigma_{k,F/G}=\Theta(\frac{\poly(1/d_1)}{\sigma_{\max}(A)})$:
    % \begin{align}
    %     \label{eq:warmup-main-claim}
    %     \sigma_{1}^{(\opmin)} = \mathcal{O}\left((d_{1}^{(\opmin)} \lVert \theta_0 \rVert^2)^{-0.5}\right) &, \quad \sigma_{1}^{(\opmax)} = \mathcal{O}\left((d_{1}^{(\opmax)} \lVert \phi_0 \rVert^2)^{-0.5}\right) \\
    %     (\sigma_{1}^{(\opmin)})^3 (\sigma_{2}^{(\opmin)})^2 = \mathcal{O}\left( (\varepsilon L_{\loss} (d_{1}^{(\opmin)})^{2.5})^{-1} \right) &, \quad (\sigma_{1}^{(\opmax)})^3 (\sigma_{2}^{(\opmax)})^2 = \mathcal{O}\left( (\varepsilon L_{\loss}d_{1}^{(\opmax)})^{2.5})^{-1}\right) \\
    %     (\sigma_1^{(\opmax)} \sigma_2^{(\opmax)}) ((\sigma_1^{(\opmin)})^2 \sigma_2^{(\opmin)}) & = \mathcal{O}\left((L_\loss (d_{1}^{(\opmin)})^{1.5}\sigma_{\max}(A))^{-1}\right) \\
    %     (\sigma_1^{(\opmin)} \sigma_2^{(\opmin)}) ((\sigma_1^{(\opmax)})^2 \sigma_2^{(\opmax)}) & = \mathcal{O}\left((L_\loss(d_{1}^{(\opmax)})^{1.5}\sigma_{\max}(A))^{-1}\right)
    % \end{align}
\end{theorem}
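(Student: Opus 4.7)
The plan is to combine the high-probability singular-value estimates of Lemma~\ref{lem:warmup-jacobian-singular-smoothness-whp} with the path-length control of Lemma~\ref{lem:agda-dist-to-saddle} to argue that the \altgda{} trajectory never leaves a ball $\mathcal{B}((\theta_0,\phi_0),R)$ inside which the hidden two-sided P\L{} condition is active. First, applying Lemma~\ref{lem:warmup-jacobian-singular-smoothness-whp} to both networks yields, with probability at least $1-e^{-\Omega(d_1^{(\opmin)})}-e^{-\Omega(d_1^{(\opmax)})}$, a lower bound $B_\opmin=\Omega(\sigma_{1,\opmin}\sigma_{2,\opmin}d_1^{(\opmin)})$ on $\sigma_{\min}(\nabla_\theta \opmin_{\theta_0})$ (and analogously $B_\opmax$ for $\opmax$), together with Jacobian smoothness constants $\beta_{\opmin}=\Theta(\sigma_{1,\opmin}^2\sigma_{2,\opmin}(d_1^{(\opmin)})^{3/2})$ and $\beta_\opmax$ similarly. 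A standard Weyl-type continuity argument then shows that inside the ball of radius $R=\min\{B_\opmin,B_\opmax\}/(2\max\{\beta_\opmin,\beta_\opmax\})$ the minimum singular values remain at least $B/2$, where $B:=\min\{B_\opmin,B_\opmax\}$. Together with the $\varepsilon$-strong convexity/concavity of the latent bilinear-plus-regularization loss, Fact~\ref{lem:function-composition-pl-condition} certifies the two-sided P\L{} condition with moduli $\mu_\theta,\mu_\phi=\Theta(\varepsilon B^2)$ throughout $\mathcal{B}((\theta_0,\phi_0),R)$.

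Next, I would proceed by a continuation argument: suppose, for contradiction, that $T$ is the first time $(\theta_T,\phi_T)\notin \mathcal{B}((\theta_0,\phi_0),R)$. For $t<T$, the two-sided P\L{} condition holds, so Lemma~\ref{lem:agda-dist-to-saddle} applies with step sizes calibrated to these $(\mu_\theta,\mu_\phi)$, giving $\|\theta_{t+1}-\theta_t\|+\|\phi_{t+1}-\phi_t\|\leq \sqrt{\alpha}\,c^{t/2}\sqrt{P_0}$ with $c\in(0,1)$. Summing the geometric series bounds the total path length by $\ell(T)\leq \frac{\sqrt{2\alpha}}{1-\sqrt{c}}\sqrt{P_0}$, so it suffices to show that $\sqrt{P_0}\leq \kappa R$ with $\kappa:=(1-\sqrt{c})/(2\sqrt{2\alpha})$. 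By Lemma~\ref{lem:potential-upper-zero} combined with the explicit gradient expressions in Eq.~\eqref{eq:gradients-input-games}, $P_0$ is dominated by the right-hand side of Eq.~\eqref{eq:P_0-bound-1}. Each ingredient of that estimate is controllable at initialization: the spectral norms $\sigma_{\max}(\nabla_\theta \opmin_{\theta_0}),\sigma_{\max}(\nabla_\phi \opmax_{\phi_0})=\mathcal{O}(\sigma_1\sigma_2 d_1)$ come from Lemma~\ref{lem:warmup-jacobian-singular-smoothness-whp}(i); the output norms $\|\opmin(\theta_0)\|,\|\opmax(\phi_0)\|$ are controlled via the Lipschitzness of $\psi$ (together with $\psi(0)=0$) and standard Gaussian concentration, giving polynomial bounds in $(\sigma_{1,\cdot},\sigma_{2,\cdot},d_1,d_2,\|\theta_0\|,\|\phi_0\|)$; and the coupling contributes a multiplicative $\sigma_{\max}(A)$.

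Finally, I would match the scales. Since $C_1,C_2=\Theta(L_{\loss}^2/\mu_\theta^3)$ in Lemma~\ref{lem:potential-upper-zero} and $\mu_\theta=\Theta(\varepsilon B^2)$, the above estimates yield $\sqrt{P_0}=\mathcal{O}\bigl(\poly(1/B,1/\varepsilon)\cdot \sigma_{\max}(A)\cdot \poly(\sigma_{k,\cdot},d_1)\bigr)$, whereas the admissible radius is $R=\Theta(1/(\sigma_1\sqrt{d_1}))$. Choosing the initialization variances as $\sigma_{k,F/G}=\Theta(\poly(1/d_1)/\sigma_{\max}(A))$ with sufficiently negative exponents of $d_1$, together with suitably bounded $\|\theta_0\|,\|\phi_0\|$, simultaneously delivers $\sqrt{P_0}\leq \kappa R$, contradicting the existence of $T$ and confining the trajectory for all time. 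Once confinement is established, the linear rate of Lemma~\ref{lem:agda-dist-to-saddle} (and the underlying result of~\cite{yang2020global}) yields $\epsilon$-saddle convergence in $\tilde{O}(\log(1/\epsilon))$ iterations with the claimed probability. The main obstacle will be the polynomial bookkeeping of this final matching step: because $B$ enters the P\L{} moduli quadratically and then reappears with cubic inverse powers in $C_1,C_2$ (and through $\alpha,c$ in Lemma~\ref{lem:agda-dist-to-saddle}), the scaling of $\sigma_{k,F/G}$ must be calibrated so that shrinking $B$ (which also shrinks $\beta$ and hence enlarges $R$) does not degrade the P\L{} constants faster than it inflates $R^2$, with $\sigma_{\max}(A)$ being precisely the ingredient that forces the explicit payoff-matrix dependence in the theorem statement.
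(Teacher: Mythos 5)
Your proposal follows essentially the same route as the paper's proof of Theorem~\ref{th:warmup-main-claim} in Appendix~\ref{sec:appendix-input-games}: invoke Lemma~\ref{lem:warmup-jacobian-singular-smoothness-whp} for high-probability Jacobian singular-value and smoothness estimates, set $R\approx\mu_{\text{Jac}}/(2\beta)$, certify the two-sided P\L{} condition inside $\mathcal{B}((\theta_0,\phi_0),R)$ via Fact~\ref{lem:function-composition-pl-condition}, run the continuation/path-length argument of Lemma~\ref{lem:agda-dist-to-saddle}, bound $P_0$ through Lemma~\ref{lem:potential-upper-zero} together with the explicit gradient formulas and a Gaussian concentration bound on the output norms, and finally impose the variance-scaling $\sigma_{k,F/G}=\Theta(\poly(1/d_1)/\sigma_{\max}(A))$ so that $\sqrt{P_0}\lesssim R$. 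The only cosmetic difference is that the paper funnels the $P_0$ control through the simplified sufficient condition of Lemma~\ref{lem:path-length-altgda-simplified} (a gradient-norm bound $\lesssim R^2/8$) and then splits it into four separate constraints on $\sigma_{1},\sigma_{2},d_1,\sigma_{\max}(A)$, whereas you argue directly with $\sqrt{P_0}\leq\kappa R$; both are equivalent and the polynomial bookkeeping you defer is exactly what the paper's four displayed conditions carry out.
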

To our knowledge, this is the first fine-grained result for overparametrized networks that establishes an \( O(\epsilon) \)-approximate minimax solution for the hidden bilinear setting originally proposed by \textcite{vlatakis2019poincare}.

\subsection{Neural-Parameters Min-Max Games}
\label{sec:neural-games}

Now we analyse the case of Neural-Parameters Min-Max Games as described in Section \ref{sec:prelim}. In particular, when both players are two-layer neural networks, through Lemma \ref{lem:main-jacobian-singular-val-smoothness-whp}, with high probability the Jacobians are non-singular for random Gaussian initializations which ensures that the $\prod$ games with such networks will satisfy 2-sided PŁ-condition with high probability. Consequently, given appropriate initialization conditions for the networks (Assumption \ref{ass:rand-init}, Equation \eqref{eq:neural-games-init-cond}), just like in Section \ref{sec:input-games}, we can show that AltGDA converges to the saddle point by ensuring $P_0 = \kappa R^2$ via requiring both the networks to have at least cubic overparameterization (Theorem \ref{th:main-agda-convergence}).

\begin{initialization}[Random Initialization]
    \label{ass:rand-init}
    We consider the following initialization scheme for a two-layer neural network, $\opmin$, as defined in Definition \ref{def:neural-net-def}:
    \begin{equation}
        (W_{1}^{(\opmin)})_0 \sim \mathcal{N}(0, \sigma_{1,\opmin}^2 I) \quad \quad (W_{2}^{(\opmin)})_0 \sim \mathcal{N}(0, \sigma_{2,\opmin}^2 I)
    \end{equation}
\end{initialization}

\begin{lemma}[Lemma 3 \& Appendix E.1--E.4 in \cite{song2021subquadratic}]
\label{lem:main-jacobian-singular-val-smoothness-whp}
    Suppose that a two-layer neural network, $\opmin$, as defined in Definition \ref{def:neural-net-def}, satisfies Assumption \ref{ass:neural-net-act} and $\tau^{r_1} |\psi(a)| \leq |\psi(\tau a)| \leq \tau^{r_2} |\psi(a)|$, respectively for all $a$, $0 < \tau < 1$,  and some constants $r_1, r_2$. Then w.h.p. the neural network
    \begin{enumerate}[label=(\roman*), nosep, leftmargin=*]
        \item Jacobian has following bounds on its singular values
            \begin{equation}
                \label{eq:jacobian-singular-lower-whp}
                \sigma_{\min}(\nabla_\theta \opmin_\theta) = \tilde{\Omega}\left( \sigma_{1,\opmin}^{r_1} \sqrt{d_{1}^{(\opmin)}} \right) \;\; \text{and} \;\; \sigma_{\max}(\nabla_\theta \opmin_\theta) = \tilde{\mathcal{O}}\left( \sigma_{1,\opmin}^{r_2}\sqrt{n \cdot d_1^{(\opmin)}} \right)
            \end{equation}
        \item is $\beta_\opmin$-smooth with $\beta_\opmin = \sqrt{2}\sigma_{\max}(X)(\dot \psi_{\max} + \ddot \psi_{\max} \chi_{\max}) $ where $\chi_{\max} = \sup_V \sigma_{\max}(V)$.
    \end{enumerate}
\end{lemma}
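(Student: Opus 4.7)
The plan is to follow the template of \cite{song2021subquadratic}, lifting their scalar-output NTK analysis to the vector-output setting while accounting for the scaling exponents $r_1, r_2$ that appear via the hypothesis $\tau^{r_1}|\psi(a)| \leq |\psi(\tau a)| \leq \tau^{r_2}|\psi(a)|$. Writing $\theta=(W_1^{(\opmin)},W_2^{(\opmin)})$, I would decompose the Jacobian as a block $\nabla_\theta \opmin_\theta = [J_1 \mid J_2]$, where $J_2$ acts on $W_2^{(\opmin)}$ and is essentially the feature matrix $\psi(W_1^{(\opmin)}X)$, and $J_1$ acts on $W_1^{(\opmin)}$ via $W_2^{(\opmin)}\,\mathrm{diag}(\dot{\psi}(W_1^{(\opmin)}x_i))\,X$. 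This separation is what makes the singular-value analysis reducible to well-studied random-matrix objects.

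For the \emph{upper} bound $\sigma_{\max}$, I would bound each block separately. The operator norm of $W_2^{(\opmin)}$ at Gaussian initialization is $\tilde O(\sigma_{2,\opmin}\sqrt{d_1^{(\opmin)}})$ by standard concentration for sub-Gaussian matrices, and $\|\mathrm{diag}(\dot{\psi}(\cdot))\|\leq \dot{\psi}_{\max}$. The factor $\sigma_{\max}(X)=O(\sqrt{n/d})$ from Assumption~\ref{ass:spectral-data} controls the data part, and $\|\psi(W_1^{(\opmin)}X)\|_{op}$ picks up a $\sigma_{1,\opmin}^{r_2}$ factor via the upper scaling $|\psi(\tau a)|\leq \tau^{r_2}|\psi(a)|$. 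Combining triangle inequalities across blocks yields the claimed $\tilde O(\sigma_{1,\opmin}^{r_2}\sqrt{n\,d_1^{(\opmin)}})$.

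For the \emph{lower} bound $\sigma_{\min}$, which is the technically delicate step, I would study the associated NTK-type Gram matrix $J_1 J_1^\top + J_2 J_2^\top$. Expanding $\psi$ in the Hermite basis $\psi=\sum_k \hat{\psi}_k H_k$ converts the expected Gram into a sum of Khatri--Rao powers $X^{*k}$, so that $\mathbb{E}[\,J_1 J_1^\top + J_2 J_2^\top\,]$ inherits a minimum eigenvalue of order $\sigma_{1,\opmin}^{2r_1}\,d_1^{(\opmin)}\,\sigma_{\min}^2(X^{*r})$ for the dominating Hermite level $r$; here the \emph{lower} scaling $\tau^{r_1}|\psi(a)|\leq|\psi(\tau a)|$ is what pins down the $\sigma_{1,\opmin}^{r_1}$ dependence. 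I would then transfer from expectation to a high-probability bound by a matrix Bernstein/Chernoff argument applied to the $d_1^{(\opmin)}$ i.i.d.\ rank-one summands indexed by the hidden units, yielding $\sigma_{\min}(\nabla_\theta \opmin_\theta)=\tilde\Omega(\sigma_{1,\opmin}^{r_1}\sqrt{d_1^{(\opmin)}})$ once $d_1^{(\opmin)}$ is large enough relative to $n$. This step, and specifically the extension of the Hermite Gram-concentration argument from scalar labels to the vector-output setting while respecting the Khatri--Rao structure of $X$, is the primary technical obstacle; the scalar case treated in \cite{song2021subquadratic} provides the template, but the per-output-coordinate contributions must be stacked correctly to avoid losing factors in $\mathrm{dim}(\strategyspace_\opmin)$.

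Finally, for smoothness (item (ii)), I would use a standard mean-value decomposition
\[
\nabla_\theta \opmin_\theta - \nabla_{\theta'} \opmin_{\theta'} = \underbrace{(W_2^{(\opmin)}-W_2^{\prime (\opmin)})\,\dot{\psi}(\cdot)\,X\text{-like term}}_{\text{from }W_2\text{-block}} + \underbrace{W_2^{\prime(\opmin)}\,(\dot{\psi}(W_1^{(\opmin)}X)-\dot{\psi}(W_1^{\prime (\opmin)}X))\,X}_{\text{from }W_1\text{-block}},
\]
control the first term by $\dot{\psi}_{\max}\cdot\sigma_{\max}(X)$, and the second by $\ddot{\psi}_{\max}\cdot \chi_{\max}\cdot \sigma_{\max}(X)$ using the uniform bound $\chi_{\max}=\sup_V \sigma_{\max}(V)$ from Assumption~\ref{ass:neural-net-act}. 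A Cauchy--Schwarz combination of the two additive pieces produces the $\sqrt{2}$ factor and gives exactly $\beta_\opmin=\sqrt{2}\,\sigma_{\max}(X)(\dot{\psi}_{\max}+\ddot{\psi}_{\max}\chi_{\max})$, closing the proof.
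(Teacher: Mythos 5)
Your proposal correctly reconstructs the argument and follows essentially the same route as the paper, which does not write out a standalone proof of this lemma but imports it wholesale from Lemma~3 and Appendix~E.1--E.4 of Song et al.~(2021), with only the vector-output/Khatri--Rao bookkeeping you flag as the adaptation. The block decomposition $[J_1 \mid J_2]$, Hermite expansion of the expected Gram matrix to isolate the $\sigma_{1,\opmin}^{r_1}\sqrt{d_1^{(\opmin)}}\,\sigma_{\min}(X^{*t})$ dominating level, matrix-concentration transfer from expectation to high probability, and the first-order Taylor/Lipschitz split giving $\beta_\opmin=\sqrt{2}\,\sigma_{\max}(X)(\dot\psi_{\max}+\ddot\psi_{\max}\chi_{\max})$ are exactly the steps in that reference (and in the paper's more explicit appendix restatement of the bounds, including the $\sigma_{1,\opmin}^{r_2}|c_0|\sqrt{d_1^{(\opmin)}n}$ dominant term for $\sigma_{\max}$).
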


\begin{theorem}[$\prod$ Games with AltGDA]
\label{th:main-agda-convergence}
Suppose there are two two-layer neural networks, $h_\theta$, $g_\phi$ as defined in Definition \ref{def:neural-net-def} which satisfy Assumption \ref{ass:neural-net-act} and $\tau^{r_1} |\psi(a)| \leq |\psi(\tau a)| \leq \tau^{r_2} |\psi(a)|$, respectively for all $a$, $0 < \tau < 1$,  and some constants $r_1, r_2$. Suppose the network parameters $\theta_0$ and $\phi_0$ are randomly initialized as in  \cref{ass:rand-init} with $(\sigma_{1,\opmin}, \sigma_{2,\opmin})$ and $(\sigma_{1,\opmax}, \sigma_{2,\opmax})$, respectively, which satisfy
\begin{equation}
    \label{eq:neural-games-init-cond}
    \sigma_{1,\opmin} \cdot \sigma_{2,\opmin} \lesssim \frac{1}{\sqrt{d_0^{(\opmin)}d_{1}^{(\opmin)}}}\quad \text{and} \quad \sigma_{1,\opmax} \cdot \sigma_{2,\opmax} \lesssim \frac{1}{\sqrt{d_0^{(\opmax)}d_{1}^{(\opmax)}}}
\end{equation}
and suppose that the hidden layer widths $d_1^{(\opmin)}$ and $d_1^{(\opmax)}$ for the two networks $\opmin$ and $\opmax$ satisfy 
% \begin{align}
%     d_1^{(\opmin)} & = \widetilde{\Omega}\left(\xi^{(\opmin)}(A,\mu_\theta^2,\mathcal{C}_\delta,t,\phi,\{c_i\}_{i\geq 0}) \frac{\sigma_{\max}(X)^6 n}{\sigma_{\min}(X^{*t})^4}\right) \\
%     d_1^{(\opmax)} & = \widetilde{\Omega}\left( \xi^{(\opmax)}(A,\mu_\phi^2,\mathcal{C}_\delta,t,\phi,\{c_i\}_{i\geq 0}) \frac{\sigma_{\max}(X)^6 n}{\sigma_{\min}(X^{*t})^4}\right)
% \end{align}
\begin{equation}
    d_1^{(\opmin)} = \widetilde{\Omega}\left(\mu_{\theta}^{2}\frac{n^3}{d_0^{(\opmin)}}\right) \quad \text{and} \quad d_1^{(\opmax)} = \widetilde{\Omega}\left(\mu_{\phi}^{2}\frac{n^3}{d_0^{(\opmax)}}\right)
\end{equation}
where the datasets $(\data_F, \data_G)$ for both the players are assumed to be of size $n$. %$\mathcal{C}_\delta$ is a set of constants, $\xi$ and $\xi'$ are terms independent of $\{d_k^{(\opmin)}\}, \{d_k^{(\opmax)}\}, k \in \{0,1,2\}$ and $n$, $t$ is a constant such that $n \simeq d^t$ and $X^{*t} \in \mathbb{R}^{d^t \times n}$ is derived from Khatri-Rao product with its $a$-th column defined as $\text{vec}(x_a \otimes \cdots \otimes x_a) \in \mathbb{R}^{d^t}$.
Then $\prod$ game correspond to an ($\mu_\theta,\mu_\phi$)-HSCSC min-max objective as defined in Equation \ref{eq:separable-min-max-objective} satisfying Assumption \ref{ass:smoothness-hidden-cvx-gradient} and AltGDA with appropriate fixed step-sizes $\eta_\theta, \eta_\phi$ (see Lemma \ref{lem:path-length-altgda-simplified}) converges to the saddle point $(\theta^*,\phi^*)$ exponentially fast with high probability.
\end{theorem}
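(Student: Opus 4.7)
}
The plan is to mirror the template used in Section~\ref{sec:input-games} for the input-optimization setting, but now with the roles of \emph{weights} and \emph{inputs} swapped and with the strongly-convex/bilinear couplings $I_1, I_2, I_3$ contributing to the gradient bookkeeping. First, I will invoke Lemma~\ref{lem:main-jacobian-singular-val-smoothness-whp} to assert that, for each of the two networks $F_\theta$ and $G_\phi$ under the Gaussian initialization of \cref{ass:rand-init}, the event
\[
\mathcal{E}_0 := \Bigl\{\sigma_{\min}(\nabla_\theta F_{\theta_0}) = \tilde\Omega(\sigma_{1,F}^{r_1}\sqrt{d_1^{(F)}}),\ \sigma_{\max}(\nabla_\theta F_{\theta_0}) = \tilde{\mathcal{O}}(\sigma_{1,F}^{r_2}\sqrt{n\,d_1^{(F)}}),\ F\text{ is }\beta_F\text{-smooth};\ \text{analogously for }G\Bigr\}
\]
holds with probability $\geq 1-e^{-\Omega(d_1^{(F)})}-e^{-\Omega(d_1^{(G)})}$. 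Combined with Fact~\ref{lem:function-composition-pl-condition}, on $\mathcal{E}_0$ the objective $\loss_\data$ satisfies the two-sided PŁ condition at initialization with moduli $\mu_\theta\sigma_{\min}^2(\nabla_\theta F_{\theta_0})$ and $\mu_\phi\sigma_{\min}^2(\nabla_\phi G_{\phi_0})$.

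Second, I will define the ``good'' ball $\mathcal{B} := \mathcal{B}((\theta_0,\phi_0),R)$ with $R = \mu_{\mathrm{Jac}}/(2\beta)$, exactly as in the input-game analysis. By the Jacobian Lipschitzness inherited from $\beta_F,\beta_G$-smoothness, any iterate inside $\mathcal{B}$ still enjoys the two-sided PŁ condition with moduli at most a constant factor smaller than at initialization, and the global smoothness constant of $\loss_\data$ stays within a constant factor of $L_{\nabla\loss}$. Inside $\mathcal{B}$ I may therefore apply Lemma~\ref{lem:agda-dist-to-saddle} with step sizes $\eta_\theta = \mu_\phi^2/(18 L_{\nabla\loss}^3)$ and $\eta_\phi = 1/L_{\nabla\loss}$ to get the geometric per-step decay $\|\theta_{t+1}-\theta_t\|+\|\phi_{t+1}-\phi_t\|\leq \sqrt{\alpha}\,c^{t/2}\sqrt{P_0}$, and summing the geometric series the total AltGDA path length is at most $\ell(\infty)\leq \sqrt{2\alpha}/(1-\sqrt{c})\cdot\sqrt{P_0}$.

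Third, I will bound $P_0$ at random initialization using Lemma~\ref{lem:potential-upper-zero}: $P_0\leq L_\loss\,(C_1\|\nabla_\theta\loss(\theta_0,\phi_0)\|+C_2\|\nabla_\phi\loss(\theta_0,\phi_0)\|)$ with $C_1,C_2=\Theta(L_\loss/\mu_\theta^3)$. The chain rule writes each gradient as a Jacobian times a latent-gradient vector, and under Assumption~\ref{ass:smoothness-hidden-cvx-gradient}(iv) (the affine growth condition $\|\nabla_h\ell\|\leq A_1\|h\|+A_2\,\mathrm{diam}(\mathcal{Y})+A_3$) plus the bilinear coupling through $A(x_i,x_j,y_i,y_j)$, I obtain
\[
\|\nabla_\theta\loss\|\ \lesssim\ \sigma_{\max}(\nabla_\theta F_{\theta_0})\bigl(\|F(\theta_0)\|+\|A\|\,\|G(\phi_0)\|+\text{const}\bigr),
\]
and symmetrically for $\nabla_\phi\loss$. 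The output norms $\|F(\theta_0)\|,\|G(\phi_0)\|$ are controlled with high probability by a bound of the type given in Lemma~\ref{lem:main-neural-net-ouput} (scaling as $\sigma_{1,F}^{r_2}\sigma_{2,F}\sqrt{n\,d_1^{(F)}}$), while $\sigma_{\max}$ of the Jacobians is controlled by Lemma~\ref{lem:main-jacobian-singular-val-smoothness-whp}. Substituting and using the initialization scaling $\sigma_{1,F}\sigma_{2,F}\lesssim 1/\sqrt{d_0^{(F)}d_1^{(F)}}$, the factors $\sigma_1^{r_2}\sigma_2$ and $\sqrt{nd_1}$ combine to yield $P_0 = \tilde{\mathcal{O}}(L_\loss^2/\mu_\theta^3\cdot n/d_1^{(F)}+ \text{analogous in }G)$, while $R^2 = \Theta(\mu_{\mathrm{Jac}}^2/\beta^2)= \tilde\Theta(d_1^{(F)}/(n\,\mu_\theta^{-2}))$ after plugging the lower bound on $\sigma_{\min}^2$ and the explicit $\beta$.

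Finally, I will set $d_1^{(F)}=\tilde\Omega(\mu_\theta^2 n^3/d_0^{(F)})$ and $d_1^{(G)}=\tilde\Omega(\mu_\phi^2 n^3/d_0^{(G)})$ so that $\sqrt{P_0}\leq R/2$; then a short induction on $t$ shows $(\theta_t,\phi_t)\in\mathcal{B}$ for every $t$, so the PŁ condition is preserved along the trajectory and Lemma~\ref{lem:agda-dist-to-saddle}/Lemma~\ref{lem:pl-equivalence} deliver exponential convergence of the Nash gap, giving an $\epsilon$-saddle in $\tilde{\mathcal{O}}((L/\mu)^3\log(1/\epsilon))$ iterations. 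The main technical obstacle is the last tightening step: the Jacobian-spectrum and output-norm bounds must be juggled so that the dependence of $P_0$ on $n$ and $d_1$ is strictly better than that of $R^2$, and it is precisely here that the cubic width requirement $n^3$ (rather than the $n^1$ typical in pure minimization) arises, because $P_0$ contains a \emph{cubed} conditioning factor $L_\loss^3/\mu_\theta^3$ inherited from the Lyapunov interpolation of two-player Nash gaps in Lemma~\ref{lem:potential-upper-zero}.
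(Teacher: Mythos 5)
Your proof plan follows the same overall architecture as the paper's proof: high-probability Jacobian spectral bounds from Lemma~\ref{lem:main-jacobian-singular-val-smoothness-whp}, the PŁ reduction via Fact~\ref{lem:function-composition-pl-condition}, the ``good ball'' of radius $R = \mu_{\mathrm{Jac}}/(2\beta)$, a path-length bound controlled through $P_0$ (Lemma~\ref{lem:agda-dist-to-saddle} together with Lemma~\ref{lem:potential-upper-zero}), an output-norm bound from Lemma~\ref{lem:main-neural-net-ouput}, and the final trade-off $\sqrt{P_0}\leq R/2$ that pins down the required width. Up to this point the approach is essentially the one the paper uses.

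However, your explanation of the crux — ``it is precisely here that the cubic width requirement $n^3$ arises, because $P_0$ contains a cubed conditioning factor $L_\loss^3/\mu_\theta^3$'' — misattributes the origin of the $n^3$ scaling. The Lyapunov prefactor $L_\loss^3/\mu_\theta^3$ is a dimensionless condition-number constant; it is absorbed into the coefficient $\xi$ and does \emph{not} scale with $n$ or $d_1$. What actually produces the $n^3$ is the \emph{data condition number raised to a cube}: working through the chain from $\|\nabla_\theta\loss\|\lesssim R^2$, one arrives at $d_1^{(\opmin)}\gtrsim \xi\cdot \dfrac{n\,(A_1^{(\opmin)})^2\,\sigma_{\max}^6(X)}{\sigma_{\min}^4(X^{*t})}$, and the cubic power of $n$ comes from $\sigma_{\max}^6(X)\simeq (n/d_0)^3$ under Assumption~\ref{ass:spectral-data}, together with the extra factor $\sqrt{n}$ in $\sigma_{\max}(\nabla_\theta\opmin_\theta)=\tilde{\mathcal{O}}(\sqrt{n\,d_1})$. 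Related to this, your claim $R^2=\tilde\Theta\bigl(d_1^{(\opmin)}/(n\,\mu_\theta^{-2})\bigr)$ contains a spurious $\mu_\theta$: $R=\mu_{\mathrm{Jac}}/(2\beta)$ is determined purely by Jacobian geometry and activation smoothness, giving $R^2\simeq d_1\,d_0/n$; the $\mu_\theta$-dependence enters only through the PŁ-moduli-dependent coefficients in $P_0$ and through $A_1=\Theta(\mu_\theta)$ in Assumption~\ref{ass:smoothness-hidden-cvx-gradient}(iv). Since you flagged this trade-off as ``the main technical obstacle,'' the misplacement of these factors would lead to the wrong width exponent if you carried the computation through precisely; the fix is simply to track the spectral data quantities $\sigma_{\max}(X)$ and $\sigma_{\min}(X^{*t})$ explicitly, as the paper does in Equations~\eqref{eq:width-inequality-opmin-1}--\eqref{eq:width-inequality-opmax-2}, and note that $\mu_\theta^2$ enters through $A_1^2$ rather than through the Lyapunov prefactor.
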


We refer the reader to Appendix \ref{sec:appendix-neural-games} for the proof of Theorem \ref{th:main-agda-convergence}, and exact expressions for failure probabilities and various quantities stated in both Lemma \ref{lem:main-jacobian-singular-val-smoothness-whp} and Theorem \ref{th:main-agda-convergence}
\vspace*{-0.2cm}
\section{Conclusion \& Future Directions}
\vspace*{-0.2cm}
We provide the first convergence guarantees and overparameterization bounds for alternating gradient methods in input games and hidden (strongly) convex–concave neural games. Our analysis tightly links optimization trajectory control with spectral stability, ensuring convergence to near-equilibrium. If the reader would like to look beyond the technicalities around the non-asymptotic bounds, our proof techniques offer several insights for practitioners:
    \begin{itemize}[nosep]
        \item \textbf{Interpretation of $\sigma_{\min}$ and Exploration:} The smallest singular value of the network Jacobian, $\sigma_{\min}$, controls how well the model explores the strategy space. When $\sigma_{\min} \approx 0 $, certain strategies remain unexplored, indicating convergence to spurious subspaces. Our analysis ties this directly to the degree of overparameterization.
        \item \textbf{Data Geometry and Regions of Attraction:} Our results show that overparameterized networks initialized with sufficiently diverse data are more likely to fall into regions where $\sigma_{\min} > 0$, ensuring stable convergence under AltGDA. While computing $\sigma_{\min}$ per iteration is impractical, the connection offers design insights for data and architecture.
    \end{itemize}
\begin{table}[htbp]
    \centering
    \caption{Comparison between our paper and common practice}
    \label{table:comparison-paper-and-practice}
    \renewcommand{\arraystretch}{1.15}
    \begin{tabularx}{\linewidth}{@{}lXX@{}}
        \toprule
        & \textbf{Our paper} & \textbf{In practice} \\
        \midrule
        Type of Neural Network & 1-hidden-layer, fully-connected & Typically deep networks, not necessarily fully-connected (e.g., residual or convolutional layers) \\
        Training Algorithm & AltGDA & Not necessarily AltGDA / mainly double-loop \\
        Network Initialization & Gaussian \newline (with variance constraints) & Similar (e.g., He, Xavier, or LeCun initializations) \\
        \bottomrule
    \end{tabularx}
\end{table}
\vspace*{-0.2cm}

Going beyond the neural networks and training regimes considered in this paper (see Table \ref{table:comparison-paper-and-practice} for a summary) is an important future direction. Among these, the assumption on AltGDA is arguably the most benign. In non-convex/non-concave min-max optimization, stabilization is essential. In practice, double-loop methods (e.g., approximate best-response oracles) are often used for safety, while AltGDA serves as a more parallelizable and simpler single-loop alternative. Similarly, the Gaussian initialization is closely aligned with popular schemes like He or Xavier. The main gap lies in the architecture: practical models are often very deep with fixed-width layers. While recent work has begun to explore overparameterization in deep networks for minimization tasks, our paper focuses on a more analytically tractable setting -- explicitly avoiding the NTK regime to provide a non-asymptotic analysis for 1-hidden-layer networks in a game-theoretic context. We view relaxing and extending these assumptions as a promising direction for future work.

Another natural next step is to understand how these techniques extend to non-differentiable activation functions (such as ReLU) or scale to multi-player and non-zero-sum settings -- especially in structured environments like polyhedral games, which share connections with extensive-form games. For instance, exploring the analogy between two-sided PŁ-conditions (for two-player games) and hypo-monotonicity in multi-agent operator theory may allow us to transfer and generalize some of the intuition and techniques from our current setting. We hope our work and these possible future directions open up rich and technically deep avenues for developing gradient-based methods tailored for structured, non-monotone multiplayer games. (See also Appendix \ref{appendix:discuss-future-work}.)

% \begin{lemma}[Lemma 3 in \cite{song2021subquadratic}]
% \label{lem:estim-jacobian}
%     Suppose that a two-layer neural network, $h$, as defined in Section \ref{sec:prelim}, satisfies Assumption \ref{ass:neural-net-def}. Then the neural network Jacobian has the following lower ($\mu_{\text{Jac}}^{(\opmin)}$) and upper ($\nu_{\text{Jac}}^{(\opmin)}$) bounds on its singular values and Lipschitz constant ($\beta_{\opmin}$):
%     \begin{align}
%         \mu_{\text{Jac}}^{(\opmin)} & = \sigma_{\min}(\psi((W_{1}^{(\opmin)})_0 X)) \nonumber \\
%         \nu_{\text{Jac}}^{(\opmin)} & = \dot \psi_{\max}\sigma_{\max}(X) \sigma_{\max}((W_{2}^{(\opmin)})_0) +  \sigma_{\max}(\psi((W_{1}^{(\opmin)})_0 X)) \\
%         \beta_{\opmin} & = \sqrt{2}\sigma_{\max}(X)(\dot \psi_{\max} + \ddot \psi_{\max} \chi_{\max}) \nonumber
%     \end{align}
%     where $\chi_{\max} = \sup_V \sigma_{\max}(V)$.
% \end{lemma}

% \begin{remark}
%     This will give us conditions for what the initialization $(\sigma_1^{(\opmin)}, \sigma_2^{(\opmin)})$ and $(\sigma_1^{(\opmax)}, \sigma_2^{(\opmax)})$ should be.
% \end{remark}

%\input{./main_results.tex}
\clearpage
%%%%%%%%%%%%%%%%%%%%%%%%%%%%%%%%%%%%%%%%%%%%%%%%%%%%%%%%%%%%
\printbibliography

\newpage
\section*{NeurIPS Paper Checklist}

\begin{enumerate}

\item {\bf Claims}
    \item[] Question: Do the main claims made in the abstract and introduction accurately reflect the paper's contributions and scope?
    \item[] Answer: \answerYes{} % Replace by \answerYes{}, \answerNo{}, or \answerNA{}.
	\item[] Justification: \emph{The paper’s main claims, as stated in the abstract and introduction, accurately reflect the technical contributions and scope of the work. In particular, the abstract highlights the development of convergence guarantees for large-scale neural min-max games, while the introduction motivates the focus on hidden convex-concave structures, separable objectives, and data-dependent mappings. These themes are consistently developed throughout the paper, culminating in formal theorems and empirical examples that substantiate the claims.}
    \item[] Guidelines:
    \begin{itemize}
        \item The answer NA means that the abstract and introduction do not include the claims made in the paper.
        \item The abstract and/or introduction should clearly state the claims made, including the contributions made in the paper and important assumptions and limitations. A No or NA answer to this question will not be perceived well by the reviewers. 
        \item The claims made should match theoretical and experimental results, and reflect how much the results can be expected to generalize to other settings. 
        \item It is fine to include aspirational goals as motivation as long as it is clear that these goals are not attained by the paper. 
    \end{itemize}

\item {\bf Limitations}
    \item[] Question: Does the paper discuss the limitations of the work performed by the authors?
    \item[] Answer: \answerYes{} % Replace by \answerYes{}, \answerNo{}, or \answerNA{}.
    \item[] Justification: \emph{Any assumption used has been discussed together with the description of the underlying model.}
    \item[] Guidelines:
    \begin{itemize}
        \item The answer NA means that the paper has no limitation while the answer No means that the paper has limitations, but those are not discussed in the paper. 
        \item The authors are encouraged to create a separate "Limitations" section in their paper.
        \item The paper should point out any strong assumptions and how robust the results are to violations of these assumptions (e.g., independence assumptions, noiseless settings, model well-specification, asymptotic approximations only holding locally). The authors should reflect on how these assumptions might be violated in practice and what the implications would be.
        \item The authors should reflect on the scope of the claims made, e.g., if the approach was only tested on a few datasets or with a few runs. In general, empirical results often depend on implicit assumptions, which should be articulated.
        \item The authors should reflect on the factors that influence the performance of the approach. For example, a facial recognition algorithm may perform poorly when image resolution is low or images are taken in low lighting. Or a speech-to-text system might not be used reliably to provide closed captions for online lectures because it fails to handle technical jargon.
        \item The authors should discuss the computational efficiency of the proposed algorithms and how they scale with dataset size.
        \item If applicable, the authors should discuss possible limitations of their approach to address problems of privacy and fairness.
        \item While the authors might fear that complete honesty about limitations might be used by reviewers as grounds for rejection, a worse outcome might be that reviewers discover limitations that aren't acknowledged in the paper. The authors should use their best judgment and recognize that individual actions in favor of transparency play an important role in developing norms that preserve the integrity of the community. Reviewers will be specifically instructed to not penalize honesty concerning limitations.
    \end{itemize}

\item {\bf Theory assumptions and proofs}
    \item[] Question: For each theoretical result, does the paper provide the full set of assumptions and a complete (and correct) proof?
    \item[] Answer: \answerYes{} % Replace by \answerYes{}, \answerNo{}, or \answerNA{}.
    \item[] Justification: \emph{All necessary proofs are included.}
    \item[] Guidelines:
    \begin{itemize}
        \item The answer NA means that the paper does not include theoretical results. 
        \item All the theorems, formulas, and proofs in the paper should be numbered and cross-referenced.
        \item All assumptions should be clearly stated or referenced in the statement of any theorems.
        \item The proofs can either appear in the main paper or the supplemental material, but if they appear in the supplemental material, the authors are encouraged to provide a short proof sketch to provide intuition. 
        \item Inversely, any informal proof provided in the core of the paper should be complemented by formal proofs provided in appendix or supplemental material.
        \item Theorems and Lemmas that the proof relies upon should be properly referenced. 
    \end{itemize}

    \item {\bf Experimental result reproducibility}
    \item[] Question: Does the paper fully disclose all the information needed to reproduce the main experimental results of the paper to the extent that it affects the main claims and/or conclusions of the paper (regardless of whether the code and data are provided or not)?
    \item[] Answer: \answerNA{} % Replace by \answerYes{}, \answerNo{}, or \answerYes{}.
    \item[] Justification: \emph{The paper does not include experiments.}
    \item[] Guidelines:
    \begin{itemize}
        \item The answer NA means that the paper does not include experiments.
        \item If the paper includes experiments, a No answer to this question will not be perceived well by the reviewers: Making the paper reproducible is important, regardless of whether the code and data are provided or not.
        \item If the contribution is a dataset and/or model, the authors should describe the steps taken to make their results reproducible or verifiable. 
        \item Depending on the contribution, reproducibility can be accomplished in various ways. For example, if the contribution is a novel architecture, describing the architecture fully might suffice, or if the contribution is a specific model and empirical evaluation, it may be necessary to either make it possible for others to replicate the model with the same dataset, or provide access to the model. In general. releasing code and data is often one good way to accomplish this, but reproducibility can also be provided via detailed instructions for how to replicate the results, access to a hosted model (e.g., in the case of a large language model), releasing of a model checkpoint, or other means that are appropriate to the research performed.
        \item While NeurIPS does not require releasing code, the conference does require all submissions to provide some reasonable avenue for reproducibility, which may depend on the nature of the contribution. For example
        \begin{enumerate}
            \item If the contribution is primarily a new algorithm, the paper should make it clear how to reproduce that algorithm.
            \item If the contribution is primarily a new model architecture, the paper should describe the architecture clearly and fully.
            \item If the contribution is a new model (e.g., a large language model), then there should either be a way to access this model for reproducing the results or a way to reproduce the model (e.g., with an open-source dataset or instructions for how to construct the dataset).
            \item We recognize that reproducibility may be tricky in some cases, in which case authors are welcome to describe the particular way they provide for reproducibility. In the case of closed-source models, it may be that access to the model is limited in some way (e.g., to registered users), but it should be possible for other researchers to have some path to reproducing or verifying the results.
        \end{enumerate}
    \end{itemize}

\item {\bf Open access to data and code}
    \item[] Question: Does the paper provide open access to the data and code, with sufficient instructions to faithfully reproduce the main experimental results, as described in supplemental material?
    \item[] Answer: \answerNA{} % Replace by \answerYes{}, \answerNo{}, or \answerNA{}.
    \item[] Justification: \emph{The paper does not include experiments requiring code.}
    \item[] Guidelines:
    \begin{itemize}
        \item The answer NA means that paper does not include experiments requiring code.
        \item Please see the NeurIPS code and data submission guidelines (\url{https://nips.cc/public/guides/CodeSubmissionPolicy}) for more details.
        \item While we encourage the release of code and data, we understand that this might not be possible, so “No” is an acceptable answer. Papers cannot be rejected simply for not including code, unless this is central to the contribution (e.g., for a new open-source benchmark).
        \item The instructions should contain the exact command and environment needed to run to reproduce the results. See the NeurIPS code and data submission guidelines (\url{https://nips.cc/public/guides/CodeSubmissionPolicy}) for more details.
        \item The authors should provide instructions on data access and preparation, including how to access the raw data, preprocessed data, intermediate data, and generated data, etc.
        \item The authors should provide scripts to reproduce all experimental results for the new proposed method and baselines. If only a subset of experiments are reproducible, they should state which ones are omitted from the script and why.
        \item At submission time, to preserve anonymity, the authors should release anonymized versions (if applicable).
        \item Providing as much information as possible in supplemental material (appended to the paper) is recommended, but including URLs to data and code is permitted.
    \end{itemize}

\item {\bf Experimental setting/details}
    \item[] Question: Does the paper specify all the training and test details (e.g., data splits, hyperparameters, how they were chosen, type of optimizer, etc.) necessary to understand the results?
    \item[] Answer: \answerNA{} % Replace by \answerYes{}, \answerNo{}, or \answerNA{}.
    \item[] Justification: \emph{The paper does not include experiments.}
    \item[] Guidelines:
    \begin{itemize}
        \item The answer NA means that the paper does not include experiments.
        \item The experimental setting should be presented in the core of the paper to a level of detail that is necessary to appreciate the results and make sense of them.
        \item The full details can be provided either with the code, in appendix, or as supplemental material.
    \end{itemize}

\item {\bf Experiment statistical significance}
    \item[] Question: Does the paper report error bars suitably and correctly defined or other appropriate information about the statistical significance of the experiments?
    \item[] Answer: \answerNA{} % Replace by \answerYes{}, \answerNo{}, or \answerNA{}.
    \item[] Justification: \emph{The paper does not include experiments.}
    \item[] Guidelines:
    \begin{itemize}
        \item The answer NA means that the paper does not include experiments.
        \item The authors should answer "Yes" if the results are accompanied by error bars, confidence intervals, or statistical significance tests, at least for the experiments that support the main claims of the paper.
        \item The factors of variability that the error bars are capturing should be clearly stated (for example, train/test split, initialization, random drawing of some parameter, or overall run with given experimental conditions).
        \item The method for calculating the error bars should be explained (closed form formula, call to a library function, bootstrap, etc.)
        \item The assumptions made should be given (e.g., Normally distributed errors).
        \item It should be clear whether the error bar is the standard deviation or the standard error of the mean.
        \item It is OK to report 1-sigma error bars, but one should state it. The authors should preferably report a 2-sigma error bar than state that they have a 96\% CI, if the hypothesis of Normality of errors is not verified.
        \item For asymmetric distributions, the authors should be careful not to show in tables or figures symmetric error bars that would yield results that are out of range (e.g. negative error rates).
        \item If error bars are reported in tables or plots, The authors should explain in the text how they were calculated and reference the corresponding figures or tables in the text.
    \end{itemize}

\item {\bf Experiments compute resources}
    \item[] Question: For each experiment, does the paper provide sufficient information on the computer resources (type of compute workers, memory, time of execution) needed to reproduce the experiments?
    \item[] Answer: \answerNA{} % Replace by \answerYes{}, \answerNo{}, or \answerNA{}.
    \item[] Justification: \emph{The paper does not include experiments.}
    \item[] Guidelines:
    \begin{itemize}
        \item The answer NA means that the paper does not include experiments.
        \item The paper should indicate the type of compute workers CPU or GPU, internal cluster, or cloud provider, including relevant memory and storage.
        \item The paper should provide the amount of compute required for each of the individual experimental runs as well as estimate the total compute. 
        \item The paper should disclose whether the full research project required more compute than the experiments reported in the paper (e.g., preliminary or failed experiments that didn't make it into the paper). 
    \end{itemize}
    
\item {\bf Code of ethics}
    \item[] Question: Does the research conducted in the paper conform, in every respect, with the NeurIPS Code of Ethics \url{https://neurips.cc/public/EthicsGuidelines}?
    \item[] Answer: \answerYes{} % Replace by \answerYes{}, \answerNo{}, or \answerNA{}.
    \item[] Justification: \emph{NeurIPS Code of Ethics has been preserved}
    \item[] Guidelines:
    \begin{itemize}
        \item The answer NA means that the authors have not reviewed the NeurIPS Code of Ethics.
        \item If the authors answer No, they should explain the special circumstances that require a deviation from the Code of Ethics.
        \item The authors should make sure to preserve anonymity (e.g., if there is a special consideration due to laws or regulations in their jurisdiction).
    \end{itemize}

\item {\bf Broader impacts}
    \item[] Question: Does the paper discuss both potential positive societal impacts and negative societal impacts of the work performed?
    \item[] Answer: \answerYes{} % Replace by \answerYes{}, \answerNo{}, or \answerNA{}.
    \item[] Justification: \emph{The paper includes discussion about the societal impact of the performed work}
    \item[] Guidelines:
    \begin{itemize}
        \item The answer NA means that there is no societal impact of the work performed.
        \item If the authors answer NA or No, they should explain why their work has no societal impact or why the paper does not address societal impact.
        \item Examples of negative societal impacts include potential malicious or unintended uses (e.g., disinformation, generating fake profiles, surveillance), fairness considerations (e.g., deployment of technologies that could make decisions that unfairly impact specific groups), privacy considerations, and security considerations.
        \item The conference expects that many papers will be foundational research and not tied to particular applications, let alone deployments. However, if there is a direct path to any negative applications, the authors should point it out. For example, it is legitimate to point out that an improvement in the quality of generative models could be used to generate deepfakes for disinformation. On the other hand, it is not needed to point out that a generic algorithm for optimizing neural networks could enable people to train models that generate Deepfakes faster.
        \item The authors should consider possible harms that could arise when the technology is being used as intended and functioning correctly, harms that could arise when the technology is being used as intended but gives incorrect results, and harms following from (intentional or unintentional) misuse of the technology.
        \item If there are negative societal impacts, the authors could also discuss possible mitigation strategies (e.g., gated release of models, providing defenses in addition to attacks, mechanisms for monitoring misuse, mechanisms to monitor how a system learns from feedback over time, improving the efficiency and accessibility of ML).
    \end{itemize}
    
\item {\bf Safeguards}
    \item[] Question: Does the paper describe safeguards that have been put in place for responsible release of data or models that have a high risk for misuse (e.g., pretrained language models, image generators, or scraped datasets)?
    \item[] Answer: \answerNA{} % Replace by \answerYes{}, \answerNo{}, or \answerNA{}.
    \item[] Justification: \emph{The paper poses no such risks.}
    \item[] Guidelines:
    \begin{itemize}
        \item The answer NA means that the paper poses no such risks.
        \item Released models that have a high risk for misuse or dual-use should be released with necessary safeguards to allow for controlled use of the model, for example by requiring that users adhere to usage guidelines or restrictions to access the model or implementing safety filters. 
        \item Datasets that have been scraped from the Internet could pose safety risks. The authors should describe how they avoided releasing unsafe images.
        \item We recognize that providing effective safeguards is challenging, and many papers do not require this, but we encourage authors to take this into account and make a best faith effort.
    \end{itemize}

\item {\bf Licenses for existing assets}
    \item[] Question: Are the creators or original owners of assets (e.g., code, data, models), used in the paper, properly credited and are the license and terms of use explicitly mentioned and properly respected?
    \item[] Answer: \answerNA{} % Replace by \answerYes{}, \answerNo{}, or \answerNA{}.
    \item[] Justification: \emph{The paper does not use existing assets.}
    \item[] Guidelines:
    \begin{itemize}
        \item The answer NA means that the paper does not use existing assets.
        \item The authors should cite the original paper that produced the code package or dataset.
        \item The authors should state which version of the asset is used and, if possible, include a URL.
        \item The name of the license (e.g., CC-BY 4.0) should be included for each asset.
        \item For scraped data from a particular source (e.g., website), the copyright and terms of service of that source should be provided.
        \item If assets are released, the license, copyright information, and terms of use in the package should be provided. For popular datasets, \url{paperswithcode.com/datasets} has curated licenses for some datasets. Their licensing guide can help determine the license of a dataset.
        \item For existing datasets that are re-packaged, both the original license and the license of the derived asset (if it has changed) should be provided.
        \item If this information is not available online, the authors are encouraged to reach out to the asset's creators.
    \end{itemize}

\item {\bf New assets}
    \item[] Question: Are new assets introduced in the paper well documented and is the documentation provided alongside the assets?
    \item[] Answer: \answerNA{} % Replace by \answerYes{}, \answerNo{}, or \answerNA{}.
    \item[] Justification: \emph{The paper does not release new assets.}
    \item[] Guidelines:
    \begin{itemize}
        \item The answer NA means that the paper does not release new assets.
        \item Researchers should communicate the details of the dataset/code/model as part of their submissions via structured templates. This includes details about training, license, limitations, etc. 
        \item The paper should discuss whether and how consent was obtained from people whose asset is used.
        \item At submission time, remember to anonymize your assets (if applicable). You can either create an anonymized URL or include an anonymized zip file.
    \end{itemize}

\item {\bf Crowdsourcing and research with human subjects}
    \item[] Question: For crowdsourcing experiments and research with human subjects, does the paper include the full text of instructions given to participants and screenshots, if applicable, as well as details about compensation (if any)? 
    \item[] Answer: \answerNA{} % Replace by \answerYes{}, \answerNo{}, or \answerNA{}.
    \item[] Justification: \emph{The paper does not involve crowdsourcing nor research with human subjects.}
    \item[] Guidelines:
    \begin{itemize}
        \item The answer NA means that the paper does not involve crowdsourcing nor research with human subjects.
        \item Including this information in the supplemental material is fine, but if the main contribution of the paper involves human subjects, then as much detail as possible should be included in the main paper. 
        \item According to the NeurIPS Code of Ethics, workers involved in data collection, curation, or other labor should be paid at least the minimum wage in the country of the data collector. 
    \end{itemize}

\item {\bf Institutional review board (IRB) approvals or equivalent for research with human subjects}
    \item[] Question: Does the paper describe potential risks incurred by study participants, whether such risks were disclosed to the subjects, and whether Institutional Review Board (IRB) approvals (or an equivalent approval/review based on the requirements of your country or institution) were obtained?
    \item[] Answer: \answerNA{} % Replace by \answerYes{}, \answerNo{}, or \answerNA{}.
    \item[] Justification: \emph{The paper does not involve crowdsourcing nor research with human subjects}
    \item[] Guidelines:
    \begin{itemize}
        \item The answer NA means that the paper does not involve crowdsourcing nor research with human subjects.
        \item Depending on the country in which research is conducted, IRB approval (or equivalent) may be required for any human subjects research. If you obtained IRB approval, you should clearly state this in the paper. 
        \item We recognize that the procedures for this may vary significantly between institutions and locations, and we expect authors to adhere to the NeurIPS Code of Ethics and the guidelines for their institution. 
        \item For initial submissions, do not include any information that would break anonymity (if applicable), such as the institution conducting the review.
    \end{itemize}

\item {\bf Declaration of LLM usage}
    \item[] Question: Does the paper describe the usage of LLMs if it is an important, original, or non-standard component of the core methods in this research? Note that if the LLM is used only for writing, editing, or formatting purposes and does not impact the core methodology, scientific rigorousness, or originality of the research, declaration is not required.
    %this research? 
    \item[] Answer: \answerNA{} % Replace by \answerYes{}, \answerNo{}, or \answerNA{}.
    \item[] Justification: \emph{The core method development in this research does not involve LLMs as any important, original, or non-standard components.}
    \item[] Guidelines:
    \begin{itemize}
        \item The answer NA means that the core method development in this research does not involve LLMs as any important, original, or non-standard components.
        \item Please refer to our LLM policy (\url{https://neurips.cc/Conferences/2025/LLM}) for what should or should not be described.
    \end{itemize}

\end{enumerate}

% !TEX root = ./neurips_2025.tex

\clearpage
\section*{Broader Impact.}
This work advances the theoretical foundations for training large-scale multi-agent systems via gradient-based methods in hidden non-convex environments. Providing reliable convergence guarantees in such settings is a critical step toward the safe and stable deployment of multi-agent technologies, with applications ranging from distributed control and autonomous systems to robust economic mechanisms, cybersecurity infrastructures, and collaborative AI development. Beyond immediate engineering benefits, our results contribute to the fundamental understanding of strategic learning in non-convex environments, offering the potential for more predictable, transparent, and verifiable AI behavior.

From a broader societal perspective, these theoretical advances can support the design of AI systems that align more closely with human values, improve multi-agent coordination, and enhance resilience against adversarial attacks. Applications may include improving the robustness of distributed decision-making, facilitating fairer negotiation frameworks, and enabling safer autonomous cooperation among heterogeneous agents. In doing so, this work contributes to the vision of AI systems that are not only powerful but also trustworthy and beneficial (cf.~\cite{amodei2016concrete, russell2015research}).

However, we also recognize that the same techniques enabling reliable convergence could be leveraged in ways that carry risks. In particular, stronger convergence in competitive environments may be exploited to construct highly optimized agents for adversarial purposes, including strategic market manipulation, automated disinformation campaigns, or autonomous decision-making systems deployed without adequate oversight or alignment with human norms. Further, the scalability of multi-agent learning could amplify systemic biases or create emergent behaviors that are difficult to predict or control (cf.~\cite{crandall2018cooperating}).

Accordingly, while this work advances foundational goals in strategic machine learning, it also underscores the need for careful evaluation of downstream applications. Future research should emphasize robustness checks, fairness assessments, and human-centered design principles to ensure that strategic learning systems contribute positively to societal welfare. In particular, collaborations across technical, ethical, and policy disciplines will be crucial to anticipate and mitigate potential negative consequences as these technologies scale and proliferate.

\appendix
\clearpage
\section*{Appendix Contents}
\startcontents[appendix]
\printcontents[appendix]{l}{1}{\setcounter{tocdepth}{2}}
\clearpage
% !TEX root = ./neurips_2025.tex

\section{Further Discussion in Prior Work}

\subsection{Optimization with Hidden Structures}
\label{subsec:lit-review-optim-hidden-structure}
Hidden convexity has offered a pathway to global convergence even in otherwise nonconvex problems. Early developments include the use of hidden convexity in the analysis of cubic regularization for Newton’s method \parencite{nesterov2006cubic}, which achieved global rates under suitable curvature conditions. Extending such frameworks to non-smooth or stochastic optimization, however, remains an open challenge. More recently, hidden convexity has been exploited across a wide range of applications, including reinforcement learning \parencite{hazan2019provably, zhang2020variational, ying2023policy}, generative modeling \parencite{kobyzev2020normalizing}, supply chain management \parencite{feng2018supply, chen2024efficient}, and neural network training \parencite{wang2020hidden}. Further, \parencite{bhaskaradescent} studied optimization via biased stochastic oracles, developing algorithms with $\tilde{O}(\epsilon^{-2})$ or $\tilde{O}(\epsilon^{-3})$ iteration complexity, depending on the oracle’s stability, and recovering new results for hidden convex problems as an application. Prior to its adoption in game theory,  policy gradient methods in reinforcement learning have leveraged hidden convexity to establish global convergence guarantees \parencite{zhang2020variational, barakat2023reinforcement}, often relying on variance-reduced estimators or large-batch assumptions. Relatedly, hidden convexity also played a crucial role in quadratic optimization problems \parencite{ben1996hidden, stern1995indefinite}, revealing convex-like structures within certain nonconvex programs. 

Turning to min-max optimization, hidden convexity has proven essential for addressing the limitations of classical algorithms. \textcite{vlatakis2019poincare} demonstrated that even benign-looking hidden zero-sum games can exhibit cyclic behaviors or divergence when trained via vanilla gradient descent-ascent (GDA). Motivated by such failures, \textcite{vlatakis2021solving} introduced the formal concept of \emph{hidden-convex hidden-concave games}, providing the first convergence guarantees for GDA in nonconvex min-max settings under suitable structural conditions such as strict or strong hidden convexity. Parallel research on non-monotone variational inequalities has developed continuous-time flows that exploit hidden convexity-like structures to ensure global convergence \parencite{mladenovic2021generalized}. Extending these ideas, \textcite{sakos2023exploiting} provided a fully discrete-time algorithm along with provable guarantees for stability and convergence in hidden convex games—filling an important gap compared to previous continuous-time results. Moreover, recent work has investigated preconditioned algorithms for hidden monotone variational inequalities, including Newton-type approaches \parencite{d2024solving}. Further discussions \parencite{d2024solving} have clarified under which conditions both the latent variable space and the control space can be bounded, while maintaining uniform lower bounds on the singular values of the players' Jacobians, a critical requirement for robust convergence guarantees.

\subsection{Simultaneous, Alternating, and Extrapolated Dynamics in Convex–Concave Min-Max Optimization}
\label{subsec:lit-review-cvx-ccv-min-max}

Min-max optimization algorithms have a long-standing history, dating back to the original proximal point methods for variational inequality problems \textcite{martinet1970regularisation, rockafellar1976monotone}. Below we present some

\subsubsection{The classic regime}
To set the stage, consider the classical min-max problem of the form $\min_x \max_y f(x,y)$, a fundamental template across optimization and game theory.  
The most natural extension of gradient descent to such settings is the \emph{gradient descent-ascent} (GDA) method \parencite{dem1972some}, which iteratively updates $x$ to decrease $f$ and $y$ to increase $f$. GDA comes in two flavors: \emph{simultaneous} (Sim-GDA), where $x$ and $y$ are updated in parallel, and \emph{alternating} (Alt-GDA), where updates occur sequentially.However, despite its simplicity, even in convex–concave settings, vanilla GDA fails to guarantee convergence. In basic examples such as the bilinear game $\min_x \max_y xy$, Sim-GDA exhibits unbounded divergence, while Alt-GDA produces bounded but non-convergent iterates that circulate indefinitely \parencite{bailey2020finite,gidel2018variational,gidel2019negative,zhang2022near}.

These deficiencies have led to the development of refined algorithms designed to stabilize min-max dynamics, including Extra-Gradient methods \parencite{korpelevich1976extragradient}, Optimistic Gradient Descent \parencite{popov1980modification}, and negative momentum variants \parencite{gidel2019negative}. Many of these methods achieve accelerated rates compared to plain GDA, particularly under smoothness and convexity assumptions. Nevertheless, the bulk of this literature focuses predominantly on Sim-GDA-type algorithms, largely due to their analytical tractability.

Yet, in many real-world machine learning applications, particularly adversarial training scenarios like GANs, alternating updates more naturally model the dynamics: the generator and discriminator adjust sequentially based on each other’s output. Empirical studies \parencite{goodfellow2014generative, mescheder2017numerics} suggest that Alt-GDA often converges faster than Sim-GDA. Despite these observations, the theoretical foundations explaining the advantages of alternation remain relatively underdeveloped.
A notable advancement toward understanding this phenomenon was provided by \textcite{zhang2022near}, who analyzed local convergence rates under strong convexity–concavity (SCSC) and smoothness assumptions. Their results show that, locally, Alt-GDA achieves iteration complexity $\tilde{O}(\kappa)$ compared to the slower $\tilde{O}(\kappa^2)$ of Sim-GDA, where $\kappa = L/\mu$ denotes the condition number. Nonetheless, these results are limited to \emph{local} convergence—i.e., after the iterates are already sufficiently close to a saddle point—and do not capture the full global behavior.  More recently, further refinements have been developed combining extrapolation techniques with alternation to achieve nearly optimal condition number dependence \parencite{lee2024fundamental}.

\subsubsection{The PŁ-Regime and Nonconvex Min-Max Problems}
Research has also intensified on extending convergence guarantees beyond convex–concave settings to nonconvex–nonconcave games \parencite{sinha2017certifiable, chen2017robust, qian2019robust, thekumparampil2019efficient, lin2018solving, , abernethy2019last}. The majority of these early works often focused on settings where the objective is nonconvex in $x$ but concave in $y$, and proposed algorithms that either solve inner maximizations exactly or rely on strong assumptions (e.g., Minty Variational Inequalities \parencite{lin2018solving}).  Others, such as \textcite{abernethy2019last}, introduced Hamiltonian-type methods for nearly bilinear problems but relied on second-order information. Recently, \textcite{nouiehed2019solving} studied a class of minimax problems where the objective only satisfies a one-sided PŁ-condition and introduced the GDmax algorithm, which takes multiple ascent steps at every iteration. However in our case (i) we consider the two-sided PŁ-condition which guarantees global convergence; (ii) we consider AltGDA which takes one ascent step at every iteration. Another closely related work is \cite{cai2019global} The authors considered a specific application in generative adversarial imitation learning with linear quadratic regulator dynamics. This is a special example that falls under the two-sided PŁ-condition.

In contrast, a major breakthrough in capturing global convergence via first-order methods was provided by \textcite{yang2020global}. They introduced the concept of two-sided Polyak–Łojasiewicz (PŁ) inequalities for nonconvex–nonconcave min-max games, showing that alternating GDA (AltGDA) converges globally at a linear rate under this structure. Furthermore, they designed a variance-reduced stochastic AltGDA method for finite-sum objectives, achieving faster convergence. Building on this, subsequent works have refined the framework: \textcite{xu2023} proposed a zeroth-order variant, \textcite{liu2023convergence} analyzed randomized stochastic accelerations, and \textcite{chen2022faster} integrated Spider techniques to improve stochastic convergence rates under PŁ-conditions. Extensions to more adaptive and multi-step alternating schemes were explored by \textcite{kuruzov2022gradient}, aiming to further optimize the dynamics in hidden convex-concave games. At the same time  \cite{yang2020global} presented a case where simultaneous GDA will diverge from the equilibrium while AltGDA converges to the equilibrium.

\subsection{Overparameterization in Learning Dynamics: From Minimization to Games}
\subsubsection{Minimizing training loss}
The phenomenon of overparameterization—where neural networks possess far more parameters than the apparent complexity of the target function—has profoundly influenced the theoretical understanding of modern machine learning. Early works rigorously explored how, despite the non-convexity of the training landscape, overparameterization enables gradient-based methods to converge to global optima \parencite{safran2016quality, du2019gradient, allen2019convergence}. Notably, these studies typically required substantial overparameterization, often polynomial in the size of the training data or model complexity. However, empirical observations \parencite{livni2014computational, safran2017depth} suggest that even modest increases in network width—sometimes adding just a few neurons—can suffice for successful training, motivating a closer study of \emph{mild overparameterization}.

A pivotal theoretical lens for understanding this success is the \emph{Neural Tangent Kernel} (NTK) framework \parencite{jacot2018neural}. In the infinite-width limit, training dynamics linearize, and gradient descent effectively follows a kernelized gradient flow determined by the NTK, which remains nearly constant throughout training. Thus, convergence can be ensured if the NTK is well-conditioned—specifically, if its minimum eigenvalue remains bounded away from zero. At finite but large widths, convergence analyses typically require proving two properties: (i) the NTK is well-conditioned at initialization, and (ii) it remains stable during training \parencite{oymak2019overparameterized, chizat2019lazy, bombari2022memorization}.

Recent works have sharpened the quantitative understanding of this regime. For instance, \textcite{song2021subquadratic} demonstrated that a network width of approximately $\tilde{O}(n^{3/2})$ suffices for global convergence at linear rates, improving previous state-of-the-art requirements. In parallel, studies such as \parencite{bombari2022memorization} established NTK lower bounds that allow optimization with as few as $\Omega(\sqrt{n})$ neurons, bridging optimization and memorization capabilities.
Yet, most of these developments operate within minimization frameworks—either fitting labels or adversarially robust losses \parencite{bubeck2021universal,}. The transition from minimization to \emph{games} (e.g., adversarial training, multi-agent learning) brings new challenges.

\subsubsection{Adversarial Losses and MARL}
In adversarial learning, recent works have analyzed overparameterized adversarial training primarily from a minimization perspective, focusing on robustness against worst-case perturbations rather than strategic interactions between agents \parencite{zhang2020over,  gao2019convergence, chen2024over}. While robust losses (and closer to our setting) induce non-convexity, the optimization target remains a minimizer rather than a saddle point.

In multi-agent and game-theoretic settings, the literature is comparatively sparser. Policy approximation in multi-agent reinforcement learning (MARL) typically relies on either tabular or linear architectures \parencite{wang2020statistical, sutton2008convergent}, and extending sample-efficient learning to rich function classes like neural networks remains a frontier. A notable contribution in this direction is the work of \textcite{jin2022power}, which introduced the Multi-Agent Bellman Eluder (BE) dimension as a complexity measure for MARL, enabling sample-efficient learning of Nash equilibria in high-dimensional spaces. In the context of Markov Games, \textcite{li2022learning} studied Nash equilibria computation using kernel-based function approximation, highlighting the difficulties of exploration and generalization in high-dimensional, non-convex settings.

Despite these advances, a comprehensive theory connecting overparameterization, NTK stability, and global convergence in \emph{multi-agent games} remains largely undeveloped. 
Key questions include:
\begin{itemize} \item How does overparameterization affects the conditioning of multi-agent dynamics?
\item whether can alternating optimization (as opposed to simultaneous updates)  exploit NTK-like stability, and how regularization or architectural choices influence convergence in strategic environments?
\end{itemize}
 Our work contributes to this growing effort by combining insights from the large-scale network mapppings perspective with recent advances in hidden convexity and PL conditions for nonconvex–nonconcave optimization. In particular, we highlight that under mild overparameterization, even strategic interactions—modeled via hidden convex–concave games—admit global convergence guarantees with simple gradient-based methods, provided the trajectory stays within a controlled neighborhood of initialization where the Jacobians remain well-conditioned.

\subsection{Some Empirical Studies Demonstrating Need of Larger Neural Networks for Zero-Sum Games}
Since our work pertains to estimating the amount of overparameterization needed in neural networks for solving various zero-sum games, we highlight some of the works in various applied min-max contexts -- adversarial training, GANs, DRO, and neural agents -- which show that larger, overparameterized neural networks lead to improved convergence and performance. For instance, in case of adversarial training, \textcite{addepalli2022scaling} show improved robustness and performance in adversarial training with larger models. A few other works \parencite{karras2018progressive, brock2018large, sauer2022stylegan} empirically demonstrate how using larger architectures in GANs improve the training stability, and the quality and variation of generated images. In the realm of LLM Language agents, \textcite{karten2025pokchamp} show improved performance using GPT-4.0 as opposed to smaller LLMs. In the case of DRO, \textcite{pham2021the} show that bigger neural networks may yield better worst-group generalization.

While our main results (Theorem \ref{th:main-agda-convergence}) concerning neural games may largely be seen to be of theoretical interest without an empirical component attached to it, we refer interested readers to Appendix \ref{appendix:input-games-experiment} for experimental validity of our main results concerning the input games (Theorem \ref{th:warmup-main-claim}).

\clearpage
\section{Examples/Applications based on hidden min-max games}
\label{sec:appendix-examples}

In the following, we present a series of examples illustrating instances of \emph{hidden convex–concave min-max optimization} in neural network-based settings. 
We distinguish between two main types of problems:
\begin{itemize}
    \item Problems where the optimization is performed over the parameters of the neural networks, given a fixed dataset (training over network weights).
    \item Problems where the network parameters are fixed (randomly initialized), and the optimization is performed over the input space (input-optimization games).
\end{itemize}

\subsection{Neural Min-Max Games}
We begin by illustrating two canonical examples of \emph{input-optimization games} that naturally fall within the hidden convex–concave min-max framework. 
Both examples demonstrate settings where neural network mappings induce structured, but hidden, convexity and concavity, which can be exploited for provable convergence.

\begin{example}[Generative Adversarial Networks (GANs)]
A \emph{Generative Adversarial Network} (GAN) formulates a two-player minimax game where the generator \( G_\theta \) seeks to produce samples that resemble a reference distribution \( p_{\text{data}} \), while the discriminator \( D_\phi \) attempts to distinguish generated samples from real data. 
The corresponding min-max problem reads:
\[
\min_{\theta} \max_{\phi} \quad \Psi(\theta, \phi) := \mathbb{E}_{x \sim p_{\text{data}}} \left[\log D_\phi(x)\right] + \mathbb{E}_{x \sim p_\theta} \left[\log(1 - D_\phi(x))\right].
\]
Assuming that both \( p_{\text{data}} \) and \( p_\theta \) admit densities and that the support of \( p_\theta \) lies within the support of \( p_{\text{data}} \), one can reformulate the problem via a latent convex–concave structure.
The min-max formulation arises naturally as the generator seeks to minimize the ability of the discriminator to distinguish real from fake samples, while the discriminator simultaneously maximizes its classification performance, thus modeling an adversarial dynamic between two competing objectives.
Specifically, considering a distribution \( p(x,x') \) that samples either a real or generated point, the loss can be decomposed as:
\[
L_{x,x'}(p', D) := \log D(x) + \frac{p'(x')}{p_{\text{data}}(x')} \log(1 - D(x')),
\]
which is jointly convex in \(p'\) and concave in \(D\). Consequently,
\[
\Psi(\theta, \phi) = \mathbb{E}_{(x,x') \sim p} \left[ L_{x,x'}(p_\theta(x), D_\phi(x')) \right],
\]
exhibiting the GAN training objective as a \emph{hidden convex–concave game}.
\end{example}

\begin{example}[Domain-Invariant Representation Learning (DIRL)]
Domain adaptation aims to train models that generalize across different domains, despite distribution shifts between training (source) and deployment (target) environments. A popular approach \parencite{ganin2016domain} involves learning representations that are: \begin{inparaenum}[(i)]
\item predictive of labels in the source domain, and 
\item invariant to the domain classifier distinguishing source versus target samples.
\end{inparaenum}
This leads to the following min-max problem:
\[
\min_{\theta_f, \theta_g} \max_{\theta_{f'}} 
\mathbb{E}_{(x,y) \sim P_{\text{source}}}\left[\ell(f_{\theta_f}(g_{\theta_g}(x)), y)\right]
- \lambda \mathbb{E}_{(x,y') \sim P_{\text{mix}}}\left[\ell(f'_{\theta_{f'}}(g_{\theta_g}(x)), y')\right],
\]
where: \begin{inparaenum}[(i)]
\item \(g_{\theta_g}\) is the feature extractor,
\item \(f_{\theta_f}\) is the label predictor,
\item \(f'_{\theta_{f'}}\) is the domain classifier,
\item \(\ell\) denotes the classification loss, and
\item \(P_{\text{source}}\), \(P_{\text{mix}}\) are the source and mixed domain distributions, respectively.
\end{inparaenum}
When the loss function is convex with respect to the neural mappings, the hidden convex–concave structure becomes apparent, fitting naturally into our theoretical framework.
\end{example}

\begin{example}[Robust Adversarial Reinforcement Learning (RARL)]
One of the major challenges in reinforcement learning (RL) is the difficulty of training agents under realistic conditions, often due to costly data collection or limited availability of real-world environments. 
To address these challenges, \textcite{pinto2017robust} proposed an adversarial training framework wherein a learner agent and an adversary play against each other by solving the following min-max optimization problem:
\[
\min_{\theta_1} \max_{\theta_2} \mathbb{E}_{s_0 \sim \rho,\, a^1 \sim \mu_{\theta_1}(s),\, a^2 \sim \nu_{\theta_2}(s)}\left[\sum_{t=0}^{T-1} r^1(s_t, a^1_t, a^2_t)\right],
\]
where: \begin{inparaenum}[(i)]
\item \(\mu_{\theta_1}\) is the learner's policy network, 
\item \(\nu_{\theta_2}\) is the adversary’s policy network, 
\item \(r^1\) denotes the reward function, and 
\item \(\rho\) is the initial state distribution.
\end{inparaenum}

In this setup, the learner seeks to maximize its expected reward, while the adversary perturbs the environment or dynamics to minimize the learner's performance. 
Such adversarial modeling captures different sources of uncertainty: either \emph{unknown variations in the underlying Markov Decision Process (MDP)}, or deliberate \emph{adversarial attacks} aiming to degrade the policy (e.g., disturbances in robotic control tasks or adversarial inputs in sensor-based systems).
Hidden convex–concave structures can naturally arise when suitable regularizations or smooth policy parameterizations are enforced.
\end{example}

\begin{example}[Adversarial Example Generation (AEG)]
In an \emph{Adversarial Example Generation} (AEG) setting, the goal is to generate adversarial perturbations that cause misclassification by a fixed classifier \( f_\phi \).
Formally, given clean samples \((x,y) \sim p_{\text{data}}\), a perturbation generator \( G_\theta \) seeks to find an adversarial input \( x' \) satisfying a distortion constraint (e.g., \(\|x - x'\|_\infty \leq \epsilon\)) that maximizes the classification loss. 
The underlying min-max optimization is:
\[
\min_{\phi} \max_{\theta} \quad \Psi(\theta, \phi) := -\mathbb{E}_{(x',y)\sim p_\theta}\left[\ell(f_\phi(x'), y)\right],
\]
where \( \ell \) denotes the cross-entropy loss. 
Here, the generator (adversary) maximizes the classification loss while the classifier seeks to minimize it.
Under appropriate conditions on the neural network mappings and smoothness of the loss, the adversarial game admits a hidden convex–concave structure that can be exploited for convergence analysis.
\end{example}

\subsection{Distributionally Robust Optimization}
\label{sec:DRO}
In many machine learning applications, models are trained under the assumption that data is drawn from a fixed but unknown distribution. However, real-world deployment often leads to distribution shifts (e.g., label noise, adversarial perturbations, or changing environments), which can severely degrade performance. A principled way to address this issue is through \emph{Distributionally Robust Optimization} (DRO) \parencite{michel2021modeling,xue2025distributionally}, where the model is trained to perform well against the worst-case distribution within a prescribed uncertainty set. The corresponding min-max optimization problem reads:
\[
\theta_{\text{DRO}}^* \in \arg \min_{\theta} \max_{P \in \mathcal{P}} \mathbb{E}_{(x,y) \sim P} \left[ \ell(h_\theta(x), y) \right],
\]
where: \begin{inparaenum}[(i)]
\item \(h_\theta\) is the predictive model parameterized by \(\theta\),
\item \(\ell\) is a loss function (e.g., cross-entropy),
\item \(\mathcal{P}\) is an uncertainty set containing distributions representing expected perturbations.
\end{inparaenum}
Hidden convexity can emerge when \(\ell\) is convex in the model outputs and the uncertainty set \(\mathcal{P}\) is suitably structured. 

\begin{example}[Parametric Distributionally Robust Optimization (Parametric-DRO)]
While classical DRO assumes that the uncertainty set \(\mathcal{P}\) is specified manually, identifying an appropriate \(\mathcal{P}\) is often challenging, especially at large deployment scales. To overcome this, \textcite{michel2021modeling} proposed modeling the worst-case distribution using a parameterized generative model \(q_\psi\).

The resulting parametric DRO problem reads:
\[
\min_{\theta} \max_{\psi : \, KL(q_\psi \parallel q_{\psi_0}) \leq \kappa} \mathbb{E}_{(x,y) \sim p} \left[ \frac{q_\psi(x,y)}{q_{\psi_0}(x,y)} \ell(h_\theta(x), y) \right],
\]
where: \begin{inparaenum}[(i)]
\item \(h_\theta\) is the predictor network,
\item \(q_\psi\) models the perturbed distribution,
\item \(q_{\psi_0}\) approximates the empirical distribution via maximum likelihood,
\item \(\kappa\) controls the size of the KL-divergence ball around \(q_{\psi_0}\).
\end{inparaenum}
This formulation transforms the DRO problem into a min-max optimization between the model parameters \(\theta\) and the perturbation parameters \(\psi\), both parameterized via neural networks, fitting naturally into the hidden convex–concave framework under appropriate regularization. Observe that by Pinsker's inequality we get that KL-divergence over $\ell_1$-norm.
\end{example}

\subsection{Input-Optimization Min-Max Games}
\label{sec:input-optimization-games}
A recurring structure in adversarial and robust optimization tasks involves optimizing over input spaces rather than model parameters. 
Based on \textcite{wang2021adversarial} we describe three prominent examples of such \emph{input-optimization games} below:
\begin{example}[Ensemble Attack over Multiple Models]
Given \(K\) machine learning models \(\{\mathcal{M}_i\}_{i=1}^K\), the goal is to find a universal perturbation \(\delta\) that simultaneously fools all models.
The corresponding input-optimization game reads:
\[
\min_{\delta\in\mathcal{X}} \max_{w\in\mathcal{P}} \sum_{i=1}^K w_i f(\delta; x_0, y_0, \mathcal{M}_i) - \frac{\gamma}{2}\|w-1/K\|_2^2,
\]
where \(w\) encodes the relative difficulty of attacking each model, and \(\gamma\) is a regularization parameter.
\end{example}
\begin{example}[Universal Perturbation over Multiple Examples]
Here, given a set of examples \(\{(x_i, y_i)\}_{i=1}^K\), the objective is to find a perturbation \(\delta\) that simultaneously fools all of them.
The optimization problem becomes:
\[
\min_{\delta\in\mathcal{X}} \max_{w\in\mathcal{P}} \sum_{i=1}^K w_i f(\delta; x_i, y_i, \mathcal{M}) - \frac{\gamma}{2}\|w-1/K\|_2^2.
\]
This mirrors the ensemble attack setup, but focuses on perturbing multiple inputs under a fixed model \(\mathcal{M}\).
\end{example}

\begin{example}[Adversarial Attack over Data Transformations]
Consider robustness against transformations (e.g., rotations, translations) applied to the inputs.
Given categories of transformations \(\{p_i\}\), the optimization reads:
\[
\min_{\delta\in\mathcal{X}} \max_{w\in\mathcal{P}} \sum_{i=1}^K w_i \mathbb{E}_{t\sim p_i}\left[f(t(x_0+\delta); y_0, \mathcal{M})\right] - \frac{\gamma}{2}\|w-1/K\|_2^2,
\]
where \(t\) denotes a random transformation sampled from \(p_i\).
When \(w=1/K\), this recovers the expectation-over-transformation (EOT) setup.
\end{example}
\smallskip

Observe that under a convex loss function \(f\) with respect to the neural mapping \(\mathcal{M}(x_0+\delta)\) (i.e., hidden convexity in \(\delta\)), and given that \(w\) appears linearly in both the bilinear coupling and the individual regularization term of separable framework of Section~\ref{subsec:setting-main-contri}, the structure fits naturally within our hidden convex–concave framework. Albeit a careful reader might observe that our main results are stated for unconstrained min-max optimization, there are two standard ways to extend our analysis to constrained settings: 
\begin{itemize}
\item First, by employing the two-proximal-PL framework developed in \textcite{kalogiannis2025solving}—an improvement over the earlier formulation of \textcite{yang2020global}—our convergence guarantees naturally generalize to simple constraint sets, such as \(\ell_2\)-balls for perturbations \(\delta\) and simplices for the mixture weights \(w\).
\item Alternatively, the constraints can be incorporated directly into the objective through suitable Lagrangian penalty terms, thereby reducing the constrained min-max problem to an unconstrained form amenable to our techniques.
\end{itemize}
\paragraph{Unified Perspective.}
Across these examples, input-optimization problems naturally exhibit a saddle-point structure, blending adversarial robustness objectives with min-max optimization techniques. They provide concrete and practically motivated instances where hidden convexity can be exploited to ensure convergence guarantees for training and robustness analysis.
\clearpage
\section{Demonstrating Empirical Validity of Main Result for Input Games}
\label{appendix:input-games-experiment}

We consider a hidden game of Rock-Paper-Scissors where two 1-hidden layer neural networks (GeLU activations) are playing the game of the Rock-Paper-Scissors. We will see here (empirically) that, if we use random Gaussian initializations for both the neural network players as described in Theorem \ref{th:warmup-main-claim} to define our input game of Rock-Paper-Scissors, and if we use AltGDA for finding min-max optimal strategies for both these neural network players, then the players indeed reach the $\varepsilon$-Nash equilibrium. In particular, both the neural network players control 5-dimensional vectors $\theta, \phi \in \mathbb{R}^5$ and output a latent strategy that lies in the 2-dimensional simplex, $\Delta_2$. Both the players are optimizing the regularized bilinear bilinear objective:
\begin{equation*}
    \mathcal{L}(\theta,\phi) = F(\theta)^\top AG(\phi) + \varepsilon/2 \lVert F(\theta) - 1/3(1,1,1)^\top \rVert_2^2 - \varepsilon/2 \lVert G(\phi) - 1/3(1,1,1)^\top \rVert_2^2
\end{equation*}
where $(1/3, 1/3, 1/3)^\top$ indicates the (unique) $\varepsilon$-mixed strategy Nash equilibrium for both the players and $A = 10 \cdot \begin{pmatrix}
    0 & -1 & 1 \\
    1 & 0 & -1 \\
    -1 & 1 & 0
\end{pmatrix}$ is the payoff matrix\footnote{A scaling factor of $10$ was used in the payoff matrix to ensure that the gradients are not too small for the AltGDA updates.} for the Rock-Paper-Scissor game. 

Since the actual strategies of the two neural network players ($\theta,\phi$) lie in $\mathbb{R}^5$, we can't visualize those. However, we can visualize their strategies in the latent space, i.e., in the 2-dimensional simplex instead. Figure \ref{fig:input-games-experiment} below illustrates the AltGDA trajectories for both the players (step-size = $0.01$, $\varepsilon = 1$, maximum number of steps = $100,000$). As we can see, both the trajectories converge to the $\varepsilon$-Nash equilibrium of the hidden game. (The code for these experiments can be found at the following link: \url{https://github.com/dbp1994/overparameterization_hidden_games_neurips_2025}.)

\begin{figure}[h!]
    \centering
    \includegraphics[scale=0.5]{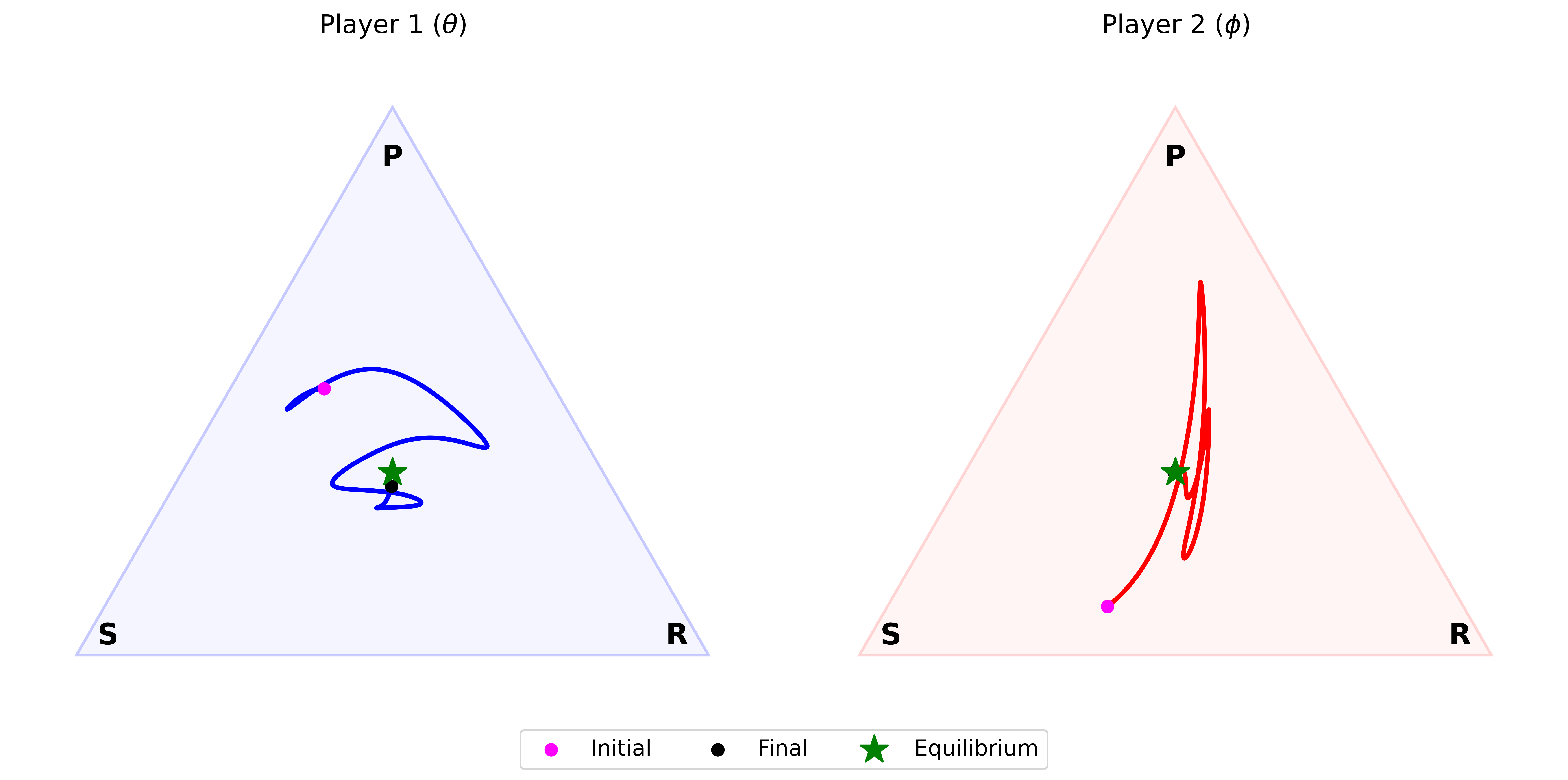}
    \caption{A trajectory of AltGDA in an $\ell_2$-regularized hidden game of Rock-Paper-Scissors. These trajectories correspond to each player’s strategies in the latent space (2-dimensional simplex).}
    \label{fig:input-games-experiment}
\end{figure}

\subsection{Regarding initialization of the neural networks}
As stated in the main result for input games (Theorem \ref{th:warmup-main-claim}), the neural networks need to be initialized as per Equations \eqref{eq:input-games-init-condition}--\eqref{eq:warmup-main-claim-last} in order to ensure that AltGDA converges to the $\varepsilon$-Nash Equilibrium. Since there's no closed-form expression for the initialization conditions as defined in Equations  \eqref{eq:input-games-init-condition}--\eqref{eq:warmup-main-claim-last}, we derive the following to ensure that the random initializations satisfy the conditions required in Equations  \eqref{eq:input-games-init-condition}--\eqref{eq:warmup-main-claim-last}. Let's first set $\sigma_2^{(\opmin)}$ and $\sigma_2^{(\opmax)}$ as
\begin{equation}
    \label{eq:input-games-experiments-init-cond-1}
    \sigma_{2}^{(\opmin)}, \sigma_{2}^{(\opmax)} = 1
\end{equation}
Let's also assume 
\begin{equation}
    \label{eq:input-games-experiments-init-cond-2}
    \sigma_1^{(\opmin)} \lesssim \sqrt{\frac{\pi}{16d_1^{(\opmin)}}} \;\; \& \;\; \sigma_1^{(\opmax)} \lesssim \sqrt{\frac{\pi}{16d_1^{(\opmax)}}}
\end{equation}
which is valid since it satisfies the condition in Equation \eqref{eq:input-games-init-condition}. Using the assumptions above (Equations \eqref{eq:input-games-experiments-init-cond-1}--\eqref{eq:input-games-experiments-init-cond-2}) alongside Equations \eqref{eq:warmup-main-claim}--\eqref{eq:warmup-main-claim-2} (we assume $\lVert \theta_0 \rVert_2, \lVert \phi_0 \rVert_2 \leq 10$ in the experiments), we also get the following conditions
\begin{gather}
    \frac{1}{2} - \sigma_1^{(\opmin)}\sqrt{\frac{Cd_1^{(\opmin)}}{\pi}} \geq 4 \;\; \& \;\; \frac{1}{2} - \sigma_1^{(\opmax)}\sqrt{\frac{Cd_1^{(\opmax)}}{\pi}} \geq 4 \nonumber\\
    \implies (\sigma_{1}^{(\opmin)})^4  \lesssim \frac{\pi}{16\varepsilon (d_{1}^{(\opmin)})^{3.5}} \;\; \& \;\; (\sigma_{1}^{(\opmax)})^4  \lesssim \frac{\pi}{16\varepsilon (d_{1}^{(\opmax)})^{3.5}} \\
    \implies \sigma_{1}^{(\opmin)}  \lesssim \frac{1}{(d_{1}^{(\opmin)})^{7/8}} \;\; \& \;\; \sigma_{1}^{(\opmax)}  \lesssim \frac{1}{(d_{1}^{(\opmax)})^{7/8}} \label{eq:input-games-experiments-init-cond-3}
\end{gather}
We also know from the assumption in Equation \eqref{eq:input-games-experiments-init-cond-2} above and Equations \eqref{eq:warmup-main-claim-3}--\eqref{eq:warmup-main-claim-last} that 
\begin{equation}
    \label{eq:input-games-experiments-init-cond-3.5}
    \sigma_{1}^{(\opmax)} (\sigma_{1}^{(\opmin)})^3 \lesssim (d_1^{(\opmin)})^{-2.5} \;\; \& \;\; \sigma_{1}^{(\opmin)} (\sigma_{1}^{(\opmax)})^3 \lesssim (d_1^{(\opmax)})^{-2.5}
\end{equation}
If we assume
\begin{equation}
    \label{eq:input-games-experiments-init-cond-4}
    \sigma_{1}^{(\opmax)} \lesssim 1 \;\; \& \;\; \sigma_{1}^{(\opmin)} \lesssim 1
\end{equation}
then we can ensure Equation \eqref{eq:input-games-experiments-init-cond-3.5} holds true (under the assumption that Equation \eqref{eq:input-games-experiments-init-cond-3} holds true):
\begin{equation}
    \sigma_{1}^{(\opmax)} (\sigma_{1}^{(\opmin)})^3 \lesssim (d_1^{(\opmin)})^{- \frac{21}{8}} = (d_1^{(\opmin)})^{-2.625}
\end{equation}
Thus, based on Equations \eqref{eq:input-games-experiments-init-cond-1}, \eqref{eq:input-games-experiments-init-cond-2}, \eqref{eq:input-games-experiments-init-cond-3} and \eqref{eq:input-games-experiments-init-cond-4}, we get our final set of initial conditions on the random initialization for our input games as shown below. We will be using these in our experiments.
\begin{gather}
    \sigma_1^{(\opmin)} \lesssim \min\{(d_1^{(\opmin)})^{-1/2}, (d_1^{(\opmin)})^{-7/8}, 1\} \;\; \& \;\; \sigma_2^{(\opmin)} = 1 \label{eq:input-games-experiments-init-cond-final-1} \\
    \sigma_1^{(\opmax)} \lesssim \min\{(d_1^{(\opmax)})^{-1/2}, (d_1^{(\opmax)})^{-7/8}, 1\} \;\; \& \;\; \sigma_2^{(\opmax)} = 1 \label{eq:input-games-experiments-init-cond-final-2}
\end{gather}
% !TEX root = ./neurips_2025.tex

\clearpage
\section{Controlling the Gradient Growth: Bounds and Examples for Different Loss Functions}
\label{appendix:grad-growth}

Controlling the gradient growth  is critical for non-asymptotic overparameterization bounds, as it ensures that iterates remain within regions where hidden convexity persists. Below, we provide a detailed discussion about this bound for some commonly-used loss functions.

\begin{itemize}
    \item \textbf{Mean-Squared Error (MSE) Loss:} One of the most commonly used losses and is suitable for regression tasks. For a given labelled data point $(x,y)$ where $x\in \mathcal{X}$ and $y \in \mathcal{Y}$, if the predictor function parameterized by parameters $\theta$ is defined as $h \coloneq h(x;\theta)$, we see that the MSE loss can be defined as:
    \begin{align}
        \ell(y,h) & = \frac{1}{2} \lVert y - h  \rVert_2^2 \\
        \implies \nabla_h \ell(y,h) & = h - y \\
        \implies \lVert \nabla_h\ell(y,h) \rVert & \leq \lVert h \rVert + \lVert y \rVert \\
        & \leq \lVert h \rVert + \text{diam}(\mathcal{Y}) \\
        \implies \lVert \nabla_h\ell(y,h) \rVert & \leq A_1 \lVert h \rVert + A_2 \text{diam}(\mathcal{Y}) + A_3
    \end{align}
    where $A_1 = A_2 = 1$ and $A_3 = 0$. More generally, if we had $\ell(y,h) = \frac{\mu}{2} \lVert y - h  \rVert_2^2 $, we would get $A_3 = 0$ and $A_1 = A_2 = \mu$ where $\mu$ is also the hidden-strong convexity modulus.
    \item \textbf{Logistic Loss:} Commonly used in binary classification, where \( y \in \{0,1\} \), and the prediction \( h = h(x;\theta) \in \mathbb{R} \) is passed through the sigmoid function. The logistic loss is defined as:
    \begin{align}
        \ell(y, h) &= -y \log \sigma(h) - (1 - y) \log (1 - \sigma(h)), \quad \text{where } \sigma(h) = \frac{1}{1 + e^{-h}} \\
        \nabla_h \ell(y, h) &= \sigma(h) - y \\
        \Rightarrow \lVert \nabla_h \ell(y,h) \rVert &\leq 1
    \end{align}
    Since the gradient is always in the interval \([-1,1]\), we can take \( A_1 = 0, A_2 = 0, A_3 = 1 \). The logistic loss is strongly convex over compact domains or when regularized.

    \item \textbf{Squared Hinge Loss:} Used in support vector machines (SVMs) with a margin-based formulation. For \( y \in \{-1,1\} \) and prediction \( h = h(x;\theta) \in \mathbb{R} \):
    \begin{align}
        \ell(y, h) &= \frac{1}{2} \max(0, 1 - yh)^2 \\
        \nabla_h \ell(y, h) &= 
        \begin{cases}
            -y(1 - yh), & \text{if } yh < 1 \\
            0, & \text{otherwise}
        \end{cases} \\
        \Rightarrow \lVert \nabla_h \ell(y,h) \rVert &\leq 2 |y| |h| + 1 \leq 2 \|h\| + \text{diam}(\mathcal{Y})
    \end{align}
    Thus, we can choose constants \( A_1 = 2, A_2 = 1, A_3 = 0 \). Note: the hidden strong convexity applies within the active region \( yh < 1 \), and a regularization term often ensures global strong convexity.

    \item \textbf{Cross-Entropy with \(\ell_2\)-regularization:} For multi-class classification with softmax outputs \( h \in \mathbb{R}^K \), target \( y \in \Delta^{K-1} \) (probability simplex):
    \begin{align}
        \ell(y, h) &= -\sum_{k=1}^{K} y_k \log \left( \frac{e^{h_k}}{\sum_{j=1}^K e^{h_j}} \right) + \frac{\lambda}{2} \|h\|^2 \\
        \nabla_h \ell(y,h) &= \text{softmax}(h) - y + \lambda h \\
        \Rightarrow \lVert \nabla_h \ell(y,h) \rVert &\leq \| \text{softmax}(h) - y \| + \lambda \|h\| \leq 2 + \lambda \|h\|
    \end{align}
    So the gradient norm is bounded by \( A_1 = \lambda, A_2 = 0, A_3 = 2 \). The \(\ell_2\) term ensures strong convexity with modulus \(\lambda\).
\end{itemize}
% !TEX root = ./neurips_2025.tex
\clearpage
\section{Proofs about two-sided PŁ \& AltGDA}
\label{sec:appendix-altgda}
We begin by recalling the definitions of unconstrained continuous min-max optimality conditions of problem:
\[\min_\theta \max_\phi \loss(\theta,\phi) \]
\begin{definition}[Global Optima]
\label{def:sol-concept} We define three equivalent notions of optimality:
\begin{enumerate}
    \item $(\theta^*, \phi^*)$ is a global minimax, if for any $(\theta,\phi)$: $\loss(\theta^*, \phi) \leq \loss(\theta^*, \phi^*) \leq \max_{\phi'} \loss(\theta,\phi')$
    \item $(\theta^*, \phi^*)$ is a saddle point, if for any $(\theta,\phi)$: $\loss(\theta^*, y) \leq \loss(\theta^*, \phi^*) \leq \loss(x, \phi^*)$
    \item $(\theta^*, \phi^*)$ is a stationary point, if for any $(\theta,\phi)$: $\nabla_{\theta} \loss(\theta^*, \phi^*) = \nabla_{\phi} \loss(\theta^*, \phi^*) = 0 $.
\end{enumerate}
\end{definition}

\begin{lemmaexplanation}
Thanks to gradient dominance of PŁ conditions, it is possible to prove the equivalence of these notions under two-sided PŁ condition. 
Observe that this reduction is crucial since gradient-based methods provide safely just a stationary point while the other two notions
are global and non-trivially verifiable.
\end{lemmaexplanation}

\begin{lemma}[Lemma 2.1 in \cite{yang2020global}, Appendix C in \cite{kalogiannis2025solving}]
\label{lem:pl-equivalence}
If the objective function \(\loss\) satisfies the two-sided PŁ condition, then all three notions in Definition~\ref{def:sol-concept} are equivalent:
\[
\epsilon-\text{(Saddle Point)} \iff \epsilon-\text{(Global Min-Max)} \iff \epsilon-\text{(Stationary Point)} \quad \forall \epsilon \geq 0
\]
\end{lemma}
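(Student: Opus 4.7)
The plan is to establish the cyclic chain of implications $\epsilon\text{-stationary}\Rightarrow \epsilon\text{-saddle}\Rightarrow \epsilon\text{-global min-max}\Rightarrow \epsilon\text{-stationary}$, with constants allowed to depend on $\mu_\theta, \mu_\phi$, and the smoothness modulus $L_{\nabla\loss}$ (the stated equivalence should be read qualitatively: any bound of order $\epsilon$ on one quantity implies an analogous bound of order $O(\epsilon)$ on the others). The key identity underlying everything is the decomposition of the Nash/duality gap into a primal slack and a dual slack around the candidate point $(\theta^*,\phi^*)$: $\bigl(\max_\phi \loss(\theta^*,\phi)-\loss(\theta^*,\phi^*)\bigr)+\bigl(\loss(\theta^*,\phi^*)-\min_\theta \loss(\theta,\phi^*)\bigr)$, where both terms are non-negative.

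For the direction $\epsilon\text{-stationary}\Rightarrow \epsilon\text{-saddle}$, I would apply the two-sided PŁ condition twice. Treating $\loss(\cdot,\phi^*)$ as a function of $\theta$ alone, PŁ gives $\loss(\theta^*,\phi^*)-\min_\theta \loss(\theta,\phi^*)\leq \tfrac{1}{2\mu_\theta}\|\nabla_\theta\loss(\theta^*,\phi^*)\|^2$; symmetrically, treating $-\loss(\theta^*,\cdot)$, PŁ gives $\max_\phi \loss(\theta^*,\phi)-\loss(\theta^*,\phi^*)\leq \tfrac{1}{2\mu_\phi}\|\nabla_\phi\loss(\theta^*,\phi^*)\|^2$. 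Summing the two bounds yields $\mathrm{DG}_\loss(\theta^*,\phi^*)\leq \tfrac{\epsilon}{2}\bigl(\tfrac{1}{\mu_\theta}+\tfrac{1}{\mu_\phi}\bigr)$, so $(\theta^*,\phi^*)$ is an $O(\epsilon/\min\{\mu_\theta,\mu_\phi\})$-approximate saddle. Combined with the definition, this also yields the $\epsilon$-global min-max property once we notice that any point with small primal slack, at a global maximizer of $\loss(\theta^*,\cdot)$, automatically dominates the inner $\max$.

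For the reverse direction $\epsilon\text{-saddle}\Rightarrow \epsilon\text{-stationary}$, the standard PŁ machinery is not enough on its own: PŁ bounds the gradient from below in terms of suboptimality, but here we need the converse. The extra ingredient is $L_{\nabla\loss}$-smoothness, which via the descent lemma gives $\|\nabla_\theta \loss(\theta^*,\phi^*)\|^2\leq 2L_{\nabla\loss}\bigl(\loss(\theta^*,\phi^*)-\min_\theta \loss(\theta,\phi^*)\bigr)$, and analogously for $\phi$. Since $\epsilon$-saddle controls each of these suboptimality gaps by $\epsilon$, we immediately obtain $\|\nabla_\theta\loss\|^2+\|\nabla_\phi\loss\|^2 = O(L_{\nabla\loss}\epsilon)$, closing the cycle. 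The equivalence with the $\epsilon$-global min-max notion then follows because two-sided PŁ plus smoothness forces the minimax equality $\min_\theta \max_\phi \loss = \max_\phi \min_\theta \loss$ (the primal $P(\theta)=\max_\phi\loss(\theta,\phi)$ and dual $D(\phi)=\min_\theta\loss(\theta,\phi)$ are well-defined by Assumption \ref{ass:saddle-exist} and share the same optimal value via a standard gradient-flow argument on the coupled dynamics).

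The main obstacle, in my view, is precisely the passage between $\epsilon$-saddle and $\epsilon$-global min-max. In general non-convex non-concave settings these notions are distinct, and one must leverage the gradient dominance property of PŁ to argue that local optimality along each coordinate slice already implies global optimality in the sense of the primal/dual functions — in other words, that $\loss(\theta^*,\phi^*)$ coincides with both $P(\theta^*)$ and $D(\phi^*)$ up to $\epsilon$, and that $\theta^*$ (resp.\ $\phi^*$) is an approximate minimizer of $P$ (resp.\ maximizer of $D$). Once the minimax equality under two-sided PŁ is established, the rest of the equivalences follow by re-assembling the bookkeeping above; the only care needed is to track how the constants scale with $\mu_\theta,\mu_\phi,L_{\nabla\loss}$, which is harmless since the lemma is used in \Cref{th:warmup-main-claim} and \Cref{th:main-agda-convergence} only as a qualitative equivalence of $\epsilon$-notions.
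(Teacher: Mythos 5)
The paper does not prove this lemma itself; it is cited verbatim from Yang et al.~(Lemma 2.1) and Kalogiannis et al.~(Appendix C), so there is no in-house proof to compare against. Evaluating your argument on its own merits, there is a genuine gap.

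Your chain is $\epsilon$-stationary $\Rightarrow$ $\epsilon$-saddle $\Rightarrow$ $\epsilon$-global min-max $\Rightarrow$ $\epsilon$-stationary. The first link (via two applications of PŁ) and the second link (trivially, since $\loss(\theta,\phi^*)\le\max_{\phi'}\loss(\theta,\phi')$) are fine. The problem is the third link. Your descent-lemma argument bounds $\|\nabla_\theta\loss(\theta^*,\phi^*)\|^2 \le 2L_{\nabla\loss}\bigl(\loss(\theta^*,\phi^*)-\min_\theta\loss(\theta,\phi^*)\bigr)$, but the hypothesis at that stage is $\epsilon$-global min-max, which controls the \emph{primal} slack $P(\theta^*)-\min_\theta P(\theta)$ with $P(\theta)=\max_\phi\loss(\theta,\phi)$ — not the pointwise slack $\loss(\theta^*,\phi^*)-\min_\theta\loss(\theta,\phi^*)$. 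You in fact invoke ``$\epsilon$-saddle controls each of these suboptimality gaps'' there, i.e.\ you quietly use $\epsilon$-saddle rather than $\epsilon$-global min-max, which is the very implication not yet available at that point in the cycle. These two notions differ exactly on the minimizer side: a global min-max point need not be a minimizer of $\loss(\cdot,\phi^*)$, only of $P(\cdot)$, and $\min_\theta\loss(\theta,\phi^*)$ can be strictly below $\min_\theta P(\theta)$ by a gap that has nothing to do with $\epsilon$.

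Your third paragraph senses that there is a difficulty but mislocates it: the passage $\epsilon$-saddle $\Rightarrow$ $\epsilon$-global min-max is immediate and needs no structure, whereas the hard direction is $\epsilon$-global min-max $\Rightarrow$ $\epsilon$-saddle (or $\epsilon$-stationary). Nor is minimax equality the missing ingredient — under Assumption~\ref{ass:saddle-exist} a saddle exists and hence $\min_\theta\max_\phi\loss=\max_\phi\min_\theta\loss=:v$ holds already; the ``gradient-flow argument'' appeal is unnecessary and does not supply what is needed. What \emph{is} needed, and where two-sided PŁ actually does real work, is to show that $\phi^*$ being an ($\epsilon$-)maximizer of the one-slice $\loss(\theta^*,\cdot)$ forces the dual function $D(\phi)=\min_\theta\loss(\theta,\phi)$ to be ($O(\epsilon)$-)maximized at $\phi^*$ as well, i.e.\ $v-D(\phi^*)=O(\epsilon)$; only then does $\loss(\theta^*,\phi^*)-\min_\theta\loss(\theta,\phi^*)=O(\epsilon)$ follow, and with it the $\theta$-gradient bound. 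Without that step the cycle is not closed.
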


\begin{lemmaexplanation}
Before we present the following lemma, we provide some intuition:
This result characterizes the behavior of the optimization trajectory under hidden convex–concave structure, establishing a critical link between parameter dynamics and convergence guarantees. 
The lemma will formalize how small distortions in the hidden geometry impact the optimization path based on parameters $\alpha,c$.
\end{lemmaexplanation}

\lemmadistancetosaddle*

\begin{lemmaexplanation}
In most existing works on nonconvex–nonconcave optimization, smoothness is typically defined directly with respect to the optimization parameters. However, given our focus on a fine-grained analysis of neural min–max games, it is equally important to provide a similarly fine-grained treatment of upper bounds involving the geometry of the neural maps.\\

Although a neural network is, in practice, a highly smooth function, global Lipschitz continuity is incompatible with strong convexity in unconstrained domains. Nonetheless, within a bounded region—such as the ball of radius \(R\) where our iterates remain—local Lipschitzness can be rigorously characterized. The following lemma formalizes this bounds for both maximizer \& minimizer neural network:
% using the compositional structure of hidden convexity.
\end{lemmaexplanation}

\begin{lemma}[Local-Lipschitzness for smooth loss function]
    \label{lem:local-lipschitz-constant}
    Let $\opmin_\theta$ and $\opmax_\phi$ be neural network mappings  such that they are $\beta_\opmin$ and $\beta_\opmax$ smooth as defined in Definition \ref{def:neural-net-def}. Now let $(\theta_0, \phi_0)$ be such that Jacobian singular values for both the networks  are strictly positive and bounded from above and below, $\mu_{\text{Jac}}^{(\opmin)} \leq \sigma(\nabla_\theta \opmin_{\theta_0}) \leq \nu_{\text{Jac}}^{(\opmin)} $ and $\mu_{\text{Jac}}^{(\opmax)} \leq \sigma(\nabla_\phi \opmax_{\phi_0}) \leq \nu_{\text{Jac}}^{(\opmax)}$. Suppose the stationary point for min-max objective $\loss(\opmin_\theta, \opmax_\phi)$ as defined in Assumption \ref{ass:smoothness-hidden-cvx-gradient} also lies in the ball, $(\theta^*,\phi^*) \in \mathcal{B}((\theta_0,\phi_0), R)$. Then there exists an $R > 0$ such that $\forall (\theta,\phi) \in \mathcal{B}((\theta_0,\phi_0), R)$, we have
    \begin{align}
        \underset{\phi \in \mathcal{B}(\phi_0,R)}{\max} \lVert \nabla_{\opmax_\phi} \loss(\opmin_\theta,\opmax_\phi) \rVert, \underset{\theta \in \mathcal{B}(\theta_0,R)}{\max} \lVert \nabla_{\opmin_\theta} \loss(\opmin_\theta,\opmax_\phi) \rVert & \leq L_{\loss}^{\text{act}}
    \end{align}
    where we denote the upper bound as an `active' Lipschitz constant $L_{\loss}^{\text{act}}$:
    \begin{equation}
        \label{eq:lipschitz-active}
        L_{\loss}^{\text{act}} = 4.5L_{\nabla\loss} R \max\{\nu_{\text{Jac}}^{(\opmin)},\nu_{\text{Jac}}^{(\opmax)}\}
    \end{equation}
\end{lemma}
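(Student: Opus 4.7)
My approach is to route the bound through the output space of the two networks rather than directly through parameter space: I first show that both functional gradients $\nabla_{\opmin}\loss$ and $\nabla_{\opmax}\loss$ vanish at the saddle $(\theta^*, \phi^*)$, and then use output-space smoothness of $\loss$ together with local Lipschitzness of the neural maps to control them on the ball. For the first step, stationarity of $(\theta^*, \phi^*)$ together with the chain rule gives
\[
0 = \nabla_\theta \loss(\theta^*, \phi^*) = (\nabla_\theta \opmin_{\theta^*})^\top \, \nabla_{\opmin}\loss(\opmin_{\theta^*}, \opmax_{\phi^*}),
\]
and the hypothesis $\sigma_{\min}(\nabla_\theta \opmin_{\theta_0}) \geq \mu_{\text{Jac}}^{(\opmin)}$, combined with $\beta_\opmin$-smoothness of $\opmin$ and the radius choice $R \leq \mu_{\text{Jac}}/(2\beta)$ used throughout the paper, ensures $\sigma_{\min}(\nabla_\theta \opmin_{\theta^*}) > 0$. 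Thus the transpose is injective on the output space and $\nabla_{\opmin}\loss(\opmin_{\theta^*}, \opmax_{\phi^*}) = 0$; the symmetric argument yields $\nabla_{\opmax}\loss(\opmin_{\theta^*}, \opmax_{\phi^*}) = 0$.

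Second, I invoke $L_{\nabla\loss}$-smoothness of $\loss$ in the output pair (inherited from \Cref{ass:smoothness-hidden-cvx-gradient}) to get
\[
\lVert \nabla_{\opmin}\loss(\opmin_\theta, \opmax_\phi) \rVert \leq L_{\nabla\loss} \sqrt{\lVert \opmin_\theta - \opmin_{\theta^*} \rVert^2 + \lVert \opmax_\phi - \opmax_{\phi^*} \rVert^2}.
\]
Third, I convert the output-space distances to parameter distances via network smoothness. By $\beta$-smoothness, $\sigma_{\max}(\nabla_\theta \opmin_\theta) \leq \nu_{\text{Jac}}^{(\opmin)} + \beta_\opmin R \leq \tfrac{3}{2}\nu_{\text{Jac}}^{(\opmin)}$ throughout $\mathcal{B}(\theta_0, R)$ once $R \leq \nu_{\text{Jac}}^{(\opmin)}/(2\beta_\opmin)$. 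Integrating the Jacobian along the segment $[\theta^*, \theta]$ (which lies in the ball) yields $\lVert \opmin_\theta - \opmin_{\theta^*} \rVert \leq \tfrac{3}{2}\nu_{\text{Jac}}^{(\opmin)} \lVert \theta - \theta^* \rVert \leq 3 \nu_{\text{Jac}}^{(\opmin)} R$ by the triangle inequality, and analogously for $\opmax$. Plugging back into the smoothness estimate gives
\[
\lVert \nabla_{\opmin}\loss \rVert \leq 3\sqrt{2}\, L_{\nabla\loss}\, R\, \max\{\nu_{\text{Jac}}^{(\opmin)}, \nu_{\text{Jac}}^{(\opmax)}\} \leq 4.5\, L_{\nabla\loss}\, R\, \max\{\nu_{\text{Jac}}^{(\opmin)}, \nu_{\text{Jac}}^{(\opmax)}\},
\]
and the identical bound holds for $\lVert \nabla_{\opmax}\loss \rVert$ by the same reasoning.

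\textbf{Main obstacle.} The one genuinely delicate step is Step 1: I need $\sigma_{\min}(\nabla_\theta \opmin_{\theta^*}) > 0$ at the saddle $\theta^*$, not merely at $\theta_0$, so that the transpose Jacobian can be inverted to zero out the output-gradient. This is handled by aligning $R$ with the well-conditioned radius $R \leq \mu_{\text{Jac}}/(2\beta)$ discussed right before \Cref{lem:potential-upper-zero}, which guarantees the singular-value lower bound survives on the entire ball and in particular at $\theta^*$. Once this is in place, everything else is routine norm manipulation, and the numerical constants $3\sqrt{2} \approx 4.24$ plus a small safety margin absorb cleanly into the stated factor $4.5$.
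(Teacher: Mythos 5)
Your proposal is correct and follows essentially the same route as the paper: both show the functional gradients $\nabla_{\opmin}\loss,\nabla_{\opmax}\loss$ vanish at the saddle via invertibility of the Jacobian transpose, then use $L_{\nabla\loss}$-smoothness of $\loss$ in output space together with the Song et al.\ local Lipschitz bound $\sigma_{\max}(\nabla\opmin) \leq \tfrac{3}{2}\nu_{\text{Jac}}$ on the ball to convert output-space distances into parameter distances. The only cosmetic difference is that you go directly from $\theta$ to $\theta^*$ (giving the slightly tighter $3\sqrt{2}\approx 4.24$), whereas the paper routes through $\theta_0$ and accumulates the stated $4.5$; both land under the claimed constant.
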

\begin{proof}
    Using Lemma 1 of \textcite{song2021subquadratic} as discussed in Section \ref{sec:altgda}, if we choose \(R = \frac{\mu_{\text{Jac}}}{2\beta}\), where \(\mu_{\text{Jac}} := \max\left\{ \mu_{\text{Jac}}^{(\opmin)}, \mu_{\text{Jac}}^{(\opmax)} \right\}\) and \(\beta := \min\left\{ \beta_{\opmin}, \beta_{\opmax} \right\}\), then we see that for $\forall (\theta,\phi) \in \mathcal{B}((\theta_0,\phi_0), R)$, we have
        \[
            \frac{\mu_{\text{Jac}}^{(\opmin)}}{2} \leq \sigma_{\min}(\nabla_\theta \opmin_\theta) \leq \sigma_{\max}(\nabla_\theta \opmin_\theta) \leq \frac{3\nu_{\text{Jac}}^{(\opmin)}}{2}
        \]
        \[
            \frac{\mu_{\text{Jac}}^{(\opmax)}}{2} \leq \sigma_{\min}(\nabla_\phi \opmax_\phi) \leq \sigma_{\max}(\nabla_\phi \opmax_\phi) \leq \frac{3\nu_{\text{Jac}}^{(\opmax)}}{2}
        \]
    And by Lemma 4 (Appendix C) of \textcite{song2021subquadratic}, we see that both the mapping $\opmin_\theta$ and $\opmax_\phi$ are Lipschitz-continuous in the ball $\mathcal{B}((\theta_0,\phi_0), R)$. That is, $\forall (\theta,\phi), (\theta',\phi') \in \mathcal{B}((\theta_0,\phi_0), R)$, we have the following:
    \begin{equation}
        \label{eq:lipschitz-F}
        \lVert F_\theta - F_{\theta'} \rVert \leq \frac{3 \nu_{\text{Jac}}^{(\opmin)}}{2} \lVert \theta - \theta' \rVert 
    \end{equation}
    \begin{equation}
        \label{eq:lipschitz-G}
        \lVert G_\phi - G_{\phi'} \rVert \leq \frac{3 \nu_{\text{Jac}}^{(\opmax)}}{2} \lVert \phi - \phi' \rVert 
    \end{equation}
    where $\sigma_{\max}(\nabla_\theta \opmin_{\theta_0}) \leq \nu_{\text{Jac}}^{(\opmin)}$ and $\sigma_{\max}(\nabla_\phi \opmax_{\phi_0}) \leq \nu_{\text{Jac}}^{(\opmax)}$.
    
    Without loss of generality, we prove for the case of $\opmin_\theta$ mapping as the same argument works for the $\opmax_\phi$ mapping as well. We can observe the following for any $(\theta,\phi) \in \mathcal{B}((\theta_0,\phi_0), R)$:
    \begin{align}
        \underset{\theta \in \mathcal{B}(\theta_0,R)}{\max} \lVert \nabla_{\opmin_\theta} \loss(\opmin_\theta,\opmax_\phi) \rvert & \leq \lVert \nabla_{\opmin_\theta} \loss(\opmin_{\theta_0},\opmax_\phi) \rVert + \underset{\theta' \in \mathcal{B}(\theta_0,R)}{\max} \lVert \nabla_{\opmin_{\theta}} \loss(\opmin_{\theta'},\opmax_\phi) - \nabla_{\opmin_\theta} \loss(\opmin_{\theta_0},\opmax_\phi) \rVert  \nonumber \\
        & \leq \lVert \nabla_{\opmin_\theta} \mathcal{L}(\opmin_{\theta_0},\opmax_\phi) \rVert + L_{\nabla\mathcal{L}} \cdot \underset{\theta' \in \mathcal{B}(\theta_0,R)}{\max} \lVert  \opmin_{\theta'} - \opmin_{\theta_0} \rVert \nonumber \\
        & \leq \lVert \nabla_{\opmin_\theta} \mathcal{L}(\opmin_{\theta_0},\opmax_\phi) \rVert + \frac{3L_{\nabla\loss}\nu_{\text{Jac}}^{(\opmin)}}{2} \underset{\theta' \in \mathcal{B}(\theta_0,R)}{\max} \lVert \theta_0 - \theta' \rVert \nonumber \\
        & (\because \text{By Eq.} \eqref{eq:lipschitz-F}) \nonumber\\
        & \leq \lVert \nabla_{\opmin_\theta} \mathcal{L}(\opmin_{\theta_0},\opmax_\phi) \rVert + \frac{3L_{\nabla\loss}\nu_{\text{Jac}}^{(\opmin)}R}{2} \nonumber \\
        & = \lVert \nabla_{\opmin_\theta} \mathcal{L}(\opmin_{\theta_0},\opmax_\phi) - \underset{= 0}{\underbrace{\nabla_{\opmin_\theta} \mathcal{L}(\opmin_{\theta^*},\opmax_{\phi^*})}} \rVert + \frac{3L_{\nabla\loss}\nu_{\text{Jac}}^{(\opmin)}R}{2} \label{eq:l-active-intermediate}
    \end{align}
    where the last equality holds true because $(\theta^*,\phi^*)$ is a stationary point for the min-max objective $\loss(\opmin_\theta, \opmax_\phi)$ and the Jacobian for $\opmin$ is non-singular as $(\theta^*,\phi^*) \in \mathcal{B}((\theta_0,\phi_0), R)$ by assumption. That is,
    \begin{align}
        0 & = \nabla_\theta \loss(F_{\theta^*}, G_{\phi^*}) 
          = (\nabla_\theta F_{\theta^*})^{\top} \nabla_{\opmin_\theta} \loss(\opmin_{\theta_0},\opmax_\phi)\\
        \implies 0 & = \lVert (\nabla_\theta F_{\theta^*})^{\top} \nabla_{\opmin_\theta} \loss(\opmin_{\theta^*},\opmax_{\phi^*}) \rVert  = \lVert (\nabla_\theta F_{\theta^*})^{\top} \rVert \lVert \nabla_{\opmin_\theta} \loss(\opmin_{\theta^*},\opmax_{\phi^*}) \rVert \\
        \implies 0 & = \nabla_{\opmin_\theta} \mathcal{L}(\opmin_{\theta^*},\opmax_{\phi^*})\;\; (\because \nabla_\theta F_{\theta^*} \text{ non-singular})
    \end{align}
    Thus, continuing from Equation \eqref{eq:l-active-intermediate}, we have that that for any $(\theta,\phi) \in \mathcal{B}((\theta_0,\phi_0), R)$:
    \begin{align}
        \underset{\theta \in \mathcal{B}(\theta_0,R)}{\max} \lVert \nabla_{\opmin_\theta} \loss(\opmin_\theta,\opmax_\phi) \rVert & \leq \lVert \nabla_{\opmin_\theta} \mathcal{L}(\opmin_{\theta_0},\opmax_\phi) - \nabla_{\opmin_\theta} \mathcal{L}(\opmin_{\theta^*},\opmax_{\phi^*}) \rVert + \frac{3L_{\nabla\loss}\nu_{\text{Jac}}^{(\opmin)}R}{2} \nonumber \\
        & \leq L_{\nabla\loss} (\lVert \opmin_{\theta_0} - \opmin_{\theta^*} \rVert + \lVert \opmax_{\phi_0} - \opmax_{\phi^*}\rVert ) + \frac{3L_{\nabla\loss}\nu_{\text{Jac}}^{(\opmin)}R}{2} \nonumber \\
        & (\because \loss \text{ is } L_{\nabla\loss}\text{-smooth}) \\
        & \leq \frac{3L_{\nabla\loss}\nu_{\text{Jac}}^{(\opmin)}}{2} \lVert \theta_0 - \theta^* \rVert + \frac{3L_{\nabla\loss}\nu_{\text{Jac}}^{(\opmax)}}{2} \lVert \phi_0 - \phi^* \rVert + \frac{3L_{\nabla\loss}\nu_{\text{Jac}}^{(\opmin)}R}{2} \nonumber \\
        & (\because \text{By Equations } \eqref{eq:lipschitz-F}, \eqref{eq:lipschitz-G}) \\
        & \leq 3L_{\nabla\loss}\max\{\nu_{\text{Jac}}^{(\opmin)},\nu_{\text{Jac}}^{(\opmax)}\}R + \frac{3L_{\nabla\loss}\max\{\nu_{\text{Jac}}^{(\opmin)},\nu_{\text{Jac}}^{(\opmax)}\}R}{2} \nonumber \\
        & = 4.5L_{\nabla\loss}\max\{\nu_{\text{Jac}}^{(\opmin)},\nu_{\text{Jac}}^{(\opmax)}\}R
    \end{align}
\end{proof}

% \lemmalocallipschitzcomposition*

% \lemmapotentialupperzero*

\begin{lemmaexplanation}
To establish convergence guarantees under hidden convex–concave structure, it is essential to demonstrate that sufficient overparameterization yields a favorable initialization. Specifically, we show that: (i) the initialization lies close to a saddle point (in terms of gradient norm), and (ii) the optimization path remains within a region where the neural Jacobians have well-conditioned singular spectra. The latter will be ensured via a path length argument. \\

The following lemma provides a key component in this direction: it connects the initial value of the Lyapunov potential used in AltGDA (a weighted version of the Nash gap) to the gradient norms of both neural players. In subsequent lemmas, we will show that with high probability, and under sufficient width, this leads to the iterates remaining inside a well-conditioned-Jacobians' manifold that preserves PŁ-condition throughout training.
\end{lemmaexplanation}
\begin{restatable}[Upper Bound on Initial Potential \( P_0 \); Lemma \ref{lem:potential-upper-zero} in Main Text]{lemma}{lemmacorrectpotentialupperzero}
    \label{lem:upper-bound-P0-general}
    Let $\opmin_\theta$ and $\opmax_\phi$ be neural network mappings  such that they are $\beta_\opmin$ and $\beta_\opmax$ smooth as defined in Definition \ref{def:neural-net-def}. Now let $(\theta_0, \phi_0)$ be such that Jacobian singular values for both the networks  are strictly positive and bounded from above and below, $\mu_{\text{Jac}}^{(\opmin)} \leq \sigma(\nabla_\theta \opmin_{\theta_0}) \leq \nu_{\text{Jac}}^{(\opmin)} $ and $\mu_{\text{Jac}}^{(\opmax)} \leq \sigma(\nabla_\phi \opmax_{\phi_0}) \leq \nu_{\text{Jac}}^{(\opmax)}$. Suppose the min-max objective \( \loss(\theta,\phi) \) is \( (\mu_\theta, \mu_\phi) \)-HSCSC. Then the initial Lyapunov potential $P_0$ can be bounded from above as:
    \begin{align}
    P_0 & \leq \lVert \left(\nabla_{\opmin_\theta} \loss(\opmin_{\theta_0}, \opmax_{\phi_0})\right)^{\top} \rVert \cdot \nu_{\text{Jac}}^{(\opmin)} \cdot \frac{1}{\mu_\theta (\mu_{\text{Jac}}^{(\opmin)})^2} \cdot \lVert \nabla_{\theta} \loss(\theta_0, \phi_0) \rVert \nonumber \\
        & + \lVert \left(\nabla_{\opmin_\theta} \loss(\opmin_{\theta_0}, \opmax_{\phi_0})\right)^{\top} \rVert \cdot \beta_\opmin \frac{1}{2\mu_\theta^2 (\mu_{\text{Jac}}^{(\opmin)})^{4}} \cdot \lVert \nabla_{\theta} \loss(\theta_0, \phi_0) \rVert^2 \\
        & + (1+\lambda) \cdot \lVert \left(\nabla_{\opmax_\phi} \loss(\opmin_{\theta_0}, \opmax_{\phi_0})\right)^{\top} \rVert \cdot \nu_{\text{Jac}}^{(\opmax)} \cdot \frac{1}{\mu_\phi (\mu_{\text{Jac}}^{(\opmax)})^2} \cdot \lVert \nabla_{\phi} \loss(\theta_0, \phi_0) \rVert \nonumber \\
        & + (1+\lambda) \cdot \lVert \left(\nabla_{\opmax_\phi} \loss(\opmin_{\theta_0}, \opmax_{\phi_0})\right)^{\top} \rVert \cdot \beta_\opmax \frac{1}{2\mu_\phi^2 (\mu_{\text{Jac}}^{(\opmax)})^{4}} \cdot \lVert \nabla_{\phi} \loss(\theta_0, \phi_0) \rVert^2 \nonumber
    \end{align}
\end{restatable}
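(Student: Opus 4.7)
The plan is to decompose $P_0$ into a primal--dual gap plus a value gap, bound each through first-order hidden (strong) convexity combined with a second-order Taylor expansion that draws on the $\beta$-smoothness of the neural map, and finally translate hidden-space quantities into parameter-space quantities via the Jacobian conditioning at $(\theta_0,\phi_0)$. Concretely, I would first rewrite
\[
P_0 \;=\; \bigl[\max_\phi \loss(\theta_0,\phi)-\loss(\theta^*,\phi^*)\bigr] \;+\; \lambda\bigl[\max_\phi \loss(\theta_0,\phi)-\loss(\theta_0,\phi_0)\bigr] \;=\; (1+\lambda)\,b_0 \,+\, c_0,
\]
with $b_0 := \max_\phi \loss(\theta_0,\phi)-\loss(\theta_0,\phi_0)$ and $c_0 := \loss(\theta_0,\phi_0)-\loss(\theta^*,\phi^*)$. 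Since the two summands are handled by entirely symmetric arguments (swapping $\theta\leftrightarrow\phi$, $F\leftrightarrow G$, $\mu_\theta\leftrightarrow\mu_\phi$, $\beta_F\leftrightarrow\beta_G$, and $\nu_{\text{Jac}}^{(F)}\leftrightarrow\nu_{\text{Jac}}^{(G)}$), I would carry out the analysis for $c_0$ in detail and mirror it for $b_0$; the latter then inherits the overall factor $(1+\lambda)$ from this decomposition.

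For $c_0$, the saddle property yields $\loss(\theta^*,\phi^*)\ge \min_\theta \loss(\theta,\phi_0)=\loss(\tilde\theta,\phi_0)$ with $\tilde\theta\in\arg\min_\theta\loss(\theta,\phi_0)$ (existence by Assumption~\ref{ass:saddle-exist}), so $c_0\le \loss(\theta_0,\phi_0)-\loss(\tilde\theta,\phi_0)$. Because the latent loss is convex in $F_\theta$ (Assumption~\ref{ass:smoothness-hidden-cvx-gradient}(iii)), first-order convexity in the hidden variable gives $c_0\le \langle\nabla_{F_\theta}\loss(\theta_0,\phi_0),\,F_{\theta_0}-F_{\tilde\theta}\rangle$. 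I would then expand $F_{\tilde\theta}=F_{\theta_0}+(\nabla_\theta F_{\theta_0})(\tilde\theta-\theta_0)+R_F$ with $\|R_F\|\le \tfrac{\beta_F}{2}\|\tilde\theta-\theta_0\|^2$ from $\beta_F$-smoothness of the neural map, and combine this with the chain rule $\nabla_\theta\loss=(\nabla_\theta F_{\theta_0})^\top\nabla_{F_\theta}\loss$ and Cauchy--Schwarz to obtain
\[
c_0 \;\le\; \|\nabla_\theta\loss(\theta_0,\phi_0)\|\cdot\|\tilde\theta-\theta_0\| \;+\; \|(\nabla_{F_\theta}\loss(\theta_0,\phi_0))^\top\|\cdot\tfrac{\beta_F}{2}\|\tilde\theta-\theta_0\|^2,
\]
which already has the exact two-term shape of the target.

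The delicate remaining step is to replace $\|\tilde\theta-\theta_0\|$ by a gradient-norm bound of order $\|\nabla_\theta\loss(\theta_0,\phi_0)\|\big/\bigl(\mu_\theta(\mu_{\text{Jac}}^{(F)})^2\bigr)$. I would do this in two hops: (i) hidden $\mu_\theta$-strong convexity of $\loss(\cdot,\phi_0)$ in $F_\theta$ yields the standard strongly-convex-to-distance inequality $\|F_{\theta_0}-F_{\tilde\theta}\|\le \mu_\theta^{-1}\|\nabla_{F_\theta}\loss(\theta_0,\phi_0)\|$; (ii) the local lower-Lipschitz estimate $\|F_{\theta_0}-F_{\tilde\theta}\|\ge \mu_{\text{Jac}}^{(F)}\|\theta_0-\tilde\theta\|$ inside $\mathcal{B}((\theta_0,\phi_0),R)$, obtained from the Taylor expansion of $F$ with remainder controlled by $\beta_F$ so that the Jacobian's least-singular-value bound at $\theta_0$ dominates the second-order error precisely on the ball of radius $R=\mu_{\text{Jac}}/(2\beta)$ used in Lemma~\ref{lem:local-lipschitz-constant} and in Lemma~1 of \cite{song2021subquadratic}, converts (i) into a parameter-space distance bound. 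Combining with $\|\nabla_{F_\theta}\loss\|\le (\mu_{\text{Jac}}^{(F)})^{-1}\|\nabla_\theta\loss\|$ (the chain-rule bound from the same least-singular-value inequality) gives the required estimate on $\|\tilde\theta-\theta_0\|$. Substituting it back and rewriting one factor of $\|\nabla_\theta\loss\|$ in the linear term via $\|\nabla_\theta\loss\|\le \nu_{\text{Jac}}^{(F)}\|(\nabla_{F_\theta}\loss)^\top\|$ recovers exactly the two $\theta$-summands of the claimed bound. The mirror argument on the $\phi$-side (concavity of $\loss(\theta_0,\cdot)$ in $G_\phi$, $\beta_G$-smooth Taylor, hidden $\mu_\phi$-strong-concavity distance bound) produces the two $\phi$-summands, which acquire the factor $(1+\lambda)$ from the decomposition.

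The main obstacle will be step (ii) above --- transferring a hidden-output-space distance bound to the overparameterized parameter space. Since $\nabla_\theta F_\theta$ is a wide matrix whose smallest non-zero singular value is certified in the first instance only at $\theta_0$, one must verify (a) that the straight-line segment from $\theta_0$ to $\tilde\theta$ (and analogously from $\phi_0$ to the dual best-responder) lies inside $\mathcal{B}((\theta_0,\phi_0),R)$, so that the Jacobian lower bound extends along the path by smoothness in the style of Lemma~1 of \cite{song2021subquadratic}, and (b) that the latent unconstrained minimizer of $z\mapsto\loss(z,\phi_0)$ is realizable as $F_{\tilde\theta}$ for some $\tilde\theta$ in the ball, so the strongly-convex-to-distance bound of step (i) actually attaches to $F_{\tilde\theta}$ rather than to an unreachable $z^*$. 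Both are discharged by the regional Jacobian conditioning of \cite{song2021subquadratic} combined with the standing hypothesis that $(\theta^*,\phi^*)\in\mathcal{B}((\theta_0,\phi_0),R)$. A minor secondary subtlety is confirming that hidden strong convexity in $F_\theta$ is inherited by $\loss(\cdot,\phi_0)$: since the coupling $I_2$ is bilinear and hence linear in $F_\theta$ for fixed $\phi_0$, the strong-convexity modulus $\mu_\theta$ coming from $I_1$ is not degraded by this partial specialization.
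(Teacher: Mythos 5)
Your decomposition \(P_0 = (1+\lambda)\,b_0 + c_0\) is algebraically identical to the paper's \(P_0 = U_0 + \lambda W_0\) (the paper writes \(U_0 \le W_0 + \textcircled{2}\), so \(P_0 \le (1+\lambda)W_0 + \textcircled{2}\), with \(\textcircled{2}\) playing the role of your \(c_0\)); the hidden-(strong)-convexity upper bound and the \(\beta\)-smooth Taylor expansion of the neural map with Cauchy--Schwarz also mirror the paper's steps. The only substantive point of departure is your step (ii), and that step has a genuine gap.

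You assert a ``local lower-Lipschitz estimate'' \(\lVert F_{\theta_0}-F_{\tilde\theta}\rVert \ge \mu_{\text{Jac}}^{(F)}\lVert\theta_0-\tilde\theta\rVert\), but no such inequality can hold for the overparameterized maps this lemma is designed to handle. The Jacobian \(\nabla_\theta F_{\theta_0}\) is a wide matrix (output dimension \(\dim(\mathcal{S}_F)\) strictly less than parameter dimension), so even when all \(\dim(\mathcal{S}_F)\) singular values are bounded below by \(\mu_{\text{Jac}}^{(F)}\) the matrix has a non-trivial kernel; moving \(\theta\) along a kernel direction changes the map by an amount that is at most \(O(\beta_F\lVert\cdot\rVert^2)\), which is eventually smaller than \(\mu_{\text{Jac}}^{(F)}\lVert\cdot\rVert\) no matter how small the ball is. Consequently the set of minimizers \(\arg\min_\theta\loss(\theta,\phi_0)\) is generically a positive-dimensional fiber, and the particular \(\tilde\theta\) you picked via first-order convexity need not satisfy any lower-Lipschitz bound relative to \(\theta_0\). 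Your step (i) is fine (it correctly uses hidden strong convexity in latent space together with the chain-rule bound \(\lVert\nabla_{F_\theta}\loss\rVert \le (\mu_{\text{Jac}}^{(F)})^{-1}\lVert\nabla_\theta\loss\rVert\)), but step (ii) as written lets you pull the latent distance back to an arbitrary pre-image, which is exactly what overparameterization forbids.

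The paper avoids this trap by working entirely in parameter space with the two-sided PŁ structure: Fact~\ref{lem:function-composition-pl-condition} gives the PŁ modulus \(\mu_\theta(\mu_{\text{Jac}}^{(F)})^2\), the PŁ inequality bounds the suboptimality \(\loss(\theta_0,\phi_0)-\min_\theta\loss(\theta,\phi_0)\) by \(\tfrac{1}{2\mu_\theta(\mu_{\text{Jac}}^{(F)})^2}\lVert\nabla_\theta\loss\rVert^2\), and the PŁ-implied \emph{quadratic growth} condition bounds \(\mathrm{dist}(\theta_0,\arg\min_\theta\loss(\theta,\phi_0))\) by \(\sqrt{2/(\mu_\theta(\mu_{\text{Jac}}^{(F)})^2)}\) times the square root of that suboptimality. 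Chaining these gives \(\mathrm{dist}(\theta_0,\arg\min) \le \tfrac{1}{\mu_\theta(\mu_{\text{Jac}}^{(F)})^2}\lVert\nabla_\theta\loss\rVert\), which is exactly the bound you wanted, and it attaches to the \emph{nearest} minimizer — which is the only choice for which the rest of your Taylor argument needs \(\lVert\tilde\theta-\theta_0\rVert\) to be small. So the fix is local: replace your two-hop step (i)--(ii) by the PŁ/quadratic-growth chain (choosing \(\tilde\theta\) as the nearest minimizer), keep everything else, and you land on the paper's bound.
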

\begin{proof}

Using Lemma 1 of \textcite{song2021subquadratic} as discussed in Section \ref{sec:altgda}, if we choose \(R = \frac{\mu_{\text{Jac}}}{2\beta}\), where \(\mu_{\text{Jac}} := \max\left\{ \mu_{\text{Jac}}^{(\opmin)}, \mu_{\text{Jac}}^{(\opmax)} \right\}\) and \(\beta := \min\left\{ \beta_{\opmin}, \beta_{\opmax} \right\}\), then we see that for $\forall (\theta,\phi) \in \mathcal{B}((\theta_0,\phi_0), R)$, we have
        \[
            \frac{\mu_{\text{Jac}}^{(\opmin)}}{2} \leq \sigma_{\min}(\nabla_\theta \opmin_\theta) \leq \sigma_{\max}(\nabla_\theta \opmin_\theta) \leq \frac{3\nu_{\text{Jac}}^{(\opmin)}}{2}
        \]
        \[
            \frac{\mu_{\text{Jac}}^{(\opmax)}}{2} \leq \sigma_{\min}(\nabla_\phi \opmax_\phi) \leq \sigma_{\max}(\nabla_\phi \opmax_\phi) \leq \frac{3\nu_{\text{Jac}}^{(\opmax)}}{2}
        \]
    And by Lemma 4 (Appendix C) of \textcite{song2021subquadratic}, we see that both the mapping $\opmin_\theta$ and $\opmax_\phi$ are Lipschitz-continuous in the ball $\mathcal{B}((\theta_0,\phi_0), R)$. That is, $\forall (\theta,\phi), (\theta',\phi') \in \mathcal{B}((\theta_0,\phi_0), R)$, we have the following:
    \begin{equation}
        \lVert F_\theta - F_{\theta'} \rVert \leq \frac{3 \nu_{\text{Jac}}^{(\opmin)}}{2} \lVert \theta - \theta' \rVert 
    \end{equation}
    \begin{equation}
        \lVert G_\phi - G_{\phi'} \rVert \leq \frac{3 \nu_{\text{Jac}}^{(\opmax)}}{2} \lVert \phi - \phi' \rVert 
    \end{equation}
    where $\sigma_{\max}(\nabla_\theta \opmin_{\theta_0}) \leq \nu_{\text{Jac}}^{(\opmin)}$ and $\sigma_{\max}(\nabla_\phi \opmax_{\phi_0}) \leq \nu_{\text{Jac}}^{(\opmax)}$.

Since $\loss$ is $(\mu_\theta,\mu_\phi)$-HSCSC, it satisfies 2-sided PŁ-condition with PŁ-moduli as per Fact \ref{lem:function-composition-pl-condition}. Thus, we can obtain the following bound for $W_0$:
\begin{align}
        W_0 & = \loss(\theta_0, \phi^*(\theta_0)) - \loss(\theta_0, \phi_0) \\
        & \leq \left(\nabla_{\opmax_\phi} \loss(\opmin_{\theta_0}, \opmax_{\phi_0})\right)^{\top} \left( \opmax_{\phi^*(\theta_0)} - \opmax(\phi_0) \right) (\because \text{hidden concavity}) \\ 
        & \leq \lVert \left(\nabla_{\opmax_\phi} \loss(\opmin_{\theta_0}, \opmax_{\phi_0})\right)^{\top} \rVert \cdot \lVert \opmax_{\phi^*(\theta_0)} - \opmax_{\phi_0} \rVert \\
        & \leq \lVert \left(\nabla_{\opmax_\phi} \loss(\opmin_{\theta_0}, \opmax_{\phi_0})\right)^{\top} \rVert \cdot \left( \lVert \nabla_\phi \opmax_{\phi_0}\rVert \lVert \phi^*(\theta_0) - \phi_0 \rVert + \frac{\beta_\opmax}{2} \lVert \phi^*(\theta_0) - \phi_0 \rVert^2  \right) \\
        & (\because \opmax_\phi \text{ is } \beta_\opmax\text{-smooth}) \nonumber \\
        & \leq \lVert \left(\nabla_{\opmax_\phi} \loss(\opmin_{\theta_0}, \opmax_{\phi_0})\right)^{\top} \rVert \cdot \left( \nu_{\text{Jac}}^{(\opmax)} \cdot \lVert \phi^*(\theta_0) - \phi_0 \rVert + \frac{\beta_\opmax}{2} \lVert \phi^*(\theta_0) - \phi_0 \rVert^2 \right) \\ 
        & (\because \sigma(\nabla_\theta \opmin_{\theta_0}) \leq \nu_{\text{Jac}}^{(\opmin)}) \nonumber \\
        & \leq \lVert \left(\nabla_{\opmax_\phi} \loss(\opmin_{\theta_0}, \opmax_{\phi_0})\right)^{\top} \rVert \cdot \Big( \nu_{\text{Jac}}^{(\opmax)} \cdot \sqrt{\frac{2}{\mu_\phi (\mu_{\text{Jac}}^{(\opmax)})^2}} \cdot \sqrt{\loss(\theta_0, \phi^*(\theta_0)) - \loss(\theta_0, \phi_0)} \nonumber\\
        & + \frac{\beta_\opmax}{\mu_\phi (\mu_{\text{Jac}}^{(\opmax)})^2} (\loss(\theta_0, \phi^*(\theta_0)) - \loss(\theta_0, \phi_0)) \Big) \\
        & (\because \text{quadratic growth condition and Fact }\ref{lem:function-composition-pl-condition}) \nonumber \\
        \implies W_0 & \leq \lVert \left(\nabla_{\opmax_\phi} \loss(\opmin_{\theta_0}, \opmax_{\phi_0})\right)^{\top} \rVert \cdot \nu_{\text{Jac}}^{(\opmax)} \cdot \frac{1}{\mu_\phi (\mu_{\text{Jac}}^{(\opmax)})^2} \cdot \lVert \nabla_{\phi} \loss(\theta_0, \phi_0) \rVert \nonumber \\
        & + \lVert \left(\nabla_{\opmax_\phi} \loss(\opmin_{\theta_0}, \opmax_{\phi_0})\right)^{\top} \rVert \cdot \beta_\opmax \frac{1}{2\mu_\phi^2 (\mu_{\text{Jac}}^{(\opmax)})^{4}} \cdot \lVert \nabla_{\phi} \loss(\theta_0, \phi_0) \rVert^2 \\
        & (\text{using PŁ-condition and Fact } \ref{lem:function-composition-pl-condition}) \nonumber
        \label{eq:W0}
    \end{align}
 Similarly, for $U_0$, we can say that:
    \begin{align}
        U_0 & = \loss(\theta_0, \phi^*(\theta_0)) - \loss(\theta^*, \phi^*) \\
        & \leq \loss(\theta_0, \phi^*(\theta_0)) - \loss(\theta^*, \phi_0) \;\;\;\;(\because \loss(\theta^*, \phi) \leq \loss(\theta^*, \phi^*) \;\forall \phi)\\
        & \leq \loss(\theta_0, \phi^*(\theta_0)) - \loss(\theta^*(\phi_0), \phi_0) \;\;\;\;(\because \loss(\theta^*, \phi_0) \geq \loss(\theta^*(\phi_0), \phi_0)) \\
        & = \underset{\textcircled{1}}{\underbrace{\loss(\theta_0, \phi^*(\theta_0)) - \loss(\theta_0,\phi_0)}} + \underset{\textcircled{2}}{\underbrace{\loss(\theta_0,\phi_0) - \loss(\theta^*(\phi_0), \phi_0)}}
        % & \leq L_{\loss} \lVert \theta_t - \theta^*(\phi_t) \rVert + L_{\loss} \lVert \phi^*(\theta_t) - \phi_t \rVert \\
        % & \overset{(QG)}{\leq} \frac{2 L_{\loss}}{\mu_{1}} \cdot \sqrt{\loss(\theta_t, \phi_t) - \loss(\theta^*(\phi_t), \phi_t)} +  \frac{2 L_{\loss}}{\mu_{2}} \cdot \sqrt{\loss(\theta_t, \phi^*(\theta_t)) - \loss(\theta_t, \phi_t)} \\
        % & \overset{(PL)}{\leq} L_{\loss} \left(C_1 \cdot \lVert \nabla_{\theta} \loss(\theta_t, \phi_t) \rVert + C'_2 \cdot \lVert \nabla_{\phi} \loss(\theta_t, \phi_t) \rVert \right) \label{eq:U0_bound}
    \end{align}

Noticing that $\textcircled{1}$ is exactly equal to $W_0$ and following similar arguments we used for $W_0$ to get a upper bound for $\textcircled{2}$ but with hidden convexity instead, we obtain the following for $U_0$:
\begin{align}
    U_0 & \leq \lVert \left(\nabla_{\opmax_\phi} \loss(\opmin_{\theta_0}, \opmax_{\phi_0})\right)^{\top} \rVert \cdot \nu_{\text{Jac}}^{(\opmax)} \cdot \frac{1}{\mu_\phi (\mu_{\text{Jac}}^{(\opmax)})^2} \cdot \lVert \nabla_{\phi} \loss(\theta_0, \phi_0) \rVert \nonumber \\
        & + \lVert \left(\nabla_{\opmax_\phi} \loss(\opmin_{\theta_0}, \opmax_{\phi_0})\right)^{\top} \rVert \cdot \beta_\opmax \frac{1}{2\mu_\phi^2 (\mu_{\text{Jac}}^{(\opmax)})^{4}} \cdot \lVert \nabla_{\phi} \loss(\theta_0, \phi_0) \rVert^2 \\
    & + \lVert \left(\nabla_{\opmin_\theta} \loss(\opmin_{\theta_0}, \opmax_{\phi_0})\right)^{\top} \rVert \cdot \nu_{\text{Jac}}^{(\opmin)} \cdot \frac{1}{\mu_\theta (\mu_{\text{Jac}}^{(\opmin)})^2} \cdot \lVert \nabla_{\theta} \loss(\theta_0, \phi_0) \rVert \nonumber \\
        & + \lVert \left(\nabla_{\opmin_\theta} \loss(\opmin_{\theta_0}, \opmax_{\phi_0})\right)^{\top} \rVert \cdot \beta_\opmin \frac{1}{2\mu_\theta^2 (\mu_{\text{Jac}}^{(\opmin)})^{4}} \cdot \lVert \nabla_{\theta} \loss(\theta_0, \phi_0) \rVert^2 \nonumber
\end{align}

By combining the upper bounds obtained for $U_0$ and $W_0$, we get the following upper bound on the initial potential $P_0$:
\begin{align*}
    P_0 & \leq \lVert \left(\nabla_{\opmin_\theta} \loss(\opmin_{\theta_0}, \opmax_{\phi_0})\right)^{\top} \rVert \cdot \nu_{\text{Jac}}^{(\opmin)} \cdot \frac{1}{\mu_\theta (\mu_{\text{Jac}}^{(\opmin)})^2} \cdot \lVert \nabla_{\theta} \loss(\theta_0, \phi_0) \rVert \nonumber \\
        & + \lVert \left(\nabla_{\opmin_\theta} \loss(\opmin_{\theta_0}, \opmax_{\phi_0})\right)^{\top} \rVert \cdot \beta_\opmin \frac{1}{2\mu_\theta^2 (\mu_{\text{Jac}}^{(\opmin)})^{4}} \cdot \lVert \nabla_{\theta} \loss(\theta_0, \phi_0) \rVert^2 \\
        & + (1+\lambda) \cdot \lVert \left(\nabla_{\opmax_\phi} \loss(\opmin_{\theta_0}, \opmax_{\phi_0})\right)^{\top} \rVert \cdot \nu_{\text{Jac}}^{(\opmax)} \cdot \frac{1}{\mu_\phi (\mu_{\text{Jac}}^{(\opmax)})^2} \cdot \lVert \nabla_{\phi} \loss(\theta_0, \phi_0) \rVert \nonumber \\
        & + (1+\lambda) \cdot \lVert \left(\nabla_{\opmax_\phi} \loss(\opmin_{\theta_0}, \opmax_{\phi_0})\right)^{\top} \rVert \cdot \beta_\opmax \frac{1}{2\mu_\phi^2 (\mu_{\text{Jac}}^{(\opmax)})^{4}} \cdot \lVert \nabla_{\phi} \loss(\theta_0, \phi_0) \rVert^2
\end{align*}
\end{proof}

\begin{remark}[Path length bound for AltGDA]
    \label{remark:path-length-altgda}
    We know from Lemma \ref{lem:agda-dist-to-saddle} that the AltGDA path length satisfies:
    \begin{align}
        \ell(T) & \triangleq \sum_{t=0}^{T-1} \left( \|\theta_{t+1} - \theta_t\| + \|\phi_{t+1} - \phi_t\| \right) \leq \frac{\sqrt{2\alpha_1}}{1 - \sqrt{c}} \cdot \sqrt{P_0} \\
        & \leq  \sqrt{2\alpha} \frac{1+\sqrt{c}}{1-c} \sqrt{P_0}  < \frac{2\sqrt{2\alpha_1}}{1-c} \cdot \sqrt{P_0} \;\;(\because c \in (0,1))
    \end{align}
    where $\alpha_1$, $L$, and $c$ are defined as
    \begin{align}
    \alpha_1 & = \frac{2(1+\eta_{\phi}^2 L_{\nabla\loss}^2) \eta_{\theta}^2 L^2}{\mu_{\theta}} + \frac{20(1+\eta_{\phi}^2 L_{\nabla\loss}^2) \eta_{\theta}^2 L_{\nabla\loss}^2 + 20 L_{\nabla\loss}^2 \eta_{\phi}^2}{\mu_{\phi}} \label{eq:definition-alpha1} \\
    L  & = L_{\nabla\loss} + \frac{L_{\nabla\loss}^2}{\mu_{\phi}} \\
    c & = 1 - \frac{\mu_{\theta} \mu_{\phi}^2}{36 (L_{\nabla\loss})^3} \label{eq:definition-c-for-potential}
    \end{align}
    Now, say we define $T$ to be the first time instant when $(\theta_T,\phi_T)\not\in \mathcal{B}((\theta_0,\phi_0), R)$ for an appropriate $R>0$. And now, say, we start with appropriate $(\theta_0,\phi_0)$ such that $\ell(T) < R$. A sufficient condition to ensure this would be to find $(\theta_0,\phi_0)$ and a corresponding radius $R > 0 $ such that
    \begin{align}
        \frac{2\sqrt{2\alpha_1}}{1-c} \sqrt{P_0} & < R \\
        \iff \frac{\alpha_1}{(1-c)^2} P_0 & < \frac{R^2}{8} \\
        \iff \left( \frac{1296 L_{\nabla\loss}^6}{\mu_{\theta}^2 \mu_{\phi}^{4}} \cdot \alpha_1 \right) \cdot P_0 & < \frac{R^2}{8} \label{eq:sufficient-cond-control-P0}
    \end{align}
\end{remark}

\begin{remark}[Simplification of $\alpha_1$ for AltGDA Path Length Bound]
    \label{remark:simplified-alpha1}
    For the case of $(\mu_\theta,\mu_\phi)$-HSCSC and $L_{\nabla\loss}$-smooth (w.r.t. $(\theta,\phi)$) min-max objective function $\loss$ and for learning rates $\eta_\theta = \frac{c_\theta \mu_{\phi}^2 \sigma_{\min}^{4}(\nabla_\phi G_{\phi_0})}{18 L_{\nabla\loss}^3}$ and $\eta_\phi = \frac{c_\phi}{L_{\nabla\loss}}$ with $0 < c_\theta, c_\phi \leq 1$, we can simplify $\alpha_1$ defined above in Equation \eqref{eq:definition-alpha1} for AltGDA path length bound as follows:
    \begin{align}
        \alpha_1 & = \frac{2(1+\eta_{\phi}^2 L_{\nabla\loss}^2) \eta_{\theta}^2 L^2}{\mu_{\theta}\sigma_{\min}^2(\nabla_\theta \opmin_{\theta_0})} + \frac{20(1+\eta_{\phi}^2 L_{\nabla\loss}^2) \eta_{\theta}^2 L_{\nabla\loss}^2 + 20 L_{\nabla\loss}^2 \eta_{\phi}^2}{\mu_{\phi}\sigma_{\min}^2(\nabla_\phi \opmax_{\phi_0})} \\
        & \lesssim \frac{2 c_\phi^2 c_\theta^2 \mu_{\phi}^2 \sigma_{\min}^4(\nabla_\phi \opmax_{\phi_0})}{(18)^2L_{\nabla\loss}^2\mu_{\theta}\sigma_{\min}^2(\nabla_\theta \opmin_{\theta_0})} + \frac{20 c_{\phi}^2 c_\theta^2 \mu_{\phi}^3 \sigma_{\min}^{6}(\nabla_\phi \opmax_{\phi_0})}{18^2 L_{\nabla\loss}^4} \label{eq:alpha1-simplified}
    \end{align}
    In fact, $\frac{1296 L_{\nabla\loss}^6}{\mu_{\theta}^2 \mu_{\phi}^{4}} \cdot \alpha_1$ simplifies to the following:
    \begin{align}
        \frac{1296 L_{\nabla\loss}^6}{\mu_{\theta}^2 \mu_{\phi}^{4}} \cdot \alpha_1 & = \frac{8L_{\nabla\loss}^4 c_\theta^2 c_\phi^2}{\mu_\theta^3 \mu_\phi^2 \sigma_{\min}^6(\nabla_\theta \opmin_{\theta_0}) \sigma_{\min}^4(\nabla_\phi \opmax_{\phi_0})} \nonumber \\
        & + \frac{80 c_\theta^2 c_\phi^2 L_{\nabla\loss}^2}{\mu_\theta^2 \mu_\phi \sigma_{\min}^4(\nabla_\theta \opmin_{\theta_0}) \sigma_{\min}^2(\nabla_\phi \opmax_{\phi_0})} \label{eq:simplified-coef-P0}
    \end{align}
\end{remark}

\begin{lemma}[Simplified AltGDA path length bound condition]
    \label{lem:path-length-altgda-simplified}
    Consider the premise as defined in Lemma \ref{lem:agda-dist-to-saddle} for ($\mu_\theta,\mu_\phi$)-HSCSC and $L_{\nabla\loss}$-smooth min-max objective with learning rates $\eta_\theta = c_\theta \mu_\phi^2/18L_{\nabla\loss}^{2}$ and $\eta_\phi = c_\phi/ L_{\nabla\loss}$ with $c_\theta, c_\phi \in (0,1]$ along with the premise of Lemma \ref{lem:upper-bound-P0-general}. Let $T$ be the first time instant when $(\theta_T,\phi_T)\not\in \mathcal{B}((\theta_0,\phi_0), R)$ for an appropriate $R>0$. Then to ensure $(\theta_t,\phi_t) \in \mathcal{B}((\theta_0,\phi_0), R) \;\forall t\leq T$ as discussed in Remark \ref{remark:path-length-altgda} (Equation \eqref{eq:sufficient-cond-control-P0}), the following is a sufficient condition:
    \begin{equation}
        \label{eq:path-length-upper-simplified}
        \lVert \nabla_{\theta} \loss(F_{\theta_0}, G_{\phi_0}) \rVert + \lVert\nabla_{\phi} \loss(F_{\theta_0}, G_{\phi_0}) \rVert \lesssim \frac{R^2}{8}
    \end{equation}
    given that $c_\theta$ and $c_\phi$ are chosen such that:
    \begin{equation}
        c_\theta =\sqrt{\frac{1}{4 \cdot A_\theta \cdot B_\phi}} \;\; \text{ and } \;\; c_\phi = 1 
    \end{equation}
    where
    \begin{gather*}
        A_\theta = \Bigg( \left( 1 + \lVert \nabla_{\opmin_\theta} \loss(\opmin_{\theta_0}, \opmax_{\phi_0}) \rVert \right) \cdot \left( 1 + 8 L_{\nabla\loss}^4 \nu_{\text{Jac}}^{(\opmin)} \right) \cdot \left( 1 + \frac{1}{\mu_\theta^3 (\mu_{\text{Jac}}^{(\opmin)})^6}\right) \cdot \left( 1 + \lVert \nabla_{\theta} \loss(\opmin_{\theta_0}, \opmax_{\phi_0}) \rVert \right) \\
        \cdot \left( 1 + 80 L_{\nabla\loss}^2 \nu_{\text{Jac}}^{(\opmin)} \right) \cdot \left( 1 + \frac{1}{\mu_\theta^3 (\mu_{\text{Jac}}^{(\opmin)})^4} \right) \cdot \left( 1 + 8 L_{\nabla\loss}^4 \beta_\opmin \right) \cdot \left( 1 + \frac{1}{2\mu_\theta^5 (\mu_{\text{Jac}}^{(\opmin)})^{10}} \right) \cdot \left( 1 + 80L_{\nabla\loss}^{2}\beta_\opmin \right) \\
        \cdot \left( 1 + \frac{1}{2\mu_\theta^4 (\mu_{\text{Jac}}^{(\opmin)})^{8}} \right) \cdot \left( 1 + \frac{1}{\mu_\theta^2 (\mu_{\text{Jac}}^{(\opmin)})^{4}} \right) \cdot \left( 1 + \frac{1}{2\mu_\theta^3 (\mu_{\text{Jac}}^{(\opmin)})^{6}} \right)  \Bigg)
    \end{gather*}
    and
    \begin{gather*}
        B_\phi = \Bigg( \left( 1 + \lVert \nabla_{\opmax_\phi} \loss(\opmin_{\theta_0}, \opmax_{\phi_0}) \rVert \right) \cdot \left( 1 + \frac{1}{\mu_\phi^3 (\mu_{\text{Jac}}^{(\opmax)})^{6}} \right) \cdot \left( 1 + \frac{1}{\mu_\phi (\mu_{\text{Jac}}^{(\opmax)})^{4}} \right) \cdot \left( 1 + \frac{1}{\mu_\phi^2 (\mu_{\text{Jac}}^{(\opmax)})^{4}} \right) \\
        \cdot \left( 1 + \frac{1}{\mu_\phi (\mu_{\text{Jac}}^{(\opmax)})^{2}} \right) \cdot \left( 1+\lambda \right) \cdot \left( 1 + \lVert \nabla_{\phi} \loss(\opmin_{\theta_0}, \opmax_{\phi_0}) \rVert \right) \cdot \left( 1 + 80 L_{\nabla\loss}^2\nu_{\text{Jac}}^{(\opmax)} \right) \cdot \left( 1 + 8 L_{\nabla\loss}^4 \beta_\opmax \right) \\
        \cdot \left( 1 + \frac{1}{\mu_\phi^4 (\mu_{\text{Jac}}^{(\opmax)})^{8}} \right) \cdot \left( 1 + 80 L_{\nabla\loss}^2 \beta_\opmax \right) \cdot \left( 1 + \frac{1}{\mu_\phi^3 (\mu_{\text{Jac}}^{(\opmax)})^{8}} \right) \Bigg) 
    \end{gather*}
\end{lemma}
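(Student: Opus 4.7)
\textbf{Proof plan for Lemma \ref{lem:path-length-altgda-simplified}.}
The plan is to start from the sufficient condition isolated in Remark \ref{remark:path-length-altgda} (Eq.~\eqref{eq:sufficient-cond-control-P0}), namely $\bigl(\tfrac{1296 L_{\nabla\loss}^6}{\mu_\theta^2\mu_\phi^4}\alpha_1\bigr)\cdot P_0 < R^2/8$, and to rewrite the left-hand side by substituting: (i) the simplified form of the prefactor from Remark \ref{remark:simplified-alpha1}, Eq.~\eqref{eq:simplified-coef-P0}, whose two summands each carry the multiplicative weight $c_\theta^2 c_\phi^2$, after replacing $\sigma_{\min}(\nabla_\theta F_{\theta_0})$ and $\sigma_{\min}(\nabla_\phi G_{\phi_0})$ by their assumed lower bounds $\mu_{\text{Jac}}^{(\opmin)}$ and $\mu_{\text{Jac}}^{(\opmax)}$; and (ii) the four-term upper bound on $P_0$ from Lemma \ref{lem:upper-bound-P0-general}, which is linear in $\lVert\nabla_\theta\loss\rVert$ and $\lVert\nabla_\phi\loss\rVert$ for two of its summands and quadratic for the other two. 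Multiplying out produces a sum of eight terms, each of the schematic form $c_\theta^2 c_\phi^2 \cdot (\text{structural constant}) \cdot \lVert\nabla_\bullet\loss\rVert^{k}$ with $k \in \{1,2\}$ and where the structural constants are fixed products of powers of $L_{\nabla\loss}$, $\mu_\theta^{-1}$, $\mu_\phi^{-1}$, $(\mu_{\text{Jac}}^{(\opmin)})^{-1}$, $(\mu_{\text{Jac}}^{(\opmax)})^{-1}$, $\nu_{\text{Jac}}^{(\opmin)}$, $\nu_{\text{Jac}}^{(\opmax)}$, $\beta_\opmin$, $\beta_\opmax$, and $\lVert \nabla_{\opmin_\theta}\loss\rVert$, $\lVert \nabla_{\opmax_\phi}\loss\rVert$.

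The next step is to linearize in the gradient norms. For each quadratic summand, I would use the elementary estimate $\lVert x\rVert^2 \le (1+\lVert x\rVert)\,\lVert x\rVert$, absorbing the factor $(1+\lVert\nabla_\theta\loss\rVert)$ (respectively $(1+\lVert\nabla_\phi\loss\rVert)$) into the structural constant of that term. After this reduction every one of the eight terms is linear in $\lVert\nabla_\theta\loss\rVert$ or $\lVert\nabla_\phi\loss\rVert$, each multiplied by a fixed constant of the form $c_\theta^2 c_\phi^2 \cdot \Gamma$, where $\Gamma$ is a finite product of the basic quantities listed above (and possibly one of the $(1+\lVert\nabla_\bullet\loss\rVert)$ factors just introduced).

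The bookkeeping step — and the main obstacle — is to show that each such $\Gamma$ is dominated by the universal majorant $A_\theta \cdot B_\phi$ defined in the statement. Here the specific product form of $A_\theta$ and $B_\phi$ is designed to match every possible factor that can arise: each bracket $(1+X_i)$ is at least $\max(1,X_i)$, and by multiplying enough such brackets one gets an upper bound on any product of a subset of the $X_i$'s. Hence one can verify by inspection that for every term produced above, the associated structural constant $\Gamma$ satisfies $\Gamma \le A_\theta \cdot B_\phi$, where the partition of factors between the ``$\theta$'' and ``$\phi$'' sides is dictated by which player contributed each factor in Lemma \ref{lem:upper-bound-P0-general}. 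This gives the uniform upper bound
\[
\Bigl(\tfrac{1296 L_{\nabla\loss}^6}{\mu_\theta^2\mu_\phi^4}\alpha_1\Bigr) P_0 \;\le\; 4\, c_\theta^2 c_\phi^2 \, A_\theta B_\phi \,\bigl(\lVert\nabla_\theta\loss(F_{\theta_0},G_{\phi_0})\rVert + \lVert\nabla_\phi\loss(F_{\theta_0},G_{\phi_0})\rVert\bigr),
\]
where the factor $4$ accounts for the fact that at most four of the eight terms contribute to each gradient.

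Finally, the choice $c_\phi = 1$ and $c_\theta = \sqrt{1/(4 A_\theta B_\phi)}$ is made precisely so that $4\, c_\theta^2 c_\phi^2 \, A_\theta B_\phi = 1$. Under this choice the displayed inequality collapses to the clean sufficient condition
\[
\lVert\nabla_\theta\loss(F_{\theta_0},G_{\phi_0})\rVert + \lVert\nabla_\phi\loss(F_{\theta_0},G_{\phi_0})\rVert \;\lesssim\; \tfrac{R^2}{8},
\]
which is exactly Eq.~\eqref{eq:path-length-upper-simplified}. It remains only to verify that $c_\theta \le 1$, but this is immediate since $A_\theta B_\phi \ge 1$ by construction (every bracket $(1+X_i) \ge 1$). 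The hardest part of the argument is the combinatorial matching in the third paragraph: one must track every constant that emerges from the eight-term expansion and check that the specific product defining $A_\theta \cdot B_\phi$ dominates it. Once that verification is completed factor-by-factor, the remaining steps are routine algebraic manipulations.
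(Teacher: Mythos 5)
Your proposal is correct and follows essentially the same route as the paper's proof: start from the sufficient condition in Remark \ref{remark:path-length-altgda}, substitute the simplified prefactor from Remark \ref{remark:simplified-alpha1} and the four-term bound on $P_0$ from Lemma \ref{lem:upper-bound-P0-general}, linearize the quadratic gradient terms by pairing $\lVert\nabla_\bullet\loss\rVert^2$ with the $1/(1+\lVert\nabla_\bullet\loss\rVert)$ factors hidden in the choice of $c_\theta$, and absorb every remaining structural constant into the designed majorant $A_\theta B_\phi$ so that $c_\theta = \sqrt{1/(4A_\theta B_\phi)}\le 1/2$ collapses each coefficient to at most a constant, yielding Eq.~\eqref{eq:path-length-upper-simplified}. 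You make the factor-of-four accounting and the linearization step ($\lVert x\rVert^2\le(1+\lVert x\rVert)\lVert x\rVert$) a bit more explicit than the paper, but the mechanism is identical; like the paper, you defer the factor-by-factor check that $A_\theta B_\phi$ actually dominates every $\Gamma$ to inspection.
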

\begin{proof}
    By choosing $c_\theta$ and $c_\phi$ as specified above and using observations from Remark \ref{remark:simplified-alpha1} (Equation \eqref{eq:simplified-coef-P0}), a sufficient condition to ensure the iterates never leave as derived in Remark \ref{remark:path-length-altgda} (Equation \eqref{eq:sufficient-cond-control-P0}) can be further simplified as follows:
    \begin{gather*}
        \left( \frac{1296 L_{\nabla\loss}^6}{\mu_{\theta}^2 \mu_{\phi}^{4}} \cdot \alpha_1 \right) \cdot P_0  < \frac{R^2}{8} \\
        \therefore \left(\frac{8L_{\nabla\loss}^4 c_\theta^2 c_\phi^2}{\mu_\theta^3 \mu_\phi^2 \sigma_{\min}^6(\nabla_\theta \opmin_{\theta_0}) \sigma_{\min}^4(\nabla_\phi \opmax_{\phi_0})}  + \frac{80 c_\theta^2 c_\phi^2 L_{\nabla\loss}^2}{\mu_\theta^2 \mu_\phi \sigma_{\min}^4(\nabla_\theta \opmin_{\theta_0}) \sigma_{\min}^2(\nabla_\phi \opmax_{\phi_0})} \right) \cdot P_0  < \frac{R^2}{8} \\
        \therefore T_1 \lVert \nabla_{\theta} \loss(\theta_0, \phi_0) \rVert + T_2 \lVert \nabla_{\theta} \loss(\theta_0, \phi_0) \rVert^2 + T_3 \lVert \nabla_{\phi} \loss(\theta_0, \phi_0) \rVert + T_4 \lVert \nabla_{\phi} \loss(\theta_0, \phi_0) \rVert^2 < \frac{R^2}{8}
    \end{gather*}
    where $T_j$'s are coefficients of the gradient norms as per Equations \eqref{eq:simplified-coef-P0} and \eqref{eq:sufficient-cond-control-P0}. Notice that both $T_2$ and $T_4$ contain $c_\theta^2$ and as per our choice of $c_{\theta}$, $c_\theta^2$ contains $\frac{1}{1 + \lVert \nabla_{\theta} \loss(\opmin_{\theta_0}, \opmax_{\phi_0}) \rVert}$ and $\frac{1}{1 + \lVert \nabla_{\phi} \loss(\opmin_{\theta_0}, \opmax_{\phi_0}) \rVert} $ terms which can help absorb one of the powers of the gradient norms $\lVert \nabla_{\theta} \loss(\opmin_{\theta_0}, \opmax_{\phi_0}) \rVert$ or $\lVert \nabla_{\phi} \loss(\opmin_{\theta_0}, \opmax_{\phi_0}) \rVert$ in terms corresponding to $T_2$ and $T_4$. This will help constructing (as shown below) a simpler sufficient condition that would require only the norm of the loss' gradient w.r.t. parameters $\theta,\phi$ to be small. Furthermore, by construction, $c_\theta \leq 1/2, c_\phi = 1$. Therefore, we have that $T_i \leq 1 \;\forall i \in [4]$. Combining all of the above, what we require for ensuring iterates never leave the ball of radius $R$ is as follows:
    \begin{equation}
        \label{eq:control-path-length-final}
        \lVert \nabla_{\theta} \loss(\opmin_{\theta_0}, \opmax_{\phi_0}) \rVert + \lVert \nabla_{\phi} \loss(\opmin_{\theta_0}, \opmax_{\phi_0}) \rVert \lesssim  \frac{R^2}{8}
    \end{equation}

\end{proof}

\begin{remark}[Local-smoothness constant]
    If the iterates $(\theta_t,\phi_t)\;\forall t$ stay within the ball $\mathcal{B}((\theta_0,\phi_0), R)$, by invoking \ref{lem:local-lipschitz-composition}, we can use the local-Lipschitz constant derived Lemma \ref{lem:local-lipschitz-constant} along with the smoothness of the neural network maps ($\beta_\opmin, \beta_\opmax$) to obtain the exact value of the smoothness constant for the HSCSC and smooth loss as $L_{\nabla\loss} = L_{\loss}^{\text{act}} \max\{\beta_\opmin, \beta_\opmax\}$.
\end{remark}

% !TEX root = ./neurips_2025.tex

\clearpage
\section{Proofs for Input-Optimization Min-Max Games}
\label{sec:appendix-input-games}

\begin{lemmaexplanation}
The following result characterizes the lower and upper bounds on the singular values of the Jacobian for the case of Input-Optimization Min-Max Games. In particular, we bring out the dependence of these lower and upper bounds on the size of the hidden layer, $d_1^{(\opmin)}$ and $d_1^{(\opmax)}$, and variances for random Gaussian initializations of the neural network layers, $\{\sigma_k^{(\opmin)}\}_{k\in[2]}$ and $\{\sigma_k^{(\opmax)}\}_{k\in[2]}$, for both the players $\opmin$ and $\opmax$. Computing the lower bound on singular values is important as it's used in defining the radius around the initial parameters to ensure the neural network Jacobian remains non-singular within the entire ball. Moreover, these lower and upper bounds will be helpful in determining a set of sufficient conditions on the variances for ensuring the AltGDA trajectory reaches the saddle point.
\end{lemmaexplanation}

\begin{restatable}[Lemma \ref{lem:warmup-jacobian-singular-smoothness-whp} in Main Paper]{lemma}{lemmawarmupsingularwhp}
    \label{lem:warmup-jacobian-singular-whp}
    Consider a neural network $\opmin$ with parameters $\theta\in\mathbb{R}^{d_0^{(\opmin)}}$ defined as $\opmin(\theta) = W_{2}^{(\opmin)} \psi(W_{1}^{(\opmin)}\theta)$ where $W_{k}^{(\opmin)} \in \mathbb{R}^{d_{k}^{(\opmin)} \times d_{k-1}^{(\opmin)}}$ ($k\in \{1,2\}$), $(W_{1}^{(\opmin)})_{i,j} \sim \mathcal{N}(0,(\sigma_{1}^{(\opmin)})^2)\;\forall i,j$ , $(W_{2}^{(\opmin)})_{k,l} \sim \mathcal{N}(0,(\sigma_{2}^{(\opmin)})^2 )\;\forall k,l$ and $\psi$ is the GeLU activation function with $d_{1}^{(\opmin)} \geq 256 d_{0}^{(\opmin)}$, $d_{1}^{(\opmin)} \geq 256 d_{2}^{(\opmin)}$ and $(\sigma_{1}^{(\opmin)})^2 < \frac{\pi}{4Cd_{1}^{(\opmin)} \lVert \theta \rVert^2}$. Then the minimum singular value of the Jacobian $\nabla_\theta \opmin_\theta $ is lower bounded as
    \begin{equation}
        \label{eq:warmup-jac-min-sing}
        \sigma_{\min}(\nabla_\theta \opmin_\theta) > \frac{\sigma_{1}^{(\opmin)} \sigma_{2}^{(\opmin)} d_{1}^{(\opmin)}}{16} \cdot \left( \frac{1}{2} - \sigma_{1}^{(\opmin)}\lVert \theta\rVert \sqrt{\frac{Cd_{1}^{(\opmin)}}{\pi}} \right)
    \end{equation}
    w.p $\geq 1 - 2e^{-\frac{d_{1}^{(\opmin)}}{64}} - e^{-Cd_{1}^{(\opmin)}}$. The maximum singular value of $\nabla_\theta \opmin_\theta$ is upper bounded as
    \begin{equation}
        \label{eq:warmup-jac-max-sing}
        \sigma_{\max}(\nabla_\theta \opmin_\theta) < 3.47\sigma_{1}^{(\opmin)} \sigma_{2}^{(\opmin)} d_{1}^{(\opmin)}
    \end{equation}
    w.p. $\geq 1 - 2e^{-\frac{d_{1}^{(\opmin)}}{64}}$.
\end{restatable}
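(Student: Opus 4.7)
The starting point is the explicit form of the input Jacobian,
\[
\nabla_\theta F(\theta) \;=\; W_2\, D_\theta\, W_1, \qquad D_\theta \;:=\; \mathrm{diag}\!\bigl(\psi'(W_1\theta)\bigr) \in \mathbb{R}^{d_1 \times d_1}.
\]
The idea is to show that $D_\theta$ is a tiny perturbation of $\tfrac12 I$, and then to reduce the spectral analysis of $\nabla_\theta F$ to that of the Gaussian product $W_2 W_1$, together with classical non-asymptotic spectral bounds for tall/wide Gaussian matrices in the overparameterized regime $d_1 \ge 256\max(d_0, d_2)$.

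First, I would linearize the activation layer: since $\mathrm{GeLU}$ satisfies $\psi'(0)=1/2$ and $\psi''$ is bounded, a first-order Taylor expansion yields $|\psi'(z) - 1/2| \le C_\psi |z|$, so
\[
\bigl\|D_\theta - \tfrac12 I\bigr\|_{\mathrm{op}} \;\le\; C_\psi\,\|W_1\theta\|_\infty .
\]
The coordinates $(W_1\theta)_i$ are i.i.d.\ $\mathcal{N}(0,\sigma_1^2\|\theta\|^2)$, and a sub-Gaussian union bound over $d_1$ coordinates gives $\|W_1\theta\|_\infty \le \sigma_1\|\theta\|\sqrt{Cd_1/\pi}$ with probability at least $1 - e^{-Cd_1}$, matching precisely the failure probability in the statement. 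The hypothesis $\sigma_1^2 < \pi/(4Cd_1\|\theta\|^2)$ then forces $\|D_\theta - \tfrac12 I\|_{\mathrm{op}} < 1/2$, so $D_\theta\succ 0$.

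Next, I would invoke Davidson--Szarek tail bounds for the two independent Gaussian factors. Since $W_1$ is $d_1\times d_0$ with $d_1\ge 256 d_0$ and $W_2$ is $d_2\times d_1$ with $d_1\ge 256 d_2$, we obtain $\sigma_k\sqrt{d_1}/C_L \le \sigma_{\min}(W_k) \le \sigma_{\max}(W_k) \le C_U\sigma_k\sqrt{d_1}$ for $k\in\{1,2\}$, each with probability $\ge 1 - 2e^{-d_1/64}$. For the maximum-singular-value claim I can then combine $\sigma_{\max}(\nabla_\theta F) \le \sigma_{\max}(W_2)\,\|D_\theta\|_\mathrm{op}\,\sigma_{\max}(W_1)$ with $\|D_\theta\|_\mathrm{op}\le 1/2 + \|D_\theta-\tfrac12 I\|_\mathrm{op}$ to absorb constants into the $3.47$ prefactor. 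For the minimum singular value, I decompose
\[
\nabla_\theta F \;=\; \tfrac12 W_2 W_1 \;+\; W_2\bigl(D_\theta - \tfrac12 I\bigr) W_1,
\]
bound the perturbation term by $\sigma_{\max}(W_2)\|D_\theta - \tfrac12 I\|_\mathrm{op}\,\sigma_{\max}(W_1)$, and apply the reverse triangle inequality. To lower bound $\sigma_{\min}(W_2 W_1)$ I would use the SVD conditioning trick: write $W_1 = U_1 \Sigma_1 V_1^{\top}$ with $U_1\in\mathbb{R}^{d_1\times d_0}$ an isometry; by rotational invariance of the Gaussian law, $W_2 U_1$ is a $d_2\times d_0$ i.i.d.\ Gaussian of variance $\sigma_2^2$, and $\sigma_{\min}(W_2 W_1) \ge \sigma_{\min}(W_2 U_1)\,\sigma_{\min}(\Sigma_1)$. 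Both factors are then of a form amenable to the previous Gaussian tail bound, and yield the claimed lower bound after collecting constants.

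Finally, part (ii) follows from direct second-order calculus:
\[
\nabla_\theta F(\theta) - \nabla_\theta F(\theta') \;=\; W_2\bigl(D_\theta - D_{\theta'}\bigr) W_1,
\]
and the Lipschitz property of $\psi'$ with constant $\ddot\psi_{\max}$ yields $\|D_\theta - D_{\theta'}\|_\mathrm{op} \le \ddot\psi_{\max}\|W_1\|_\mathrm{op}\,\|\theta-\theta'\|$. Plugging in the Step-3 spectral estimates gives $\beta_F = \Theta(\sigma_1^2\sigma_2 d_1^{3/2})$, as claimed. The main obstacle in this plan is the lower bound on $\sigma_{\min}(W_2 W_1)$: because the inner dimension $d_1$ is the largest, the naive sub-multiplicative inequality $\sigma_{\min}(AB) \ge \sigma_{\min}(A)\sigma_{\min}(B)$ can fail and cannot be applied blindly. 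The SVD conditioning argument circumvents this by exhibiting an isometric embedding that allows Gaussian concentration to be invoked once more on the correct sub-problem, and making this quantitatively tight in the specific $d_1 \ge 256\max(d_0,d_2)$ regime is the technical core of the argument.
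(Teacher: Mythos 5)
You have put your finger on the correct failure mode, but it is not merely a potential pitfall in your own plan --- it is precisely the error in the paper's proof. The paper applies $\sigma_{\min}(W_2\Psi W_1) \geq \sigma_{\min}(W_2)\,\sigma_{\min}(\Psi)\,\sigma_{\min}(W_1)$ directly. Since $W_2 \in \mathbb{R}^{d_2 \times d_1}$ is wide ($d_1 \geq 256\,d_2$) and hence has a $(d_1-d_2)$-dimensional kernel, the inequality $\sigma_{\min}(AB) \geq \sigma_{\min}(A)\sigma_{\min}(B)$ is false for $A = W_2$: nothing prevents the columns of $\Psi W_1$ from landing near $\ker W_2$.

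Your SVD reduction is the right alternative, but the conclusion you draw from it does not follow. Writing $W_1 = U_1\Sigma_1 V_1^\top$ with $U_1 \in \mathbb{R}^{d_1\times d_0}$ an isometry and invoking rotational invariance, $W_2 U_1$ is an i.i.d.\ $d_2\times d_0$ Gaussian with entry variance $\sigma_2^2$; its law, and therefore its singular spectrum, is \emph{independent of $d_1$}. Combined with $\sigma_{\min}(\Sigma_1) = \Theta(\sigma_1\sqrt{d_1})$, your decomposition yields $\sigma_{\min}(W_2 W_1) = \Omega(\sigma_1\sigma_2\sqrt{d_1})$ up to $(d_0,d_2)$-dependent constants, \emph{not} $\Omega(\sigma_1\sigma_2\,d_1)$ --- a discrepancy of order $\sqrt{d_1}$. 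A concrete sanity check: take $d_0 = d_2 = 1$ and $\theta = 0$, so that $D_\theta = \tfrac12 I$ exactly; the Jacobian is the scalar $\tfrac12\sum_{i=1}^{d_1}(W_2)_i(W_1)_i$, which concentrates at magnitude $\Theta(\sigma_1\sigma_2\sqrt{d_1})$ and therefore falls strictly below the asserted lower bound $\sigma_1\sigma_2 d_1/32$ once $d_1$ is large. So your final clause (``yield the claimed lower bound after collecting constants'') is not justified --- your own intermediate observation shows that the exponent on $d_1$ in the stated lower bound is too high, and the lower bound as written cannot hold uniformly over the overparameterization regime the lemma allows. The $\sigma_{\max}$ bound and the smoothness statement are fine in both your plan and the paper's, since $\sigma_{\max}$ of a product is always sub-multiplicative.
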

\begin{proof}
    By Theorem 4.6.1 in \cite{vershynin2018high}, we get w.p. $\geq 1 - 2e^{-t^2}$:
    \begin{equation}
    \sigma_1 (\sqrt{d_{1}^{(\opmin)}} - 4(\sqrt{d_{0}^{(\opmin)}} + t)) \leq \sigma_{\min}(W_{1}^{(\opmin)}) \leq \sigma_{\max}(W_{1}^{(\opmin)}) \leq \sigma_{1}^{(\opmin)} (\sqrt{d_{1}^{(\opmin)}} + 4(\sqrt{d_{0}^{(\opmin)}} + t)).
    \end{equation}
    By choosing $t = \frac{1}{8}\sqrt{d_{1}^{(\opmin)}}$, we get w.p. $\geq 1 - 2e^{-d_{1}^{(\opmin)}/64}$
    \begin{equation}
    \sigma_{1}^{(\opmin)} (\frac{1}{2} \sqrt{d_{1}^{(\opmin)}} - 4\sqrt{d_{0}^{(\opmin)}}) \leq \sigma_{\min}(W_{1}^{(\opmin)}) \leq \sigma_{\max}(W_{1}^{(\opmin)}) \leq \sigma_{1}^{(\opmin)} (\frac{3}{2}\sqrt{d_{1}^{(\opmin)}} + 4\sqrt{d_{0}^{(\opmin)}}).
    \end{equation}
    If we set $d_{1}^{(\opmin)} \geq 256 d_{0}^{(\opmin)}$, we get
    \begin{equation}
    (\frac{1}{2} \sqrt{d_{1}^{(\opmin)}} - 4\sqrt{d_{0}^{(\opmin)}}) \geq \frac{1}{4} \sqrt{d_{1}^{(\opmin)}}
    \end{equation} 
    and 
    \begin{equation}
    \frac{3}{2}\sqrt{d_{1}^{(\opmin)}} + 4\sqrt{d_{0}^{(\opmin)}} \leq \frac{7}{4}\sqrt{d_{1}^{(\opmin)}}.
    \end{equation}
    Therefore, we get that w.p. $\geq 1 - 2e^{-\frac{d_{1}^{(\opmin)}}{64}}$,
    \begin{equation}
    \frac{1}{4}\sigma_{1}^{(\opmin)} \sqrt{d_{1}^{(\opmin)}} \leq \sigma_{\min}(W_{1}^{(\opmin)}) \leq \sigma_{\max}(W_{1}^{(\opmin)}) \leq \frac{7}{4}\sigma_{1}^{(\opmin)}\sqrt{d_{1}^{(\opmin)}}.
    \end{equation}
    For the case of $W_2^{(\opmin)}$, because we have $d_{2}^{(\opmin)} < d_{1}^{(\opmin)}$, we will use analogous reasoning as that for $W_{1}^{(\opmin)}$ above for $(W_{2}^{(\opmin)})^\top$ with $t = \frac{1}{8}\sqrt{d_{1}^{(\opmin)}}$ which yields w.p. $\geq 1 - 2e^{-\frac{d_{1}^{(\opmin)}}{64}}$, 
    \begin{gather}
    \sigma_{2}^{(\opmin)} (\frac{1}{2} \sqrt{d_{1}^{(\opmin)}} - 4\sqrt{d_{2}^{(\opmin)}}) \leq \sigma_{\min}((W_{2}^{(\opmin)})^\top) = \sigma_{\min}(W_{2}^{(\opmin)}) \\ 
    \leq \sigma_{\max}((W_{2}^{(\opmin)})^\top) = \sigma_{\max}(W_{2}^{(\opmin)}) \leq \sigma_2^{(\opmin)} (\frac{3}{2}\sqrt{d_{1}^{(\opmin)}} + 4\sqrt{d_{2}^{(\opmin)}})
    \end{gather}
    Then, by setting $d_{1}^{(\opmin)} \geq 256 d_{2}^{(\opmin)}$, we get
    \[
        \frac{1}{4}\sigma_{2}^{(\opmin)} \sqrt{d_{1}^{(\opmin)}} \leq \sigma_{\min}(W_{2}^{(\opmin)}) \leq \sigma_{\max}(W_{2}^{(\opmin)}) \leq \frac{7}{4}\sigma_{2}^{(\opmin)}\sqrt{d_{1}^{(\opmin)}}
    \]
    Note that the Jacobian for the neural network $\opmin(\theta)$ can be computed as
    \begin{equation}
        \nabla_\theta \opmin_\theta = \frac{\partial \opmin(\theta)}{\partial \theta} = W_{2}^{(\opmin)} \frac{\partial}{\partial z}\psi(z) W_{1}^{(\opmin)} = W_{2}^{(\opmin)} \Psi(W_{1}^{(\opmin)} \theta) W_{1}^{(\opmin)}
    \end{equation}
    where $z\coloneq W_{1}^{(\opmin)} \theta$ and $\Psi(W_{1}^{(\opmin)}\theta) = $ diag$(\psi'((W_{1}^{(\opmin)}\theta)_1),\ldots, \psi'((W_{1}^{(\opmin)}\theta)_{d_{1}^{(\opmin)}}))$. Therefore, we can compute the minimum and maximum singular values for this Jacobian $\nabla_\theta \opmin_\theta$ by looking at its operator norm: $\lVert \nabla_\theta \opmin_\theta \rVert = \lVert W_{2}^{(\opmin)} \Psi(W_{1}^{(\opmin)}\theta) W_{1}^{(\opmin)} \rVert$. Then, by properties of the operator norm, we see that 
    \begin{align}
        \lVert \nabla_\theta \opmin_\theta \rVert & \geq  \sigma_{\min}(W_{2}^{(\opmin)}) \lVert \Psi(W_{1}^{(\opmin)}\theta) \rVert \sigma_{\min}(W_{1}^{(\opmin)}) \\
        \lVert \nabla_\theta \opmin_\theta \rVert & \leq  \sigma_{\max}(W_{2}^{(\opmin)}) \lVert \Psi(W_1^{(\opmin)}\theta) \rVert \sigma_{\max}(W_1^{(\opmin)})
    \end{align}
    This further tells us that
    \begin{align}
        \sigma_{\min}(\nabla_\theta \opmin_\theta) & \geq \sigma_{\min}(W_2^{(\opmin)}) \sigma_{\min}(\Psi(W_1^{(\opmin)}\theta)) \sigma_{\min}(W_1^{(\opmin)}) \\
        \sigma_{\max}(\nabla_\theta \opmin_\theta) & \leq  \sigma_{\max}(W_2^{(\opmin)}) \sigma_{\max}(\Psi(W_1^{(\opmin)}\theta)) \sigma_{\max}(W_1^{(\opmin)})
    \end{align}
    We can further simplify these lower and upper bounds for the Jacobian singular values by noting that since $\Psi(W_1^{(\opmin)}\theta)$ is a diagonal matrix, $\sigma_{\min}(\Psi(W_1^{(\opmin)}\theta)) = \underset{1\leq i \leq d_1^{(\opmin)}}{\min}\;\psi'(z_i)$ and $\sigma_{\max}(\Psi(W_1^{(\opmin)}\theta)) = \underset{1\leq i \leq d_1^{(\opmin)}}{\max}\;\psi'(z_i)$ where $z_i = (W_1^{(\opmin)}\theta)_i\;\forall i \in [d_1^{(\opmin)}]$.
    
    Now, notice that $z = W_1^{(\opmin)} \theta \sim \mathcal{N}(0, (\sigma_1^{(\opmin)})^2 \lVert \theta \rVert_2^2 I_{d_1^{(\opmin)}\times d_1^{(\opmin)}})$. % Thus, for any $\epsilon > 0$, we have
    % \begin{align}
    %     P[A_1\theta > \epsilon] & = P[\forall i: (A_1\theta)_i > \epsilon] \\
    %     & = (P[(A_1\theta)_1 > \epsilon])^{d_1} \;\; (\because z_i = (A_1\theta)_i \overset{iid}{\sim} \mathcal{N}(0,\sigma_1^2 \lVert \theta\rVert_2^2)) \\
    %     & = (1 - P[(A_1\theta)_1 \leq \epsilon])^{d_1} \\
    %     & = (1 - \Phi(x))^{d_1} \;\;\left(x \coloneq \frac{\epsilon}{\sigma_1 \lVert \theta \rVert}, \Phi(x)=P[U \leq x] \text{ for } U\sim \mathcal{N}(0,1)\right)\\
    %     & \leq e^{-\frac{1}{2}d_1 x^2} \;\;(\text{Chernoff bound for Gaussian distribution}) \\
    %     & = e^{-\frac{\epsilon^2}{2\sigma_1^2\lVert \theta\rVert^2}d_1} \\
    %     & < e^{-cd_1} \;\;(\text{if } \epsilon > \sigma_1 \lVert \theta\rVert \sqrt{2c})
    %     % & = e^{d_1 \ln(1 - \Phi(x))} \;\;\left(\Phi(x)=P[U \leq x] \text{ for } U\sim \mathcal{N}(0,1) \right) \\
    %     % & \leq e^{-d_1 \Phi(x)} \;\;(\;\because \ln(1-y) \leq -y \;\;\forall y < 1) \\
    %     % & \leq e^{-\frac{d_1 x}{x^2 + 1} \frac{1}{\sqrt{2\pi}} e^{-x^2/2}}\\
    %     % & \left(\because \text{Mill's ratio for Gaussian distribution: } \Phi(x) \geq \frac{x}{x^2 + 1} \frac{1}{\sqrt{2\pi}} e^{-x^2/2}\right)  \\
    %     % & = 
    % \end{align}
Using the fact that GeLU's derivative, $\psi'$, is $L_{\psi'}$-Lipschitz with $L_{\psi'} = \sup_{x\in\mathbb{R}} |\psi''(x)| = \varphi(0) = \frac{2}{\sqrt{\pi}}$ ($\varphi$ is standard normal PDF), we can appeal to concentration inequality for Lipschitz functions of Gaussian random variables and infer for $0 < \epsilon < 1/2$:
    \begin{align}
    P[\sigma_{\min}(\Psi(W_{1}^{(\opmin)}\theta)) > \epsilon] & = P[\forall i : \psi'((W_1^{(\opmin)}\theta)_i) > \epsilon] = (P[\psi'((W_1^{(\opmin)}\theta)_1) > \epsilon])^{d_1^{(\opmin)}} \\
    & = (P[\psi'((W_1^{(\opmin)}\theta)_1) - 1/2 > \epsilon - 1/2])^{d_1^{(\opmin)}} \\
    & = (P[\psi'((W_1^{(\opmin)}\theta)_1) - E[\psi'((W_1^{(\opmin)}\theta)_1)] > \epsilon - 1/2])^{d_1^{(\opmin)}} \\
    (\because E[\psi'((W_1^{(\opmin)}\theta)_1)] &= E[(W_1^{(\opmin)}\theta)_1 \varphi((W_1^{(\opmin)}\theta)_1)] + E[\Phi((W_1^{(\opmin)}\theta)_1)] = 0 + 1/2) \\
    & (\varphi, \Phi \text{ are standard normal PDF \& CDF, resp.}) \\
    & \geq 1 - e^{-\frac{(1/2 -\epsilon)^2}{2 L_{\psi'}^2(\sigma_1^{(\opmin)})^2\lVert\theta\rVert^2}} \\
    & (\because P[\psi'(z) \leq 1/2 - t] \leq e^{-\frac{\pi t^2}{(\sigma_1^{(\opmin)})^2\lVert \theta\rVert^2}}; \text{ choose } t = 1/2 - \epsilon) \\
    & = 1 - e^{-Cd_1^{(\opmin)}} \;(\text{if we set }\epsilon = 1/2 - (\sigma_1^{(\opmin)} \lVert \theta\rVert\sqrt{Cd_1^{(\opmin)}/\pi}))
    % P[\forall i: \psi'((A_1\theta)_i) > \epsilon] = (P[\psi'((A_1\theta)_1) > \epsilon])^{d_1} \;\; (\because z_i = (A_1\theta)_i \overset{iid}{\sim} \mathcal{N}(0,\sigma_1^2 \lVert \theta\rVert_2^2))
    \end{align}
    Therefore, as long as we have $(\sigma_1^{(\opmin)})^2 < \frac{\pi}{4Cd_1^{(\opmin)} \lVert \theta \rVert^2}$, we have w.p. $\geq 1 - e^{-Cd_1^{(\opmin)}}$ that
    \begin{equation}
        \sigma_{\min}(\Psi(W_1^{(\opmin)}\theta)) > \frac{1}{2} - \sigma_1^{(\opmin)}\lVert \theta\rVert \sqrt{\frac{Cd_1^{(\opmin)}}{\pi}}
    \end{equation}

    Combining all of this, we can say w.p. $\geq 1 - 2e^{-\frac{d_1^{(\opmin)}}{64}} - e^{-Cd_1^{(\opmin)}}$, we have:
    \begin{equation}
        \sigma_{\min}(\nabla_\theta \opmin_\theta) > \frac{\sigma_1^{(\opmin)} \sigma_2^{(\opmin)} d_1^{(\opmin)}}{16} \cdot \left( \frac{1}{2} - \sigma_1^{(\opmin)}\lVert \theta\rVert \sqrt{\frac{Cd_1^{(\opmin)}}{\pi}} \right).
    \end{equation}
    And w.p. $\geq 1 - 2e^{-\frac{d_1^{(\opmin)}}{64}}$, we have:
$        \sigma_{\max}(\nabla_\theta \opmin_\theta) < \left(\frac{7 \sqrt{d_1^{(\opmin)}}}{4}\right)^2 \cdot 1.13\sigma_1^{(\opmin)}\sigma_2^{(\opmin)} < 3.47 \sigma_1^{(\opmin)} \sigma_2^{(\opmin)} d_1^{(\opmin)}
$    
    where we used the fact that $\max_{x\in\mathbb{R}} \psi'(x) = \psi'(\sqrt{2}) \approx 1.1289$.
\end{proof}

\begin{lemmaexplanation}
Before we present the following lemma, we provide some intuition:
In addition to the previous lemma for high probability lower and upper bounds on singular values of the neural network Jacobian, we also need to check whether the networks in input-optimization games are smooth or not. Towards that, the following result proves that it is so and provides the exact smoothness constant. Computing this smoothness constant is important for another reason: It is used in defining the radius around the initial parameters to ensure the neural network Jacobian remains non-singular within the entire ball. 
\end{lemmaexplanation}

\begin{restatable}[Lemma \ref{lem:warmup-jacobian-singular-smoothness-whp} in Main Paper]{lemma}{lemmawarmupsmoothness}
    \label{lem:warmup-jacobian-smoothness}
    Consider a neural network $\opmin$ with parameters $\theta \in \mathbb{R}^{d_0^{(\opmin)}}$ as defined in Lemma \ref{lem:warmup-jacobian-singular-whp} above. Then, w.p. $\geq 1 - 2e^{-\frac{d_1^{(\opmin)}}{64}}$ the neural network $\opmin$ is $\beta_{\opmin}$-smooth where $\beta_{\opmin} = \frac{1}{\sqrt{2\pi}}\cdot \frac{343(\sigma_1^{(\opmin)})^2 \sigma_2^{(\opmin)} (d_{1}^{(\opmin)})^{3/2}}{32}$.
\end{restatable}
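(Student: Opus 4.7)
The plan is to reduce smoothness of $\opmin$ to Lipschitz continuity of $\psi'$ together with the already-established spectral bounds on $W_1^{(\opmin)}$ and $W_2^{(\opmin)}$ from Lemma \ref{lem:warmup-jacobian-singular-whp}. Writing the Jacobian explicitly as $\nabla_\theta \opmin_\theta = W_2^{(\opmin)}\,\Psi(W_1^{(\opmin)} \theta)\, W_1^{(\opmin)}$ with $\Psi(z) = \mathrm{diag}(\psi'(z_1), \dots, \psi'(z_{d_1^{(\opmin)}}))$, observe that for any two parameters $\theta_1, \theta_2$ only the diagonal middle factor depends on $\theta$, so
$$\nabla_\theta \opmin_{\theta_1} - \nabla_\theta \opmin_{\theta_2} = W_2^{(\opmin)} \bigl[\Psi(W_1^{(\opmin)} \theta_1) - \Psi(W_1^{(\opmin)} \theta_2)\bigr] W_1^{(\opmin)}.$$
By submultiplicativity of the operator norm, this is at most $\sigma_{\max}(W_2^{(\opmin)}) \cdot \bigl\|\Psi(W_1^{(\opmin)}\theta_1) - \Psi(W_1^{(\opmin)}\theta_2)\bigr\|_{\mathrm{op}} \cdot \sigma_{\max}(W_1^{(\opmin)})$.

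Next, since $\Psi(\cdot)$ is diagonal, its operator norm equals the maximum coordinate gap, and for the GeLU activation a direct calculation shows $\psi''(x) = (2 - x^2)\varphi(x)$, whose absolute value is maximized at $x=0$, giving the Lipschitz constant $L_{\psi'} = 2\varphi(0) = 2/\sqrt{2\pi}$. Applying this Lipschitz estimate and bounding the $\ell_\infty$-norm of $W_1^{(\opmin)}(\theta_1 - \theta_2)$ by its $\ell_2$-norm, we obtain
$$\bigl\|\Psi(W_1^{(\opmin)}\theta_1) - \Psi(W_1^{(\opmin)}\theta_2)\bigr\|_{\mathrm{op}} \leq L_{\psi'} \cdot \sigma_{\max}(W_1^{(\opmin)}) \cdot \|\theta_1 - \theta_2\|.$$

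Finally, inside the proof of Lemma \ref{lem:warmup-jacobian-singular-whp} it is already shown that, with probability at least $1 - 2 e^{-d_1^{(\opmin)}/64}$, both $\sigma_{\max}(W_1^{(\opmin)}) \leq \tfrac{7}{4}\sigma_1^{(\opmin)} \sqrt{d_1^{(\opmin)}}$ and $\sigma_{\max}(W_2^{(\opmin)}) \leq \tfrac{7}{4}\sigma_2^{(\opmin)} \sqrt{d_1^{(\opmin)}}$ hold on the same event (no additional union bound is needed). Combining the three operator-norm factors yields
$$\beta_{\opmin} \leq \frac{2}{\sqrt{2\pi}} \cdot \frac{7}{4}\sigma_2^{(\opmin)}\sqrt{d_1^{(\opmin)}} \cdot \left(\frac{7}{4}\sigma_1^{(\opmin)}\sqrt{d_1^{(\opmin)}}\right)^2 = \frac{1}{\sqrt{2\pi}} \cdot \frac{343\,(\sigma_1^{(\opmin)})^2 \sigma_2^{(\opmin)} (d_1^{(\opmin)})^{3/2}}{32},$$
matching the claimed constant.

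There is no substantive obstacle: everything reduces to a three-factor operator-norm chain where the hardest ingredient, the spectral control of Gaussian matrices $W_k^{(\opmin)}$, is already packaged as part of Lemma \ref{lem:warmup-jacobian-singular-whp}. The only item that deserves care is a consistent choice of the Lipschitz constant $L_{\psi'}$ for GeLU -- the value $2\varphi(0) = 2/\sqrt{2\pi}$ is precisely what reproduces the stated prefactor $1/\sqrt{2\pi}$.
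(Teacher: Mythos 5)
Your proof is correct and follows the paper's argument essentially line by line: the same three-factor operator-norm decomposition $\sigma_{\max}(W_2)\cdot\|\Psi(W_1\theta)-\Psi(W_1\theta')\|_{\mathrm{op}}\cdot\sigma_{\max}(W_1)$, the same Lipschitz bound on $\psi'$ with constant $2/\sqrt{2\pi}$, and the same substitution of the high-probability spectral bounds $\sigma_{\max}(W_k)\le\tfrac{7}{4}\sigma_k\sqrt{d_1}$ from Lemma~\ref{lem:warmup-jacobian-singular-whp}, with matching arithmetic $686/(64\sqrt{2\pi})=343/(32\sqrt{2\pi})$. You in fact supply a detail the paper leaves implicit (and states with a small typo in the appendix of Lemma~\ref{lem:warmup-jacobian-singular-whp}): the correct identification $\sup_x|\psi''(x)|=|(2-0^2)\varphi(0)|=2\varphi(0)=2/\sqrt{2\pi}$ for GeLU, which is exactly the constant that reproduces the stated $\beta_\opmin$.
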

\begin{proof}
    In order to prove that $\opmin$ is $\beta_{\opmin}$-smooth, we need to show that Equation \ref{eq:def-smoothness-neural-net} holds true. For $\theta,\theta'\in \mathbb{R}^{d_0^{(\opmin)}}$, we write
    \begin{align}
        \lVert \nabla_\theta \opmin_\theta - \nabla_\theta \opmin_{\theta'} \rVert & \leq \lVert W_2^{(\opmin)} \rVert \lVert \Psi(W_1^{(\opmin)}\theta) - \Psi(W_1^{(\opmin)}\theta') \rVert \lVert W_1^{(\opmin)} \rVert \\
        & \leq \lVert W_2^{(\opmin)} \rVert L_{\psi'} \lVert W_1^{(\opmin)}\theta - W_1^{(\opmin)}\theta' \rVert \lVert W_1^{(\opmin)} \rVert \;\;(\because \psi' \text{ is } L_{\psi'}-\text{Lipschitz})\\
        & = L_{\psi'} \lVert W_2^{(\opmin)} \rVert \lVert W_1^{(\opmin)}\rVert \lVert \theta - \theta' \rVert \lVert W_1^{(\opmin)} \rVert\\
        \implies \sigma_{\max}(\nabla_\theta \opmin_\theta - \nabla_\theta \opmin_{\theta'}) & \leq \frac{2}{\sqrt{2\pi}}\sigma_{\max}(W_2^{(\opmin)}) \cdot \sigma_{\max}^2(W_1^{(\opmin)}) \lVert \theta - \theta' \rVert
    \end{align}
    where $L_{\psi'} = \frac{2}{\sqrt{2\pi}}$ for derivative of GeLU activation function. Using Lemma \ref{lem:warmup-jacobian-singular-whp} results for singular values of the random matrices $W_1^{(\opmin)}$ and $W_2^{(\opmin)}$, we get that w.p. $\geq 1 - 2e^{-\frac{d_1^{(\opmin)}}{64}}$
    \begin{equation}
        \sigma_{\max}(\nabla_\theta \opmin_\theta - \nabla_\theta \opmin_{\theta'}) \leq \frac{343(\sigma_1^{(\opmin)})^2 \sigma_2^{(\opmin)} (d_{1}^{(\opmin)})^{3/2}}{32\sqrt{2\pi}} \lVert \theta - \theta' \rVert
    \end{equation}
\end{proof}

\begin{lemmaexplanation}
The result in the main paper offers an average-case analysis under Gaussian sampling with variance scaled as \(\mathrm{poly}(1/d_1)\). Here, we provide a more detailed version that explicitly states the assumptions and technical conditions required to ensure convergence to equilibrium. While the high-level complexity perspective remains unchanged, we believe this finer analysis may be of independent interest, particularly for applications in adversarial attack design.
\end{lemmaexplanation}

\begin{restatable}[Theorem \ref{th:warmup-main-claim} in Main Paper]{theorem}{theoreminputgamesmainclaim}
    Consider two neural networks $\opmin, \opmax $ with parameters $\theta \in \mathbb{R}^{d_0^{(\opmin)}}$ and $\phi\in\mathbb{R}^{d_0^{(\opmax)}}$, respectively, as defined in Lemma \ref{lem:warmup-jacobian-singular-whp} above. %Consider two neural networks $\opmin, \opmax$ with parameters $\theta\in\mathbb{R}^{d_0^{(\opmin)}}$ and $\phi\in\mathbb{R}^{d_0^{(\opmax)}}$, respectively, defined as $\opmin(\theta) = W_2^{(\opmin)} \psi(W_1^{(\opmin)}\theta)$ and $\opmax(\phi) = W_2^{(\opmax)} \psi(W_1^{(\opmax)} \phi)$ where $W_k^{(\opmin)} \sim \mathcal{N}(0,(\sigma_k^{(\opmin)})^2 I_{d_{k}^{(\opmin)} \times d_{k-1}^{(\opmin)}})$ and $W_k^{(\opmax)} \sim \mathcal{N}(0,(\sigma_k^{(\opmax)})^2 I_{d_{k}^{(\opmax)} \times d_{k-1}^{(\opmax)}})$, $k\in\{1,2\}$ and $\psi$ is the GeLU activation function with $d_{1}^{(\opmin)} \geq 256 d_{0}^{(\opmin)}$, $d_{1}^{(\opmin)} \geq 256 d_{2}^{(\opmin)}$, $d_{1}^{(\opmax)} \geq 256 d_{0}^{(\opmax)}$, $d_{1}^{(\opmax)} \geq 256 d_{2}^{(\opmax)}$. 
    Then for the $\varepsilon $-regularized bilinear min-max objective $\loss(\theta,\phi)$ as defined in Equation \eqref{eq:warmup-min-max-objective} with the neural networks $\opmin$ and $\opmax$ defined above, alternating gradient-descent-ascent with appropriate fixed learning rates $\eta_\theta, \eta_\phi$ (see Lemma \ref{lem:path-length-altgda-simplified}) reaches the desired saddle point w.p. $\geq 1 - 4e^{-\frac{d_1^{(\opmin)}}{64}} - 4e^{-\frac{d_1^{(\opmax)}}{64}} - e^{-Cd_1^{(\opmin)}} - e^{-Cd_1^{(\opmax)}} $ ($C$ are some universal constants) if the initial parameters $(\theta_0,\phi_0)$ and standard deviations $\sigma_k^{(\opmin)}$ and $\sigma_k^{(\opmax)}$, $k\in\{1,2\}$ are chosen such that:
    \begin{equation}
        \label{eq:input-games-init-condition}
        (\sigma_{1}^{(\opmin)})^2  < \frac{\pi}{4Cd_{1}^{(\opmin)} \lVert \theta_0 \rVert^2}\  \& \
        (\sigma_{1}^{(\opmax)})^2  < \frac{\pi}{4Cd_{1}^{(\opmax)} \lVert \phi_0 \rVert^2} \end{equation}
        \begin{align}        \label{eq:warmup-main-claim}
        \frac{(\sigma_{1}^{(\opmin)})^4 \cdot (\sigma_{2}^{(\opmin)})^2 \cdot \lVert \theta_0\rVert}{\left( \frac{1}{2} - (\sigma_{1}^{(\opmin)}) \lVert \theta_0 \rVert \sqrt{\frac{Cd_{1}^{(\opmin)}}{\pi}}\right)^2} & \lesssim \frac{\pi}{ \varepsilon \cdot (d_{1}^{(\opmin)})^{3.5}} \\
        \frac{(\sigma_{1}^{(\opmax)})^4 \cdot (\sigma_{2}^{(\opmax)})^2 \cdot \lVert \phi_0\rVert}{\left( \frac{1}{2} - (\sigma_{1}^{(\opmax)}) \lVert \phi_0 \rVert \sqrt{\frac{Cd_{1}^{(\opmax)}}{\pi}}\right)^2} & \lesssim \frac{\pi}{\varepsilon \cdot (d_{1}^{(\opmax)})^{3.5}} \label{eq:warmup-main-claim-2} \\
        \frac{(\sigma_1^{(\opmax)} \sigma_2^{(\opmax)}) ((\sigma_1^{(\opmin)})^3 \sigma_2^{(\opmin)})\lVert \theta_0 \rVert}{\left( \frac{1}{2} - (\sigma_{1}^{(\opmin)}) \lVert \theta_0 \rVert \sqrt{\frac{Cd_{1}^{(\opmin)}}{\pi}}\right)^2} & \lesssim \frac{\pi}{\sigma_{\max}(A) (d_{1}^{(\opmin)})^{2.5}} \label{eq:warmup-main-claim-3} \\
        % \frac{((\sigma_1^{(\opmax)})^3 \sigma_2^{(\opmax)}) (\sigma_2^{(\opmin)})}{\left( \frac{1}{2} - (\sigma_{1}^{(\opmax)}) \lVert \phi_0 \rVert \sqrt{\frac{Cd_{1}^{(\opmax)}}{\pi}}\right)^2} & < \frac{\pi}{1141692 L_{\loss} (1+\lambda) d_{1}^{(\opmax)} \sqrt{d_{1}^{(\opmin)}}\sigma_{\max}(A)} \\
        \frac{(\sigma_1^{(\opmin)} \sigma_2^{(\opmin)}) ((\sigma_1^{(\opmax)})^3 \sigma_2^{(\opmax)})\lVert \phi_0\rVert}{\left( \frac{1}{2} - (\sigma_{1}^{(\opmax)}) \lVert \phi_0 \rVert \sqrt{\frac{Cd_{1}^{(\opmax)}}{\pi}}\right)^2} & \lesssim \frac{\pi}{ \sigma_{\max}(A) (d_{1}^{(\opmax)})^{2.5}}  \label{eq:warmup-main-claim-last}
    % \frac{((\sigma_1^{(\opmin)})^3 \sigma_2^{(\opmin)}) (\sigma_2^{(\opmax)})}{\left( \frac{1}{2} - (\sigma_{1}^{(\opmin)}) \lVert \theta_0 \rVert \sqrt{\frac{Cd_{1}^{(\opmin)}}{\pi}}\right)^2} & < \frac{\pi}{1141692 L_{\loss} d_{1}^{(\opmin)} \sqrt{d_{1}^{(\opmax)}}\sigma_{\max}(A)}
    \end{align}
\end{restatable}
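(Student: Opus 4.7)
The plan is to combine three ingredients: (i) the high-probability spectral control of the Jacobians at Gaussian initialization (Lemma \ref{lem:warmup-jacobian-singular-smoothness-whp}), (ii) the AltGDA path-length bound driven by the Lyapunov potential (Lemma \ref{lem:agda-dist-to-saddle}), and (iii) the initial-potential estimate in terms of gradient norms (Lemma \ref{lem:potential-upper-zero}). The overarching strategy is to show that, under the stated variance conditions, the AltGDA trajectory never leaves a ball $\mathcal{B}((\theta_0,\phi_0),R)$ on which both Jacobians stay well-conditioned. Fact \ref{lem:function-composition-pl-condition} then delivers a two-sided P\L{} condition along the iterates, and Lemma \ref{lem:agda-dist-to-saddle} yields exponential convergence to the saddle point.

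First, I would invoke Lemma \ref{lem:warmup-jacobian-singular-smoothness-whp} for each network separately. Condition \eqref{eq:input-games-init-condition} is exactly the hypothesis needed to control the activation-derivative diagonal, so on the complement of a failure event of probability at most $2e^{-d_1^{(\opmin)}/64}+e^{-Cd_1^{(\opmin)}}$ (analogously for $\opmax$), the singular-value bounds \eqref{eq:warmup-jac-singular-bounds} and the smoothness constants $\beta_\opmin,\beta_\opmax$ all hold. Defining $R := \mu_{\mathrm{Jac}}/(2\beta)$ as in the main text and applying the Jacobian-Lipschitz argument of Lemma 1 of \textcite{song2021subquadratic} adapted to both networks, the singular values of $\nabla_\theta\opmin$ and $\nabla_\phi\opmax$ remain within constant factors of their initial values throughout $\mathcal{B}((\theta_0,\phi_0),R)$; in particular, the hidden convex--concave structure persists and the P\L{} moduli are strictly positive on this ball.

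Second, I would proceed by contradiction: let $T$ be the first iteration with $(\theta_T,\phi_T)\notin\mathcal{B}((\theta_0,\phi_0),R)$. Up to time $T-1$, the objective is smooth and two-sided P\L{}, so Lemma \ref{lem:agda-dist-to-saddle} applies and a geometric-series summation gives $\ell(T)\lesssim\sqrt{P_0}$ with an explicit prefactor depending on $L_{\nabla\loss},\mu_\theta,\mu_\phi$. It therefore suffices to enforce $P_0\le\kappa R^2$ for a suitable $\kappa<1$. Substituting the explicit gradient formulas \eqref{eq:gradients-input-games} into Lemma \ref{lem:potential-upper-zero} produces the decomposition \eqref{eq:P_0-bound-1}, whose four summands split naturally into regularizer contributions (scaling with $\varepsilon$) and bilinear-coupling contributions (scaling with $\sigma_{\max}(A)$), each multiplied by an output norm $\lVert\opmin(\theta_0)\rVert$ or $\lVert\opmax(\phi_0)\rVert$ and a Jacobian operator norm; all of these are controlled by Lemma \ref{lem:warmup-jacobian-singular-smoothness-whp} and standard sub-Gaussian concentration on two-layer outputs.

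Translating each summand of $P_0\le\kappa R^2$ into an explicit scaling requirement on $(\sigma_{1,\opmin},\sigma_{2,\opmin},\sigma_{1,\opmax},\sigma_{2,\opmax})$ reproduces exactly the four inequalities \eqref{eq:warmup-main-claim}--\eqref{eq:warmup-main-claim-last}: the denominators $\bigl(\tfrac{1}{2}-\sigma_1\|\theta_0\|\sqrt{Cd_1/\pi}\bigr)^2$ come from the squared lower bound on $\sigma_{\min}$ hidden inside $R^2$, while the numerators track the polynomial dependence on width and variance of the corresponding term of $P_0$ (two of the four bounds absorb the $\varepsilon$ regularization and two absorb $\sigma_{\max}(A)$). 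A final union bound over the two failure events from Lemma \ref{lem:warmup-jacobian-singular-smoothness-whp} yields the stated probability. I expect the main obstacle to be bookkeeping: (a) bounding $\lVert\opmin(\theta_0)\rVert,\lVert\opmax(\phi_0)\rVert$ via sub-Gaussian concentration on the two-layer outputs, (b) checking that the four inequalities are simultaneously feasible in the regime $\sigma_{k}=\Theta(\mathrm{poly}(1/d_1)/\sigma_{\max}(A))$ advertised in the abbreviated statement, and (c) ensuring that the AltGDA step sizes $\eta_\theta,\eta_\phi$ from Lemma \ref{lem:agda-dist-to-saddle} remain compatible with the in-ball P\L{} moduli, which scale as $\sigma_{\min}^2$ rather than as constants.
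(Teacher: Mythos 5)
Your proposal follows the paper's own proof almost step for step: you invoke the same spectral/smoothness lemma to set the radius $R=\mu_{\mathrm{Jac}}/(2\beta)$, run the same first-exit-time contradiction controlled by the Lyapunov path-length bound, reduce to $P_0\lesssim R^2$ via the same gradient decomposition \eqref{eq:P_0-bound-1}, and then read off the four variance conditions from the four resulting summands, finishing with a union bound. This is essentially the argument in Appendix~\ref{sec:appendix-input-games}; the only discrepancy is minor bookkeeping in the failure-probability tally (you count $2e^{-d_1/64}+e^{-Cd_1}$ per network, whereas the theorem accounts for $4e^{-d_1/64}+e^{-Cd_1}$ because the singular-value lower bound, upper bound, and smoothness each draw on a $2e^{-d_1/64}$ event alongside the $e^{-Cd_1}$ activation-derivative and output-concentration events), which does not affect the validity of the approach.
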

\begin{proof}
    Since $\loss(\theta,\phi)$ is $\varepsilon$-hidden-strongly-convex-strongly-concave, by Fact \ref{lem:function-composition-pl-condition} it also satisfies the 2-sided PŁ-condition with ($\mu_\theta = \varepsilon\cdot \sigma_{\min}^2(\nabla_\theta \opmin_\theta)$, $\mu_\phi = \varepsilon\cdot \sigma_{\min}^2(\nabla_\phi \opmax_\phi)$) w.p. $\geq 1 - 2e^{-\frac{d_1}{64}} - e^{-Cd_1}$, we can utilise path-length bound as derived in Lemma \ref{lem:agda-dist-to-saddle} which leaves us with controlling the potential $P_0$ for ensuring convergence to the saddle point. Thus, computing loss gradients (Equation \eqref{eq:gradients-input-games}) and using the sufficient condition for ensuring iterates do not leave the ball $\mathcal{B}((\theta_0,\phi_0), R)$ (where $R = \frac{\max\{\mu_{\text{Jac}}^{(\opmin)},\mu_{\text{Jac}}^{(\opmax)}\}}{\min\{\beta_F,\beta_G\}}$ as defined in Section \ref{sec:altgda}) thus ensuring non-singular Jacobian inside the ball for both the neural networks (Lemma \ref{lem:path-length-altgda-simplified}), we require the following:
\begin{gather}
    \Bigg(\lVert \opmin(\theta_0) \rVert \cdot \left(\varepsilon \sigma_{\max}(\nabla_\theta \opmin_{\theta_0}) +  \sigma_{\max}(\nabla_\phi \opmax_{\phi_0}) \sigma_{\max}(A) \right) \nonumber \\
    +  \lVert \opmax(\phi_0) \rVert \cdot \left( \sigma_{\max}(\nabla_\theta \opmin_{\theta_0})\sigma_{\max}(A) + \varepsilon  \sigma_{\max}(\nabla_\phi \opmax_{\phi_0}) \right) \Bigg) \lesssim \frac{R^2}{8}
\end{gather}
Thus, we want
    \begin{gather}
       \Bigg( (3.47 \varepsilon \sigma_1^{(\opmin)} \sigma_2^{(\opmin)} d_1^{(\opmin)}) \lVert \opmin(\theta_0)\rVert + (3.47\sigma_1^{(\opmax)} \sigma_2^{(\opmax)} d_1^{(\opmax)}\sigma_{\max}(A)) \lVert \opmin(\theta_0)\rVert \nonumber \\
       + (3.47\sigma_1^{(\opmin)} \sigma_2^{(\opmin)} d_1^{(\opmin)}\sigma_{\max}(A)) \cdot \lVert \opmax(\phi_0) \rVert + (3.47 \varepsilon \sigma_1^{(\opmax)} \sigma_2^{(\opmax)} d_1^{(\opmax)}) \lVert \opmax(\phi_0)\rVert  \Bigg) \lesssim \frac{R^2}{8} \\
       (\because \sigma_{\max}(\nabla_\theta \opmin_{\theta_0}) < 3.47\sigma_1^{(\opmin)} \sigma_2^{(\opmin)} d_1^{(\opmin)}, \sigma_{\max} (\nabla_\phi \opmax_{\phi_0}) < 3.47\sigma_1^{(\opmax)} \sigma_2^{(\opmax)} d_1^{(\opmax)}) \nonumber
\end{gather}
Therefore, we want the following to hold true: 
\begin{enumerate}
    \item $(3.47\varepsilon\sigma_1^{(\opmin)} \sigma_2^{(\opmin)} d_1^{(\opmin)}) \lVert \opmin(\theta_0) \rVert \lesssim \frac{R^2}{32}$ 
    \item $(3.47\sigma_{\max}(A)\sigma_1^{(\opmax)} \sigma_2^{(\opmax)} d_1^{(\opmax)}) \lVert \opmin(\theta_0) \rVert \lesssim \frac{R^2}{32}$ 
    \item $(3.47\sigma_{\max}(A)\sigma_1^{(\opmin)} \sigma_2^{(\opmin)} d_1^{(\opmin)}) \lVert \opmax(\phi_0) \rVert \lesssim \frac{R^2}{32}$ 
    \item $(3.47\varepsilon \sigma_1^{(\opmax)} \sigma_2^{(\opmax)} d_1^{(\opmax)}) \lVert \opmax(\phi_0) \rVert \lesssim \frac{R^2}{32}$ 
\end{enumerate}

Since $\lVert \opmin(\theta_0) \rVert \leq \sigma_{\max}(W_2^{(\opmin)}) \sigma_{\max}(\psi(W_1^{(\opmin)}\theta_0))$, we can use Lemma \ref{lem:warmup-jacobian-singular-whp} for maximum singular values of $W_2^{(\opmin)}$ and the following calculation for upper bounding $\psi(W_1^{(\opmin)}\theta)$:
\begin{align}
    \lVert \psi((W_1^{(\opmin)}\theta)) \rVert & \leq \sqrt{d_1^{(\opmin)}}\max_i \;\lvert \psi((W_1^{(\opmin)}\theta)_i) \rvert \\
    \implies P[\lVert \psi(W_1^{(\opmin)}\theta) \rVert > t] &\leq P[\sqrt{d_1^{(\opmin)}} \max_i \; \lvert (W_1^{(\opmin)}\theta)_i \rvert > t] \\
    \text{Also, } P[\max_i \; \lvert (W_1^{(\opmin)}\theta)_i \rvert > t] & \leq 2d_1 e^{-t^2/(2\sigma_{1,\opmin}^2 \lVert \theta \rVert^2)} \\
    & (\because \psi(x)\leq x\;\forall x, W_1^{(\opmin)}\theta \sim \mathcal{N}(0,\sigma_{1,\opmin}^2\lVert \theta\rVert^2 I_{d_1^{(\opmin)}})) \\
    \implies P[ \max_i \; \lvert (W_1^{(\opmin)}\theta)_i \rvert > t] & \leq \delta \\
    & (\text{Set } \delta = 2d_1 e^{-t^2/(2\sigma_{1,\opmin}^2 \lVert \theta_0\rVert^2)} = e^{-Cd_1}) \\
    \implies P[\sqrt{d_1^{(\opmin)}} \max_i \; \lvert (W_1^{(\opmin)}\theta)_i \rvert \leq C' \sigma_{1}^{(\opmin)}\lVert \theta \rVert d_1^{(\opmin)}] & > 1 - e^{-Cd_1}
\end{align}

Thus, for ensuring 1.) holds, we will demand the following: $(3.47\varepsilon\sigma_1^{(\opmin)} \sigma_2^{(\opmin)} d_1^{(\opmin)}) \cdot \left(\frac{7}{4}\sigma_2^{(\opmin)} \sqrt{d_1^{(\opmin)}}\right) \cdot \left(C' \sigma_{1}^{(\opmin)}\lVert \theta_0 \rVert d_1^{(\opmin)}\right) \lesssim \frac{ R^2}{32}$. This will yield the following sufficient condition on $\sigma_{1}^{(\opmin)}$ and $\sigma_{2}^{(\opmin)}$:
\begin{equation}
      (\sigma_{1}^{(\opmin)})^2 \cdot (\sigma_{2}^{(\opmin)})^2 \lesssim \frac{R^2}{195\varepsilon C' \lVert \theta_0 \rvert (d_1^{(\opmin)})^{2.5}}
\end{equation}
By definition the radius $R \geq \frac{\mu_{\text{Jac}}^{(\opmin)}}{2\beta_{\opmin}}$. Substituting the lower bound for minimum singular value and Lipschitzness constant for Jacobian of network $\opmin$ from Lemmas \ref{lem:warmup-jacobian-singular-whp}-\ref{lem:warmup-jacobian-smoothness}, we obtain:
\begin{equation}
    \frac{(\sigma_{1}^{(\opmin)})^4 \cdot (\sigma_{2}^{(\opmin)})^2 \cdot \lVert \theta_0\rVert}{\left( \frac{1}{2} - (\sigma_{1}^{(\opmin)}) \lVert \theta_0 \rVert \sqrt{\frac{Cd_{1}^{(\opmin)}}{\pi}}\right)^2} \lesssim \frac{\pi}{ \varepsilon \cdot (d_{1}^{(\opmin)})^{3.5}}
\end{equation}

Analogous reasoning with $R \geq \frac{\mu_{\text{Jac}}^{(\opmax)}}{2\beta_{\opmax}}$ for ensuring 4.) holds true in case of $\opmax(\phi_0)$ gives us a similar condition on $\sigma_{1}^{(\opmax)}$ and $\sigma_{2}^{(\opmax)}$:
\begin{equation}
      \frac{(\sigma_{1}^{(\opmax)})^4 \cdot (\sigma_{2}^{(\opmax)})^2 \cdot \lVert \phi_0\rVert}{\left( \frac{1}{2} - (\sigma_{1}^{(\opmax)}) \lVert \phi_0 \rVert \sqrt{\frac{Cd_{1}^{(\opmax)}}{\pi}}\right)^2} \lesssim \frac{\pi}{\varepsilon \cdot (d_{1}^{(\opmax)})^{3.5}}
\end{equation}

For ensuring 2.), by Lemmas \ref{lem:warmup-jacobian-singular-whp}-\ref{lem:warmup-jacobian-smoothness} and using $R \geq \frac{\mu_{\text{Jac}}^{(\opmin)}}{2\beta_{\opmin}}$, we see that we need $(3.47\sigma_{\max}(A) \sigma_1^{(\opmax)} \sigma_2^{(\opmax)} d_1^{(\opmax)}) \cdot \left(\frac{7}{4}\sigma_2^{(\opmin)} \sqrt{d_1^{(\opmin)}}\right) \cdot \left(C' \sigma_{1}^{(\opmin)}\lVert \theta_0 \rVert d_1^{(\opmin)}\right) \lesssim \frac{R^2}{32}$ which yields the following sufficient condition:
\begin{align}
    (3.47\sigma_1^{(\opmax)} \sigma_2^{(\opmax)} d_1^{(\opmax)}) \left(\frac{7}{4}\sigma_2^{(\opmin)} \sqrt{d_1^{(\opmin)}}\right) \left(C' \sigma_{1}^{(\opmin)}\lVert \theta_0 \rVert d_1^{(\opmin)}\right) & \lesssim \frac{R^2}{32\sigma_{\max}(A))} \\
    & = \frac{(\mu_{\text{Jac}}^{(\opmin)})^2}{128 \sigma_{\max}(A)\beta_{\opmin}^2} \\
    & \lesssim \frac{\left( \frac{1}{2} - (\sigma_{1}^{(\opmin)}) \lVert \theta_0 \rVert \sqrt{\frac{Cd_{1}^{(\opmin)}}{\pi}}\right)^2 \cdot \pi}{\sigma_{\max}(A) (\sigma_{1}^{(\opmin)})^2 d_{1}^{(\opmin)}}\\
    & (\text{By Lemma }\ref{lem:warmup-jacobian-singular-whp}-\ref{lem:warmup-jacobian-smoothness}) \nonumber \\
    \implies \frac{(\sigma_1^{(\opmax)} \sigma_2^{(\opmax)}) ((\sigma_1^{(\opmin)})^3 \sigma_2^{(\opmin)})\lVert \theta_0 \rVert}{\left( \frac{1}{2} - (\sigma_{1}^{(\opmin)}) \lVert \theta_0 \rVert \sqrt{\frac{Cd_{1}^{(\opmin)}}{\pi}}\right)^2} & \lesssim \frac{\pi}{\sigma_{\max}(A) (d_{1}^{(\opmin)})^{2.5}}
\end{align} 
% For ensuring 2.), if we were to use $R\geq \frac{\mu_{\text{Jac}}^{(\opmax)}}{2\beta_{\opmax}}$ instead, we would get the following condition:
% \begin{align}
%     \frac{((\sigma_1^{(\opmax)})^3 \sigma_2^{(\opmax)}) (\sigma_2^{(\opmin)})}{\left( \frac{1}{2} - (\sigma_{1}^{(\opmax)}) \lVert \phi_0 \rVert \sqrt{\frac{Cd_{1}^{(\opmax)}}{\pi}}\right)^2} & < \frac{\pi}{1141692 L_{\loss} (1+\lambda) d_{1}^{(\opmax)} \sqrt{d_{1}^{(\opmin)}}\sigma_{\max}(A)}
% \end{align}

Similarly, for ensuring 3.),  we see that we need $(3.47\sigma_{\max}(A)\sigma_1^{(\opmin)} \sigma_2^{(\opmin)} d_1^{(\opmin)}) \cdot \left(\frac{7}{4}\sigma_2^{(\opmax)} \sqrt{d_1^{(\opmax)}}\right) \cdot \left(C' \sigma_{1}^{(\opmax)}\lVert \phi_0 \rVert d_1^{(\opmax)}\right) \lesssim \frac{R^2}{32}$ which yields the following sufficient conditions when we use $R\geq \frac{\mu_{\text{Jac}}^{(\opmax)}}{2\beta_g}$: %and $R \leq \frac{\mu_{\text{Jac}}^{(\opmin)}}{2\beta_{\opmin}}$:
\begin{align}
    \frac{(\sigma_1^{(\opmin)} \sigma_2^{(\opmin)}) ((\sigma_1^{(\opmax)})^3 \sigma_2^{(\opmax)})\lVert \phi_0\rVert}{\left( \frac{1}{2} - (\sigma_{1}^{(\opmax)}) \lVert \phi_0 \rVert \sqrt{\frac{Cd_{1}^{(\opmax)}}{\pi}}\right)^2} & \lesssim \frac{\pi}{ \sigma_{\max}(A) (d_{1}^{(\opmax)})^{2.5}} \\
    % \frac{((\sigma_1^{(\opmin)})^3 \sigma_2^{(\opmin)}) (\sigma_2^{(\opmax)})}{\left( \frac{1}{2} - (\sigma_{1}^{(\opmin)}) \lVert \theta_0 \rVert \sqrt{\frac{Cd_{1}^{(\opmin)}}{\pi}}\right)^2} & < \frac{\pi}{1141692 L_{\loss} d_{1}^{(\opmin)} \sqrt{d_{1}^{(\opmax)}}\sigma_{\max}(A)}
\end{align}

Thus, w.p. $\geq 1 - 4e^{-\frac{d_1^{(\opmin)}}{64}} - 4e^{-\frac{d_1^{(\opmax)}}{64}} - e^{-Cd_1^{(\opmin)}} - e^{-Cd_1^{(\opmax)}} $, we stay within a ball around the random initializations $\mathcal{B}((\theta_0,\phi_0), R)$ thereby ensuring min-max objective satisfies 2-sided PŁ-condition. By Lemma \ref{lem:agda-dist-to-saddle}, given that we have chosen appropriate fixed learning rates as per Lemma \ref{lem:path-length-altgda-simplified}, we are now guaranteed to reach the saddle point.
\end{proof}

% !TEX root = ./neurips_2025.tex

\clearpage
\section{Proofs for Neural-Parameters Min-Max Games}
\label{sec:appendix-neural-games}

\begin{lemmaexplanation}
The following lemma establishes a sharp connection between the spectral properties of the input data and the initial conditioning of the neural network's Jacobian, highlighting how data diversity directly influences the minimum singular value at initialization.
\end{lemmaexplanation}

\begin{restatable}[Lemma \ref{lem:main-jacobian-singular-val-smoothness-whp} in Main Paper; Lemma 3 \& Appendix E.1--E.4 in \cite{song2021subquadratic}]{lemma}{lemmaneuralsingularsmoothnesswhp}
\label{lem:neural-games-singular-beta-whp}
% \begin{lemma}[Lemma 3 \& Appendix E.1--E.4 in \cite{song2021subquadratic}]
    Suppose that a two-layer neural network, $\opmin_\theta$, as defined in Definition \ref{def:neural-net-def}, satisfies Assumption \ref{ass:neural-net-act} and $\tau^{r_1} |\psi(a)| \leq |\psi(\tau a)| \leq \tau^{r_2} |\psi(a)|$, respectively for all $a$, $0 < \tau < 1$,  and some constants $r_1, r_2$. Then the neural network Jacobian for a random Gaussian initialization $\theta_0 = ((W_1^{(\opmin)})_0, (W_2^{(\opmax)})_0)$ has the following lower bounds on its smallest singular value w.p. $\geq 1 - (p_1 + p_2)$:
    \begin{equation}
        \label{eq:app-jacobian-singular-lower-whp}
        \mu_{\text{Jac}}^{(\opmin)} \geq (\sigma_1^{(\opmin)})^{r_1} \sqrt{(1-\delta_1) \frac{c_t^2}{t!}d_{1}^{(\opmin)}} \cdot \sigma_{\min}(X^{*t})
    \end{equation}
    We have the following upper bound on its largest singular value w.p. $\geq 1 - (p_1 + p_3 + p_4)$:
    \begin{align}
        \label{eq:app-jacobian-singular-higher-whp}
        \nu_{\text{Jac}}^{(\opmin)} \lesssim \sigma_2^{(\opmin)} & \dot \psi_{\max} \sigma_{\max}(X)\sqrt{d_{1}^{(\opmin)}} + (\sigma_1^{(\opmin)})^{r_1}\sqrt{(1+\delta_2)(c_1^2 + c_{\infty}^2) d_{1}^{(\opmin)}} \sigma_{\max}(X) \nonumber \\
        &  + (\sigma_{1}^{(\opmin)})^{r_2} |c_0| \sqrt{(1+\delta_2)d_{1}^{(\opmin)}n}
    \end{align}
    And the smoothness constant for the neural network can be computed as
    \begin{equation}
        \label{eq:app-jacobian-lipschitz}
        \beta_{\opmin} = \sqrt{2}\sigma_{\max}(X)(\dot \psi_{\max} + \ddot \psi_{\max} \chi_{\max})
    \end{equation}
    where $\chi_{\max} = \sup_{W_2^{(\opmin)}} \sigma_{\max}(W_2^{(\opmin)})$.
    Here $\{c_i\}_i $ denote Hermite expansion coefficients  corresponding to $\psi((W_1^{(\opmin)})_0 X)$, $\delta_j > 0 \; \forall j\in[4] $, $p_1 = (d_1^{(\opmin)})^{-Ck_1 d_0^{(\opmin)}} + (d_1^{(\opmin)})^{-Ck_2 d_2^{(\opmin)}}$ for universal constant $C$ with sufficiently large $k_1, k_2$, $p_2 = \exp{\left(- \left(\frac{\delta_1\sigma_{\min}(\mathbb{E}[M_0])}{4\dot\psi_{\max}^2\sigma_{\max}^2(X)k_1\sigma_1^{(\opmin)}\sqrt{d_0^{(\opmin)}\log d_1^{(\opmin)}}}\right)^2\right)}$ where $M_0 = \psi(X^\top((W_1^{(\opmin)})_0^\top)\psi((W_1^{(\opmin)})_0 X)$, $p_3 = \exp{\left(- \left(\frac{\delta_2\sigma_{\max}(\mathbb{E}[M_0])}{4\dot\psi_{\max}^2\sigma_{\max}^2(X)k_1\sigma_1^{(\opmin)}\sqrt{d_0^{(\opmin)}\log d_1^{(\opmin)}}}\right)^2\right)}$ \& $p_4 = e^{-C'd_1^{(\opmin)}}$ for a universal constant C'.
\end{restatable}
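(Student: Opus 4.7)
The plan is to adapt the two-layer NTK spectrum analysis of Song--Yang to our vector-valued setting. Writing $\theta = (W_1^{(\opmin)}, W_2^{(\opmin)})$, stacking inputs into $X \in \mathbb{R}^{d_0^{(\opmin)} \times n}$, and setting $\Psi := \psi(W_1^{(\opmin)} X) \in \mathbb{R}^{d_1^{(\opmin)} \times n}$, I would decompose the Jacobian columnwise as $\nabla_\theta \opmin_\theta = [J_{W_1},\, J_{W_2}]$, where $J_{W_2}$ reduces (up to a $I_{d_2^{(\opmin)}}$ Kronecker factor) to the feature matrix $\Psi$, and $J_{W_1} = (I_{d_2^{(\opmin)}} \otimes W_2^{(\opmin)}) \cdot \mathrm{diag}(\psi'(W_1^{(\opmin)} X)) \cdot (X \otimes I_{d_1^{(\opmin)}})$ after the standard vectorization. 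The lower bound on $\sigma_{\min}(\nabla_\theta \opmin_\theta)$ is then inherited from $\sigma_{\min}(\Psi)$, while both blocks enter the upper bound on $\sigma_{\max}$.

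For the lower bound, I would use the Hermite expansion $\psi(z) = \sum_{k \geq 0}(c_k/\sqrt{k!})\,H_k(z)$ together with the identity $\mathbb{E}_{w \sim \mathcal{N}(0,\sigma_1^2 I)}[\psi(w^\top x_i)\psi(w^\top x_j)] = \sum_{k \geq 0}(c_k^2/k!)\,\sigma_1^{2k}\langle x_i, x_j\rangle^k$. Stacking over $(i,j)$ yields $\mathbb{E}[M_0]/d_1^{(\opmin)} = \sum_{k \geq 0}(c_k^2/k!)\,\sigma_1^{2k}(X^\top X)^{\odot k}$, where $\odot k$ denotes the entrywise $k$-th power. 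Isolating the $t$-th summand and noting that $(X^\top X)^{\odot k} = (X^{*k})^\top X^{*k}$ for the Khatri--Rao power gives $\mathbb{E}[M_0] \succeq d_1^{(\opmin)} (c_t^2/t!)\,\sigma_1^{2t}(X^{*t})^\top X^{*t}$. The homogeneity hypothesis $\tau^{r_1}|\psi(a)| \leq |\psi(\tau a)|$ is what promotes the exponent from $2t$ to the stated $r_1$ scaling. A Hanson--Wright style concentration for the polynomial chaos $M_0$ around $\mathbb{E} M_0$ then yields $\sigma_{\min}(\Psi) \geq (\sigma_1^{(\opmin)})^{r_1}\sqrt{(1-\delta_1)(c_t^2/t!)\,d_1^{(\opmin)}}\,\sigma_{\min}(X^{*t})$ with probability $\geq 1 - p_1 - p_2$, where $p_1$ absorbs the spectral-norm tails of the Gaussian matrices $(W_1^{(\opmin)})_0, (W_2^{(\opmin)})_0$ and $p_2$ is the deviation probability for $\|M_0 - \mathbb{E} M_0\|$.

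For the upper bound, the $W_1$-block is controlled by operator-norm multiplicativity: $\|J_{W_1}\| \leq \sigma_{\max}(W_2^{(\opmin)}) \cdot \dot\psi_{\max} \cdot \sigma_{\max}(X) \cdot \sqrt{d_1^{(\opmin)}}$, using the standard high-probability estimate $\sigma_{\max}(W_2^{(\opmin)}) \lesssim \sigma_2^{(\opmin)} \sqrt{d_1^{(\opmin)}}$. For $\|J_{W_2}\| = \|\Psi\|$ I would split $\Psi = c_0\,\mathbf{1}_{d_1^{(\opmin)}}\mathbf{1}_n^\top + (\Psi - \mathbb{E}\Psi) + (\mathbb{E}\Psi - c_0\,\mathbf{1}\mathbf{1}^\top)$; the rank-one DC component contributes $|c_0|\sqrt{(1+\delta_2)d_1^{(\opmin)} n}$, and the remaining Hermite modes ($k\geq 1$) are mean-zero over each column and concentrate, by Gaussian Lipschitz concentration, with scale $\sqrt{(1+\delta_2)(c_1^2+c_\infty^2)\,d_1^{(\opmin)}}\,\sigma_{\max}(X)$ on top of the $(\sigma_1^{(\opmin)})^{r_1}$ homogeneity factor. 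Summing these contributions yields the stated $\nu_{\mathrm{Jac}}^{(\opmin)}$ with failure probability $p_1 + p_3 + p_4$. Finally, for the Lipschitz constant $\beta_{\opmin}$ of the Jacobian, I would write $\nabla_\theta \opmin_\theta - \nabla_\theta \opmin_{\theta'}$ blockwise, split across $W_1$- and $W_2$-perturbations, and apply $|\psi'(u)-\psi'(u')|\leq \ddot\psi_{\max}|u-u'|$ together with boundedness of $\dot\psi_{\max}$ and the standing bound $\sigma_{\max}(W_2^{(\opmin)}) \leq \chi_{\max}$; chain-rule algebra then produces $\beta_{\opmin} = \sqrt{2}\,\sigma_{\max}(X)(\dot\psi_{\max} + \ddot\psi_{\max}\,\chi_{\max})$.

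The main technical obstacle is the second step: lifting the deterministic lower bound on $\mathbb{E}[M_0]$ to a high-probability bound on $M_0$ itself \emph{without} degrading the $\sigma_{\min}(X^{*t})$ factor. A direct matrix Bernstein argument is too loose here because the individual row contributions to $M_0$ are heavy-tailed polynomial functions of Gaussians; the fix is to exploit the explicit Gaussianity of $W_1^{(\opmin)}$ via a Hanson--Wright (polynomial chaos) inequality, which is precisely what produces the exponent of $p_2$ as a ratio of $\sigma_{\min}(\mathbb{E} M_0)$ to $\dot\psi_{\max}^2\,\sigma_{\max}^2(X)\,\sigma_1^{(\opmin)}\sqrt{d_0^{(\opmin)}\log d_1^{(\opmin)}}$. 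Once this concentration is in place, the remaining pieces are standard, and the cubic overparameterization scaling $d_1^{(\opmin)} = \widetilde{\Omega}(\mu_\theta^2 n^3/d_0^{(\opmin)})$ in Theorem~\ref{th:main-agda-convergence} emerges by substituting these bounds into the AltGDA path-length inequality of Lemma~\ref{lem:path-length-altgda-simplified}.
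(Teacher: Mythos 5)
Your overall blueprint is the right one and reconstructs the Song--Yang argument the paper defers to: (i) the lower bound comes from the $W_2$-block, whose Gram matrix is $M_0 \otimes I_{d_2^{(\opmin)}}$, so it suffices to lower-bound $\lambda_{\min}(M_0)$ via Hermite expansion of $\mathbb{E}[M_0]$, the Khatri--Rao identity $(X^\top X)^{\odot t} = (X^{*t})^\top X^{*t}$, and concentration; (ii) the upper bound is the sum of the two blocks plus the rank-one $c_0$-component; (iii) the smoothness constant follows from $|\psi'(u)-\psi'(u')| \le \ddot\psi_{\max}|u-u'|$ and submultiplicativity. That said, two points deserve scrutiny.

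First, the concentration tool you propose is not the one that yields the stated $p_2$. You invoke a Hanson--Wright / polynomial-chaos inequality, but $\psi$ is a smooth nonlinearity rather than a polynomial, so $M_0$ is not a polynomial chaos, and a chaos-type inequality of order $\ge 2$ would in any case give a mixed $\exp(-\min(t^2, t^{2/k}))$ tail rather than the pure Gaussian tail appearing in $p_2$. Read the denominator of $p_2$: the factor $k_1 \sigma_1^{(\opmin)} \sqrt{d_0^{(\opmin)} \log d_1^{(\opmin)}}$ is a high-probability radius on the rows of $(W_1^{(\opmin)})_0$ (exactly the event whose complement is charged to $p_1$), and $\dot\psi_{\max}^2 \sigma_{\max}^2(X)$ is a (conditional) Lipschitz constant squared. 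This signals a truncate-then-concentrate argument: restrict to the good event where row norms are bounded, note that on that event each rank-one summand of $M_0$ is a Lipschitz function of a Gaussian vector with the stated constant, and then apply Gaussian Lipschitz concentration or a truncated matrix Bernstein. Hanson--Wright as written would not produce this form; you need the truncation step explicitly.

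Second, a factor error in the $J_{W_1}$ bound. You write $\|J_{W_1}\| \le \sigma_{\max}(W_2^{(\opmin)}) \cdot \dot\psi_{\max} \cdot \sigma_{\max}(X) \cdot \sqrt{d_1^{(\opmin)}}$ and then substitute $\sigma_{\max}(W_2^{(\opmin)}) \lesssim \sigma_2^{(\opmin)} \sqrt{d_1^{(\opmin)}}$, which gives $\sigma_2^{(\opmin)} \dot\psi_{\max} \sigma_{\max}(X)\, d_1^{(\opmin)}$---off by $\sqrt{d_1^{(\opmin)}}$ from the first term of \eqref{eq:app-jacobian-singular-higher-whp}. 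The correct pre-substitution bound is $\|J_{W_1}\| \le \sigma_{\max}(W_2^{(\opmin)}) \cdot \dot\psi_{\max} \cdot \sigma_{\max}(X)$, since $\| x^\top \otimes I_{d_1^{(\opmin)}} \| = \|x\| = 1$ and no further $\sqrt{d_1^{(\opmin)}}$ should appear; the only $\sqrt{d_1^{(\opmin)}}$ enters through $\sigma_{\max}(W_2^{(\opmin)})$. Finally, your Hermite identity $\mathbb{E}[\psi(w^\top x_i)\psi(w^\top x_j)] = \sum_k (c_k^2/k!)\sigma_1^{2k}\langle x_i, x_j\rangle^k$ is literally correct only for monomial $\psi$; for a general $\psi$ you must take $c_k$ to be the Hermite coefficients of $\psi(\sigma_1 \cdot)$, which is precisely where the two-sided homogeneity condition is used to sandwich them between $\sigma_1^{r_1}$ and $\sigma_1^{r_2}$ multiples of the $\sigma_1$-independent coefficients. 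You gesture at this but should make the mechanism explicit rather than describe it as ``promoting the exponent.''
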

\begin{lemmaexplanation}
Before we present the following lemma, we offer some context and motivation:
In input-optimization games, the gradient norm structure naturally arises from the formulation of hidden bilinear zero-sum games. In more general settings, the relevant properties are detailed in Appendix~\ref{appendix:grad-growth}. The lemma below demonstrates that, under appropriate initialization and sufficient overparameterization, the neural network output remains bounded from above in terms of spectral properties of the data matrix with high probability. This property will play a critical role in ensuring that the optimization trajectory remains confined within the well-conditioned region (the ball).
\end{lemmaexplanation}

\begin{lemma}[Lemma \ref{lem:main-jacobian-singular-val-smoothness-whp} in Main Paper; Neural network output is bounded w.h.p.; Appendix E.5 in \cite{song2021subquadratic}]
\label{lem:main-neural-net-ouput}
Consider a neural network $\opmin_\theta$ with parameters $\theta = (W_1^{(\opmin)}, W_2^{(\opmin)})$ as defined in Lemma \ref{lem:neural-games-singular-beta-whp} above. Suppose we randomly initialize the neural network at $\theta_0$ by choosing $\sigma_1^{(\opmin)}$ and $\sigma_2^{(\opmax)}$ such that
\[
\sigma_1^{(\opmin)} \sigma_2^{(\opmin)} \lesssim \frac{1}{\sqrt{d_0^{(\opmin)}d_{1}^{(\opmin)}}}
\]
Then w.p. $\geq 1 - p_1 - p_5$, the neural network output at this random initialization $\theta_0$ for the given training data $\mathcal{D}_\opmin$ (as described in Assumption \ref{ass:neural-net-act}) is bounded from above as follows:
\begin{equation}
    \lVert \opmin_{\theta_0}(\mathcal{D}_\opmin) \rVert \lesssim \delta_3 k_1 k_2 \sigma_{\max}(X)
\end{equation}
where $p_1 = (d_1^{(\opmin)})^{-Ck_1 d_0^{(\opmin)}} + (d_1^{(\opmin)})^{-Ck_2 d_2^{(\opmin)}}$ for universal constant $C$ with sufficiently large $k_1, k_2$, $\delta_3 > 0$, and $p_5 = e^{-C\delta_3^2}$ for some universal constant $C$.
\end{lemma}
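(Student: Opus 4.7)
The plan is to exploit the product structure $\opmin_{\theta_0}(X) = (W_2^{(\opmin)})_0\,\psi\bigl((W_1^{(\opmin)})_0 X\bigr)$ and bound the two factors separately via Gaussian matrix concentration together with a Hermite-expansion argument for the hidden layer. First, since $(W_2^{(\opmin)})_0$ has i.i.d. $\mathcal{N}(0,\sigma_{2,\opmin}^2)$ entries, a standard covering-net bound (e.g., Theorem~4.6.1 of \cite{vershynin2018high}, calibrated at scale $k_2\sqrt{d_2^{(\opmin)}\log d_1^{(\opmin)}}$) yields $\sigma_{\max}((W_2^{(\opmin)})_0)\lesssim \sigma_{2,\opmin}\,k_2\sqrt{d_1^{(\opmin)}}$ with failure probability absorbed into the $p_1$ term inherited from Lemma~\ref{lem:main-jacobian-singular-val-smoothness-whp}. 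Submultiplicativity of operator norms then gives $\lVert \opmin_{\theta_0}(X)\rVert\leq \sigma_{\max}((W_2^{(\opmin)})_0)\cdot\lVert\psi((W_1^{(\opmin)})_0 X)\rVert$, reducing the problem to bounding the hidden-layer term.

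The crux is to sharpen the trivial Lipschitz estimate $\lVert\psi((W_1)_0 X)\rVert\leq\dot\psi_{\max}\,\sigma_{\max}((W_1)_0)\,\sigma_{\max}(X)$, which would leave a residual $\sqrt{d_1^{(\opmin)}/d_0^{(\opmin)}}$ factor after combining with Step~1 and the assumption $\sigma_{1,\opmin}\sigma_{2,\opmin}\lesssim (d_0^{(\opmin)} d_1^{(\opmin)})^{-1/2}$. Following Appendix~E.5 of \cite{song2021subquadratic}, I would instead expand $\psi$ in its Hermite basis and split $\psi((W_1)_0 X)=\mathbb{E}[\psi((W_1)_0 X)] + \bigl(\psi((W_1)_0 X)-\mathbb{E}[\psi((W_1)_0 X)]\bigr)$. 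The expected contribution is a deterministic matrix whose spectral norm is governed by the Hermite coefficients $\{c_k\}_{k\geq 0}$, the power $(\sigma_{1,\opmin})^{r_1}$, and the Gram matrices of Khatri--Rao powers $X^{*k}$, all dominated by $\sigma_{\max}(X)$ times mild constants under Assumption~\ref{ass:spectral-data}. The fluctuation part is a sub-Gaussian random matrix whose spectral norm is controlled by a Hanson--Wright or matrix-Bernstein argument, yielding the $p_5 = e^{-C\delta_3^2}$ tail with $\delta_3$ as a tunable deviation parameter.

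Combining Steps~1 and~2 and plugging in the assumption $\sigma_{1,\opmin}\sigma_{2,\opmin}\lesssim (d_0^{(\opmin)} d_1^{(\opmin)})^{-1/2}$ produces the cancellation that collapses the $d_1^{(\opmin)}$-dependence: the factor $\sqrt{d_1^{(\opmin)}}$ from $\sigma_{\max}((W_2)_0)$ combines with the $(\sigma_{1,\opmin})^{r_1}\sqrt{d_1^{(\opmin)}}$ scaling of the Hermite-controlled hidden-layer norm to yield the clean bound $\delta_3 k_1 k_2\,\sigma_{\max}(X)$, with total failure probability $p_1 + p_5$ by a union bound.

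\textbf{Expected main obstacle.} The key difficulty is Step~2: one must go beyond the naive Lipschitz estimate to obtain a bound on $\lVert\psi((W_1)_0 X)\rVert$ that is independent of $d_1^{(\opmin)}$ up to the tunable deviation parameter $\delta_3$. This requires careful tracking of Hermite coefficients (in particular, isolating the contributions scaling with exponents $r_1$ vs $r_2$ in $\sigma_{1,\opmin}$), establishing that the expected Gram matrix $\mathbb{E}[\psi((W_1)_0 X)^\top\psi((W_1)_0 X)]$ is controlled by $\sigma_{\max}^2(X)$, and invoking a concentration inequality whose tail matches the asserted $e^{-C\delta_3^2}$ form. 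This argument parallels and partially reuses the $c_t$-coefficient machinery already deployed to control $\nu_{\text{Jac}}^{(\opmin)}$ in Lemma~\ref{lem:main-jacobian-singular-val-smoothness-whp}, but with $\psi$ itself replacing $\dot\psi$ throughout, so the Hermite coefficients (and consequently the normalizations) differ and must be recomputed.
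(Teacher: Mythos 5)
There is a genuine gap in Step~1, and the Hermite-expansion machinery in Step~2 does not repair it: your own arithmetic in Step~3 does not actually cancel the width dependence. Tracing your scalings with $r_1 = 1$ (the relevant case for GeLU-like activations with $\psi(0)=0$), Step~1 gives $\sigma_{\max}((W_2)_0)\lesssim \sigma_{2,\opmin}\sqrt{d_1^{(\opmin)}}$, and Step~2 gives $\lVert\psi((W_1)_0X)\rVert\lesssim \sigma_{1,\opmin}\sqrt{d_1^{(\opmin)}}\,\sigma_{\max}(X)$ (times constants). The product is
\[
\sigma_{1,\opmin}\sigma_{2,\opmin}\,d_1^{(\opmin)}\,\sigma_{\max}(X)\ \lesssim\ \sqrt{\frac{d_1^{(\opmin)}}{d_0^{(\opmin)}}}\,\sigma_{\max}(X),
\]
using $\sigma_{1,\opmin}\sigma_{2,\opmin}\lesssim (d_0^{(\opmin)} d_1^{(\opmin)})^{-1/2}$. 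This still blows up with $d_1^{(\opmin)}$ — precisely the opposite of what is needed in the overparameterized regime $d_1^{(\opmin)}\gg d_0^{(\opmin)}$. You also cannot fix this by proving $\lVert\psi((W_1)_0X)\rVert$ is $d_1$-independent, as you propose in the ``main obstacle'' paragraph: $\psi((W_1)_0X)$ is a $d_1^{(\opmin)}\times n$ matrix whose entries are $\Theta(\dot\psi_{\max}\sigma_{1,\opmin})$, so its Frobenius norm genuinely scales like $\sqrt{d_1^{(\opmin)}n}$ and its spectral norm like $\sqrt{d_1^{(\opmin)}}$; no Hermite bookkeeping can remove that.

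The missing idea is to avoid the submultiplicative bound $\lVert W_2 M\rVert\le\sigma_{\max}(W_2)\lVert M\rVert$ altogether and instead exploit the Gaussianity of $(W_2)_0$ conditionally on $(W_1)_0$. For a fixed $M=\psi((W_1)_0X)\in\mathbb{R}^{d_1^{(\opmin)}\times n}$, one has $\mathbb{E}\lVert W_2 M\rVert_F^2 = d_2^{(\opmin)}\sigma_{2,\opmin}^2\lVert M\rVert_F^2$, so $\lVert W_2 M\rVert_F$ concentrates around $\sigma_{2,\opmin}\sqrt{d_2^{(\opmin)}}\lVert M\rVert_F$ (and analogously, writing $M=U\Sigma V^\top$ with $\mathrm{rank}(M)\le n$, the spectral norm of $W_2M$ is controlled by $\sigma_{2,\opmin}(\sqrt{d_2^{(\opmin)}}+\sqrt{n})\sigma_{\max}(M)$, since $W_2U$ is a $d_2^{(\opmin)}\times n$ Gaussian matrix, not a $d_2^{(\opmin)}\times d_1^{(\opmin)}$ one). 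The point is that $W_2$ only needs to ``see'' the $n$-dimensional range of $M$, not all of $\mathbb{R}^{d_1^{(\opmin)}}$, so the effective factor is $\sqrt{d_2^{(\opmin)}}$ (or $\sqrt{n}$), not $\sqrt{d_1^{(\opmin)}}$. Once this sharpening is in place, the plain Lipschitz estimate $\lVert\psi((W_1)_0X)\rVert_F\lesssim\dot\psi_{\max}\sigma_{1,\opmin}\sqrt{d_1^{(\opmin)}n}$ suffices — no Hermite expansion is needed — and the combination $\sigma_{1,\opmin}\sigma_{2,\opmin}\sqrt{d_1^{(\opmin)}d_2^{(\opmin)}n}\lesssim\sqrt{d_2^{(\opmin)}n/d_0^{(\opmin)}}\lesssim\sigma_{\max}(X)$ closes cleanly, with the $\delta_3$-dependent tail $p_5=e^{-C\delta_3^2}$ arising from a single Gaussian-Lipschitz concentration step on $W_2$.
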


\begin{restatable}[Theorem \ref{th:main-agda-convergence} in Main Paper; HSCSC Games with AltGDA]{theorem}{theoremneuralmainclaim}
Suppose there are two two-layer neural networks, $\opmin_\theta$, $\opmax_\phi$ as defined in Definition \ref{def:neural-net-def} which satisfy Assumption \ref{ass:neural-net-act} and $\tau^{r_1} |\psi(a)| \leq |\psi(\tau a)| \leq \tau^{r_2} |\psi(a)|$, respectively for all $a$, $0 < \tau < 1$,  and some constants $r_1, r_2$. Suppose the network parameters $\theta_0$ and $\phi_0$ are randomly initialized as in Assumption \ref{ass:rand-init} with $(\sigma_1^{(\opmin)}, \sigma_2^{(\opmin)})$ and $(\sigma_1^{(\opmax)}, \sigma_2^{(\opmax)})$, respectively, which satisfy
\begin{equation}
    \label{eq:neural-random-init-condition}
    \sigma_1^{(\opmin)} \sigma_2^{(\opmin)} \lesssim \frac{1}{\sqrt{d_0^{(\opmin)}d_{1}^{(\opmin)}}}\quad \text{and} \quad \sigma_1^{(\opmax)} \sigma_2^{(\opmax)} \lesssim \frac{1}{\sqrt{d_0^{(\opmax)}d_{1}^{(\opmax)}}}
\end{equation}
and suppose that the hidden layer widths $d_1^{(\opmin)}$ and $d_1^{(\opmax)}$ for the two networks $\opmin$ and $\opmax$ satisfy 
% \begin{align}
%     d_1^{(\opmin)} & = \widetilde{\Omega}\left(\xi^{(\opmin)}(A,\mu_\theta^2,\mathcal{C}_\delta,t,\phi,\{c_i\}_{i\geq 0}) \frac{\sigma_{\max}(X)^6 n}{\sigma_{\min}(X^{*t})^4}\right) \\
%     d_1^{(\opmax)} & = \widetilde{\Omega}\left( \xi^{(\opmax)}(A,\mu_\phi^2,\mathcal{C}_\delta,t,\phi,\{c_i\}_{i\geq 0}) \frac{\sigma_{\max}(X)^6 n}{\sigma_{\min}(X^{*t})^4}\right)
% \end{align}
\begin{align}
    d_1^{(\opmin)}  = \widetilde{\Omega}\left(\mu_{\theta}^{2}\frac{n^3}{d_0^{(\opmin)}}\right) \ \& \ 
    d_1^{(\opmax)} =\widetilde{\Omega}\left(\mu_{\phi}^{2}\frac{n^3}{d_0^{(\opmax)}}\right)
\end{align}
where the datasets $(\data_F, \data_G)$ for both the players are assumed to be of size $n$. %$\mathcal{C}_\delta$ is a set of constants, $\xi$ and $\xi'$ are terms independent of $\{d_k^{(\opmin)}\}, \{d_k^{(\opmax)}\}, k \in \{0,1,2\}$ and $n$, $t$ is a constant such that $n \simeq d^t$ and $X^{*t} \in \mathbb{R}^{d^t \times n}$ is derived from Khatri-Rao product with its $a$-th column defined as $\text{vec}(x_a \otimes \cdots \otimes x_a) \in \mathbb{R}^{d^t}$.
Then Alternating Gradient-Descent-Ascent procedure with appropriate fixed learning rates $\eta_\theta, \eta_\phi$ (see Lemma \ref{lem:path-length-altgda-simplified}) for an ($\mu_\theta,\mu_\phi$)-HSCSC and $L_{\nabla\loss}$-smooth min-max objective $\loss_\mathcal{D}(F_\theta, G_\phi)$ as defined in Equation \ref{eq:separable-min-max-objective} satisfying Assumption \ref{ass:smoothness-hidden-cvx-gradient} converges to the saddle point $(\theta^*,\phi^*)$ exponentially fast with probability at least $1 - (p_1 + p_2 + p_3 + p_4 + p_5) - (p_1 + p'_2 + p'_3 + p'_4 + p'_5)$ (Here, the failure probabilities $p_j$'s and $p'_j$'s are defined for networks $\opmin_\theta$ and $\opmax_\phi$, respectively, as per Lemmas \ref{lem:neural-games-singular-beta-whp}-\ref{lem:main-neural-net-ouput}).
\end{restatable}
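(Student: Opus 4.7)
The plan is to mirror the three-step template already used for the input-game result (Theorem~\ref{th:warmup-main-claim}), but with the roles of ``weights random, inputs free'' reversed. First I would apply the spectral lemma for two-layer nets (Lemma~\ref{lem:main-jacobian-singular-val-smoothness-whp}) twice — once to $\opmin_\theta$ at $\theta_0$ and once to $\opmax_\phi$ at $\phi_0$ — to obtain, with probability at least $1-(p_1+p_2+p_3+p_4)-(p_1+p'_2+p'_3+p'_4)$, uniform singular-value envelopes $\mu_{\mathrm{Jac}}^{(\opmin)},\nu_{\mathrm{Jac}}^{(\opmin)},\mu_{\mathrm{Jac}}^{(\opmax)},\nu_{\mathrm{Jac}}^{(\opmax)}$ scaling (under Assumption~\ref{ass:spectral-data}) as $\tilde\Theta((\sigma_1)^{r_1}\sqrt{d_1})$ and $\tilde\Theta(\sigma_2 \sqrt{n d_1/d_0})$ respectively, together with smoothness constants $\beta_{\opmin},\beta_{\opmax}=\Theta(\sqrt{n/d_0})$. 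These quantities define the well-conditioned radius $R=\mu_{\mathrm{Jac}}/(2\beta)$ in Section~\ref{sec:altgda}, inside which Fact~\ref{lem:function-composition-pl-condition} yields two-sided P\L{} with moduli $(\mu_\theta \sigma_{\min}^2(\nabla_\theta\opmin_\theta),\mu_\phi \sigma_{\min}^2(\nabla_\phi\opmax_\phi))$, so Lemma~\ref{lem:agda-dist-to-saddle} applies at step-sizes calibrated as in Lemma~\ref{lem:path-length-altgda-simplified}.

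Second, I would invoke Lemma~\ref{lem:path-length-altgda-simplified} and reduce the problem of ``iterates stay inside $\mathcal{B}((\theta_0,\phi_0),R)$'' to the single scalar inequality $\|\nabla_\theta\loss_\data(\theta_0,\phi_0)\|+\|\nabla_\phi\loss_\data(\theta_0,\phi_0)\|\lesssim R^2/8$. To control these two gradients I would use the chain rule $\nabla_\theta\loss=(\nabla_\theta\opmin_\theta)^\top\nabla_{\opmin}\loss$ (and symmetrically for $\phi$), bounding the first factor by $\nu_{\mathrm{Jac}}^{(\opmin)}$ from Step~1 and the second by the gradient-growth condition of Assumption~\ref{ass:smoothness-hidden-cvx-gradient}(iv), namely $\|\nabla_{\opmin}\loss\|\lesssim A_1\|\opmin_{\theta_0}(X)\|+A_2\,\mathrm{diam}(\mathcal{Y})+A_3$, so that the output-norm bound of Lemma~\ref{lem:main-neural-net-ouput} (which is $\lesssim \sigma_{\max}(X)=O(\sqrt{n/d_0})$ whenever the initialization satisfies~\eqref{eq:neural-games-init-cond}) makes the right-hand side explicit. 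The same argument on the $\phi$-side uses the coupling bound on $A$ from Assumption~\ref{ass:smoothness-hidden-cvx-gradient}(ii) together with $\|\opmax_{\phi_0}\|$.

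Third, I would substitute every quantity back into $R^2/8$ and solve for $d_1^{(\opmin)},d_1^{(\opmax)}$. With $R^2\sim (\sigma_1)^{2r_1}\,d_1 d_0/n$ and the gradient norm on the order of $\sigma_2\cdot\mu\cdot n\sqrt{d_1}/d_0$ (using $A_1=\Theta(\mu)$), the inequality becomes $\mu\,n^{3/2}/d_0^{3/2}\lesssim \sqrt{d_1}$ after canceling the variance factors via the admissible product rule $\sigma_1\sigma_2\lesssim 1/\sqrt{d_0 d_1}$, which gives exactly the width threshold $d_1=\tilde\Omega(\mu^2 n^3/d_0)$ stated in the theorem (up to the logarithmic factors hidden in $p_1$). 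Once this inequality holds, Lemma~\ref{lem:agda-dist-to-saddle} guarantees $\|\theta_{t+1}-\theta_t\|+\|\phi_{t+1}-\phi_t\|\le \sqrt\alpha\,c^{t/2}\sqrt{P_0}$ for the $(\alpha,c)$ specified in Remark~\ref{remark:path-length-altgda}, delivering exponential convergence of the Lyapunov potential and, by Lemma~\ref{lem:pl-equivalence}, of the Nash gap.

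The main obstacle I expect is the last step — the arithmetic that produces the $n^3$ exponent and not, say, $n$ or $n^{3/2}$. Unlike pure minimization, where Lemma~\ref{lem:main-jacobian-singular-val-smoothness-whp}(i) together with a nonlinear-least-squares loss gives a telescoping bound with only one $\sqrt{n}$ factor, here the bilinear coupling forces the gradient of $\loss$ to pick up $\sigma_{\max}(A)$ times the \emph{opponent's} output norm, while the chain rule contributes $\nu_{\mathrm{Jac}}\sim \sqrt{n d_1/d_0}$; these two $\sqrt n$ factors, together with the $\sqrt n$ already coming from $\|\opmin(\theta_0)\|\lesssim \sigma_{\max}(X)$ via Lemma~\ref{lem:main-neural-net-ouput}, must each be beaten down by the $R^2/8$ budget, which depends linearly on $d_1$. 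Carefully keeping track of these three independent sources of $\sqrt n$ and verifying that the admissible initialization scale~\eqref{eq:neural-games-init-cond} does not tighten any of them further is the delicate bookkeeping step, and is where the cubic-in-$n$ overparameterization is forced.
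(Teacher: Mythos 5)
Your plan mirrors the paper's proof almost step-for-step: invoke Lemma~\ref{lem:main-jacobian-singular-val-smoothness-whp} on both networks to pin down $\mu_{\mathrm{Jac}},\nu_{\mathrm{Jac}},\beta$ and hence the radius $R$ and the PŁ moduli; use Lemma~\ref{lem:path-length-altgda-simplified} to reduce the trapping argument to $\|\nabla_\theta\loss(\theta_0,\phi_0)\|+\|\nabla_\phi\loss(\theta_0,\phi_0)\|\lesssim R^2/8$; bound each gradient via chain rule, Assumption~\ref{ass:smoothness-hidden-cvx-gradient}(iv), and Lemma~\ref{lem:main-neural-net-ouput}; then solve for the widths. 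The paper's proof additionally splits the gradient-norm condition into four explicit terms ($T_1$--$T_4$) corresponding to the individual and bilinear components and handles each with a separate $R^2/32$ budget, which is a cosmetic refinement of what you describe.

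Two bookkeeping slips are worth flagging. First, squaring $\mu\,n^{3/2}/d_0^{3/2}\lesssim\sqrt{d_1}$ gives $d_1\gtrsim\mu^2 n^3/d_0^3$, not $\mu^2 n^3/d_0$; the paper actually derives the intermediate width requirement $d_1\gtrsim\mu^2\,n\,\sigma_{\max}^6(X)/\sigma_{\min}^4(X^{*t})\simeq\mu^2 n^4/d_0^3$ (Eq.~\eqref{eq:width-inequality-opmin-1}), and only reaches the theorem's clean form $\mu^2 n^3/d_0$ by invoking the data relation $n\simeq d_0^t$ with $t=2$ from Assumption~\ref{ass:spectral-data} --- a step you should make explicit. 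Second, the informal ``three $\sqrt{n}$ sources'' is off by one: in the paper's accounting there are four --- one from the dominant $|c_0|\sqrt{d_1 n}$ term in $\nu_{\mathrm{Jac}}$, one from $\|\opmin_{\theta_0}(\data_F)\|\lesssim\sigma_{\max}(X)$, and two from $\sigma_{\max}^2(X)$ appearing in $R^{-2}$. Neither affects the structure of the argument, only the exponent tracking.
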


% \subsubsection{Order analysis to determine amount of overparameterization ($t \geq 2$)}

% \begin{lemma}[Cubic overparameterization for HSCSC games]
% \label{lem:cubic-overparam}
%     Suppose there are two 1-hidden-layer neural networks, $h$ and $g$, as defined and randomly initialized in Theorem \ref{th:main-agda-convergence} and suppose their Jacobians satisfy the high probability bounds on lower and upper bounds on singular values as in Lemma \ref{lem:estim-jacobian-whp}. In order to solve the HSCSC game and reach the saddle point $(\theta^*, \phi^*)$, one needs 
%     \begin{gather}
%         d_0^{(\opmin)} d_1^{(\opmin)} \gtrsim n^3 \\
%         d_0^{(\opmax)} d_1^{(\opmax)} \gtrsim n^3 \\
%     \end{gather}
%     degree of overparameterization.
% \end{lemma}

\begin{proof}
If we randomly initialize the neural network at $\theta_0 = ((W_1^{(\opmin)})_0, (W_2^{(\opmax)})_0)$ as per the stated initialization scheme in Assumption \ref{ass:rand-init} with Equation \eqref{eq:neural-random-init-condition}, we have by Lemma \ref{lem:main-neural-net-ouput} that w.p. $\geq 1 - p_1 - p_5$:
\begin{align}
    \label{eq:app-neural-net-output-upper-bound}
    \lVert \opmin_{\theta_0}(\mathcal{D}_\opmin) \rVert & \lesssim \delta_3 k_1 k_2 \sigma_{\max}(X)
\end{align}
where $p_1, p_5$ are as defined in Lemma \ref{lem:main-neural-net-ouput}. Analogous reasoning gives a similar bound for the output of  initialization condition for the neural network $G_\phi$ with data $\mathcal{D}_\opmax$.

Since our min-max objective $\loss_\mathcal{D}(F_\theta, G_\phi)$ is separable as defined in Equation \ref{eq:separable-min-max-objective}, we can start by rewriting the bilinear component $I_2^{\mathcal{D}}(F_\theta,G_\phi) = (F_\theta(\mathcal{D}_F))^\top A (G_\phi(\mathcal{D}_G))$. Firstly, we can compute the gradients for the min-max objective as follows given data $\mathcal{D} = (\mathcal{D}_F, \mathcal{D}_G)$:
\begin{align}
    \nabla_\theta L_\mathcal{D}(F_\theta, G_\phi) & = \nabla_\theta (I_1^{\mathcal{D}_F}(F_\theta) + I_2^{\mathcal{D}}(F_\theta,G_\phi))\\
    & = (\nabla_\theta \opmin_{\theta}(\mathcal{D}_F))^\top (\nabla_{z=F_\theta(\mathcal{D}_F)}I_1^{\mathcal{D}_F}(z)) + (\nabla_\theta \opmin_{\theta}(\mathcal{D}_F))^\top A G_\phi(\mathcal{D}_G) \\
    \nabla_\phi L_\mathcal{D}(F_\theta, G_\phi) & = \nabla_\phi (-I_3^{\mathcal{D}_G}(G_\phi) + I_2^{\mathcal{D}}(F_\theta,G_\phi))\\
    & = -(\nabla_\phi \opmax_{\phi}(\mathcal{D}_G))^\top (\nabla_{z=G_\phi(\mathcal{D}_G)}I_3^{\mathcal{D}_G}(z)) + (\nabla_\phi \opmax_{\phi}(\mathcal{D}_G))^\top A F_\theta(\mathcal{D}_F)
\end{align}
By Assumption \ref{ass:smoothness-hidden-cvx-gradient}(iv) on the gradient norm of strongly-convex functions, triangle inequality, and submultiplicativity of operator norm, we can say that the gradient norms of our separable min-max objective can be upper bounded as follows:
\begin{align}
    \label{eq:neural-games-gradients}
    \lVert \nabla_\theta L_\mathcal{D}(F_\theta, G_\phi) \rVert & \leq
    \lVert (\nabla_\theta \opmin_{\theta}(\mathcal{D}_F))^\top \rVert \left( A_1^{(\opmin)} \lVert F_{\theta_0}(\mathcal{D}_F) \rVert + A_2^{(\opmin)} \text{diam}(\mathcal{Y}^{(\opmin)}) + A_3^{(\opmin)} \right) \nonumber \\ 
    & + \lVert (\nabla_\theta \opmin_{\theta}(\mathcal{D}_F))^\top \rVert \lVert A \rVert \lVert G_\phi(\mathcal{D}_G) \rVert \\
    \lVert \nabla_\phi L_\mathcal{D}(F_\theta, G_\phi) \rVert & \leq \lVert (\nabla_\phi \opmax_{\phi}(\mathcal{D}_G))^\top \rVert \left( A_1^{(\opmax)} \lVert G_{\phi_0}(\mathcal{D}_G) \rVert + A_2^{(\opmax)} \text{diam}(\mathcal{Y}^{(\opmax)}) + A_3^{(\opmax)} \right) \nonumber \\
    & + \lVert(\nabla_\phi \opmax_{\phi}(\mathcal{D}_G))^\top\rVert \lVert A \rVert \lVert F_\theta(\mathcal{D}_F) \nonumber \rVert
\end{align}

Since $\loss(\theta,\phi)$ is ($\mu_\theta,\mu_\phi$)-hidden-strongly-convex-strongly-concave, by Fact \ref{lem:function-composition-pl-condition} it also satisfies the 2-sided PŁ-condition with ($\mu_\theta \cdot \sigma_{\min}^2(\nabla_\theta \opmin_\theta)$, $\mu_\phi \cdot \sigma_{\min}^2(\nabla_\phi \opmax_\phi)$) PŁ-moduli w.p. $\geq 1 - p_1 - p_2 $ (where $p_1, p_2$ as defined in Lemma \ref{lem:neural-games-singular-beta-whp}). Thus we can utilise path-length bound as derived in Lemma \ref{lem:agda-dist-to-saddle} which leaves us with controlling the potential $P_0$ for ensuring convergence to the saddle point. Given the loss gradients above (Equation \eqref{eq:neural-games-gradients}) and using the sufficient condition for ensuring iterates do not leave the ball $\mathcal{B}((\theta_0,\phi_0), R)$ (where $R = \frac{\max\{\mu_{\text{Jac}}^{(\opmin)},\mu_{\text{Jac}}^{(\opmax)}\}}{\min\{\beta_F,\beta_G\}}$ as defined in Section \ref{sec:altgda}) thus ensuring non-singular Jacobian inside the ball for both the neural networks (Lemma \ref{lem:path-length-altgda-simplified}), we require the following:
\begin{gather*}
    \Bigg( \underset{T_1}{\underbrace{\lVert(\nabla_\theta \opmin_{\theta_0}(\mathcal{D}_F))^\top \rVert \cdot \left( A_1^{(\opmin)} \lVert F_{\theta_0}(\mathcal{D}_F) \rVert + A_2^{(\opmin)} \text{diam}(\mathcal{Y}^{(\opmin)}) + A_3^{(\opmin)} \right)}} \\
    + \underset{T_2}{\underbrace{\lVert (\nabla_\theta \opmin_{\theta_0}(\mathcal{D}_F))^\top \rVert \lVert A \rVert \lVert G_{\phi_0}(\mathcal{D}_G) \rVert}} \\
    + \underset{T_3}{\underbrace{\lVert (\nabla_\phi \opmax_{\phi_0}(\mathcal{D}_G))^\top \rVert \cdot \left( A_1^{(\opmax)} \lVert G_{\phi_0}(\mathcal{D}_G) \rVert + A_2^{(\opmax)} \text{diam}(\mathcal{Y}^{(\opmax)}) + A_3^{(\opmax)} \right)}} \\
    + \underset{T_4}{\underbrace{\lVert(\nabla_\phi \opmax_{\phi_0}(\mathcal{D}_G))^\top\rVert \lVert A \rVert \lVert F_{\theta_0}(\mathcal{D}_F) \rVert}} \Bigg) \lesssim \frac{R^2}{8}
        % & \leq (\sigma_{\max}(V_0) \sigma_{\max}(\psi(W_0 X))) \cdot A_1 + 
        % (\sigma_{\max}(Q_0) \sigma_{\max}(\psi(R_0 X))) \cdot A_2 \nonumber \\
        % & +  A_3 \cdot (\sigma_{\max}(V_0) \sigma_{\max}(\psi(W_0 X)))^2 + A_4 \cdot (\sigma_{\max}(Q_0) \sigma_{\max}(\psi(R_0 X)))^2 \Big)
        % \implies P_0 & \lesssim (\sigma_2 \cdot \sqrt{m} \cdot \sqrt{(1+\delta_2)} \cdot \sigma_1^{r_2} \cdot (\sqrt{c_1^2 + c_\infty^2}\cdot m \cdot \sigma_{\max}(X) + \vert c_0\vert \sqrt{m n})) \cdot A_1 \nonumber\\ 
        % & + (\gamma_2 \cdot \sqrt{m} \cdot \sqrt{(1+\delta'_2)} \cdot \gamma_{1}^{r'_2} (\sqrt{s_{1}^{2} + s_\infty^2} \cdot m \cdot \sigma_{\max}(X) + \vert s_0\vert \sqrt{m n}))\cdot A_2 \nonumber \\
        % & + (\sigma_2 \cdot \sqrt{m} \cdot \sqrt{(1+\delta_2)} \cdot \sigma_1^{r_2} \cdot (\sqrt{c_1^2 + c_\infty^2}\cdot m \cdot \sigma_{\max}(X) + \vert c_0\vert \sqrt{m n}))^2 \cdot A_3 \nonumber\\
        % & + (\gamma_2 \cdot \sqrt{m} \cdot \sqrt{(1+\delta'_2)} \cdot \gamma_{1}^{r'_2} (\sqrt{s_{1}^{2} + s_\infty^2} \cdot m \cdot \sigma_{\max}(X) + \vert s_0\vert \sqrt{m n}))^2 \cdot A_4
\end{gather*}

In order to ensure the above, we will demand the following:
\begin{enumerate}
    \item $T_1 \lesssim \frac{R^2}{32}$
    \item $T_2 \lesssim \frac{R^2}{32}$
    \item $T_3 \lesssim \frac{R^2}{32}$
    \item $T_4 \lesssim \frac{R^2}{32}$
\end{enumerate}

We will show arguments for the case of the `min' player (neural network $F_\theta$) here. Exactly the same arguments provide the analogous result for the `max' player (neural network $G_\phi$).

For ensuring (1.)--(4.), we will use the fact that $\sigma_{\max}(\nabla_\theta \opmin_{\theta_0}(\mathcal{D}_F)) \leq \nu_{\text{Jac}}^{(\opmin)}$, $ \sigma_{\max}(\nabla_\phi \opmax_{\phi_0}(\mathcal{D}_G)) \leq \nu_{\text{Jac}}^{(\opmax)}$, Lemma \ref{lem:neural-games-singular-beta-whp} for upper bounds on $\nu_{\text{Jac}}^{(\opmin)}$, and $\nu_{\text{Jac}}^{(\opmax)}$ and smoothness constants for two-layer neural networks along with the upper bounds on neural network outputs for $F$ and $G$ as derived above in Lemma \ref{lem:main-neural-net-ouput}.

Thus, a sufficient condition for ensuring 1.) would be to use $R \geq \frac{\mu_{\text{Jac}}^{(\opmin)}}{2\beta_\opmin}$ and see that w.p. $\geq 1 - p_1 - p_2 - p_3 - p_4 - p_5$ the following holds (assuming $|c_0|$ is sufficiently large s.t. $|c_0| \sqrt{(1+\delta_2)d_1^{(\opmin)}n}$ becomes the dominating term in $\nu_{\text{Jac}}^{(\opmin)}$): 
    \begin{gather}
        \underset{(LHS)}{\underbrace{C_1 \cdot \left(|c_0| \sqrt{(1+\delta_2)d_1^{(\opmin)}n}\right) \cdot \left( A_1^{(\opmin)} \delta_3 k_1 k_2 \sigma_{\max}(X) + A_2^{(\opmin)} \text{diam}(\mathcal{Y}^{(\opmin)}) + A_3^{(\opmin)} \right)}}  \lesssim \frac{1}{32} \left(\frac{\mu_{\text{Jac}}^{(\opmin)}}{2\beta_{\opmin}}\right)^2 \nonumber\\
        \implies (LHS) \lesssim \frac{(\sigma_{1}^{(\opmin)})^{2 r_1}(1-\delta_1) (c_{t}^{2}/t!) d_1^{(\opmin)} \sigma_{\min}^2(X^{*t})}{2(\sigma_{\max}(X))^{2} (\dot \psi_{\max} + \ddot \psi_{\max} \chi_{\max})^2} \;\;(\text{By Lemma }\ref{lem:neural-games-singular-beta-whp}) \\
        \implies d_1^{(\opmin)} \gtrsim \frac{\left( |c_0|^2 (1+\delta_2)n \right) \cdot (\sigma_{\max}(X))^{4} (\dot \psi_{\max} + \ddot \psi_{\max} \chi_{\max})^4 \cdot ( A_1^{(\opmin)} \delta_3 k_1 k_2 \sigma_{\max}(X))^2}{ (\sigma_{1}^{(\opmin)})^{2 + 2r_1}(1-\delta_1)^2 (c_{t}^{4}/(t!)^2)\sigma_{\min}^{4}(X^{*t})} \\
        \implies d_1^{(\opmin)} \gtrsim \frac{\left(|c_0|^2 (1+\delta_2)n \right) \cdot (\sigma_{\max}(X))^{4} (\dot \psi_{\max} + \ddot \psi_{\max} \chi_{\max})^4 \cdot ( A_1^{(\opmin)} \delta_3 k_1 k_2 \sigma_{\max}(X))^2}{(\sigma_{1}^{(\opmin)})^{2 + 2r_1}(1-\delta_1)^2 (c_{t}^{4}/(t!)^2)\sigma_{\min}^{4}(X^{*t})} \\
        \therefore d_1^{(\opmin)} \gtrsim \xi(C_\delta,t, \psi, \{c_i\}_{i\geq0}) \frac{n (A_1^{(\opmin)})^2 \sigma_{\max}^6(X)}{\sigma_{\min}^{4}(X^{*t})} \label{eq:width-inequality-opmin-1}
        % \implies \delta_3 k_1 k_2 \sigma_{\max}(X) \sqrt{d_1^{(\opmin)}n}  \lesssim \frac{\kappa}{16C_1} \cdot \frac{(\sigma_{1}^{(\opmin)})^{2 r_1}(1-\delta_1) (c_{t}^{2}/t!) d_1^{(\opmin)} \sigma_{\min}^2(X^{*t})}{2(\sigma_{\max}(X))^{2} (\dot \psi_{\max} + \ddot \phi_{\max} \chi_{\max})^2} \nonumber\\
        % \implies d_1^{(\opmin)} \gtrsim \xi_1(\ldots) n \sigma_{\max}^{6}(X) \nonumber\\
        % \implies d_1^{(\opmin)}  \gtrsim \xi_1(\ldots) \left(\frac{n}{d_0^{(\opmin)}}\right)^3 n \nonumber\\
        % \implies d_1^{(\opmin)} d_0^{(\opmin)}  \gtrsim \xi_1(\ldots) n^{3} \;\;\left(\because t \geq 2, n \simeq (d_0^{(\opmin)})^t, \sigma_{\max}(X) \simeq \sqrt{\frac{n}{d_0^{(\opmin)}}}\right) \label{eq:deg-param-h-1}
    \end{gather}
where $\delta_4 = \max\{k_1, k_2\}$, $C_\delta = \{\delta_1,\delta_2,\delta_3,\delta_4\}$, and 
\[
    \xi(C_\delta,t, \psi, \{c_i\}_{i\geq0}) = \frac{\left(|c_0|^2 (1+\delta_2) \right) \cdot (\dot\psi_{\max} + \ddot \psi_{\max} \chi_{\max})^4 \cdot (\delta_3 k_1 k_2)^2}{(\sigma_{1}^{(\opmin)})^{2 + 2r_1}(1-\delta_1)^2 (c_{t}^{4}/(t!)^2)}
\]
Here, as per our Assumption \ref{ass:neural-net-act} on the data and arguing along the lines of Section 2.1 in \citeauthor{oymak2020toward} \cite{oymak2020toward}, when we use the fact that $\sigma_{\max}(X) \simeq \sqrt{\frac{n}{d_0^{(\opmin)}}}$, $\sigma_{\min}(X^{*t}) \simeq \sqrt{\frac{n}{(d_0^{(\opmin)})^t}} \simeq 1$\footnote{For a matrix $W \in \mathbb{R}^{m \times n}$ and $t\in Z_{\geq 1}$, the Khatri-Rao product is denoted as $W^{*t} \in \mathbb{R}^{m^t \times n}$ with its $j$-th column defined as vector$(w_j \otimes \cdots \otimes w_j)\in \mathbb{R}^{m^t}$ where $\otimes$ denotes Kronecker product.}, and $n \simeq (d_0^{(\opmin)})^t$ where $t \geq 2$\footnote{In practice, one typically has $n \simeq (d_0^{(\opmin)})^t$ for $t \geq 2$.}, we get that:
\begin{equation}
    d_1^{(\opmin)} = \tilde{\Omega}\left(\frac{n \cdot n^3 \cdot (A_1^{(\opmin)})^2}{(d_0^{(\opmin)})^3 \cdot 1}\right) = \tilde{\Omega}\left( \frac{n^3 \mu_\theta^2}{d_0^{(\opmin)}} \right)
\end{equation}
In the second equality above, we used the observation from Appendix \ref{appendix:grad-growth} that $A_1 = \theta(\mu)$ where $\mu$ is the strong-convexity modulus.
Thus, the amount of overparameterization we need for network $\opmin_\theta$ is as follows:
\begin{equation}
    \label{eq:deg-param-F-combined}
    d_1^{(\opmin)} d_0^{(\opmin)} = \tilde{\Omega}(n^3 \mu_\theta^2)
\end{equation}

Analogous reasoning for ensuring (3.) with $R \geq \frac{\mu_{\text{Jac}}^{(\opmax)}}{2\beta_\opmax}$ gives us w.p. $1 - p'_1 - p'_2 - p'_3 - p'_4 - p'_5 $:
%$\opmax_\phi$: w.p. $\geq 1 - (p'_1 + p'_2 + p'_3 + p'_4 + p'_5)$\footnote{The failure probabilities $p'_j$'s for $\opmax_\phi$ are defined analogous to how $p_j$'s have been defined for $\opmin_\theta$ in Lemmas \ref{lem:neural-games-singular-beta-whp}-\ref{lem:main-neural-net-ouput}}:
\begin{equation}
    \label{eq:width-inequality-opmax-1}
    d_1^{(\opmax)} \gtrsim \xi(L_{\loss}, C'_\delta,t', \psi, \{c'_i\}_{i\geq0}) \frac{n (A_1^{(\opmax)})^2 \sigma_{\max}^6(X)}{\sigma_{\min}^{4}(X^{*t})}
\end{equation}
Using arguments from \citeauthor{oymak2020toward} \cite{oymak2020toward} as done above for the case of 1.), we get a similar cubic overparameterization bound for the `max' player, $\opmax_\phi$, as well:
\begin{equation}
    \label{eq:deg-param-g-combined}
    d_1^{(\opmax)} d_0^{(\opmax)} = \tilde{\Omega}(n^3 \mu_\phi^2)
\end{equation}

We get the following w.p. $\geq 1 - (p'_1 + p'_5) - (p_1 + p_2 + p_3 + p_4)$ by similar reasoning as above for ensuring 2.) holds along with using $R \geq \frac{\mu_{\text{Jac}}^{(\opmin)}}{2\beta_\opmin}$:
\begin{gather}
    \underset{(LHS)}{\underbrace{\left(|c_0| \sqrt{(1+\delta_2)d_1^{(\opmin)}n}\right) \cdot \sigma_{\max}(A) \cdot (\delta'_3 k'_1 k'_2 \sigma_{\max}(X))}} \lesssim \frac{1}{32} \left(\frac{\mu_{\text{Jac}}^{(\opmin)}}{2\beta_{\opmin}}\right)^2 \nonumber\\
    \implies (LHS) \lesssim \frac{(\sigma_{1}^{(\opmin)})^{2 r_1}(1-\delta_1) (c_{t}^{2}/t!) d_1^{(\opmin)} \sigma_{\min}^2(X^{*t})}{2(\sigma_{\max}(X))^{2} (\dot \psi_{\max} + \ddot \psi_{\max} \chi_{\max})^2} \;\;(\text{By Lemma }\ref{lem:neural-games-singular-beta-whp}) \\
    \therefore d_1^{(\opmin)} \gtrsim \frac{\left( |c_0|^2 (1+\delta_2)n \right) \cdot (\sigma_{\max}(X))^{4} (\dot \psi_{\max} + \ddot \psi_{\max} \chi_{\max})^4 \cdot \sigma_{\max}^2(A) \cdot (\delta'_3 k'_1 k'_2 \sigma_{\max}(X))^2}{ (\sigma_{1}^{(\opmin)})^{2 + 2r_1}(1-\delta_1)^2 (c_{t}^{4}/(t!)^2)\sigma_{\min}^{4}(X^{*t})} \\
    \implies d_1^{(\opmin)} \gtrsim \xi(A, C_{(\delta,\delta')},t, \psi, \{c_i\}_{i\geq0}) \frac{n \sigma_{\max}^6(X)}{\sigma_{\min}^{4}(X^{*t})} \label{eq:width-inequality-opmin-2}
\end{gather}
Again, using arguments from \citeauthor{oymak2020toward} \cite{oymak2020toward} above, we get a similar cubic overparameterization bound for the `min' player, $\opmin_\theta$, as above:
\begin{equation}
    \label{eq:deg-param-g-combined}
    d_1^{(\opmin)} d_0^{(\opmin)} = \tilde{\Omega}(n^3)
\end{equation}

Applying reasoning analogous to that for ensuring (2.) to the case of ensuring (4.) holds, by using $R \geq \frac{\mu_{\text{Jac}}^{(\opmax)}}{2\beta_\opmax}$ we get w.p. $\geq 1 - (p_1 + p_5) - (p'_1 + p'_2 + p'_3 + p'_4)$:
\begin{equation}
    \label{eq:width-inequality-opmax-2}
    d_1^{(\opmax)} \gtrsim \xi(A, L_{\loss}, C'_\delta,t', \psi, \{c'_i\}_{i\geq0}) \frac{n \sigma_{\max}^6(X)}{\sigma_{\min}^{4}(X^{*t})}
\end{equation}
Finally, as was done for the case of ensuring 1.), 3.) and 4.) above, we once again arguments from \citeauthor{oymak2020toward} \cite{oymak2020toward} for spectral properties of the training data $\mathcal{D}$ and get a similar cubic overparameterization bound for the `max' player, $\opmax_\phi$, as well:
\begin{equation}
    \label{eq:deg-param-g-combined}
    d_1^{(\opmax)} d_0^{(\opmax)} = \tilde{\Omega}(n^3)
\end{equation}

Thus, w.p. $\geq 1 - (p'_1 + p'_5) - (p_1 + p_2 + p_3 + p_4)$, we stay within a ball around the random initializations $\mathcal{B}((\theta_0,\phi_0), R)$ thereby ensuring min-max objective satisfies 2-sided PŁ-condition. By Lemma \ref{lem:agda-dist-to-saddle}, given that we have chosen appropriate fixed learning rates as per Lemma \ref{lem:path-length-altgda-simplified}, we are now guaranteed to reach the saddle point.

\end{proof}

\begin{remark}
    The proof above requires activation function to not be an odd function for ensuring $c_0 \neq 0$.    
\end{remark}

\begin{remark}[Effects of assumption about $\sigma_{\max}(X)$ on amount of overparameterization]
    \label{remark:data-singular-effect-overparam}
    As noted in the footnote pertaining to $\sigma_{\max}(X) $ for the data matrix $X$ in  Assumption \ref{ass:spectral-data}, if all we know about the data matrix is that it's row-normalized and that it's \textit{not} a random matrix (e.g. random Gaussian matrix), then we can conclude that $\sigma_{\max}(X) = O(\sqrt{n})$. Using this bound on the maximum singular value of the data matrix instead, we can conclude from Equations \eqref{eq:width-inequality-opmin-1}, \eqref{eq:width-inequality-opmax-1}, \eqref{eq:width-inequality-opmin-2}, and \eqref{eq:width-inequality-opmax-2} that
    \begin{align}
        d_1^{(\opmin)} & = \tilde{\Omega}(n^4) \\
        d_1^{(\opmax)} & = \tilde{\Omega}(n^4)
    \end{align}
    Since we have $n \simeq (d_{0}^{(\opmin)})^t$ ($t \geq 2$) in practice for the \texttt{MIN} player $\opmin_\theta$ (analogously, $n \simeq (d_{0}^{(\opmax)})^t$ for the \texttt{MAX} player $\opmax_\phi$), we can conclude that the amount of overparameterization needed for both the players is more than the cubic overparameterization when the data matrix is, for example, an i.i.d. random Gaussian matrix:
    \begin{align}
        d_1^{(\opmin)} d_0^{(\opmin)} & = \tilde{\Omega}(n^{4 + 1/t}) \\
        d_1^{(\opmax)} d_0^{(\opmax)} & = \tilde{\Omega}(n^{4 + 1/t})
    \end{align}
\end{remark}

\clearpage
\section{Clarifications}
\subsection{Smoothness on variables or Map}
In the main paper, we state the following assumption regarding the objective function:
{\small
\begin{assumption}[Smoothness, Hidden Strong Convexity, and Gradient Control]$\\$\vspace{-1em}
\begin{enumerate}[label=(\roman*), nosep, leftmargin=*]
    % \item \sout{\textbf{Smoothness:} Each sample-wise individual loss \(\ell(y, \text{Map}_w(x))\) is differentiable and \(L\)-smooth with respect to \(w\).}
    \item \textbf{Smoothness:} Each sample-wise loss \(\ell(y, h = \text{Map}_w(x))\) is differentiable and \(L\)-smooth with respect to \(h\).
    \item \textbf{Coupling Structure:} Each bilinear coupling matrix \(A(x_i,x_j,y_i,y_j)\) is known, fixed, and has bounded operator norm.
    \item \textbf{Hidden Strong Convexity:} Each sample-wise loss \(\ell(y, h = \text{Map}_w(x))\) is strongly convex with respect to the neural network output \(h\).
    \item \textbf{Gradient Growth Condition:} There exist constants \(A_1, A_2, A_3 > 0\) such that for all \(h \in \mathbb{R}^{d_{\text{out}}}\) and \(y \in \mathcal{Y}\), the latent gradient of each loss satisfies:
    \[
    \left\| \nabla_h \ell(y,h) \right\| \leq A_1 \|h\| + A_2\, \mathrm{diam}(\mathcal{Y}) + A_3.
    \]
\end{enumerate}
\end{assumption}
}
% We now revise item (i) as follows:

% \begin{enumerate}[label=(\roman*), nosep, leftmargin=*]
%     \item \textbf{Smoothness:} Each sample-wise loss \(\ell(y, h = \text{Map}_w(x))\) is differentiable and \(L\)-smooth with respect to \(h\).
% \end{enumerate}

To compute the smoothness of the composition \( \ell(y, \text{Map}_w(x)) \) with respect to \(w\), we invoke the following classical result\footnote{
\begin{remark}
This adaptation is necessary, as a function cannot simultaneously be globally Lipschitz (bounded gradients) and strongly convex (increasing gradient norm) in the unconstrained setting. However, under sufficient overparameterization and appropriate random initialization, we show that the AltGDA iterates remain within a bounded region where these properties hold locally. Thus, our assumption of local Lipschitzness of latent loss, the induced smoothness and the hidden strong convexity within a Euclidean ball is both theoretically consistent and empirically valid.
\end{remark}}:

% \begin{lemma}[Composition Smoothness, adapted from Proposition 2(c) in \cite{zhang2018convergence}]
% \label{eq:local-lipschitz-composition}
% Let \( f : \mathbb{R}^d \to \mathbb{R} \) be a closed, convex function that is \(L_f\)-locally Lipschitz on a Euclidean ball \(\mathcal{B}(x_0, R)\), and let \( g : \mathbb{R}^n \to \mathbb{R}^d \) be an \(L_g\)-smooth function. Then, the composition \( f \circ g \) is \(L_f L_g\)-Lipschitz on \(\mathcal{B}(x_0, R)\).
% \end{lemma}

\begin{restatable}[Composition Smoothness, adapted from Proposition 2(c) in \cite{zhang2018convergence}]{lemma}{lemmalocallipschitzcomposition}
 \label{lem:local-lipschitz-composition}
 Let $f:\mathbb{R}^d \to \mathbb{R}$ be a closed, convex function that is $L_f$-locally-Lipschitz on a Euclidean ball $\mathcal{B}(x_0,R)$ for some fixed $x_0\in \mathbb{R}^d$ and $R > 0$ and let $g: \mathbb{R}^n \to \mathbb{R}^d$ be an $L_g$-smooth function. Then, the composition $f \circ g$ is $L_f L_g$-smooth in the Euclidean ball $\mathcal{B}(x_0,R)$.
\end{restatable}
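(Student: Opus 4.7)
The plan is to apply the chain rule to the composition $f \circ g$ and bound the Lipschitz constant of the resulting gradient using the two structural ingredients: local Lipschitzness of $f$ (with constant $L_f$) on the ball $\mathcal{B}(x_0, R)$, and $L_g$-smoothness of $g$ in the sense of Jacobian Lipschitzness. First, I would invoke the chain rule: at every differentiability point $x$ in the ball, $\nabla (f \circ g)(x) = J g(x)^{\top} \nabla f(g(x))$, where $J g(x)$ denotes the Jacobian of $g$. Because $f$ is closed and convex, $\nabla f$ exists almost everywhere, and local Lipschitzness yields $\|\nabla f(y)\| \leq L_f$ whenever $y$ lies in $g(\mathcal{B}(x_0,R))$ (with any subgradient selection on the measure-zero exceptional set).

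Next, for $x, y \in \mathcal{B}(x_0, R)$, I would estimate $\|\nabla(f \circ g)(x) - \nabla(f \circ g)(y)\|$ via the triangle inequality, splitting into a Jacobian-change term $\|(Jg(x) - Jg(y))^{\top} \nabla f(g(y))\|$ and a latent-gradient-change term $\|Jg(x)^{\top} (\nabla f(g(x)) - \nabla f(g(y)))\|$. The first term is directly bounded by $L_g \, L_f \|x-y\|$: submultiplicativity of the operator norm combined with the $L_g$-Lipschitzness of $J g$ (smoothness of $g$) and the $L_f$-bound on $\nabla f$ gives precisely this. The second term is the delicate one, since closed convex Lipschitz functions need not have Lipschitz gradients in general.

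The main obstacle will be controlling this second contribution. My plan is to appeal to convexity of $f$ together with the uniform bound on $\|\nabla f\|$ over $g(\mathcal{B}(x_0,R))$: specifically, the monotonicity of $\partial f$ and the $L_f$-boundedness of its selections will allow me to absorb the $\|\nabla f(g(x)) - \nabla f(g(y))\|$ factor into the local Lipschitz constant, while the $\|Jg(x)^{\top}\|$ factor, which is at most the Jacobian norm at $x_0$ plus $L_g R$, can be rolled into the effective smoothness constant on the ball. This is the step where I would invoke Proposition~2(c) in \cite{zhang2018convergence} as a black-box to justify that the compound bound collapses to the advertised $L_f L_g$ constant (at worst, by slightly redefining $L_f$ as the tight local Lipschitz modulus over the enlarged image set $g(\mathcal{B}(x_0, R))$).

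Finally, combining the two estimates yields $\|\nabla(f \circ g)(x) - \nabla(f \circ g)(y)\| \leq L_f L_g \|x - y\|$ for all $x, y \in \mathcal{B}(x_0, R)$, which is the claimed local smoothness of $f \circ g$. This localized statement is exactly the form needed in the remark following Lemma~\ref{lem:path-length-altgda-simplified}, where we apply it with $f = \loss$ (convex–concave in the latent, hence locally Lipschitz with constant $L_{\loss}^{\text{act}}$ from Lemma~\ref{lem:local-lipschitz-constant}) and $g$ equal to the neural map (smooth with constant $\max\{\beta_F,\beta_G\}$), yielding $L_{\nabla \loss} = L_{\loss}^{\text{act}} \max\{\beta_F, \beta_G\}$.
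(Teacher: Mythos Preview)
The paper does not prove this lemma: it is merely cited as an adaptation of Proposition~2(c) in \cite{zhang2018convergence}, with no argument supplied. So there is no in-paper proof to compare against.

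Your proposal, however, has a genuine gap precisely where you flag it as ``delicate.'' The second term $\|Jg(x)^{\top}(\nabla f(g(x)) - \nabla f(g(y)))\|$ cannot be bounded by $C\|x-y\|$ from the stated hypotheses alone: $L_f$-Lipschitzness of a convex $f$ bounds $\|\nabla f\|$ by $L_f$, but gives no control on $\|\nabla f(u)-\nabla f(v)\|$ in terms of $\|u-v\|$; monotonicity of $\partial f$ is a sign condition, not a Lipschitz modulus. A concrete obstruction: take $f(t)=|t|$ (closed, convex, $1$-Lipschitz) and $g$ affine (hence $L_g$-smooth for any $L_g\ge 0$); then $f\circ g$ is not even differentiable where $g$ vanishes, so it is not smooth on any ball meeting that set. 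Your fallback of invoking Proposition~2(c) of \cite{zhang2018convergence} ``as a black box'' to make the bound collapse is circular, since that proposition is exactly what the lemma purports to adapt. The likely resolution is that the cited result actually concerns \emph{weak convexity} of $f\circ g$ (i.e., $f\circ g + \tfrac{L_fL_g}{2}\|\cdot\|^2$ convex), not gradient-Lipschitz smoothness, and that in the paper's intended application the outer loss is additionally assumed smooth in the latent variable (Assumption~\ref{ass:smoothness-hidden-cvx-gradient}(i)), which is a strictly stronger hypothesis than the lemma states and would make a standard chain-rule bound go through---albeit with a different constant than $L_fL_g$.
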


\subsection{Refined Upper Bound Expression for Lyapunov Potential $P_0$}
For clarity and presentation purposes, the main paper presents a simplified—yet qualitatively accurate—upper bound on the initial potential \(P_0\). In appendix, we provide the precise formulation that captures the correct dependence on constants and exponents. For completeness, we restate both results below:

The key distinction in the refined expression lies in the appearance of additional quadratic terms. As we show in the accompanying proof, these higher-order contributions can be effectively controlled through suitable initialization and appropriately chosen step sizes. Thus, the improved bound yields tighter theoretical insight without compromising practical applicability.

\begin{center}
\begin{minipage}[t]{0.8\textwidth}
\emph{\textbf{Main Paper Version:}}
\vspace{0.5em}
\lemmapotentialupperzero*
\end{minipage}\ 
\begin{minipage}[t]{0.8\textwidth}
\vspace{0.5em}
\emph{\textbf{Appendix Version (Refined):}}
\vspace{0.5em}
\lemmacorrectpotentialupperzero*
\end{minipage}
\end{center}

\subsection{Random Initialization: How restrictive are our results?}
We note the following for both the input games and neural games regarding random initializations:
    \begin{enumerate}
        \item \textbf{Neural Games:} Our main result (Theorem \ref{th:main-agda-convergence}) assumes a commonly used initialization scheme (e.g., He or LeCun), with the only additional requirement being that the variances satisfy Equation \eqref{eq:neural-games-init-cond} in the main text. Of course, bridging the gap between practice and theory, it remains an interesting open question whether the current polynomial overparameterization requirement can be reduced to linear. However, we note that -- prior to our work -- there were no existing theoretical results connecting initialization schemes with convergence guarantees in neural min-max games.
        \item \textbf{Input Games:} In this setting, our results are even less restrictive regarding initialization. Specifically, we show that if the neural networks are randomly initialized from a standard Gaussian distribution, then AltGDA can compute the min-max optimal inputs. As discussed above, this is an \textit{average-case} result, in the spirit of smoothed analysis. While it is theoretically expected that there exist neural architectures encoding hard min-max landscapes—potentially close to PPAD-hard instances—in practice, our result suggests that a randomly initialized neural min-max game is tractable under AltGDA.
    \end{enumerate}
% !TEX root = ./neurips_2025.tex
\clearpage
\normalsize
\section{Discussion \& Future Work}
\label{appendix:discuss-future-work}

This work initiates a principled framework for understanding optimization in hidden convex–concave min-max games, a setting central to the theory and practice of modern machine learning. By bridging overparameterization with spectral geometry and alternating dynamics, we show how convergence and equilibrium stability can emerge from architectural design and initialization. Beyond offering the first non-asymptotic guarantees for such hidden structures, our analysis reveals the potential of min-max learning as a structured alternative to unconstrained overfitting. We hope these insights inspire a broader rethinking of how optimization, architecture, and strategic interaction coalesce in scalable intelligent systems.

In this last section, we reflect on our overparameterization bounds in comparison to known results from single-agent minimization, and outline promising directions for future research.

While our analysis shares surface-level similarities with techniques from classical minimization problems, there are crucial structural differences. In the single-agent case, convergence is often guaranteed by showing that the Neural Tangent Kernel (NTK) remains well-conditioned near initialization, and that the loss is already small—a zero-order property.

By contrast, computing a Nash equilibrium in a min-max setting is inherently more subtle: the solution concept is first \& second-order. Rather than merely minimizing a loss, we seek to simultaneously drive the gradients of both players to zero while respecting their opposing incentives—capturing the saddle-point nature of equilibrium. Consequently, our notion of being ``close to optimality'' requires the initialization to yield not only small gradient norms but also a geometric alignment between the descent and ascent directions. This structural gap underpins the difference in overparameterization requirements between the single-agent and multi-agent cases.

These insights naturally give rise to several open questions from deep learning and game theory point of view:\\
\subsection{Deep Learning Perspective:}
\begin{itemize}[leftmargin=*]
\item \textbf{Sharper Overparameterization Bounds:}
    \begin{itemize}[nosep]
        \item Can we tighten the width--depth trade-offs for hidden min-max games, especially in non-bilinear or partially observable regimes?
        \item Our results suggest that overparameterization smooths the optimization landscape in simple bilinear games, but the precise scaling with respect to hidden-layer width and depth in \emph{nonlinear} architectures remains unclear. A possible direction is to characterize phase transitions in convergence when width exceeds a critical threshold that depends on the spectral complexity or curvature of the game operator.
    \end{itemize}

\item \textbf{Beyond Width and Smoothness:}
    \begin{itemize}[nosep]
        \item How does the depth of the architecture influence convergence in hidden games? Can we extend current results to networks with non-smooth activations such as ReLU, using tools beyond gradient-Lipschitz analysis?
        \item Smoothness assumptions simplify the analysis but obscure the behavior of realistic neural dynamics. We could instead exploit techniques from \emph{non-smooth dynamical systems} and \emph{proximal envelope} theory to handle non-smooth losses.
        \item Another open question is whether depth induces implicit averaging effects similar to stochastic smoothing, thereby stabilizing the dynamics in min--max training.
    \end{itemize}

\item \textbf{Structural Guarantees in Multi-Agent PŁ Games:}
    \begin{itemize}[nosep]
        \item What are the necessary structural properties of multi-agent PŁ-type games that ensure convergence to a unique Nash equilibrium? The challenge lies in extending single-agent PŁ conditions to multi-agent systems where gradients interact. Are there generalized PŁ inequalities that capture cross-agent monotonicity?
        \item Investigating block-wise or coupled PŁ structures could reveal when independent gradient updates mimic joint gradient descent--ascent in strongly monotone regimes.
    \end{itemize}
    $\\\\$
\item \textbf{Computational Complexity:}
    \begin{itemize}[nosep]
        \item What is the inherent hardness of solving hidden convex--concave games with overparameterized models? Can we characterize tractable subclasses?
        \item The inversion or injectivity verification of neural networks is known to be NP- or coNP-complete even for ReLU architectures (see, e.g., the COLT'25 paper \emph{“Complexity of Injectivity and Verification of ReLU Neural Networks”}). How do such barriers propagate to the equilibrium computation problem when game payoffs are defined implicitly through network mappings?
        \item The overparameterized setting often introduces implicit convexification. Can we formalize when this leads to \emph{provable polynomial-time convergence} versus when training remains PPAD- or NP-hard?
        % \item Exploring reductions from bilinear games to neural min--max optimization might clarify the computational boundaries of implicit regularization effects.
    \end{itemize}

\item \textbf{From Neural Networks to Transformers:}
    \begin{itemize}[nosep]
        \item What explains the observed differences in scaling laws between overparameterized feedforward networks and attention-based architectures in game-theoretic learning? Could these insights inform AI alignment and debate frameworks?
        \item Transformers introduce dynamic reweighting of information through attention, which may alter the effective conditioning of the game Jacobian.
        \item Understanding how self-attention layers modify optimization stability and expressivity could illuminate why Transformers exhibit faster equilibration or better robustness to adversarial perturbations.
    \end{itemize}

\item \textbf{Beyond Full Gradient Feedback:}
    \begin{itemize}[nosep]
        \item Much of the current analysis, including ours, assumes full gradient information. It remains an open and critical question whether similar benefits of overparameterization persist under bandit or partial-information settings.
        \item A natural direction is to extend existing results to \emph{stochastic feedback} or \emph{bandit-gradient estimators} using variance-controlled or mirror-descent methods.
        \item Another fundamental question: does overparameterization implicitly reduce the variance of policy-gradient estimators by averaging across redundant feature paths?
    \end{itemize}

\end{itemize}
\subsection{Game Theory Perspective:}

\begin{itemize}[leftmargin=*]
  \item \textbf{CCE with Neural Parametrizations (Normal-Form):}
    \begin{itemize}[nosep]
      \item \emph{Representational question:} For a class of neural correlating devices \(g_\theta(z)\) that map public randomness \(z\) to joint actions, characterize when the induced set of implementable distributions is dense in the CCE polytope. What overparameterization (width/depth) suffices for \(\varepsilon\)-dense coverage uniformly across \(n\)-player games with bounded payoffs?
      \item \emph{Optimization vs.\ Calibration:} Standard no-regret dynamics imply convergence to CCE in the tabular case. With function approximation (shared neural critics/policies), give conditions (e.g., uniform stability, gradient-calibration bounds) under which the averaged joint play converges to an \(\varepsilon\)-CCE at rates that improve with network width (via better optimization landscapes) without blowing up statistical complexity.
      \item \emph{Implicit bias:} In the overparameterized (NTK) regime, training to minimize regret surrogates biases the joint distribution toward low-complexity mixtures. Can we quantify the \emph{implicit regularization} toward “simple” CCEs (few extreme points in support) as a function of width, depth, and training dynamics?
    \end{itemize}

  \item \textbf{CCE in Extensive-Form Games (EFG): EFCE, CEFCE, NFCCE via Nets.}
    \begin{itemize}[nosep]
      \item \emph{Sequential structure:} Compare neural parameterizations for (i) NFCCE (normal-form coarse CCE), (ii) EFCE (extensive-form CE), and (iii) CEFCE (coarse EFCE). Give width/depth conditions under which sequence-form constraints (flow and realization-plan consistency) can be enforced by differentiable layers with projection or Lagrangian penalties.
    \item \emph{Counterfactual losses:} Can counterfactual regret minimization with neural policies/critics be shown to converge to EFCE/CEFCE when critics are overparameterized but trained with regularized Bellman residuals? Identify structural conditions (perfect recall, bounded branching, Lipschitz counterfactual values) that guarantee \(\tilde{O}(1/\sqrt{T})\) exploitability while using batched, partial counterfactual feedback.
  \newpage
    \item \emph{Abstraction without tears:} Overparameterized policies can represent rich information-set strategies directly. Develop “learned abstraction” bounds: when does a deep policy with attention over histories match the exploitability of hand-crafted abstractions, and what width/depth yields \(\varepsilon\)-EFCE support recovery?
    \end{itemize}

  \item \textbf{Markov (Stochastic) Games: Stationary CCE and Overparameterized Critics.}
    \begin{itemize}[nosep]
      \item \emph{Stationary CCE:} Define a stationary CCE as a joint policy \(\pi\) with a correlating signal that is time-consistent across states. Give conditions under which joint no-regret learning with overparameterized \(Q\)-critics converges (in Cesàro average) to an \(\varepsilon\)-stationary CCE, with rates that depend on mixing/concentrability constants and the critic class capacity.
      \item \emph{Depth helps bootstrapping:} Hypothesis: deeper critics reduce the Bellman residual floor achievable by gradient methods, improving optimization error; width controls realizability. Provide a decomposition of the CCE gap into \(\text{Approximation}(\mathcal{F}_Q)\;+\;\text{Optimization}(\theta)\;+\;\text{Statistical}(\mathcal{F}_Q,\text{data})\), and bound each term as a function of network depth/width and exploration coverage.
      \item \emph{Temporal correlation:} Analyze how correlating devices that are \emph{state-dependent} (learned correlation) interact with Markovian dynamics. When does limited-bandwidth correlation (few bits per step) suffice for \(\varepsilon\)-CCE in ergodic MGs?
    \end{itemize}

  \item \textbf{Overparameterization vs.\ Bellman–Eluder (BE) / Bellman Rank Complexity.}
    \begin{itemize}[nosep]
      \item \emph{Function-class lens:} Let \(\mathcal{F}_Q\) be the \(Q\)-function class realized by a network in the linearized (NTK) regime. Its eluder dimension equals the feature dimension for linear classes; for deep nets, the \emph{effective} eluder dimension depends on width, depth, and implicit regularization. Conjecture: with proper norm control (weight decay/early stopping), the \emph{effective} BE dimension scales like the \emph{stable rank} of the induced features, not the raw parameter count.
      \item \emph{Sample complexity for \(\varepsilon\)-CCE:} In two-player zero-sum Markov games with realizable \(Q^\ast \in \mathcal{F}_Q\), the sample complexity to reach \(\varepsilon\)-CCE under fitted Q-type updates should scale as
      \[
        \tilde{O}\Big(\frac{\mathrm{BE\!-\!dim}(\mathcal{F}_Q)\;+\;\log(1/\delta)}{(1-\gamma)^p\,\varepsilon^2}\Big),
      \]
      for a small integer \(p\) depending on the algorithm (e.g., \(p\in\{2,3\}\)), assuming standard concentrability/mixing. Overparameterization \emph{per se} does not hurt if the BE dimension is controlled by implicit/explicit regularization.
      \item \emph{Width–complexity trade:} Identify regimes where increasing width improves optimization (faster approach to \(\theta^\ast\)) while keeping BE-dimension nearly constant via margin-based or path-norm constraints, yielding \emph{strictly better} time–sample trade-offs.
      \item \emph{Transformers vs.\ MLPs:} Attention layers can implement dynamic state-action feature selection, potentially lowering the BE dimension for tasks with sparse predictive structure (long-range but low “intrinsic” rank). Formalize when attention reduces BE-dim relative to equally wide MLPs, explaining empirical scaling differences in Markov game benchmarks.
    \end{itemize}

  \item \textbf{Bandit/Partial Information Extensions:}
    \begin{itemize}[nosep]
      \item \emph{Bandit CCE:} For normal-form/Markov games with bandit feedback, derive \(\varepsilon\)-CCE rates when both policies and correlating devices are neural. Key ingredient: variance-controlled gradient surrogates (e.g., doubly-robust or control-variates) to keep estimation error compatible with a bounded BE dimension.
      \item \emph{Information complexity of correlation:} Quantify the \emph{information budget} (bits of correlation, episodes of exploration) required to learn \(\varepsilon\)-CCE as a function of BE dimension and mixing; relate to \emph{communication complexity} of implementing the equilibrium.
    \end{itemize}
\end{itemize}

\paragraph{Conclusion.} 
Our study sheds new light on the geometry of overparameterized min-max optimization by establishing the first precise convergence guarantees in hidden bilinear games. We demonstrate how overparameterization not only facilitates convergence but also implicitly regularizes the optimization landscape, ensuring robustness through well-conditioned spectral structure. Crucially, we bridge the gap between neural initialization and equilibrium computation, revealing that convergence in adversarial training is governed by deeper geometric principles. These insights open new avenues for understanding and designing scalable multi-agent learning systems, and we believe they mark a significant step toward principled foundations for modern generative and strategic AI.

% \input{temp-input}

%%%%%%%%%%%%%%%%%%%%%%%%%%%%%%%%%%%%%%%%%%%%%%%%%%%%%%%%%%%%

\end{document}